\documentclass{article} 
\usepackage{iclr2027_conference,times}
\usepackage{comment}

\usepackage{amsmath,amsfonts,bm}

\def\eqref#1{equation~\ref{#1}}
\def\Eqref#1{Equation~\ref{#1}}

\def\1{\bm{1}}

\DeclareMathAlphabet{\mathsfit}{\encodingdefault}{\sfdefault}{m}{sl}
\SetMathAlphabet{\mathsfit}{bold}{\encodingdefault}{\sfdefault}{bx}{n}

\newcommand{\E}{\mathbb{E}}

\usepackage{titletoc}
\usepackage{hyperref}
\usepackage{url}
\usepackage{xcolor}
\usepackage{amsmath}
\usepackage{amssymb}
\usepackage{amsthm}
\usepackage{mathtools}
\usepackage{graphicx}
\usepackage{subcaption}
\usepackage{algorithm}
\usepackage{algpseudocode}
\usepackage{longtable}
\usepackage{booktabs}
\usepackage{array}
\usepackage{enumitem}
\newtheorem{theorem}{Theorem}
\newtheorem{lemma}{Lemma}

\newtheorem{proposition}{Proposition}
\theoremstyle{remark}
\newtheorem{remark}{Remark}

\newcommand{\kl}{\operatorname{kl}}

\providecommand{\E}{\mathbb E}
\providecommand{\1}{\mathbf 1}

\title{To Defer or Not to Defer? 
Online Learning with LLMs and Congested Experts}

\author{
\textbf{Vasileios Saketos}\thanks{Equal contribution.} \,
\textbf{William R\'eveillard}\footnotemark[1] \,
Alexandre Prouti\`ere \,
Sebastian Dalleiger \,
Ming Xiao 
\\[0.6em]
KTH Royal Institute of Technology \\
Stockholm, Sweden \\[0.3em]
\texttt{\{saketos,wilrev,alepro,sdall,mingx\}@kth.se}
}

\iclrfinalcopy 
\begin{document}

\maketitle

\begin{abstract}
Modern AI systems increasingly combine large language models (LLMs) with human experts, using the LLM to handle routine queries while deferring uncertain cases for expert review. When expert capacity is limited, however, deferral decisions must account not only for the reliability of the LLM but also for congestion in the expert queue. We formulate this as an online sequential decision-making problem in which, for each query, the system observes the confidence signal of the LLM's answer and must decide whether to output the answer or defer the query to a capacity-constrained expert. The reliability of the confidence signal is initially unknown and must be learned from expert feedback, which is both selectively observed (only deferred queries are labeled) and delayed by queuing. We propose \texttt{KL-PBD}, a learning algorithm that uses Kullback--Leibler lower confidence bounds on the unknown reliability parameters to guide deferral decisions. For $K$ confidence levels and horizon $T$, \texttt{KL-PBD} achieves $O(K\log T)$ gap-dependent regret when the accuracy at every confidence level is bounded away from the deferral boundary, and $\widetilde{O}(\sqrt{KT})$ worst-case regret. We establish matching lower bounds, showing that these rates are optimal up to logarithmic factors. Finally, we validate our approach on synthetic instances and real-world question-answering datasets, namely GSM8K, NuminaMath, and MedMCQA, using Qwen LLMs.

\end{abstract}

\section{Introduction}

Large language models (LLMs) are increasingly deployed in settings where automated predictions can
be complemented by human expertise, including customer support and healthcare
\citep{huang-etal-2025-enable,feng2024large,yang-etal-2025-elaboration,10.1145/3706598.3713497}.
A natural design is to let the LLM handle routine queries while referring uncertain or high-risk cases
to a human expert. Expert attention, however, is limited: reviewing a query takes time, and queries
sent for review may have to wait behind previously deferred requests. Consequently, a deferral
decision must account not only for whether the LLM is likely to be correct, but also for the congestion
created by sending another query to the expert queue. Determining when the LLM is reliable is itself challenging: the correctness of a particular answer is
not known unless a ground-truth label is obtained, which in our setting requires expert review.
LLMs nevertheless provide signals that may be informative about their reliability: a prominent
example is self-consistency \citep{wang2022self}, which measures agreement across multiple generated
answers. Such signals can be correlated with correctness, but high confidence or agreement does not
guarantee that the answer is correct
\citep{aggarwal-etal-2023-lets,wang-etal-2024-self-consistency}.
Thus, a deferral system must learn how the observed confidence signal relates to the probability that
the LLM's answer is correct.

We are interested in doing so \emph{online}, without requiring a separate task-specific calibration
dataset or retraining the LLM before deployment. Instead, the system learns from expert feedback
generated during operation. This gives rise to a coupled learning-and-control problem with an unusual
feedback structure. If the system accepts an LLM response, it observes no ground-truth label. If it
defers, the label is revealed only after the query has been served by the expert. Feedback is therefore
both \emph{selective} and \emph{delayed}. Moreover, acquiring information is itself costly: every
deferred query consumes expert capacity, enters the queue, and may increase congestion for future
queries. Deferral is therefore simultaneously a prediction decision, an information-acquisition
decision, and a queue-control decision. We formalize this interaction as a finite-horizon online sequential decision problem. At each round,
a query arrives together with the LLM's answer and a confidence level, and the system decides
whether to accept the answer or defer the query to a capacity-constrained expert queue. The
conditional accuracy associated with each confidence level is initially unknown and must be learned
from completed expert reviews. Our goal is to design an online policy that minimizes the expected cumulative cost over the horizon, balancing the risk of incorrect LLM answers, the congestion created by expert review, and the value of deferrals as a source of information for future decisions.

Our contributions are as follows. 

(a) We propose \texttt{KL-PBD}, a queue-aware algorithm that learns the unknown conditional accuracies from delayed expert feedback. The algorithm acts optimally for a pessimistic estimate of the LLM's reliability, favoring deferral, and hence information acquisition, when the available evidence is still limited, while explicitly accounting for the congestion cost of doing so.

(b) We establish near-optimal regret guarantees for \texttt{KL-PBD}. For $K$ confidence levels, the algorithm achieves $O(K\log T/\gamma)$ regret when the accuracy at every confidence level is at least $\gamma$ away from the deferral boundary, and $\widetilde O(\sqrt{KT})$ regret in the worst case.

(c) We derive complementary lower bounds showing that logarithmic regret and inverse-gap
dependence are unavoidable in separated instances, while the worst-case $\sqrt{KT}$ dependence is optimal up to logarithmic factors.

(d) We evaluate \texttt{KL-PBD} on synthetic and real-world question-answering tasks based on GSM8K, NuminaMath, and MedMCQA. We demonstrate the importance of
pessimism under selective feedback and show that explicitly accounting for the expert queue is paramount when expert service is congested (see Fig.~\ref{fig:intro-candidates-row} (right) for an illustrative example).

\section{Related Work}
\paragraph{Learning to defer.} 
Learning to defer studies systems that either make an automated prediction or delegate the decision to a human expert
\citep{madras2018predict,mozannar2020consistent}.
These foundational formulations jointly learn a predictor and a deferral rule, with subsequent work developing calibrated deferral objectives
\citep{verma2022calibrated}.
Post-hoc approaches instead learn when to defer the predictions of a fixed model
\citep{narasimhan2022posthoc}.
These approaches are formulated as offline supervised learning problems, where the deferral policy is learned from a fixed labeled dataset and evaluated through test-time prediction risk.
In our problem, the LLM response rule is fixed, but the system learns a deferral policy online from delayed, selectively observed human feedback generated by its own deferral decisions. Moreover, expert intervention is capacity constrained, making the cost of deferral state dependent rather than fixed. Further supervised and multi-expert formulations are discussed in Appendix~\ref{app:related-deferral-extensions}.

\paragraph{Online learning and selective feedback.}
\citet{reid2024online} study online deferral under a total consultation budget. Their bandit feedback model reveals the reward of the selected action, including automated predictions.
Online selective classification is closer to our setting: labels are observed only when the learner abstains from answering, and  regret guarantees trade off prediction errors against selective label acquisition \citep{gangrade2021online,baharav2025good}.
In our setting, labels are similarly observed only on deferrals, but a deferred query first enters the expert queue, and its label becomes available only once it has been served. Broader active-learning and oracle-query settings are discussed in
Appendix~\ref{app:related-active-learning}.


\paragraph{Deferral under limited review capacity.}
\citet{alves2024tmlr-costsensitive} consider cost-sensitive deferral to multiple experts under individual workload constraints, optimizing assignments using offline estimates of expert error. 
More directly related, \citet{lykouris2025congested} learn a binary classifier online while controlling admission to human review and scheduling; as in our model, feedback is selective and becomes available only after a review is completed. 
In this work, by contrast, the LLM response rule is fixed and the order of service is not controlled by the learner; what is learned is the LLM's accuracy at each self-confidence level, which determines the deferral policy. Their benchmark allocates review capacity using expected arrival and service rates and assumes admitted jobs are reviewed immediately. We instead compare with the optimal finite-horizon policy subject to the same queue dynamics as the learner, with expert-review and holding costs included in the objective.

Additional related work on uncertainty signals and probabilistic calibration, LLM routing, structured admission-control policies, and human--AI decision support is discussed in Appendix~\ref{app:additional-related}.
\section{Problem formulation and optimal deferral}
\label{sec:problem-formulation}

\subsection{LLM--expert system model} 


\begin{figure}[htb!]
  \centering
  \includegraphics[width=0.532\linewidth]{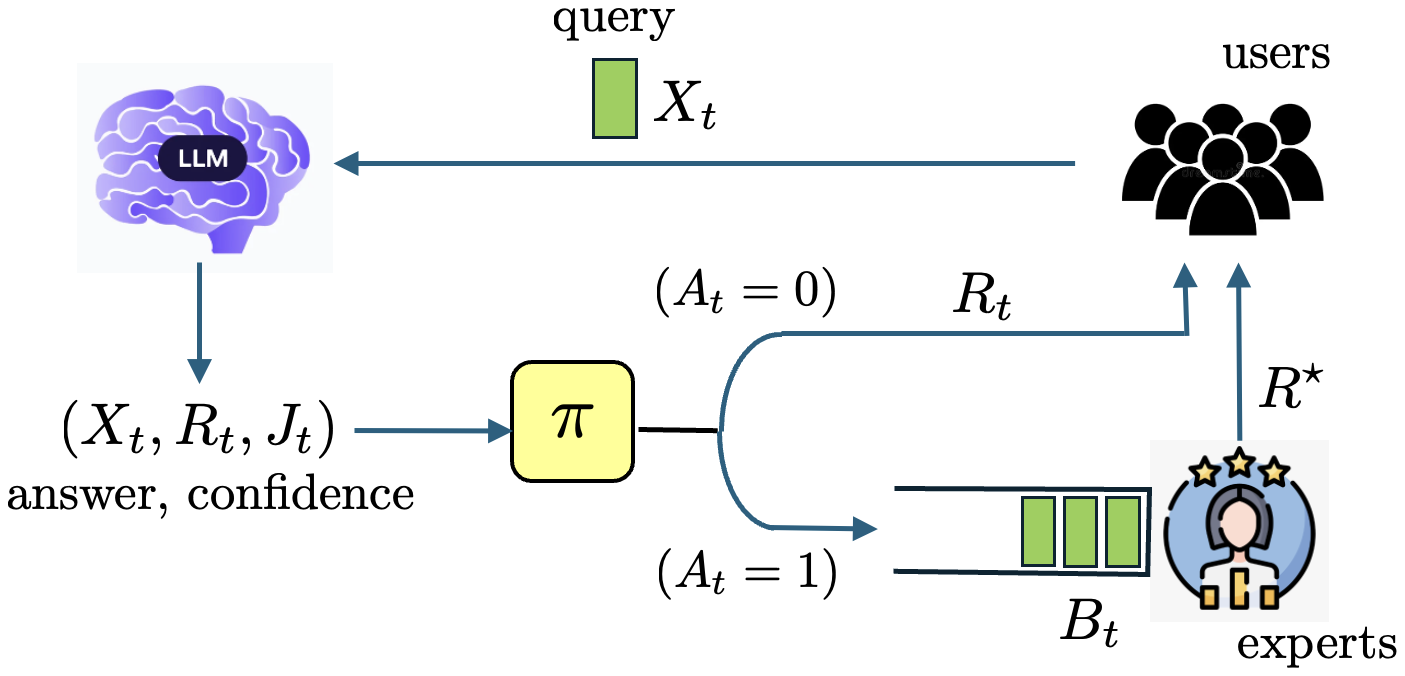}\hfill
  \includegraphics[width=0.428\linewidth]{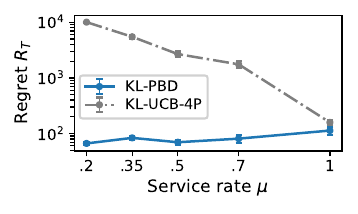}
  \caption{Left: expert-aided LLM system with deferral policy $\pi$ and delayed feedback from completed expert reviews.
Right: regret after $T=5\times 10^4$ queries on MedMCQA with Qwen3-4B against the expert service rate $\mu$, comparing the queue-aware \texttt{KL-PBD} with the queue-unaware \texttt{KL-UCB-4P}. For $\mu=1$, deferred queries are served before the next round and the queue does not build up, so queue awareness provides no benefit. For $\mu<1$, deferred queries accumulate and create congestion, making queue-aware deferral essential for low regret. Details in Appendix~\ref{app:service-rate}.}
  \label{fig:intro-candidates-row}
\end{figure}

We consider a sequential deferral problem over a known horizon $T$. At each round $t=1,\ldots,T$, a user query $X_t$ may arrive. Upon receiving a query, the LLM produces a candidate answer $R_t$ together with an index $J_t\in[K]:=\{1,\ldots,K\}$, which we call its \emph{self-confidence level}.
Our model is agnostic to how $(R_t,J_t)$ is generated. A canonical example is self-consistency \citep{wang2022self}: the LLM generates $K-1$ answers, $R_t$ is chosen from the largest answer-equivalence class, and $J_t$ is one plus the size of that class, so that $J_t=1$ when no answer can be extracted. Alternative constructions of $(R_t,J_t)$ are discussed in Appendix~\ref{app:uncertainty-scores}. Let $R_t^\star$ denote the ground-truth answer and define $Y_t := \mathbf{1}\{R_t \equiv R_t^\star\}$, where $R_1\equiv R_2$ means that the two answers are equivalent. We set $J_t=0$, $X_t=R_t=\emptyset$, $Y_t=1$ when no query arrives and assume that
$(J_t,Y_t)_{t\geq 1}$ is an i.i.d.\ sequence. For
$j\in[K]\cup\{0\}$, let $\omega_j := \mathbb{P}(J_t=j)$, and for $j\in[K]$, define\footnote{If $\omega_j=0$, $h_j^\star$
may be assigned arbitrarily.} $h_j^\star := \mathbb{P}(Y_t=1\mid J_t=j)$. Thus, $h_j^\star$ is the LLM's conditional accuracy at self-confidence level $j$, and we call $h^\star=(h_1^\star,\ldots,h_K^\star)$ the  \emph{conditional accuracy vector}. 

When a query arrives, the decision maker, who observes the self-confidence level $J_t$ but neither $X_t$ nor $R_t$, either accepts the LLM answer or defers the query to a queue served by a pool of experts, who always provide the correct answer. We encode this decision by $A_t\in\{0,1\}$, where $A_t=1$ denotes deferral. We set $A_t=0$ if $J_t=0$. Let $B_t$ be the number of queries in the queue at the beginning of round $t$, and let
$C_t$ be the number of queries that can be served by the experts between
rounds $t$ and $t+1$. We assume that $(C_t)_{t\geq1}$ is i.i.d.\ with distribution $\rho_C$ and independent of $(J_t,Y_t)_{t\ge1}$ and the algorithm's internal randomization, and that $\mu:=\mathbb{P}(C_t \ge 1)>0$. The order in which pending queries are reviewed may depend on the observations and the service capacity, but not on unrevealed ground-truth labels. The queue is initially empty, $B_1=0$, and has known capacity $\bar{B} \ge 1$, so deferral is infeasible when $B_t=\bar{B}$. Its dynamics are $B_{t+1} = \max\{0, B_t+A_t-C_t\}$. 

The cost incurred at round $t$ is $c_t:= c_{\mathrm{hold}} B_t+ c_{\mathrm{exp}} A_t+ c_e(1-A_t)(1-Y_t)$, where $c_{\mathrm{hold}} \ge 0$ is the per-round holding cost of each query in the
queue, $c_{\mathrm{exp}} \ge 0$ is the cost of deferring a query to the experts,
and $c_e>0$ is the cost of returning an incorrect LLM answer. At the end of the horizon, we account for the cost of serving the queries remaining in the queue. These cost parameters are known to the decision maker. Let $D(b)$ denote the expected holding cost of clearing a queue of size $b$. It satisfies $D(0)=0$ and, for $b\geq 1$, $D(b)= c_{\rm hold} b+ \mathbb{E}_{C\sim\rho_C}\!\left[D\bigl((b-C)^+\bigr)\right]$.
The terminal cost is therefore $D(B_{T+1})$. For example, if
$C\sim\operatorname{Bernoulli}(\mu)$, then $D(b)=c_{\rm hold}b(b+1)/(2\mu)$.

The resulting problem is a finite-horizon Markov decision process (MDP) with state $(B_t,J_t)$, action $A_t$, per-round cost $c_t$, and terminal cost $D(B_{T+1})$. A policy $\pi=(\pi_t)_{t=1}^T$ is a sequence of possibly randomized decision rules, where $\pi_t$ maps the current state $(B_t,J_t)$ to a distribution over feasible actions. Our goal is to learn, online, a policy that performs nearly as well as the optimal policy that knows the conditional accuracy vector $h^\star$. We formalize this objective through regret after first characterizing the optimal policy under known $h^\star$.

\subsection{Optimal deferral policy}
\label{sec:optimal-policy}
Fix $h\in[0,1]^K$ and set $h_0:=1$. For
$j\in[K]\cup\{0\}$, define $L^h(j):=c_e(1-h_j)$, which, for $j\in[K]$, is the expected error cost of accepting an LLM response with self-confidence level $j$. Let $V_t^h$ and $Q_t^h$ denote the finite-horizon value and
state-action value functions of the MDP induced by $h$, with terminal value $V_{T+1}^h((b,j))=D(b)$. In other words, $Q_t^h((b,j),a)$ is the expected cost-to-go from state $(b,j)$ when action $a$ is taken at round $t$, and $V_t^h((b,j))$ is its minimum over feasible actions.

To simplify the notation, let $P_a(b,b')$ denote the transition probability from queue state $b$ to $b'$ under action
$a\in\{0,1\}$. Since the queue dynamics depend on the current queue state and action, but not on the current self-confidence level, define,
for $b<\bar B$,
\begin{equation}
\label{eq:marginal_cost_def}
    M_t^h(b)
    :=
    c_{\rm exp}
    +
    \sum_{b'=0}^{\bar B}
    \bigl(P_1(b,b')-P_0(b,b')\bigr)
    \sum_{j=0}^K \omega_j V_{t+1}^h((b',j)).
\end{equation}
We refer to $M_t^h(b)$ as the \emph{marginal cost of deferral}: it
captures both the immediate expert-review cost and the expected increase
in future cost caused by adding a query to the expert queue. We adopt
the convention $M_t^h(\bar B):=+\infty$.

For any $b<\bar B$ and $j\in[K]$, the difference between the costs of
deferring and accepting is
\[
    Q_t^h((b,j),1)-Q_t^h((b,j),0)
    = M_t^h(b)-L^h(j).
\]
Hence, the optimal decision compares the expected error cost of the LLM
response with the marginal cost of deferral. The optimal policy has a threshold structure: a query is deferred whenever the LLM's conditional accuracy is below a threshold determined by the current time and expert backlog. As stated in the following lemma, the threshold is endogenous: through $M_t^h(b)$, it accounts for the future congestion created by the current deferral decision.


\begin{lemma}[Optimal deferral policy]
\label{lem:optimal-deferral}
Fix $h\in[0,1]^K$. Resolving ties in favor of acceptance, an optimal
action at round $t$ is $A_t^h=\mathbf 1\!\left\{J_t\in[K],\;L^h(J_t)>M_t^h(B_t)\right\}$.
Equivalently, defining the accuracy thresholds $\tau_t^h(b)
    :=
    1-M_t^h(b)/c_e$, for all $b\le \bar{B}$,
the optimal policy defers if and only if $h_{J_t}<\tau_t^h(B_t)$. In particular, $\tau_t^h(\bar B)=-\infty$, so deferral is infeasible when the queue is full.
\end{lemma}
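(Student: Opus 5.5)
The argument is a standard Bellman-equation computation for the finite-horizon MDP induced by $h$, combined with the decomposition of the $Q$-function difference already stated above the lemma. Backward induction gives optimality of any policy that is greedy with respect to $Q_t^h$. We write, for $b\le\bar B$ and $j\in[K]\cup\{0\}$,
\[
Q_t^h((b,j),a)=c_{\rm hold}b+c_{\rm exp}a+(1-a)L^h(j)+\sum_{b'}P_a(b,b')\sum_{j'=0}^K\omega_{j'}V_{t+1}^h((b',j')).
\]
This uses that $J_{t+1}$ is independent of $(B_t,J_t,A_t)$ with law $\omega$, and that $C_t$ is independent of everything else, so the next queue state has law $P_a(b,\cdot)$. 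For the error term, conditioning on $J_t=j$ gives $\E[c_e(1-Y_t)\mid J_t=j]=c_e(1-h_j)=L^h(j)$. Hence $V_t^h((b,j))=\min_a Q_t^h((b,j),a)$ over feasible $a$, and by the usual finite-horizon dynamic programming theorem (the state and action spaces are finite) any selector of minimizers is an optimal Markov policy.

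Next we subtract the two actions. For $b<\bar B$ and $j\in[K]$, the holding cost cancels, leaving
\[
Q_t^h((b,j),1)-Q_t^h((b,j),0)=c_{\rm exp}-L^h(j)+\sum_{b'}(P_1-P_0)(b,b')\sum_{j'}\omega_{j'}V_{t+1}^h((b',j'))=M_t^h(b)-L^h(j),
\]
which is \eqref{eq:marginal_cost_def}. Deferral is therefore strictly better if and only if $L^h(j)>M_t^h(b)$. Resolving ties toward acceptance gives $A_t^h=\1\{L^h(J_t)>M_t^h(B_t)\}$. The boundary cases are handled separately:
\begin{itemize}[leftmargin=*]
\item If $J_t=0$, only $A_t=0$ is allowed by the model, and the indicator correctly returns $0$ (also $L^h(0)=0$).
\item If $B_t=\bar B$, deferral is infeasible. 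The convention $M_t^h(\bar B)=+\infty$ makes the indicator $0$, matching the only feasible action.
\end{itemize}

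The threshold form follows by dividing by $c_e>0$. The inequality $c_e(1-h_j)>M_t^h(b)$ is equivalent to $h_j<1-M_t^h(b)/c_e=\tau_t^h(b)$, and $M_t^h(\bar B)=+\infty$ gives $\tau_t^h(\bar B)=-\infty$.

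The only step needing care is justifying the one-step decomposition of $Q_t^h$. Specifically, it must be checked that the unknown service order and the label $Y_t$ do not enter the state transition, so that the queue length together with $J_t$ is a sufficient state. Here we rely on the model assumption that service order does not depend on unrevealed labels and that only queue length enters the costs. With that in place, the remainder is direct algebra.
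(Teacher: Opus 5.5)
Your proposal is correct and follows the same route as the paper: both rest on the identity $Q_t^h((b,j),1)-Q_t^h((b,j),0)=M_t^h(b)-L^h(j)$ for $b<\bar B$ and $j\in[K]$, note that acceptance is the only feasible action on no-query and full-queue states, and break ties toward acceptance. The only difference is that you derive this identity from the explicit one-step Bellman expression for $Q_t^h$, whereas the paper simply invokes it.
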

\subsection{Learning problem and regret decomposition}
\label{sec:learning-problem}

We now turn to the setting in which the true conditional accuracy vector
$h^\star$ is unknown. The statistical challenge is to learn $h^\star$ while making deferral decisions. Feedback is both \emph{selective} and \emph{delayed}. If the learner accepts the LLM response ($A_t=0$), its ground-truth label $Y_t$ is not revealed, and hence the corresponding error cost is not observed. If the learner defers ($A_t=1$), the ground-truth label becomes available only once the query is processed by an expert. Thus, information about $h^\star$ can be acquired only through the capacity-constrained experts, and the resulting feedback is delayed by the same queue that governs the deferral decisions. In particular, exploration affects not only the information available to the learner but also the future state of the system through congestion.

To isolate this learning challenge, our main analysis assumes that the distributions of the confidence levels $\omega$ and of the service capacity $\rho_C$ are known. This assumption does not require analogous active exploration: the self-confidence level $J_t$ is observed whenever a query arrives, so $\omega$ can be estimated passively from arrival frequencies, while observed service capacities provide samples from $\rho_C$ independently of the deferral decisions. In Appendix~\ref{app:unknown-distributions}, we extend our analysis to the setting where $\omega$ and $\rho_C$ are unknown and show that our worst-case regret guarantees are preserved.



A learning algorithm $\pi$ is analogous to a policy, except that since it learns on the way, the decision taken at time $t$ may depend on {\it all} observations made until time $t$. The expected regret of an algorithm $\pi$ is defined by: 
\[
R_T(\pi) = \mathbb E^\pi\!\left[
        \sum_{t=1}^T c_t + D(B_{T+1})
    \right] - \mathbb E^{\pi^\star}\!\left[
        \sum_{t=1}^T c_t + D(B_{T+1})
    \right].
\]

where $\pi^\star$ is the optimal deferral policy for the conditional accuracy vector $h^\star$, identified in Lemma \ref{lem:optimal-deferral}. Note that the oracle, applying $\pi^\star$, and the learner use the same $K$ self-confidence levels; hence, regret measures only the additional cost incurred from learning the unknown conditional accuracy vector $h^\star$, rather than any approximation error induced by the choice of confidence discretization.

For brevity, a superscript $\star$ on a quantity denotes
evaluation at $h=h^\star$ (e.g. $L^{\star} = L^{h^\star}$). Define the instantaneous decision gap
$
    \Delta_t
    :=
    \bigl|
        L^\star(J_t)-M_t^\star(B_t)
    \bigr|
    \mathbf 1\{A_t\neq A_t^\star\}
$
when $J_t\in[K]$ and $B_t<\bar B$, and set $\Delta_t:=0$ otherwise. 

\begin{lemma}[Regret decomposition]
\label{lem:regret-decomposition}
For any learning algorithm $\pi$, $R_T(\pi)= \mathbb E^\pi \left[\sum_{t=1}^T \Delta_t\right].$
\end{lemma}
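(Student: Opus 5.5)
The plan is to prove Lemma~\ref{lem:regret-decomposition} with a finite-horizon performance-difference (telescoping) argument, using the optimal value functions $V_t^\star$ and $Q_t^\star$ of the MDP induced by $h^\star$. Write $S_t:=(B_t,J_t)$. With the terminal condition $V_{T+1}^\star(S_{T+1})=D(B_{T+1})$, the realized total cost of the learner decomposes as
\[
\sum_{t=1}^T c_t + D(B_{T+1}) \;=\; V_1^\star(S_1) + \sum_{t=1}^T \Bigl( c_t + V_{t+1}^\star(S_{t+1}) - V_t^\star(S_t) \Bigr).
\]
Since $B_1=0$ and $J_1\sim\omega$ regardless of the algorithm, $\mathbb E^\pi[V_1^\star(S_1)]=\mathbb E^{\pi^\star}[V_1^\star(S_1)]$. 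The latter equals the expected total cost of $\pi^\star$, because $\pi^\star$ is optimal by Lemma~\ref{lem:optimal-deferral}. It therefore remains to show that each summand has conditional expectation $\Delta_t$.

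Let $\mathcal H_t$ be the learner's information at the moment it chooses $A_t$. This consists of past confidence levels, past actions, past service capacities, the labels revealed so far, the internal randomization, and the current $J_t$ and $B_t$. The key claim is
\[
\mathbb E^\pi\bigl[c_t + V_{t+1}^\star(S_{t+1}) \,\big|\, \mathcal H_t, A_t\bigr] = Q_t^\star(S_t,A_t).
\]
The argument has three parts.
\begin{itemize}[leftmargin=*]
\item Conditional on $J_t$, the label $Y_t$ is independent of $\mathcal H_t$ and $A_t$. This holds because $(J_s,Y_s)$ is i.i.d. across rounds and the labels revealed before round $t$ belong to earlier queries, since $Y_t$ can only be revealed after service. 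Hence $\mathbb E[(1-A_t)(1-Y_t)\mid\mathcal H_t,A_t]=(1-A_t)(1-h^\star_{J_t})$.
\item The capacity $C_t$ is independent of everything before it, so $B_{t+1}$ given $(B_t,A_t)$ follows $P_{A_t}(B_t,\cdot)$. The service order affects which labels are revealed, but it does not affect $B_{t+1}$ or the costs, and by assumption it does not depend on unrevealed labels.
\item The next level $J_{t+1}\sim\omega$ is independent of the past.
\end{itemize}
Combining these three facts with the Bellman recursion defining $Q_t^\star$ gives the claim.

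Given the claim, each summand has conditional expectation $Q_t^\star(S_t,A_t)-V_t^\star(S_t)$, and the last step is to identify this with $\Delta_t$ case by case.
\begin{itemize}[leftmargin=*]
\item If $J_t=0$ or $B_t=\bar B$, the only feasible action is $A_t=0$, which is also optimal, so the difference is $0=\Delta_t$.
\item Otherwise, Lemma~\ref{lem:optimal-deferral} gives $V_t^\star(S_t)=Q_t^\star(S_t,A_t^\star)$ and $Q_t^\star(S_t,1)-Q_t^\star(S_t,0)=M_t^\star(B_t)-L^\star(J_t)$.
\begin{itemize}
\item If $A_t=A_t^\star$, the difference is zero.
\item If $A_t\neq A_t^\star$, the difference is $|L^\star(J_t)-M_t^\star(B_t)|$, because $A_t^\star$ attains the minimum. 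Ties contribute zero in either case.
\end{itemize}
\end{itemize}
Taking expectations via the tower rule and summing over $t$ then yields $R_T(\pi)=\mathbb E^\pi\bigl[\sum_t\Delta_t\bigr]$.

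The main obstacle is the conditional-expectation step. The learner is history dependent and receives delayed, selective feedback, so $\mathcal H_t$ is not the MDP state. One must argue carefully that, despite this, $\mathcal H_t$ carries no information about $Y_t$ beyond $J_t$, nor about $C_t$ or $J_{t+1}$. I would formalize this by conditioning on a filtration that contains only labels of queries already served, together with $J_t$, $B_t$ and the internal randomization, and then invoke the independence assumptions.
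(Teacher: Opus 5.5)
Your proposal is correct and follows the same route as the paper. The paper likewise writes $Q_t^\star(S_t,A_t)=\mathbb E^\pi[c_t+V_{t+1}^\star(S_{t+1})\mid\mathcal F_{t-1},J_t,A_t]$, telescopes the value terms, and uses $B_1=0$ and $J_1\sim\omega$ to match $\mathbb E^\pi[V_1^\star(S_1)]$ with the oracle's cost. It then identifies $Q_t^\star-V_t^\star$ with $\Delta_t$ through the binary-action margin $M_t^\star(B_t)-L^\star(J_t)$; your explicit independence argument for $Y_t$, $C_t$ and $J_{t+1}$ simply spells out what the paper compresses into ``by the Bellman definition and the observation model.''
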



Lemma~\ref{lem:regret-decomposition} shows that regret is the expected cumulative decision gap over the rounds in which the learner takes a strictly suboptimal action. Consequently, the learning problem reduces to estimating $h^\star$ accurately enough to identify on which side of the time- and backlog-dependent oracle threshold
$\tau_t^\star(B_t)$ each self-confidence level lies.

\section{The \texttt{KL-PBD} algorithm and its regret}
\label{sec:kl-pbd}

\subsection{Pessimistic Bellman deferral}
\label{sec:learned-M}

We now introduce our algorithm \texttt{KL-PBD}, which learns the unknown conditional
accuracy vector $h^\star$ from delayed expert feedback while accounting
for the congestion induced by deferral. \texttt{KL-PBD} is summarized in
Algorithm~\ref{alg:kl-pbd}. 

For each self-confidence level $j\in[K]$, let $N_{j,t}$ denote the number of level-$j$ queries whose expert reviews have been \emph{completed} before the decision at round $t$. Importantly, $N_{j,t}$ need not equal the number of level-$j$ queries deferred before $t$: a deferred query contributes to $N_{j,t}$ only once it is served and its ground-truth label becomes available.
The selective and delayed nature of the feedback thus enters the algorithm through $N_{j,t}$. Let $Y_{j,n}$ denote the label revealed by the $n$th completed level-$j$ review, and, for $n\geq 1$, define \(\widehat h_{j,n}:=\frac{1}{n}\sum_{\ell=1}^n Y_{j,\ell}.\)
For $T\geq 3$, let $\phi(T):=\log T+\log\log T$ denote the confidence budget. Given $N_{j,t}\geq 1$ observed labels, we
form the KL lower confidence bound
\[
    \underline h_{j,t}
    :=
    \inf\left\{
        q\in[0,\widehat h_{j,N_{j,t}}]:
        N_{j,t}\,
        \kl\!\left(\widehat h_{j,N_{j,t}},q\right)
        \leq \phi(T)
    \right\}
\]
where $\kl$ denotes the Bernoulli relative entropy,
and set $\underline h_{j,t}=0$ when $N_{j,t}=0$. At round $t$, \texttt{KL-PBD} treats $\underline h_t$ as the current conditional accuracy vector and applies the corresponding optimal policy (Algorithm~\ref{alg:kl-pbd}, lines 5--6).
Equivalently, the learner defers whenever the pessimistic expected error cost of the LLM exceeds the Bellman marginal cost of expert review.

\begin{figure}[htb!]
\noindent
\begin{minipage}[t]{0.54\textwidth}
\vspace{0pt}
\setlength{\intextsep}{0pt}
\begin{algorithm}[H]
\caption{\texttt{KL-PBD}}
\label{alg:kl-pbd}
\begin{algorithmic}[1]
\State Set $\underline h_{j,1}=0$ for every $j\in[K]$.
\For{$t=1,\ldots,T$}
    \State Incorporate all expert labels received before round $t$ and update $N_{j,t}$, $\widehat h_{j,N_{j,t}}$, and $\underline h_{j,t}$.
    \State Observe $(B_t,J_t)$.
    \State Compute $\widehat\tau_t(B_t)=1-M_t^{\underline h_t}(B_t)/c_e$.
    \State Choose \( A_t = \mathbf 1\!\left\{ J_t\in[K],\; \underline h_{J_t,t}<\widehat\tau_t(B_t) \right\}.\) 
\EndFor
\end{algorithmic}
\end{algorithm}
\end{minipage}\hfill
\begin{minipage}[t]{0.43\textwidth}
\vspace{0pt}


The use of a lower confidence bound serves both learning and control. Because correctness is observed only for deferred queries, accepting a query provides no information about $h^\star$. A plug-in estimate can therefore become prematurely optimistic about a self-confidence level, causing the learner to accept subsequent queries from that level and thereby suppress the feedback needed to correct the estimate (see Appendix~\ref{app:gbd-pessimism}). 
\end{minipage}
\end{figure}

In contrast, \texttt{KL-PBD} treats uncertain levels pessimistically: before sufficient expert feedback has accumulated, a smaller $\underline h_{j,t}$ makes deferral more attractive and hence promotes the collection of additional labels. This exploration is nevertheless congestion-aware, since the decision is made through the threshold $\widehat\tau_t(B_t)$ and therefore accounts for the current queue and the future cost of expert review. 

\subsection{Regret guarantees}
\label{sec:kl-pbd-regret}

We first give a gap-dependent regret guarantee. Recall that the oracle defers a level-$j$ query at state $(t,b)$ when $h_j^\star < \tau_t^\star(b)=1-M_t^\star(b)/c_e$. The distance $\left|h_j^\star-\tau_t^\star(b)\right|$ therefore measures how far the conditional accuracy of level $j$ is from the oracle threshold at time $t$ and queue length $b$.
Equivalently, for $b<\bar B$, $\left|L^\star(j)-M_t^\star(b)\right|= c_e\left|h_j^\star-\tau_t^\star(b)\right|$ is the difference between the oracle $Q$-values of accepting and deferring. 
Thus, the farther $h_j^\star$ is from the threshold, the less accurately it needs to be estimated to identify the oracle action.

Importantly, states at which $h_j^\star=\tau_t^\star(b)$ should not be treated as having a problematic ``zero gap.'' At such a state,
$L^\star(j)=M_t^\star(b)$, so accepting and deferring have the same oracle $Q$-value. Choosing either action therefore incurs no regret. Accordingly, the relevant notion of separation is the smallest \emph{nonzero} distance from an oracle threshold. To formalize this, define the set of nonzero oracle gaps over the
horizon by
\[
    \mathcal G_T
    :=
    \left\{
        \left|h_j^\star-\tau_t^\star(b)\right|
        :
        j\in[K],\;
        t\leq T,\;
        b<\bar B,\;
        h_j^\star\neq\tau_t^\star(b)
    \right\}.
\]


If $\mathcal G_T=\emptyset$, every feasible oracle comparison is a tie. Hence the decision gap in Lemma~\ref{lem:regret-decomposition}
is zero at every round, and every algorithm has zero regret. We therefore restrict attention to the nontrivial case
$\mathcal G_T\neq\emptyset$ and define $\gamma_T:=\min\{1,\min \mathcal G_T\}$. 
This quantity captures the smallest separation that
the learner must resolve in order to distinguish a strict oracle preference between accepting and deferring. When this separation is
bounded away from zero, sufficiently accurate estimates of $h^\star$ identify the oracle action.
\begin{theorem}[Gap-dependent regret of \texttt{KL-PBD}]
\label{thm:pbd-gap-dependent}
Suppose that $\mathcal G_T\neq\emptyset$. For every $T\geq3$,
\[
R_T(\pi^{\texttt{KL-PBD}})\le
K\kappa\phi(T)
+2c_e\sum_{j=1}^K\min\biggl\{\omega_jT,\bar{B}
+\frac{4\phi(T)}{\gamma_T}\biggr\}
+2\bar{B}c_e\biggl[1+4\sum_{j=1}^K
\log\Bigl(1+\frac{\omega_jT}{\phi(T)}\Bigr)\biggr]
\]

where $\kappa:=4e(\max\{c_e,c_{\mathrm{exp}}\}
    +c_{\mathrm{hold}}\bar B/\mu)$.
\end{theorem}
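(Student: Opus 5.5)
We start from Lemma~\ref{lem:regret-decomposition}, which gives $R_T=\mathbb E[\sum_t\Delta_t]$, and split the rounds according to a good event. Let $\mathcal E_j$ be the event that for every $n\le \omega_j T$ (or $n\le T$) the KL lower bound built from $n$ level-$j$ labels satisfies $\underline h_{j}\le h_j^\star$. By the standard KL-UCB deviation bound with budget $\phi(T)=\log T+\log\log T$, namely $\mathbb P(\exists n\le T:\ n\,\kl(\widehat h_{j,n},h_j^\star)>\phi(T),\ \widehat h_{j,n}>h_j^\star)\le e\lceil\phi(T)\log T\rceil e^{-\phi(T)}\lesssim e\phi(T)/T$, each level fails with probability $O(e\phi(T)/T)$. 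On the failure event we bound the regret crudely: each round's gap is at most $\max\{c_e,c_{\rm exp}+\text{queue-induced cost}\}$. The queue-induced part of $M_t^\star$ can be bounded via $D(b+1)-D(b)\le c_{\rm hold}\bar B/\mu$, using a coupling of the queue with and without one extra job, which the extra job delays by a geometric($\mu$) number of rounds. This gives a per-round gap of at most $\kappa/(4e)$, so the failure contributes at most $K\kappa\phi(T)$ after summing over $T$ rounds and $K$ levels.

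On the good event, $\underline h_t\le h^\star$ coordinatewise, so the learner is pessimistic. The first step is a monotonicity lemma: $M_t^h(b)$ is nonincreasing (or at least controlled) in $h$, and in fact we only need the key comparison that the learner's decision errs in a single controlled direction. Concretely, a mistake at level $j$ requires $|h_j^\star-\tau_t^\star(B_t)|\le |h_j^\star-\underline h_{j,t}|+|\tau_t^\star-\widehat\tau_t|$. I would bound $|M_t^{\underline h}-M_t^{\star}|$ by $c_e\sum_j\omega_j(h_j^\star-\underline h_{j,t})$-type quantities through a simulation-lemma argument, using the fact that $V$ is $c_e$-Lipschitz in $h$ along each level's accept branch and that accepting versus deferring only shifts the queue by one. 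The dominant term is the local error $h_j^\star-\underline h_{j,t}$. On the good event, Pinsker gives $h_j^\star-\underline h_{j,t}\le 2\sqrt{\phi(T)/(2N_{j,t})}$, so a mistake with gap at least $\gamma_T$ forces $N_{j,t}\lesssim \phi(T)/\gamma_T^2$. A refined KL argument, using the variance-sensitive form $\kl(p,q)\ge (p-q)^2/(2q)$ together with the fact that the per-round regret is $c_e\times$gap, converts this into a regret of $\phi(T)/\gamma_T$ per level, yielding the term $2c_e\cdot 4\phi(T)/\gamma_T$.

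Next we handle the delay. Mistakes at level $j$ are mostly deferrals that should have been acceptances, because pessimism biases toward deferral. Each such deferral eventually yields a label. However, up to $\bar B$ of them can be pending in the queue, so the count of mistaken rounds exceeds the count of completed labels by at most $\bar B$. This gives the $\min\{\omega_jT,\bar B+4\phi(T)/\gamma_T\}$ term, where $\omega_jT$ is the trivial bound on the number of level-$j$ arrivals. The remaining term $2\bar Bc_e[1+4\sum_j\log(1+\omega_jT/\phi(T))]$ comes from errors propagated through $\widehat\tau$. I would bound these by a peeling over doubling epochs of $N_{j,t}$: within each epoch where $N_{j,t}\in[2^k\phi(T),2^{k+1}\phi(T))$, the threshold error induced by stale or pending labels costs at most $O(\bar Bc_e)$. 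The number of epochs is about $\log(1+\omega_jT/\phi(T))$.

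The main obstacle is controlling $|\widehat\tau_t-\tau_t^\star|$. This error depends on all coordinates of $\underline h$ through the Bellman recursion, and a wrong threshold can cause mistakes at a level even when that level is well estimated. I would first try to prove a one-sided monotonicity: with $h\le h^\star$, one has $V^h\ge V^\star$ and the differences in $M$ are dominated by a $\bar B$-weighted sum of $c_e\,\omega_j(h_j^\star-\underline h_j)$. The weight $\bar B$ appears because a perturbation of the accept cost changes the queue trajectory by at most $\bar B$ jobs. I would then sum these errors via the epoch argument above.
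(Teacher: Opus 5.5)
Your handling of the failure event and of \emph{deferral} mistakes is sound, but there is a genuine gap exactly where you locate the main obstacle. You never give a valid way to charge mistakes caused by the pessimistic marginal cost.

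On the good event, pessimism can only raise that cost: $M_t^{\underline h_t}(b)\ge M_t^\star(b)$ (Proposition~\ref{prop:bellman-stability}(ii)), so $\widehat\tau_t(b)\le\tau_t^\star(b)$. The two kinds of mistake then behave very differently.
\begin{itemize}
\item A deferral mistake has gap at most $c_e(h_j^\star-\underline h_{j,t})$ and produces a label. Your pending-label counting handles it.
\item An acceptance mistake requires $\widehat\tau_t(B_t)\le\underline h_{j,t}\le h_j^\star<\tau_t^\star(B_t)$. It is caused entirely by the threshold error, produces no label, and can occur at a level whose own estimate is already accurate. It therefore cannot be counted through $N_{j,t}$.
\end{itemize}

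The threshold error is controlled only through the other levels' confidence widths. A rarely arriving level $k$ keeps its width above $\gamma_T$ for order $\phi(T)/(\omega_k\gamma_T^2)$ rounds, during which level $j$ may arrive far more often. So counting rounds where the threshold error exceeds the gap does not give $O(\phi(T)/\gamma_T)$ per level.

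Consequently, ``mistakes are mostly deferrals'' is precisely what is unproven. The claim that each doubling epoch of $N_{j,t}$ costs $O(\bar Bc_e)$ of threshold error is the missing statement itself, with no mechanism behind it; note also that the algorithm re-solves the recursion at every new label, not once per epoch. Your $\omega$-weighted simulation bound on $M$ is not proved, and even if true it would not close the gap: it bounds the size of the threshold error, not the number of acceptance mistakes it triggers.

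The missing idea is a potential-based Bellman comparison (Lemma~\ref{lem:pbd-model-comparison}) in place of mistake counting.
\begin{itemize}
\item Take any $\mathcal F_{t-1}$-measurable reference $\widetilde h_t\ge\underline h_t$. Set $Z_s^t:=U_s^{\underline h_t}-U_s^{\widetilde h_t}$, which is nondecreasing in the queue state with span at most $\bar Bc_e\|\underline h_t-\widetilde h_t\|_\infty$, and $\Phi_s^t:=Z_s^t(\bar B)-Z_s^t(\cdot)\in[0,\bar Bc_e]$.
\item Because $A_t$ is optimal for $\underline h_t$, the gap $\widetilde\Delta_t:=Q_t^{\widetilde h_t}(S_t,A_t)-V_t^{\widetilde h_t}(S_t)$ satisfies
\[
\E\bigl[\widetilde\Delta_t\mid\mathcal F_{t-1}\bigr]\le\E\bigl[c_e(\widetilde h_{J_t,t}-\underline h_{J_t,t})\mathbf{1}\{A_t=1\}\mid\mathcal F_{t-1}\bigr]+\E\bigl[\Phi_{t+1}^t(B_{t+1})\mid\mathcal F_{t-1}\bigr]-\Phi_t^t(B_t).
\]
\item The potential telescopes except when the model changes. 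Each change costs $\bar Bc_e(\|\underline h_t-\underline h_{t-1}\|_\infty+\|\widetilde h_t-\widetilde h_{t-1}\|_\infty)$, and the KL lower-confidence-bound path has total variation at most $2\log(1+N/\phi(T))$ (Lemma~\ref{lem:pbd-lcb-variation}).
\item This absorbs all acceptance mistakes and charges estimation error only on deferred rounds. It is the true source of the logarithmic $\bar Bc_e$ term.
\end{itemize}

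For the gap dependence, take $\widetilde h_t=\max\{\underline h_t,h^\star-\gamma_T/2\}$.
\begin{itemize}
\item By stability, its strict decisions agree with the oracle's and its margins are at least half the oracle's, so $\Delta_t\le2\widetilde\Delta_t$. This is the factor $2$.
\item Deferred rounds are charged only $c_e(h_j^\star-\underline h_{j,t}-\gamma_T/2)_+$. Their pending-label sum is at most $c_e\bigl[\bar B+\phi(T)\int_0^\infty(\sqrt{2/u}-\gamma_T/2)_+\,du\bigr]=c_e(\bar B+4\phi(T)/\gamma_T)$, using Pinsker alone.
\item The variation of both vectors gives the factor $4$ on $\sum_j\log(1+\omega_jT/\phi(T))$.
\end{itemize}

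Two smaller points.
\begin{itemize}
\item Your width bound needs the two-sided event $N\kl(\widehat h,h^\star)\le\phi(T)$, not only $\underline h\le h^\star$.
\item The uniform gap bound (Lemma~\ref{lem:uniform-bellman-gap}) must control increments of $U_{t+1}^\star$, which carry a $(c_e-c_{\rm exp})_+$ term, not increments of $D$.
\end{itemize}
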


In particular, if $\gamma_T\ge\gamma>0$ uniformly in $T$, then $R_T(\pi^{\texttt{KL-PBD}})=O(\frac{K\log T}{\gamma})$. Thus, when the conditional accuracies remain separated from the oracle thresholds, \texttt{KL-PBD} incurs only logarithmic regret. A stronger version of Theorem~\ref{thm:pbd-gap-dependent} is proven in Appendix~\ref{app:gap-dependent-ub-proof}.

We next give a gap-free guarantee, which holds without any separation from the oracle thresholds. A stronger version of Theorem~\ref{thm:pbd-regret} is proven in Appendix~\ref{app:worst-case-ub-proof}.

\begin{theorem}[Worst-case regret of \texttt{KL-PBD}]
\label{thm:pbd-regret}
For every $T\geq3$,
\[
R_T(\pi^{\texttt{KL-PBD}})\le
K\kappa\phi(T)
+c_e\sum_{j=1}^K\min\biggl\{\omega_jT,\bar{B}+2\sqrt{2\omega_jT\phi(T)}\biggr\}
+\bar{B}c_e\biggl[1+2\sum_{j=1}^K
\log\Bigl(1+\tfrac{\omega_jT}{\phi(T)}\Bigr)\biggr].
\]
\end{theorem}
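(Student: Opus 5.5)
We plan to start from Lemma~\ref{lem:regret-decomposition}, which reduces the task to bounding $\mathbb E[\sum_t \Delta_t]$. We then split the rounds according to whether the lower confidence bounds are valid. Let $\mathcal E$ be the event that $\underline h_{j,t}\le h_j^\star$ for all $j$ and $t$.

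\textbf{Step 1: rounds where $\mathcal E$ fails.} For a fixed $j$, the event $\underline h_{j,t}>h_j^\star$ depends only on the i.i.d.\ label sequence $(Y_{j,n})_n$, through the first $N_{j,t}$ labels. Deferral and queue ordering do not look at unrevealed labels, so this sequence is not affected by the selection. The standard KL deviation bound (Garivier--Capp\'e style, with a union over $n\le T$ via peeling) with budget $\phi(T)=\log T+\log\log T$ then gives $\mathbb P(\exists n\le T:\ n\,\kl(\widehat h_{j,n},h_j^\star)>\phi(T),\ \widehat h_{j,n}>h_j^\star)\le e\lceil\phi(T)\log T\rceil/T$.

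On this failure event we bound the cumulative regret by $T$ times the per-round maximal gap. That gap is at most $\max\{c_e,|M^\star_t|\}$, and $|M_t^\star(b)|\le c_{\rm exp}+c_{\rm hold}\bar B/\mu$, because one extra queued job costs at most its expected sojourn $\bar B/\mu$ rounds of holding. This yields the $K\kappa\phi(T)$ term. The constant $4e$ would need care, but the shape is clear.

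\textbf{Step 2: on $\mathcal E$, relate the learner's errors to the oracle's threshold.} The key structural fact we need is a monotonicity, or sensitivity, property of $M_t^h$ in $h$. Lowering $h$ only raises the value of the accept option and so shifts the thresholds, and we need $|M_t^{\underline h}(b)-M_t^\star(b)|$ to be controlled by the estimation errors of the levels. This is the main obstacle: the threshold $\widehat\tau_t$ is itself computed under a wrong $h$.

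Our plan is to avoid bounding $M^{\underline h}-M^\star$ directly. Instead we use a simulation-style (performance-difference) argument. The cost of the learner's decisions under $h^\star$ is compared with the cost of the decisions that would be optimal for $\underline h$, and the difference telescopes. Each round then contributes at most $c_e(h^\star_{J_t}-\underline h_{J_t,t})\mathbf 1\{A_t\ne A_t^\star\}$, plus terms attributable to pending, unrevealed deferrals.

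Pessimism helps in one direction. Under $\mathcal E$, any level with $h^\star_j<\tau^\star$ has $\underline h_j\le h^\star_j$, so a wrong acceptance can only occur when the level is well estimated. Wrong deferrals are what cost information but also generate it.

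\textbf{Step 3: counting wrong decisions per level.} On $\mathcal E$, define the width $w_{j,t}:=h_j^\star-\underline h_{j,t}\le\sqrt{2\phi(T)/N_{j,t}}$ by Pinsker. A mistake at level $j$ with gap $g\ge\gamma_T$ requires $w_{j,t}\ge g/c_e$-scaled quantities. We then sum the mistake cost, bounded by $c_e w_{j,t}$.

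A mistake is (essentially) a deferral, and each deferral eventually increments $N_{j,\cdot}$. The delay is the only discrepancy: at most $\bar B$ deferrals are pending at any time, so the number of deferrals counted against a given value of $N_{j,t}$ is at most $\bar B+1$. Summing $c_e\sqrt{2\phi/n}$ over $n$ gives $c_e(\bar B+2\sqrt{2\omega_jT\phi(T)})$, and this is capped by the trivial $c_e\omega_jT$.

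The extra term $\bar Bc_e[1+2\sum_j\log(1+\omega_jT/\phi)]$ comes from handling the pending-feedback blocks more carefully. Each dyadic or epoch scale of $N_{j,t}$ between $\phi$ and $\omega_jT$ can cost up to $\bar B$ rounds of stale information. I would organize this via epochs in which $N_{j,t}$ doubles, which produces the logarithm.

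Combining Steps 1 through 3 gives the stated bound.
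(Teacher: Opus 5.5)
Your Step 1 and the Pinsker-width summation are in line with the paper's proof. The argument fails at Step 2. The per-round charge $c_e(h^\star_{J_t}-\underline h_{J_t,t})\mathbf 1\{A_t\neq A_t^\star\}$, together with ``a mistake is (essentially) a deferral'', is not justified. Because $\underline h_t$ is also plugged into the marginal cost, pessimism gives $M_t^{\underline h_t}(b)\ge M_t^{\star}(b)$, i.e.\ $\widehat\tau_t(b)\le\tau_t^\star(b)$. So \texttt{KL-PBD} can accept a level-$j$ query that the oracle defers even when $\underline h_{j,t}=h_j^\star$ exactly; your remark that wrong acceptances require level $j$ to be well estimated misses this. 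The gap of such a round is bounded only by $M_t^{\underline h_t}(B_t)-M_t^\star(B_t)\le c_e\|\underline h_t-h^\star\|_\infty$, i.e.\ by the errors at \emph{other} levels, and the round yields no label. Your Step 3 counting needs every charged round to produce a label, so these rounds remain uncontrolled. A simulation-type comparison does not fix this by itself. Since transitions do not depend on $h$, the per-round discrepancy between the models $h^\star$ and $\underline h_t$ is $c_e(h^\star_{J_t}-\underline h_{J_t,t})(1-A_t)$, which sits on \emph{acceptance} rounds. You give no mechanism that moves it onto deferral rounds, or that handles $\underline h_t$ changing over time.

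The paper supplies this mechanism in Lemma~\ref{lem:pbd-model-comparison}.
\begin{itemize}
\item On $\{t<\sigma\}$ we have $\underline h_t\le h^\star$, so Proposition~\ref{prop:bellman-stability}(ii) makes $Z^t_s:=U_s^{\underline h_t}-U_s^{h^\star}$ nondecreasing in the queue state. The centered potential $\Phi^t_s(b):=Z^t_s(\bar B)-Z^t_s(b)$ then lies in $[0,\bar Bc_e]$.
\item Use that $A_t$ is optimal under $\underline h_t$, and that only acceptance is feasible at $\bar B$, so $Z^t_s(\bar B)-Z^t_{s+1}(\bar B)\le\sum_j\omega_jc_e(h^\star_j-\underline h_{j,t})$. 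This yields $\mathbb E[\Delta_t\mid\mathcal F_{t-1}]\le\mathbb E[c_e(h^\star_{J_t}-\underline h_{J_t,t})\mathbf 1\{A_t=1\}\mid\mathcal F_{t-1}]+\mathbb E[\Phi^t_{t+1}(B_{t+1})\mid\mathcal F_{t-1}]-\Phi^t_t(B_t)$.
\item The width is therefore charged on \emph{every} deferral, and wrong acceptances are absorbed by the potential.
\item The potential telescopes except when $\underline h_t$ moves. Each move costs at most $\bar Bc_e\|\underline h_t-\underline h_{t-1}\|_\infty$, by the span bound in Proposition~\ref{prop:bellman-stability}(i).
\item The pathwise total variation of a KL lower confidence bound is at most $2\log(1+N_{j,T}/\phi(T))$ (Lemma~\ref{lem:pbd-lcb-variation}). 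This, plus a terminal potential of at most $\bar Bc_e$, is the actual source of $\bar Bc_e[1+2\sum_j\log(1+\omega_jT/\phi(T))]$.
\end{itemize}

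Your doubling-epoch explanation via pending labels does not produce this term. Delays are paid entirely by the additive $\bar B$. For the $\ell$-th level-$j$ deferral, $N_{j,t_{j,\ell}}\ge(\ell-\bar B)_+$, since at most $\bar B-1$ jobs can be pending when deferral is feasible; then apply Jensen with $\mathbb E[D_{j,T}]\le\omega_jT$. Note that ``at most $\bar B+1$ deferrals per value of $N_{j,t}$'', as you wrote it, would give a multiplicative $\bar B$.

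Smaller issues:
\begin{itemize}
\item Localize with the stopping time $\sigma$ rather than a global event, so that conditioning on $\mathcal F_{t-1}$ is legitimate.
\item $e^{-\phi(T)}=1/(T\log T)$, so your stated failure probability is off by a factor $\log T$ and would not give $K\kappa\phi(T)$.
\item $M_t^\star$ can exceed $c_{\rm exp}+c_{\rm hold}\bar B/\mu$, because an extra job can block a profitable deferral. The correct bound carries an extra $(c_e-c_{\rm exp})_+$, which is why $\kappa$ involves $\max\{c_e,c_{\rm exp}\}$.
\end{itemize}
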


Up to logarithmic factors, Theorem~\ref{thm:pbd-regret} gives $R_T(\pi^{\texttt{KL-PBD}})=\widetilde O(K+\sum_{j=1}^K\min\{\omega_jT,\sqrt{\omega_jT}\})$. In particular, the worst case over arrival distributions yields $\widetilde O(\sqrt{KT})$ for $K\le T$ (Remark~\ref{rem:ub-worst-case-omega}).

The main difficulty in the regret analysis is that $\underline h_t$ also enters the marginal cost of deferral, so pessimism may cause \texttt{KL-PBD} not only to defer queries that the oracle accepts, but also to accept queries that the oracle defers. 
For a high-level summary of our proof technique, see Appendix~\ref{app:ub-proof-sketch}.


\section{Lower bounds}
\label{sec:known-M}

We now show that the dependence on $K$, $T$, and the gaps in Theorems~\ref{thm:pbd-gap-dependent} and~\ref{thm:pbd-regret} is optimal in general (up to logarithmic factors for worst-case regret) by deriving matching lower bounds, which are proven in Appendix~\ref{app:known-M}. Write $R_T(\pi;h)$ for the regret when the conditional accuracy vector is $h$, and define $\tau:=1-[c_{\rm exp}+(1-\mu)c_{\rm hold}/\mu]/c_e$.
This is the largest accuracy threshold the oracle ever uses: $\tau_t^h(b)\le\tau$ for all $h$, $t$ and $b$, with equality at the empty queue in the last round (Lemma~\ref{lem:lb-one-job-floor}).

We first give an instance-dependent lower bound. We call an algorithm $\pi$ uniformly efficient if, run with horizon $T$, it satisfies $R_T(\pi;h)=o(T^\alpha)$ for every $h\in[0,1]^K$ and every $\alpha>0$. Let $\overline\tau^\star:=\lim_{T-t\to\infty}\tau_t^\star(0)$ be the long-horizon oracle threshold at the empty queue, which exists by Lemma~\ref{lem:lb-one-job-floor}. For every $t$ and $b$, $\tau_t^\star(b)\le\tau_t^\star(0)$ (Proposition~\ref{prop:queue-threshold}) and $\overline\tau^\star\le\tau_t^\star(0)\le\tau$ (Lemma~\ref{lem:lb-one-job-floor}). Thus, $\overline\tau^\star$ is the long-horizon limit of the largest threshold the oracle uses.

\begin{theorem}[Instance-dependent lower bound]
\label{thm:instance_dependent_lower_bound}
If $\overline\tau^\star\in(0,1)$, every uniformly efficient algorithm $\pi$ satisfies
\begin{equation}
\label{eq:main-instance}
\liminf_{T\to\infty}\frac{R_T(\pi;h^\star)}{\log T}
\ge
c_e\sum_{\substack{j\in[K]:\\ h_j^\star>\overline\tau^\star,\ \omega_j>0}}
\frac{h_j^\star-\overline\tau^\star}{\kl(h_j^\star,\overline\tau^\star)}.
\end{equation}
\end{theorem}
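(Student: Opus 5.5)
The plan is a Lai--Robbins style change-of-measure argument, adapted to the selective-feedback structure: information about $h_j$ is collected only when a level-$j$ query is deferred. Fix $j$ with $h_j^\star>\overline\tau^\star$ and $\omega_j>0$. For small $\epsilon>0$, define the alternative instance $h'$ equal to $h^\star$ except $h'_j:=\overline\tau^\star-\epsilon$. Let $N_j(T)$ be the number of level-$j$ queries deferred by $\pi$ up to $T$. Because labels are revealed only for deferred queries, and the queue dynamics, arrivals and service capacities have the same law under both instances, the divergence decomposition gives
\[
\mathrm{KL}\bigl(\mathbb P^\pi_{h^\star},\mathbb P^\pi_{h'}\bigr)\le \mathbb E^\pi_{h^\star}[N_j(T)]\,\kl(h_j^\star,\overline\tau^\star-\epsilon).
\]
Delays only reduce the observed information, so this is an upper bound. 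The standard argument (Bretagnolle--Huber or the data-processing inequality applied to the event $\{N_j(T)\le T/2\}$ or similar), together with uniform efficiency under both $h^\star$ and $h'$, then yields
\[
\liminf_T \frac{\mathbb E_{h^\star}[N_j(T)]}{\log T}\ge \frac{1}{\kl(h_j^\star,\overline\tau^\star-\epsilon)}.
\]
To run this argument I need two facts. Under $h^\star$, deferring level $j$ costs regret. Under $h'$, accepting level $j$ at most rounds costs linear regret.

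For the $h^\star$ side, I use Lemma~\ref{lem:regret-decomposition}. Each deferral of a level-$j$ query at $(t,b)$ with $b<\bar B$ incurs gap $c_e(h_j^\star-\tau_t^\star(b))$ if $h_j^\star>\tau_t^\star(b)$. By Proposition~\ref{prop:queue-threshold}, $\tau_t^\star(b)\le\tau_t^\star(0)$, and $\tau_t^\star(0)\to\overline\tau^\star$ as $T-t\to\infty$ by Lemma~\ref{lem:lb-one-job-floor}. So for any $\delta>0$, all rounds with $t\le T-t_0(\delta)$ have gap at least $c_e(h_j^\star-\overline\tau^\star-\delta)$. The last $t_0$ rounds contribute only $O(1)$ deferrals. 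This gives
\[
R_T\ge c_e(h_j^\star-\overline\tau^\star-\delta)\bigl(\mathbb E[N_j(T)]-t_0\bigr).
\]

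For the $h'$ side, I need the oracle under $h'$ to defer level $j$ at a constant fraction of rounds, so that an algorithm with few level-$j$ deferrals suffers linear regret. Here $h'_j<\overline\tau^\star$, but this alone is insufficient: thresholds depend on the queue, and $\tau_t^{h'}(b)$ can be much smaller than $\tau_t^{h'}(0)$ when $b>0$. Moreover, changing $h_j$ changes $M^{h'}$, hence the limit threshold $\overline\tau^{h'}$, though presumably continuously.

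This is the main obstacle. My approach is to show that, under $h'$, at every round with $B_t=0$, $t\le T-t_0$ and $J_t=j$, the oracle strictly prefers deferral with gap at least $c_e\epsilon/2$. This requires $\tau_t^{h'}(0)\ge\overline\tau^\star-\epsilon/2$, which I would argue from monotonicity and continuity of $M_t^h(0)$ in $h$ (lowering $h_j$ makes deferral of other queries more frequent, but the paper's Lemma~\ref{lem:lb-one-job-floor} presumably bounds the floor). Next I need empty-queue visits to happen linearly often under any algorithm. If the algorithm accepts, the queue drains since $\mu>0$, so on the event that level $j$ is rarely deferred, either many empty-queue level-$j$ arrivals are accepted, costing linear regret, or the queue is kept busy by deferrals of other levels. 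A cleaner route I would try first is to choose $h'$ so that the only threshold change is local. Another option is to lower-bound regret via deferrals of $j$ when $B_t=0$ only, using that $\mathbb P(B_t=0)$ is bounded below along the sequence of rounds where $\pi$ would accept.

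Finally, I combine the two bounds, let $\delta,\epsilon\to0$ using continuity of $\kl$, and sum over $j$. The sum is valid because the regret decomposition is additive across levels. This yields \eqref{eq:main-instance}.
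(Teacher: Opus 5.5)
Your treatment of the true instance is sound, and it is a legitimate variant of the paper's argument. Since $\tau_t^\star(0)$ depends only on $T-t$ and decreases to $\overline\tau^\star$ (Lemma~\ref{lem:lb-one-job-floor}), bounding finite-horizon gaps on rounds $t\le T-t_0(\delta)$ replaces the paper's average-cost disadvantages $\mathfrak d_{h^\star}(b,j,1)\ge c_e(h_j^\star-\overline\tau^\star)$ together with the $\operatorname{sp}(D-u^\star)$ correction. Your information bound is exactly Lemma~\ref{lem:lower-bound-information}.

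The genuine gap is the step you flag yourself. You never show that, under $h'$, an algorithm with few level-$j$ deferrals must incur linear regret. You also need this in the pathwise or high-probability form required to get $\log\bigl(1/\mathbb P_{h'}(E_T)\bigr)\ge(1-\alpha-o(1))\log T$. Neither of your sub-steps works as stated.

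\begin{itemize}
\item Monotonicity points the wrong way: $h'\le h^\star$ gives $\tau_t^{h'}\le\tau_t^\star$ by Proposition~\ref{prop:bellman-stability}(ii). Lipschitz continuity around $h^\star$ does not help either, because $|h'_j-h_j^\star|=h_j^\star-\overline\tau^\star+\epsilon$ is not small.
\item Your target $\tau_t^{h'}(0)\ge\overline\tau^\star-\epsilon/2$ can in fact fail. Take $K=1$, $\omega_1=1$, $c_{\rm hold}>0$ and $C_t\sim\operatorname{Bernoulli}(\mu)$. One finds $\overline\tau(h')=\overline\tau^\star-(1-\mu)\epsilon$, so the long-run empty-queue gap is only $c_e\mu\epsilon$, which is below $c_e\epsilon/2$ when $\mu<1/2$. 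Any positive gap would suffice, but it has to be proved.
\item More seriously, nothing forces an arbitrary algorithm to visit the empty queue linearly often. It can keep $B_t\ge1$ by deferring other levels, and under $h'$ accepting level $j$ at $B_t\ge1$ may be optimal. In that case your per-state accounting via Lemma~\ref{lem:regret-decomposition} need not find any regret at level $j$.
\item To show that keeping the queue busy is itself costly, you would need barrier estimates like Lemma~\ref{lem:lb-local-queue}. These hold only near $h^0$; for a general $h^\star$, deferring very inaccurate levels at large queue states can be optimal.
\end{itemize}

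The missing idea is to argue globally through average-cost gains rather than through the thresholds of $h'$.

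\begin{itemize}
\item With the transformed costs $k_h(b,k,a)=c_{\rm hold}b+a\{c_{\rm exp}-L^h(k)\}$ of Lemma~\ref{lem:lb-transformed-disadvantage}, the instances $h^\star$ and $h'$ differ only on level-$j$ deferrals.
\item Take an $h^\star$-optimal stationary policy, which never defers level $j$, and modify it to defer at $(0,j)$. Using the ACOE of $h^\star$, stationarity, and \eqref{eq:lb-gain-optimal}, this gives $a_\epsilon:=\bar g_{h^\star}-\bar g_{h'}\ge\lambda(0,j)c_e\epsilon>0$. You never need to compute $\overline\tau(h')$.
\item Subtract the disadvantage identities \eqref{eq:lb-state-disadvantage} for the two instances, with $w:=v_{h'}-v_{h^\star}$, and use $\mathfrak d_{h^\star}\ge0$. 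This yields, for every algorithm and every sample path, $r_\epsilon D_{j,T}+\sum_{t\le T}\mathfrak d_{h'}(S_t,A_t)\ge Ta_\epsilon-\operatorname{sp}(w)-\mathcal M_T$. Here $r_\epsilon=c_e(h_j^\star-h'_j)$ and $\mathcal M_T$ is a martingale with bounded increments.
\item Hence on $E_T=\{D_{j,T}<a_\epsilon T/(4r_\epsilon)\}$ one of two things happens. Either the $h'$-disadvantage is at least $a_\epsilon T/4$, which has probability $o(T^{\alpha-1})$ by Markov, uniform efficiency and \eqref{eq:lb-regret-average}. Or an Azuma deviation of $\mathcal M_T$ occurs, which is exponentially unlikely.
\end{itemize}

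This handles the instance-dependent threshold and the queue-occupancy problem at once. It also identifies the right test event: its threshold must be a linear fraction tied to $a_\epsilon/r_\epsilon$, not $T/2$. When $\omega_j<1/2$, the event $\{N_j(T)\le T/2\}$ holds with probability tending to one under any algorithm and any instance, so it cannot separate $h^\star$ from $h'$.
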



As the positive gaps $h_j^\star-\overline\tau^\star$ tend to zero, with $\overline\tau^\star$ bounded away from $0$ and $1$, each summand in the lower bound scales as $2\overline\tau^\star(1-\overline\tau^\star)/(h_j^\star-\overline\tau^\star)$. Hence, the logarithmic growth in $T$ and inverse-gap dependence of Theorem~\ref{thm:pbd-gap-dependent} are necessary in general. When $\mu=1$, $\overline\tau^\star=\tau$ and \texttt{KL-PBD} attains \eqref{eq:main-instance} with equality (Proposition~\ref{prop:kl-noqueue}). Additionally, the stronger version of Theorem~\ref{thm:pbd-gap-dependent} proved in Appendix~\ref{app:gap-dependent-ub-proof} matches the dependence of Theorem~\ref{thm:instance_dependent_lower_bound} on the conditional accuracies for small gaps.

Our proof relies on a change-of-measure argument, as in \citet{lai1985asymptotically}. The main difficulty in our setting is that the marginal cost of deferral depends on the round and the queue state, and that the oracle threshold changes with the instance. 
For a high-level summary of our proof technique, see Appendix~\ref{app:lb-proof-sketch}.

We next give a minimax lower bound.

\begin{theorem}[Minimax lower bound]
\label{thm:known-M-lower}
Assume $\tau\in(0,1)$. For every $T\ge1$,
\begin{equation}
\label{eq:main-minimax}
\inf_\pi\sup_{h\in[0,1]^K}R_T(\pi;h)
\ge
\frac1{256}\sum_{j=1}^K
\min\!\left\{\mu^3c_e\min\{\tau,1-\tau\}\min\{\omega_jT,\sqrt{\omega_jT}\},\ \mu^2c_{\rm hold}\,\omega_jT\right\}.
\end{equation}
When $\mu=1$, the last term can be dropped and the constant $1/256$ improved to $1/12$.
\end{theorem}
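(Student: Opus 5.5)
The plan is a standard two-point (Le Cam / Assouad-type) minimax argument, applied separately to each confidence level. The queue is the only new ingredient, and the point is to control how it couples the levels. I first fix a level $j$ and a per-level gap $\varepsilon_j$. I then construct a family of $2^K$ instances $h^\sigma$, $\sigma\in\{-1,+1\}^K$, with $h^\sigma_j=\tau+\sigma_j\varepsilon_j$ clipped into $[0,1]$. Here $\tau$ is the largest oracle threshold, and by Lemma~\ref{lem:lb-one-job-floor} it is attained at the empty queue in the last round. The difficulty is that the oracle threshold $\tau^{h}_t(b)$ is time- and queue-dependent and moves with $h$. To handle this, I would restrict attention to rounds and states where the threshold is close to $\tau$. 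The simplest choice is the empty-queue state $b=0$, where $M_t^h(0)\ge c_{\rm exp}+(1-\mu)c_{\rm hold}/\mu$ up to a correction. Concretely, the first step is a lemma showing $\tau\ge\tau_t^h(0)\ge\tau-O(\text{future-congestion term})$. By working with instances whose other levels are all above $\tau$, so that the oracle essentially never defers them, I keep the queue mostly empty and the threshold pinned near $\tau$.

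Next I would lower bound the per-level decision gap using Lemma~\ref{lem:regret-decomposition}. When $\sigma_j=+1$ the oracle accepts level $j$ at $b=0$, and each deferral costs at least about $c_e\varepsilon_j$. When $\sigma_j=-1$ the oracle defers, and each acceptance costs at least about $c_e\varepsilon_j$. However, information about $h_j$ is only gathered through deferrals. The standard selective-feedback argument then applies: let $N_j$ be the expected number of level-$j$ deferrals under the $\sigma_j=+1$ instance. That instance's regret is at least $c_e\varepsilon_j N_j$. Meanwhile, the KL divergence between the observation laws under $\sigma_j=\pm1$ is at most $N_j\,\kl(\tau+\varepsilon_j,\tau-\varepsilon_j)\lesssim N_j\varepsilon_j^2/\min\{\tau,1-\tau\}$. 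By Bretagnolle--Huber (or Pinsker), if $N_j$ is small the algorithm cannot tell the two instances apart, so under $\sigma_j=-1$ it accepts $\Omega(\omega_jT)$ level-$j$ queries, each costing about $c_e\varepsilon_j$.

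Optimizing $\varepsilon_j\asymp\sqrt{\min\{\tau,1-\tau\}/(\omega_jT)}$, capped at $\min\{\tau,1-\tau\}$ when $\omega_jT<1$, gives the per-level term $c_e\min\{\tau,1-\tau\}\min\{\omega_jT,\sqrt{\omega_jT}\}$. Summing over $j$ by averaging over $\sigma$ (Assouad) yields the sum in the statement. The factors of $\mu$ arise from the congestion correction. Levels are coupled through the shared queue, and deferrals keep queries waiting roughly $1/\mu$ rounds. This changes the threshold and the fraction of time the queue is empty (at least about $\mu$). I would absorb this by bounding the fraction of rounds spent at $b=0$ from below by $\Omega(\mu)$ and the threshold perturbation by $O(1/\mu)$ times a small quantity. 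This is the source of the $\mu^3$. When these corrections fail, I would instead charge holding cost directly. Any algorithm deferring $\Omega(\omega_jT)$ queries while $\mu<1$ pays holding cost at least $\mu^2c_{\rm hold}\,\omega_jT$ per level up to constants, which gives the second branch of the min.

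The main obstacle is exactly this decoupling step: showing that under the constructed instances the time-varying threshold $\tau_t^{h^\sigma}(B_t)$ stays within $O(\varepsilon_j)$ of a fixed value on a constant ($\mu$-dependent) fraction of rounds, uniformly over the algorithm's behaviour, since the algorithm controls the queue. I would first try comparing against a modified oracle that only uses the empty-queue comparison, using monotonicity in $b$ (Proposition~\ref{prop:queue-threshold}). When $\mu=1$ the queue empties every round, $\tau_t^h(0)=\tau$ exactly, and the argument reduces to a clean two-point bound with constant $1/12$ and no holding term.
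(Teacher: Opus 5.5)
Your $\mu=1$ argument is fine and is essentially the paper's; the paper even uses the cruder bound $\mathrm{KL}\le t\,\omega_j\kl(\tau+\delta_j,\tau-\delta_j)$ instead of deferral counts. For $\mu<1$, however, the step you flag as the main obstacle is left open, and the fixes you sketch do not work.

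\textbf{The sign at $\sigma_j=-1$ is not controlled.} With the uncentered cube $h^\sigma_j=\tau+\sigma_j\varepsilon_j$, the long-horizon empty-queue threshold of $h^\sigma$ is $\tau-z_\sigma$. For Bernoulli service and small perturbations, the $q=1$ increment identity gives $\mu z_\sigma=(1-\mu)\sum_{k:\sigma_k=-1}\omega_k(\varepsilon_k-z_\sigma)_+$. Because $\varepsilon_k\propto(\omega_kT)^{-1/2}$ is heterogeneous, light levels with large $\varepsilon_k$ can push $z_\sigma$ past $\varepsilon_j$ for a heavy level. For example, take $\omega=(0.9,0.1)$, so $\varepsilon_2=3\varepsilon_1$, and $\mu<1/6$. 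Then $\sigma=(-1,-1)$ gives $z_\sigma>\varepsilon_1$, so away from the end of the horizon the oracle accepts level $1$ under both signs of $\sigma_1$. That pair forces essentially no regret at the dominant level. Even when the sign does flip, the gap is only $c_e(\varepsilon_j-z_\sigma)$, which can be of order $\mu c_e\varepsilon_j$. Keeping the other levels above $\tau$ is incompatible with averaging over the whole cube. A bound $\tau_t^h(0)\ge\tau-O(\cdot)$ whose error is of the same order as $\varepsilon_j$ cannot fix the sign either.

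The missing idea is to recenter: take $h^\sigma_j=\tau-a_\sigma+\sigma_j\delta_j$, with $a_\sigma$ the fixed point of a contraction. Then $\overline\tau(h^\sigma)=\tau-a_\sigma$ exactly, and the empty-queue average-cost gaps are exactly $\sigma_jc_e\delta_j$. Neighbouring instances now differ in every coordinate. The KL bound therefore needs $0\le a_{\sigma^{(j)}}-a_\sigma\le 2(1-\mu)\omega_j\delta_j/\mu$, which costs a factor $\mu^{-2}$ in the divergence.

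\textbf{The algorithm controls how often the informative state is visited.} Your dichotomy (either the queue is often empty or the learner pays for congestion) is the right one, but as stated it fails. The empty-queue frequency cannot be bounded below by $\Omega(\mu)$ uniformly over algorithms. With Bernoulli service, $\omega_0=0$ and $\bar B\ge2$, an algorithm that defers whenever feasible almost never has an empty queue. At non-empty states, neighbouring instances may share the same optimal action. Your holding-cost branch is not a regret statement either: when $\sigma_j=-1$, the oracle itself defers level-$j$ queries and pays the same holding cost.

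What is actually needed has two parts. First, a barrier: at states $b\ge r$, with $r$ a median of $C_t$ given $C_t>0$, every deferral has average-cost disadvantage at least $c_{\rm hold}/2-c_e\delta_{\max}$. Second, an occupancy inequality: $\E[N_0(T)]\ge\mu T/2-2\E[H_T]$, where $H_T$ counts deferrals at $b\ge r$. The paper combines these across all levels into a single self-bounding inequality for the averaged disadvantage. The branch $\mu^2c_{\rm hold}\omega_jT$ comes from capping $\delta_j\le\mu c_{\rm hold}/(24c_e)$, which keeps the barrier positive and closes that inequality; it is not a per-level holding charge.

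The factor $\mu^3$ is $\mu T$ from occupancy times $\mu^2$ in $\delta_j$. The $\mu^2$ is forced because the per-round total variation must be at most $\mu/12$ once the KL is inflated by $\mu^{-2}$.

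\textbf{Time dependence of the thresholds.} Since the finite-horizon thresholds vary with $t$, the paper works with ACOE disadvantages. It pays a one-sided terminal correction $\E[(u_h-D)(B_{T+1})]$, bounded through $\lceil B_{T+1}/r\rceil\le1+H_T$. It also uses a separate first-round argument when $\mu^2T\le32$. Your plan contains none of these ingredients.
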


If $c_{\rm hold}>0$ or $\mu=1$, the lower bound scales as $\sum_{j=1}^K\min\{\omega_jT,\sqrt{\omega_jT}\}$. Thus, its dependence on $T$ and $\omega$ agrees with Theorem~\ref{thm:pbd-regret} up to logarithmic factors and the upper bound's additive $K$ term. The lower bound is largest for balanced levels, $\omega_0=0$ and $\omega_j=1/K$ (Remark~\ref{rem:worst_case_distribution}), giving $\min\{T,\sqrt{KT}\}$ and matching the upper bound up to logarithmic factors for $K\le T$.

The main difficulty in the minimax lower-bound construction is that hard instances cannot be obtained by perturbing the conditional accuracies around a fixed threshold, since the threshold itself depends on them, and that nearby instances are only guaranteed to have different optimal actions when the queue is empty. 
For a high-level summary of our proof technique, see Appendix~\ref{app:lb-proof-sketch}.

\section{Numerical experiments}\label{sec:experiments}

We now compare \texttt{KL-PBD} with baselines and examine the structure of the oracle policy on instances built from LLM outputs on GSM8K, NuminaMath-TIR and MedMCQA. Additional results and details are deferred to Appendices~\ref{app:synthetic-experiments} and~\ref{app:real-world-details}.

\paragraph{Instances.} We use two LLMs, Qwen3-4B-Instruct-2507 and Qwen3-0.6B, and build one instance per dataset--LLM pair; results for Qwen2.5-0.5B-Instruct are reported in Appendix~\ref{app:real-world-details}. Each instance uses the self-consistency construction of Section~\ref{sec:problem-formulation}: for every question, the LLM generates $20$ answers, so there are $K=21$ levels. For a query at level $j$, let $p_j$ denote its self-consistency score, i.e., the fraction of generated answers that agree with $R_t$, so that $p=(p_1,\ldots,p_K)=(0,.05,\ldots,1)$, where $p_1=0$ corresponds to queries for which no answer could be extracted. We take $\omega_j$ and $h_j^\star$ to be the empirical frequency and conditional accuracy of self-consistency level $j$ over the whole dataset, and sample queries independently with replacement from that dataset, so that a query arrives at every round ($\omega_0=0$). Regret is evaluated against the optimal policy for this empirical instance. Figure~\ref{fig:accuracy-2by3} shows three calibration regimes. On GSM8K and NuminaMath the score is informative: Qwen3-0.6B is especially close to calibrated on GSM8K, while Qwen3-4B is underconfident at low scores and overconfident at high scores. On MedMCQA, the score is nearly uninformative: accuracy barely varies with the score below $100\%$, and the $100\%$ level has accuracy only $.79$ (Qwen3-4B) and $.68$ (Qwen3-0.6B).

\begin{figure}[t]
  \centering
  \includegraphics[width=\textwidth]{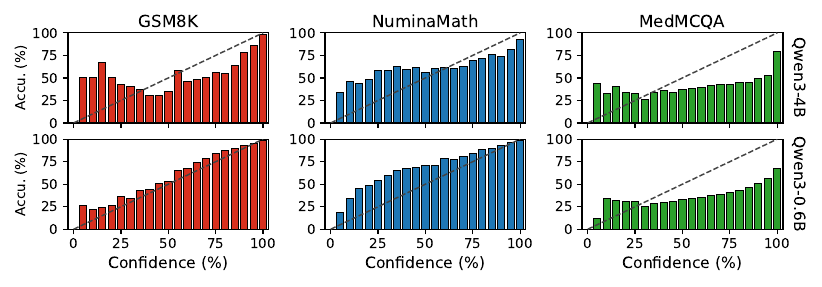}
    \vspace{-1em}
  \caption{Conditional accuracy of the self-consistency score (dashed line: perfect calibration).}
    \vspace{-1em}

  \label{fig:accuracy-2by3}
\end{figure}

\paragraph{Setup and baselines.} We set $(c_e,c_{\rm exp},c_{\rm hold})=(1,0.25,0.03)$, $\bar{B}=8$, $C_t\sim\operatorname{Bernoulli}(0.35)$, and $T=10^5$. The arrival distribution $\omega$ is unknown to the algorithms and is estimated as $\hat{\omega}_{j,t}=t^{-1}\sum_{s=1}^{t}\mathbf{1}\{J_s=j\}$ by all methods that require it, except \texttt{FQT}, which is tuned under a uniform $\omega$. We compare \texttt{KL-PBD} with: i) \texttt{SALMUT-SF}, a threshold-based actor--critic policy adapted from \cite{roy2021online} to handle unknown rewards and selective feedback; ii) Fixed Queue Threshold (\texttt{FQT}), a non-learning baseline that uses a queue threshold optimized under the surrogate $h=p$; iii) variants of \texttt{KL-PBD}, namely \texttt{KL-PBD+}, which uses a more aggressive confidence budget inspired by KL-UCB+ type rules \citep{garivierkl2011,achab2018profitable}, Greedy Bellman Deferral (\texttt{GBD}), which replaces the lower confidence bound $\underline h_t$ by the empirical mean $\widehat h_{j,N_{j,t}}$, and Naive Bellman Deferral (\texttt{NBD}), which freezes the estimate at $h=p$, i.e., treats the score as calibrated. These baselines are described in more detail in Appendix~\ref{app:policy-implementations}.

\paragraph{Regret of \texttt{KL-PBD} against baselines.} In Figure~\ref{fig:regret-2by3}, we report regret averaged over $100$ independent runs with pointwise $95\%$ confidence bands. Solid lines are algorithms that learn $h^\star$, and dashed lines are the baselines using the $h=p$ surrogate. The results illustrate the impact of score calibration: on GSM8K and NuminaMath, \texttt{NBD} is generally the best non-learning baseline and competes with \texttt{KL-PBD} over this horizon, whereas on MedMCQA it incurs $20$ to $65$ times the regret of \texttt{KL-PBD+}. Among learning algorithms, \texttt{GBD} suffers from the selective-feedback failure described in Appendix~\ref{app:gbd-pessimism}, and \texttt{SALMUT-SF} adapts too slowly to be competitive. \texttt{KL-PBD+} attains the lowest regret in all six instances, up to a factor $4$ below \texttt{KL-PBD}: its more aggressive level-dependent confidence budget reduces the conservatism of the deferral rule. Appendix~\ref{app:real-world-details} reports regret values at horizons $T\in\{10^4,5\cdot10^4,10^5,2\cdot10^5\}$; the gap between \texttt{KL-PBD+} and the $h=p$ baselines keeps widening with $T$. Comparisons to queue-unaware baselines, including adaptations of bandit algorithms from \citet{achab2018profitable}, are deferred to Appendix~\ref{app:service-rate}: \texttt{KL-PBD} and \texttt{KL-PBD+} generally beat them by a wide margin when $\mu<1$, and remain competitive under immediate service ($\mu=1$).

\begin{figure}[t]
  \centering
  \includegraphics[width=\textwidth]{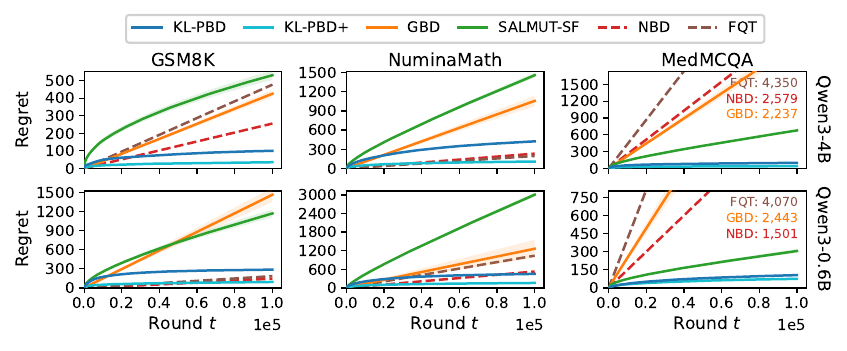}
  \vspace{-1em}
  \caption{Regret averaged over 100 runs. Curves leaving the frame are listed with their final regret.}
  \vspace{-1em}
  \label{fig:regret-2by3}
\end{figure}

\paragraph{Structure of the oracle policy.} Figure~\ref{fig:policy-2by3}  shows the oracle threshold and the resulting decisions for Qwen3-0.6B. For every $\mu$, the threshold decreases with the queue state, as established in Proposition~\ref{prop:queue-threshold}. Higher values of $\mu$ lead to higher deferral thresholds, as faster expert service reduces the congestion cost of deferral: the oracle then defers even queries on which the LLM is fairly accurate. The bottom row shows the oracle deferral decisions for the service rate $\mu=0.35$ used in the regret experiments. Accordingly, the oracle defers fewer levels as the queue becomes more congested. In contrast, it does not always defer fewer levels as self-consistency increases: since the oracle defers exactly the levels whose conditional accuracy lies below the threshold, this occurs because the conditional accuracy is non-monotone and the threshold falls between the accuracies of two levels whose order is reversed relative to their self-consistency. Similar plots for Qwen3-4B and comparisons with the thresholds and decisions learned by \texttt{KL-PBD} are deferred to Appendix~\ref{app:policy-diagnostics}.

\begin{figure}[t]
  \centering
  \includegraphics[width=\textwidth]{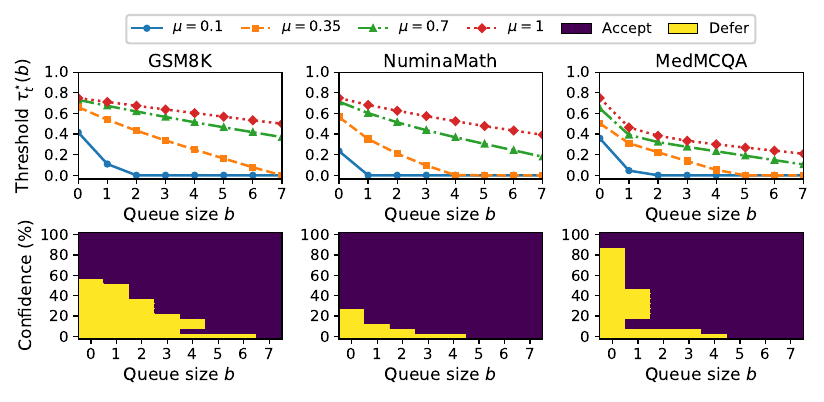}
  \vspace{-1em}
  \caption{Oracle thresholds for varying service rate $\mu$ (top) and oracle decisions for $\mu=0.35$ (bottom), for Qwen3-0.6B at round $t=T/2$.}
  \label{fig:policy-2by3}
  \vspace{-1em}
\end{figure}

\section{Conclusion}

We studied online deferral to a congested expert, where deferral costs depend on the queue state and expert feedback is both selective and delayed. This creates a feedback loop: learning when to trust the LLM requires expert labels, but acquiring those labels consumes the same limited resource that the learner is trying to manage. \texttt{KL-PBD} addresses this through pessimistic estimates of the unknown conditional accuracies, preventing premature acceptance from cutting off the feedback needed for learning. Its regret guarantees are near-optimal in both the worst-case and gap-dependent regimes.

Several extensions are natural. First, expert effort could depend on the query: low-confidence queries may also be harder to review, making service times confidence-level dependent. Second, our benchmark fixes a discretization of the confidence score; for continuous scores especially, choosing the number of levels would trade approximation error against statistical complexity. Third, the number of LLM generations could itself be treated as a design variable, trading the quality of the confidence signal against inference cost.

\subsection*{AI use statement}

In this work, we used generative AI tools to assist with:  
i) writing and editing, and in particular, to improve the presentation of the proofs; ii) coding, to help implement our algorithms and write data-processing and parsing scripts; iii) literature search, to help identify potentially relevant work.

The coding assistance was used strictly to implement our ideas and methods. We conducted our own literature review, and AI was used as a complementary tool for discovering potentially relevant references that we might have missed, not for synthesizing the literature on our behalf. We checked and revised all AI-assisted text and code, and independently verified all AI-suggested references and literature-related claims against the original sources. We take full responsibility for the final content of this work.

\subsection*{Reproducibility statement}

We provide a complete description of our problem setting and assumptions in Section~\ref{sec:problem-formulation}, and of our algorithm in Section~\ref{sec:kl-pbd}. Full proofs of our results are given in Appendices~\ref{app:problem-formulation}--\ref{app:known-M}, with accompanying proof sketches to help the reader navigate them. Our experimental setup is described in Section~\ref{sec:experiments} and Appendices~\ref{app:datasets}--\ref{app:real-world-details}, including the datasets used and how they were parsed, the baselines we compare our algorithm to, and our evaluation protocol. The code used to reproduce our experiments is included in the supplementary material.

\bibliography{references}
\bibliographystyle{iclr2027_conference}

\appendix
\startcontents[appendix]
\printcontents[appendix]{}{1}[2]{\section*{Appendix Content}}
\clearpage
\section{Proofs for Section~\ref{sec:problem-formulation}}
\label{app:problem-formulation}

Throughout the appendices, write
$\mathcal B:=\{0,\ldots,\bar B\}$ and $S_t:=(B_t,J_t)$. Let $\mathcal A(b,j):=\{0,1\}$ when $j\in[K]$ and $b<\bar B$, and $\mathcal A(b,j):=\{0\}$ otherwise. We recall that action $1$ corresponds to deferral, and action $0$ to acceptance. Let $\mathcal F_t$ be the sigma-field generated by the self-confidence levels,
actions, service capacities, completed-query identities and their
revealed ground-truth labels through the end of round $t$, together with past internal randomization. The algorithm chooses $A_t$ using only $\mathcal F_{t-1}$, $J_t$, and independent randomization. The queries completed in each round are selected using only the observed history and service capacity, before any of their ground-truth labels are revealed. Thus $B_t$ is $\mathcal F_{t-1}$-measurable, whereas $J_t$ is independent of $\mathcal F_{t-1}$ with distribution $\omega$.
For a real-valued function $f$ on a finite domain, we write $\operatorname{sp}(f):=\max_x f(x)-\min_x f(x)$.

\subsection{Optimal deferral and regret decomposition}

\begin{proof}[Proof of Lemma~\ref{lem:optimal-deferral}]
For $b<\bar B$, $j\in[K]$, and $s=(b,j)$,
\begin{equation}
\label{eq:bellman-margin}
Q_t^h(s,1)-Q_t^h(s,0)=M_t^h(b)-L^h(j).
\end{equation}
Hence deferral is optimal when $L^h(j)>M_t^h(b)$, acceptance is
optimal when $L^h(j)<M_t^h(b)$, and both actions are optimal at
equality. On no-query and full-queue states, acceptance is the
only feasible action. Resolving ties in favor of acceptance gives
the stated action rule and its equivalent threshold formulation.
\end{proof}

\begin{proof}[Proof of Lemma~\ref{lem:regret-decomposition}]
All expectations are under the true conditional accuracy vector
$h^\star$. By the Bellman definition and the observation model,
\[
Q_t^\star(S_t,A_t)
=
\mathbb E^\pi\!\left[
c_t+V_{t+1}^\star(S_{t+1})
\mid \mathcal F_{t-1},J_t,A_t
\right].
\]
Taking expectations and summing over $t$, the value terms telescope:
\[
\begin{aligned}
\mathbb E^\pi\!\left[
\sum_{t=1}^T
\bigl(Q_t^\star(S_t,A_t)-V_t^\star(S_t)\bigr)
\right]
={}&
\mathbb E^\pi\!\left[
\sum_{t=1}^T c_t+D(B_{T+1})
\right]
-\mathbb E^\pi[V_1^\star(S_1)].
\end{aligned}
\]
Since $B_1=0$ and $J_1\sim\omega$ under both the algorithm and the oracle policy,
\[
\mathbb E^\pi[V_1^\star(S_1)]
=
\sum_{j=0}^K\omega_jV_1^\star((0,j))
=
\mathbb E^{\pi^\star}\!\left[
\sum_{t=1}^T c_t+D(B_{T+1})
\right].
\]
Consequently,
\begin{equation}
\label{eq:regret_decomp}
R_T(\pi)
=
\mathbb E^\pi\!\left[
\sum_{t=1}^T
\bigl(Q_t^\star(S_t,A_t)-V_t^\star(S_t)\bigr)
\right].
\end{equation}

When $J_t\in[K]$ and $B_t<\bar B$,
\eqref{eq:bellman-margin} gives
\[
Q_t^\star(S_t,A_t)-V_t^\star(S_t)
=
\bigl|L^\star(J_t)-M_t^\star(B_t)\bigr|
\mathbf1\{A_t\ne A_t^\star\}
=
\Delta_t
\]
since the action space is binary. Otherwise, acceptance is the only feasible action, and both sides
are zero. Substitution into \eqref{eq:regret_decomp} proves the claim.
\end{proof}

\subsection{Service-capacity structure}

In this section, we prove several structural properties induced by our service-capacity model. The proposition below states that the marginal cost of deferral increases with the expert queue state. 

\begin{proposition}[Monotone marginal cost]
\label{prop:queue-threshold}
For every conditional accuracy vector \(h\) and every \(t\le T\),  the Bellman
marginal cost is nondecreasing in the queue state:
\[
c_{\mathrm{exp}}
\le
M_t^h(0)
\le
M_t^h(1)
\le
\cdots
\le
M_t^h(\bar{B}-1).
\]

Consequently, for every round $t$ and self-confidence level
$j\in[K]$, there exists a threshold
$\tau_{t,j}^{\rm queue}(h)\in\{0,\ldots,\bar B\}$ such that,
resolving ties in favor of acceptance, the optimal policy defers
a query with self-confidence level $j$ if and only if
\[
B_t<\tau_{t,j}^{\rm queue}(h).
\]

\end{proposition}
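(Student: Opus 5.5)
Write $W_{t+1}(b):=\sum_{j=0}^K\omega_jV_{t+1}^h((b,j))$ for the expected cost-to-go at queue state $b$ before the next confidence level is revealed, and define the one-step lookahead $G_t(b,a):=\E_{C\sim\rho_C}[W_{t+1}((b+a-C)^+)]$. Then $M_t^h(b)=c_{\rm exp}+G_t(b,1)-G_t(b,0)$. Both claims about $M_t^h$ reduce to two properties of $W_{t+1}$ on $\mathcal B$: it is nondecreasing, and it is discretely convex, i.e. the increments $\delta_{t+1}(b):=W_{t+1}(b+1)-W_{t+1}(b)$ are nondecreasing in $b$. We must interpret $W_{t+1}(\bar B+1)$ carefully, or restrict to $b\le\bar B-1$ where $b+1\le\bar B$.

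\textbf{Step 1: reduction.} Fix $b<\bar B$ and a realization $C=c$. The map $x\mapsto(x-c)^+$ is nondecreasing, and the two points $(b+1-c)^+$ and $(b-c)^+$ either coincide (when $c>b$) or are consecutive. Hence
\[
G_t(b,1)-G_t(b,0)=\E\bigl[\delta_{t+1}(b-C)\,\mathbf 1\{C\le b\}\bigr].
\]
Monotonicity of $W_{t+1}$ gives $\delta_{t+1}\ge0$, so $M_t^h(b)\ge c_{\rm exp}$. The map $b\mapsto\delta_{t+1}(b-C)\mathbf 1\{C\le b\}$ is nondecreasing in $b$ pointwise in $C$: when $b$ increases, the indicator can only switch on and add a nonnegative term, and otherwise the argument $b-C$ increases, so $\delta_{t+1}$ increases by convexity. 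Taking expectations gives $M_t^h(b)\le M_t^h(b+1)$.

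\textbf{Step 2: backward induction.} We prove that every $W_s$ is nondecreasing and convex on $\mathcal B$. At $s=T+1$, $W_{T+1}=D$. For $D$, monotonicity and convexity follow from its recursion, either by a coupling argument (clearing $b+1$ jobs costs at least as much as clearing $b$ jobs under common capacities) or by inducting on $b$. For the inductive step, write
\[
W_t(b)=c_{\rm hold}b+\sum_j\omega_j\min_{a\in\mathcal A(b,j)}\bigl[c_{\rm exp}a+L^h(j)(1-a)+G_t(b,a)\bigr].
\]
This equals $c_{\rm hold}b+G_t(b,0)+\sum_j\omega_j\min\{L^h(j),M_t^h(b)\}$, with $M_t^h(\bar B)=+\infty$. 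The term $c_{\rm hold}b$ is linear. The term $G_t(b,0)$ is nondecreasing and convex because $(b-c)^+$ is monotone and convex and $W_{t+1}$ is monotone convex. The term $\min\{L,M_t^h(b)\}$ is nondecreasing because $M_t^h(b)$ is, by Step 1 applied at round $t$ using the induction hypothesis. Convexity of this last term is the hard part: the minimum of a constant and a nondecreasing function is generally \emph{not} convex.

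\textbf{Main obstacle.} To handle the min term, we would treat each summand together with $G_t(b,0)$, writing $G_t(b,0)+\min\{L,M_t(b)\}=\min\{L+G_t(b,0),\,c_{\rm exp}+G_t(b,1)\}$. We would then prove convexity of $W_t$ directly via the classical admission-control argument of Stidham and Koole: compare $W_t(b)+W_t(b+2)$ with $2W_t(b+1)$ by having states $b+1$ and $b+1$ mimic the optimal actions taken at $b$ and $b+2$ under common service capacities. The only case that breaks the mimicry is when $b+2=\bar B$ blocks deferral, and this can be swapped across the two copies. If that fails, we would instead prove the weaker statement actually needed, namely that $\delta_{t+1}$ is nondecreasing, via a sample-path coupling of a system with one extra job.

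\textbf{Step 3: the threshold.} Given monotonicity of $M_t^h$, Lemma~\ref{lem:optimal-deferral} says a level-$j$ query is deferred iff $L^h(j)>M_t^h(B_t)$. Because $M_t^h$ is nondecreasing with $M_t^h(\bar B)=+\infty$, the set of queue states $b$ where this holds is a down-set $\{0,\ldots,\tau-1\}$. Set $\tau_{t,j}^{\rm queue}(h):=\min\{b:\ M_t^h(b)\ge L^h(j)\}\in\{0,\ldots,\bar B\}$.
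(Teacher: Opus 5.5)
Your Steps 1 and 3 are correct and coincide with the paper. Your formula for $G_t(b,1)-G_t(b,0)$ is the paper's smoothed-increment identity, with $\delta_{t+1}(b-c)=x^h_{t+1}(b+1-c)$, and your threshold is the paper's $\tau^{\rm queue}_{t,j}(h)$.

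The gap is in Step 2, which carries the whole proposition: you never show that discrete convexity of the queue value survives one backward step. You flag it as the main obstacle, sketch a mimicry argument with an unverified boundary caveat, and fall back on ``the weaker statement that $\delta_{t+1}$ is nondecreasing.'' That statement is not weaker; it is exactly the discrete convexity in question. The base case has a milder version of the same problem. The coupling for $D$ gives only monotonicity, and convexity still needs an argument. One option is induction on $b$ via $\mu g(b)=c_{\rm hold}+\sum_{c=1}^{b-1}\rho_C(c)g(b-c)$ with $g(b):=D(b)-D(b-1)$. Another is the paper's representation $g(q)=c_{\rm hold}\sum_{k\ge0}\mathbb P(S_k<q)$, where $S_k$ is a sum of $k$ i.i.d.\ capacities.

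The missing observation is that the min term need not be convex by itself: the increment of $G_t(\cdot,0)$ exactly offsets its decreasing increments. Set $d_t(b):=G_t(b,1)-G_t(b,0)$ and $r_j:=L^h(j)-c_{\rm exp}$, and note $G_t(q,0)-G_t(q-1,0)=d_t(q-1)$. Differencing your expression for $W_t$ and using $d_t(q-1)\le d_t(q)$ gives
\[
W_t(q)-W_t(q-1)=c_{\rm hold}+\sum_{j=0}^K\omega_j\operatorname{med}\{d_t(q-1),d_t(q),r_j\},\qquad 1\le q<\bar B,
\]
and the same with $\max\{d_t(\bar B-1),r_j\}$ in place of the median at $q=\bar B$. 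This is the paper's route. By Step 1, $d_t$ is nonnegative and nondecreasing, so each median is nonnegative and nondecreasing in $q$, and the boundary maximum dominates the last median.

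Your mimicry idea also works, with no boundary obstruction, if you run it for a single step using the induction hypothesis rather than along sample paths, so no coupling of capacities is needed. Fix a level $j$ and let $a_1,a_2$ be optimal at $b$ and $b+2$. Both actions are feasible at $b+1$ because $b+2\le\bar B$, and the costs $c_{\rm exp}a+L^h(j)(1-a)$ cancel. It remains to check $F(b+1+a_1)+F(b+1+a_2)\le F(b+a_1)+F(b+2+a_2)$ with $F:=G_t(\cdot,0)$. When $a_1=a_2$ or $(a_1,a_2)=(0,1)$, this is convexity of $F$, equivalently monotonicity of $d_t$, which Step 1 gives. When $(a_1,a_2)=(1,0)$, it is an equality.

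The interchange argument is shorter for this proposition alone. The median recursion instead yields explicit increment formulas, which the paper reuses for its Bellman-stability and uniform gap bounds.
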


For a conditional accuracy vector \(h\), consider the queue value
\[
U_t^h(b)
:=
\sum_{j=0}^K
\omega_jV_t^h((b,j)),
\]
i.e., the value at queue state $b$ before the level of the incoming query is revealed. For \(t\le T\) and \(b\in\mathcal B\), let
\(F_t^h(b):=\mathbb E[U_{t+1}^h((b-C_t)^+)]\), and, for
\(b<\bar{B}\), define \(d_t^h(b):=F_t^h(b+1)-F_t^h(b)\).
Then the Bellman marginal cost can be written as
\begin{equation}
\label{eq:service-bellman-marginal}
M_t^h(b)
=
c_{\mathrm{exp}}+d_t^h(b),
\qquad b<\bar{B}.
\end{equation}

We first establish the structural properties of the queue value
function that will also be useful later. Define its adjacent increments
by
\[ x_t^h(q) := U_t^h(q)-U_t^h(q-1), \qquad q=1,\ldots,\bar{B}. \]

\begin{lemma}[Queue-value increments under service capacity]
\label{lem:service-queue-increments}
For every conditional accuracy vector \(h\) and every \(t\le T+1\),
\[ 0 \le x_t^h(1) \le x_t^h(2) \le \cdots \le x_t^h(\bar{B}). \]
Consequently, \(U_t^h\) is nondecreasing and discrete convex in the
queue state.

Moreover, for every \(t\le T\) and \(b<\bar{B}\),
\begin{equation}
\label{eq:service-smoothed-increment}
d_t^h(b)
=
\sum_{c=0}^{b}
\rho_C(c)\,
x_{t+1}^h(b+1-c).
\end{equation}
\end{lemma}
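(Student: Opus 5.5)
We argue by backward induction on $t$, from $t=T+1$ down to $t=1$, proving the increment monotonicity and the formula \eqref{eq:service-smoothed-increment} together. The identity \eqref{eq:service-smoothed-increment} is purely algebraic and holds for any $t\le T$. Write $F_t^h(b)=\sum_{c\ge0}\rho_C(c)U_{t+1}^h((b-c)^+)$. Then
\[
d_t^h(b)=\sum_{c\ge 0}\rho_C(c)\bigl[U_{t+1}^h((b+1-c)^+)-U_{t+1}^h((b-c)^+)\bigr].
\]
For $c\ge b+1$ both arguments are $0$, so the bracket vanishes. For $c\le b$ the bracket equals $x_{t+1}^h(b+1-c)$. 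This gives \eqref{eq:service-smoothed-increment}.

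\emph{Base case.} We have $U_{T+1}^h(b)=D(b)$. We need $0\le D(1)-D(0)\le D(2)-D(1)\le\cdots$. We would prove this by a sample-path coupling. Let $W(b)$ be the total holding cost of clearing a queue of size $b$ under a common capacity sequence $(C_s)$. Starting from $b$ and from $b+1$, the two queues stay one apart until the smaller one empties, and the larger one then takes the extra clearing time. So $W(b+1)-W(b)=c_{\rm hold}\,\sigma(b+1)$, where $\sigma(b+1)$ counts rounds with the $(b+1)$-indexed queue nonempty beyond the smaller one, i.e.\ the rounds until the queue started at $b+1$ drops below $1$ after the queue started at $b$ empties. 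This quantity is stochastically nondecreasing in $b$, since a larger start delays emptying monotonically under the same capacities. Alternatively, one can induct on the recursion $D(b)=c_{\rm hold}b+\mathbb E[D((b-C)^+)]$: the map $b\mapsto \mathbb E[D((b-C)^+)]$ preserves nondecreasing increments by the same telescoping as above, with $x(0):=0$ feeding the truncation.

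\emph{Inductive step.} Assume the increments $x_{t+1}^h$ are nonnegative and nondecreasing. By \eqref{eq:service-smoothed-increment}, $d_t^h(b)$ is a $\rho_C$-weighted sum of $x_{t+1}^h$ evaluated at arguments that shift up with $b$, and passing to $b+1$ adds a term with $c=b+1$. Hence $0\le d_t^h(b)\le d_t^h(b+1)$. Next write
\[
U_t^h(b)=\sum_j\omega_j\min_{a\in\mathcal A(b,j)}Q_t^h((b,j),a).
\]
For $j\in[K]$ and $b<\bar B$ this equals $c_{\rm hold}b+F_t^h(b)+\min\{L^h(j),\,c_{\rm exp}+d_t^h(b)\}$. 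At $b=\bar B$, and for $j=0$, it equals $c_{\rm hold}b+F_t^h(b)+L^h(j)$. Set $g_j(b):=\min\{L^h(j),c_{\rm exp}+d_t^h(b)\}$ for $b<\bar B$, and $g_j(\bar B):=L^h(j)$. Then $U_t^h(b)=c_{\rm hold}b+F_t^h(b)+\sum_j\omega_j g_j(b)$.

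The term $c_{\rm hold}b$ is linear. The term $F_t^h$ has increments $d_t^h$, which are nonnegative and nondecreasing. Each $g_j$ is nondecreasing, because $d_t^h$ is. The main obstacle is that $g_j$ is a minimum of a constant and a convex function, so it is not convex. Its increments can decrease, which happens when $c_{\rm exp}+d_t^h(b)$ crosses $L^h(j)$.

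We would therefore avoid proving convexity of the pieces separately and instead compare increments directly. Write the increment of $F+g_j$ from $b-1$ to $b$ as the increment of $\min\{F(b)+L,\;F(b+1)+c_{\rm exp}\}$, i.e.\ the minimum of two convex sequences, one of which is a shift of the other. Here $F$ is discrete convex, and $F(b)+L$ and $F(b+1)+c_{\rm exp}$ are convex in $b$ with difference $d_t^h(b)+c_{\rm exp}-L$ monotone. A minimum of two convex sequences whose difference is monotone, with one sequence a unit shift of the other, is convex.

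The exchange argument behind this runs as follows. Compare $2\min(\cdot)(b)$ with $\min(\cdot)(b-1)+\min(\cdot)(b+1)$ in the crossing case. It reduces to $F(b)+F(b+1)\le F(b-1)+F(b+2)$, which follows from convexity of $F$. At $b=\bar B$ the minimum is forced to $F(\bar B)+L$, and the same inequality covers the last increment. Summing over $j$ with weights $\omega_j$ preserves nonnegative nondecreasing increments, which completes the induction.
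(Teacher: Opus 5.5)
Your proposal is correct. Its skeleton matches the paper's in several places: the derivation of \eqref{eq:service-smoothed-increment}, the monotonicity of $d_t^h$, and the decomposition of $U_t^h$. Your base case is also the paper's computation in coupling form: the paper writes $D(b)=c_{\rm hold}\sum_{k\ge0}\mathbb E[(b-S_k)^+]$ with $S_k=C_1+\cdots+C_k$, so that $D(q)-D(q-1)=c_{\rm hold}\sum_{k}\mathbb P(S_k<q)$.

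The genuine difference is in the convexity step. The paper subtracts consecutive Bellman equations, using $\sum_j\omega_j=1$ and $r_0^h=-c_{\rm exp}$ for the no-query state. This gives the explicit increment $x_t^h(q)=c_{\rm hold}+\sum_j\omega_j\operatorname{med}\{d_t^h(q-1),d_t^h(q),r_j^h\}$, with a $\max$ at $q=\bar B$. Monotonicity in $q$ is then immediate, because the median clips $r_j^h$ into consecutive intervals of a nondecreasing sequence. You instead write $F_t^h+g_j$ as $\psi_j(b)=\min\{F_t^h(b)+L^h(j),F_t^h(b+1)+c_{\rm exp}\}$ and verify midpoint convexity by cases. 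In effect you use the fact that minimizing a convex sequence over a unit shift with a constant penalty preserves convexity.

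Your route is more structural and avoids both the median bookkeeping and the $r_0^h$ trick. The paper's route buys an explicit increment recursion that it reuses repeatedly. The $1$-Lipschitz and monotone dependence of $\operatorname{med}$ and $\max$ on their arguments is exactly what drives Proposition~\ref{prop:bellman-stability}, Lemma~\ref{lem:passive-stability}, and the increment identities of Lemma~\ref{lem:lb-average-cost}.

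Two details in your write-up should be fixed, though neither is a gap.

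First, $F_t^h(b+1)+c_{\rm exp}-F_t^h(b)-L^h(j)$ is nondecreasing, so the mixed case that actually occurs is $\psi_j(b-1)=F_t^h(b)+c_{\rm exp}$ and $\psi_j(b+1)=F_t^h(b+1)+L^h(j)$. There the sum equals $(F_t^h(b)+L^h(j))+(F_t^h(b+1)+c_{\rm exp})\ge 2\psi_j(b)$, with no use of convexity at all. The inequality $F(b)+F(b+1)\le F(b-1)+F(b+2)$ that you cite handles the opposite mixed case instead. Covering all four cases this way shows the monotone-difference hypothesis is not even needed.

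Second, in the coupling, $W(b+1)-W(b)$ is $c_{\rm hold}$ times the \emph{entire} number of rounds the queue started at $b+1$ is nonempty, namely $\#\{k\ge0:S_k\le b\}$. It is not the extra time after the smaller queue empties. With that reading it is pathwise nondecreasing in $b$, as you need.

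Also, $c_{\rm exp}+d_t^h$ need not be convex, contrary to what you say, but you never use that claim.
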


\begin{proof}
\Eqref{eq:service-smoothed-increment} follows directly from
the service-capacity dynamics. Indeed,
\[
d_t^h(b) = \mathbb E\!\left[ U_{t+1}^h((b+1-C_t)^+) - U_{t+1}^h((b-C_t)^+) \right].
\]
If \(C_t=c\le b\), the two queue states differ by one and the difference
equals \(x_{t+1}^h(b+1-c)\). If \(c>b\), both queue states are zero.
Hence \eqref{eq:service-smoothed-increment} follows.

We prove the monotonicity of the increments by backward induction. At $t=T+1$, we have $U_{T+1}^h=D$, independently of $h$. Let $S_0:=0$ and $S_k:=C_1+\cdots+C_k$ for $k\geq1$, where $C_1,C_2,\ldots$ are i.i.d.\ with law $\rho_C$. Iterating the definition of $D$ gives
\[
D(b)=c_{\rm hold}\sum_{k\geq0}\mathbb E[(b-S_k)^+].
\]
The series is finite since $\mu>0$. Let $g(q):=D(q)-D(q-1)$. Since
\((q-S_k)^+-(q-1-S_k)^+=\mathbf1\{S_k<q\},\)
we have
\[g(q)=c_{\rm hold}\sum_{k\geq0}\mathbb P(S_k<q).\]
Thus $g(q)\geq0$ and is nondecreasing in $q$, so
\[
0\leq g(1)\leq g(2)\leq\cdots\leq g(\bar B).
\]
Therefore $U_{T+1}^h=D$ is nondecreasing and discrete convex, establishing the base case.

Suppose now that
\[ 0 \le x_{t+1}^h(1) \le \cdots \le x_{t+1}^h(\bar{B}). \]
\Eqref{eq:service-smoothed-increment} immediately gives
\(d_t^h(b)\ge0\).
Moreover, for \(b<\bar{B}-1\),
\[
\begin{aligned}
d_t^h(b+1)-d_t^h(b)
={}&
\sum_{c=0}^{b}
\rho_C(c)
\left[
x_{t+1}^h(b+2-c)
-
x_{t+1}^h(b+1-c)
\right]
\\
&+
\rho_C(b+1)x_{t+1}^h(1)
\ge0.
\end{aligned}
\]
Thus
\begin{equation}
\label{eq:service-d-monotone}
0\le d_t^h(0) \le d_t^h(1) \le \cdots \le d_t^h(\bar{B}-1).
\end{equation}

For $j=0,\ldots,K$, write
\(
r_j^h:=L^h(j)-c_{\mathrm{exp}}.
\)
In particular, $r_0^h=-c_{\mathrm{exp}}$ because $L^h(0)=0$.

For $b<\bar{B}$, the Bellman equation for the queue value function gives
\[U_t^h(b)=c_{\mathrm{hold}}b+F_t^h(b)+c_{\mathrm{exp}}+\sum_{j=0}^K\omega_j\min\{r_j^h,d_t^h(b)\}.\]
Indeed, for $j\in[K]$ this represents the minimum between acceptance and
deferral costs. For $j=0$, action $0$ is forced; since
$r_0^h=-c_{\mathrm{exp}}$ and $d_t^h(b)\ge0$, the corresponding term
satisfies
$c_{\mathrm{exp}}+\min\{r_0^h,d_t^h(b)\}=0$, and therefore represents
the no-query state correctly.

Consequently, for $1\le q<\bar B$, subtracting the  Bellman equations for the queue value function at $q$ and $q-1$ gives
\begin{equation}\label{eq:service-increment-recursion}
\begin{aligned}
x_t^h(q)
={}&c_{\mathrm{hold}}+d_t^h(q-1)
+\sum_{j=0}^K\omega_j\left[\min\{r_j^h,d_t^h(q)\}-\min\{r_j^h,d_t^h(q-1)\}\right]\\
={}&c_{\mathrm{hold}}+\sum_{j=0}^K\omega_j\operatorname{med}\{d_t^h(q-1),d_t^h(q),r_j^h\},
\end{aligned}
\end{equation}
where the second equality uses $\sum_{j=0}^K\omega_j=1$ and $d_t^h(q-1)\le d_t^h(q)$, and $\operatorname{med}\{x,y,z\}$ denotes the median of its three arguments.

At $q=\bar{B}$, action $1$ is infeasible.
Hence
\[
U_t^h(\bar{B})
=
c_{\mathrm{hold}}\bar{B}
+
F_t^h(\bar{B})
+
\sum_{j=0}^K\omega_jL^h(j),
\]
which together with the Bellman equation at $\bar{B}-1$ yields
\begin{equation}
\label{eq:service-increment-boundary}
x_t^h(\bar{B})
=
c_{\mathrm{hold}}
+
\sum_{j=0}^K
\omega_j
\max\left\{
d_t^h(\bar{B}-1),
r_j^h
\right\}.
\end{equation}

By \eqref{eq:service-d-monotone}, for every fixed \(r\),
\(\operatorname{med}\{d_t^h(q-1),d_t^h(q),r\}\) is nonnegative and nondecreasing in \(q\). Therefore
\[
0
\le
x_t^h(1)
\le
\cdots
\le
x_t^h(\bar{B}-1).
\]
Finally,
\[
\max\left\{
d_t^h(\bar{B}-1),r
\right\}
\ge
\operatorname{med}
\left\{
d_t^h(\bar{B}-2),
d_t^h(\bar{B}-1),
r
\right\},
\]
whenever \(\bar{B}\ge2\). Combining
\eqref{eq:service-increment-recursion} and
\eqref{eq:service-increment-boundary} therefore gives
\(x_t^h(\bar{B})\ge x_t^h(\bar{B}-1)\).
For \(\bar{B}=1\), the conclusion follows directly from
\eqref{eq:service-increment-boundary}.
This completes the backward induction.
\end{proof}

\begin{proof}[Proof of Proposition~\ref{prop:queue-threshold}]
By Lemma~\ref{lem:service-queue-increments}, \(x_{t+1}^h(q)\) is nonnegative and nondecreasing in \(q\). Hence \eqref{eq:service-smoothed-increment} and \eqref{eq:service-bellman-marginal} give
\[
\begin{gathered}
0\le d_t^h(0)\le d_t^h(1)\le\cdots\le d_t^h(\bar B-1),\\
c_{\mathrm{exp}}\le M_t^h(0)\le M_t^h(1)\le\cdots\le M_t^h(\bar B-1).
\end{gathered}
\]
Fix \(t\) and self-confidence level \(j\), and define
\[
\tau_{t,j}^{\rm queue}(h):=
\min\left(\left\{b\in\{0,\ldots,\bar B-1\}:L^h(j)\le M_t^h(b)\right\}\cup\{\bar B\}\right).
\]
Since \(M_t^h(b)\) is nondecreasing in \(b\), \(L^h(j)>M_t^h(b)\) if and only if \(b<\tau_{t,j}^{\rm queue}(h)\). By \eqref{eq:bellman-margin}, the optimal policy with ties resolved in favor of acceptance therefore defers if and only if \(B_t<\tau_{t,j}^{\rm queue}(h)\).
\end{proof}

\clearpage
\section{Proofs for Section~\ref{sec:kl-pbd-regret}}
\label{app:learned-M}

\subsection{Proof sketch}\label{app:ub-proof-sketch}
To ease navigation within this appendix, we first provide a high-level proof sketch of our gap-dependent and worst-case upper bounds. They are proven in Appendices~\ref{app:gap-dependent-ub-proof} and ~\ref{app:worst-case-ub-proof} respectively.

\textit{Proof sketch.} The analysis exploits that pessimism makes \texttt{KL-PBD} almost one-sided. On the high-probability event $\underline h_t\le h^\star$, if the marginal cost of deferral were computed under the true model $h^\star$, the learner would accept only if the oracle does, and regret would come from deferrals only. This ordering is not exact, because $\underline h_t$ is also plugged into the marginal cost of deferral. To account for these additional errors, we track the difference between the value functions of the pessimistic and true models: Proposition~\ref{prop:bellman-stability} relates this misspecification to the estimation error. Updating $\underline h_t$ leaves an additional term controlled by its total variation, which is logarithmic. The remaining estimation error is charged only on deferred rounds, which are exactly the rounds that produce labels, so the confidence widths shrink as they are charged. Summing these widths incurs only an additive $\bar B$ per level from delayed feedback, since at most $\bar B$ previously deferred queries can still be unlabeled, and yields Theorem~\ref{thm:pbd-regret}. For Theorem~\ref{thm:pbd-gap-dependent}, we exploit that, by Proposition~\ref{prop:bellman-stability}(ii), estimation errors smaller than $\gamma_T/2$ cannot change strict oracle decisions. We therefore compare $\underline h_t$ with the intermediate model obtained by replacing $\underline h_{j,t}$ with $\max\{\underline h_{j,t},h_j^\star-\gamma_T/2\}$, which agrees with the oracle on every strict comparison. Only the confidence width beyond $\gamma_T/2$ needs to be charged, yielding the $1/\gamma_T$ dependence. \qed

\subsection{Notation}

All expectations in this appendix are under $h^\star$ and
$\pi^{\texttt{KL-PBD}}$. For $j\in[K]$ and $t\ge0$, define
$D_{j,t}:=\sum_{s=1}^t\mathbf1\{J_s=j,A_s=1\}$.
Recall that $N_{j,t}$ counts completed self-confidence level $j$ labels available before round $t$. In this appendix, we prove more general regret upper bounds from which the bounds of Theorems~\ref{thm:pbd-regret} and~\ref{thm:pbd-gap-dependent} directly follow. The technical lemmas used in the regret proofs are stated and proved in Section~\ref{app:ub-technical}.


We first establish bounds for a general confidence budget $\phi(T)>1$, then specialize to $\phi(T)=\log T+\log\log T$ to obtain the main-text bounds. For $T\ge2$, define
\begin{equation}
\eta_T:=T\min\left\{ 1,\, 2KT e^{-\phi(T)},\, 2eK\lceil\phi(T)\log T\rceil e^{-\phi(T)} \right\},
\label{eq:kl-failure-factor}
\end{equation} 
set $\eta_1:=\min\{1,2Ke^{-\phi(1)}\}$, and write
\[ \Lambda_T:=\sum_{j=1}^K \log\left(1+\frac{\omega_jT}{\phi(T)}\right).\]
These two quantities are used only to state the appendix bounds.

For a fixed budget $\phi(T)>1$, let $\widehat h_{j,n}$ denote the empirical mean of the first $n$ ground-truth labels of self-confidence level $j$, extended to an infinite i.i.d.\ $\operatorname{Bernoulli}(h_j^\star)$ sequence as in Lemma~\ref{lem:regional-concentration}, and define the concentration event
\begin{equation}
\mathcal E:=\bigcap_{j=1}^K\bigcap_{n=1}^T\left\{n\kl(\widehat h_{j,n},h_j^\star)\le\phi(T)\right\}.
\label{eq:kl-good-event}
\end{equation}
Let also
\begin{equation}
g_{j,0}:=0,
\qquad
g_{j,n}:=\inf\left\{
q\in[0,\widehat h_{j,n}]:
n\kl(\widehat h_{j,n},q)\le\phi(T)
\right\},\quad n\ge1.
\label{eq:kl-lcb}
\end{equation}
Thus $\underline h_{j,t}=g_{j,N_{j,t}}$, where $N_{j,t}$ counts
completed labels available before round $t$.

\subsection{Worst-case regret of \texttt{KL-PBD}}\label{app:worst-case-ub-proof}

We first prove a refined bound that retains the Bernoulli KL geometry.
The main-text theorem follows by bounding the KL confidence width
using Pinsker's inequality.

For a Bernoulli parameter $p\in[0,1]$ and $x\ge0$, define
\begin{equation}
w_p(x):=\sup\left\{
p-q: 0\le q\le p,\ \exists z\in[0,1],\
\kl(z,p)\le x,\ \kl(z,q)\le x
\right\}.
\label{eq:kl-diameter}
\end{equation}
This function is nondecreasing in $x$ and takes values in $[0,p]$.
For $\Gamma>0$ and $d\ge0$, let
\begin{equation}
F_{p,\Gamma}(d):=\int_0^dw_p(\Gamma/u)\,du,
\label{eq:kl-integrated-width}
\end{equation}
with the integrand at zero defined by its right limit.

\begin{theorem}[Refined regret bound]
\label{thm:kl-pbd-regret-refined}
For any fixed budget $\phi(T)>1$,
\begin{equation}
\begin{aligned}
R_T(\pi^{\texttt{KL-PBD}})
\le{}&\eta_T\Delta
+c_e\sum_{j=1}^K\min\left\{
h_j^\star\omega_jT,\,
\bar{B}h_j^\star+F_{h_j^\star,\phi(T)}(\omega_jT)
\right\}
\\
&+\bar{B}c_e(1+2\Lambda_T).
\end{aligned}
\label{eq:kl-main-bound}
\end{equation}
where $\Delta:=\max\{c_e,c_{\mathrm{exp}}\}
    +\frac{c_{\mathrm{hold}}\bar B}{\mu}.$
\end{theorem}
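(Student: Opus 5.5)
We start from Lemma~\ref{lem:regret-decomposition}, $R_T=\mathbb E[\sum_t\Delta_t]$, and split on the concentration event $\mathcal E$ in \eqref{eq:kl-good-event}. On $\mathcal E^c$ we use the crude bound $\Delta_t\le\Delta$. Indeed, $L^\star\le c_e$, and by Proposition~\ref{prop:queue-threshold} together with \eqref{eq:service-smoothed-increment} we have $c_{\rm exp}\le M_t^\star(b)\le c_{\rm exp}+\max_q x_{t+1}^\star(q)$. The increments are at most $c_{\rm hold}\bar B/\mu$ plus an error term, because one extra job costs at most its expected waiting time times the backlog. The maximal-inequality (peeling) argument of Lemma~\ref{lem:regional-concentration} gives $\mathbb P(\mathcal E^c)\le\eta_T/T$ up to the minimum in \eqref{eq:kl-failure-factor}, which yields the $\eta_T\Delta$ term. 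On $\mathcal E$ we have $\underline h_{j,t}=g_{j,N_{j,t}}\le h_j^\star$ for every $j,t$.

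On $\mathcal E$ we compare the learner with the oracle through a value-difference (simulation) argument rather than action by action, because $\underline h_t$ also enters $M_t^{\underline h_t}$. We use Proposition~\ref{prop:bellman-stability}, which we expect to say that $|M_t^{h}(b)-M_t^{h'}(b)|$ is bounded by a sum of the $\omega_j c_e|h_j-h'_j|$ over the remaining horizon, with monotonicity in $h$. We then write $\Delta_t\le (Q_t^\star-V_t^\star)$ and telescope with the learner's model-$\underline h_t$ values. Since KL-PBD acts greedily for $\underline h_t$, the one-step loss is bounded by two pieces. The first is $c_e(h^\star_{J_t}-\underline h_{J_t,t})\mathbf 1\{A_t=1\}$, the error-cost misestimate charged only on deferrals. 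The second is a drift term $\sum_t \bigl(V^{\underline h_{t+1}}_{t+1}-V^{\underline h_t}_{t+1}\bigr)$ arising because the model changes between rounds. Pessimism is what makes the acceptance branch free: on $\mathcal E$, $L^{\underline h_t}\ge L^\star$, so accepting under the pessimistic model is never charged an underestimated error. For the drift, $\underline h_{j,t}$ changes only when a label arrives, and the value perturbation is at most $\bar B c_e$ times the change. Bounding the total variation of $n\mapsto g_{j,n}$ by a logarithm of the number of labels, roughly $2\log(1+\omega_jT/\phi(T))$ after grouping by doubling epochs, gives $\bar Bc_e(1+2\Lambda_T)$.

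For the main term, fix $j$ and sum $c_e(h_j^\star-g_{j,N_{j,t}})$ over rounds with $J_t=j,A_t=1$. Those deferrals are indexed by $D_{j,t}$. Because of the queue, $N_{j,t}\ge D_{j,t-1}-\bar B$, since at most $\bar B$ deferred jobs are pending. On $\mathcal E$, $h_j^\star-g_{j,n}\le w_{h_j^\star}(\phi(T)/n)$: both $h_j^\star$ and $g_{j,n}$ lie within KL radius $\phi(T)/n$ of $z=\widehat h_{j,n}$, which is exactly definition \eqref{eq:kl-diameter}. The first $\bar B$ deferrals contribute at most $\bar B h_j^\star$ (using $g\ge0$). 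The remaining ones contribute $\sum_{n\ge1} w(\phi/n)\le\int_0^{D_j}w(\phi/u)\,du=F_{h_j^\star,\phi}(D_j)$ by monotonicity of $w$. Taking expectations, concavity of $F$ (its integrand is decreasing in $u$) and Jensen with $\mathbb E D_{j,T}\le\omega_jT$ give $F(\omega_jT)$. The trivial bound $h_j^\star\omega_jT$, coming from charging every level-$j$ round at most $c_eh_j^\star$, gives the minimum.

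The main obstacle is the second step: making the telescoping rigorous when the plug-in model appears in $M$. Errors in $\underline h_{k}$ for other levels $k$ shift the threshold for level $j$, and we must ensure these are also paid only on deferred rounds of level $k$ or absorbed into the drift term. We would first try a per-round decomposition $V^{\underline h_t}_t\le V^\star_t$ (pessimism ordering, via monotonicity of the Bellman operator in $h$). We then bound $Q^\star_t(S_t,A_t)-Q^{\underline h_t}_t(S_t,A_t)$ by the immediate error gap plus the expected next-step value gap, so that all accumulated gaps chain along the learner's own trajectory.
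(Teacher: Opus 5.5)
Your three-term skeleton (failure term $\eta_T\Delta$, confidence widths charged only on deferrals with a $\bar B$ pending-label shift and Jensen, and a $\bar Bc_e(1+2\Lambda_T)$ update term) matches the paper. However, the central comparison step is not established, and the tools you propose for it would fail.

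The drift $V^{\underline h_{t+1}}_{t+1}-V^{\underline h_t}_{t+1}$ is \emph{not} bounded by $\bar Bc_e\|\underline h_{t+1}-\underline h_t\|_\infty$. Changing an accuracy changes the expected acceptance cost in every remaining round, so the values move by order $(T-t)c_e\|\underline h_{t+1}-\underline h_t\|_\infty$. Proposition~\ref{prop:bellman-stability} is horizon-free (not a sum over the remaining horizon, as you guess), but it is horizon-free only for $M^h_t$ and for the span $\operatorname{sp}(U^h_t-U^{\widetilde h}_t)$. Telescoping uncentered values also leaves the boundary term $U^{\underline h_1}_1(0)-U^\star_1(0)$, which is of order $T$.

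Your fallback ordering is backwards. Lowering accuracies raises acceptance costs, so $V^{\underline h_t}_t\ge V^\star_t$, and the optimism-style chain $\Delta_t\le Q^\star_t(S_t,A_t)-Q^{\underline h_t}_t(S_t,A_t)$ is unavailable. With the correct ordering, greediness under $\underline h_t$ gives the exact identity
\[
\Delta_t=\bigl(V^{\underline h_t}_t-V^\star_t\bigr)(S_t)-e_{J_t,t}(1-A_t)-\mathbb E\bigl[Z^t_{t+1}(B_{t+1})\mid\mathcal F_{t-1},J_t,A_t\bigr],
\]
where $e_{j,t}:=c_e(h^\star_j-\underline h_{j,t})$ and $Z^t_s:=U^{\underline h_t}_s-U^\star_s\ge0$ can be of order $T-t$. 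The acceptance term is nonpositive, as you say, but it cannot be dropped: it is exactly what pays for the growth of $Z$.

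The missing idea is a centered potential. Write $-e_{J_t,t}(1-A_t)=-e_{J_t,t}+e_{J_t,t}\mathbf 1\{A_t=1\}$ and average over $J_t$. This gives $Z^t_t(B_t)-\bar e_t+\mathbb E[e_{J_t,t}\mathbf 1\{A_t=1\}\mid\mathcal F_{t-1}]-\mathbb E[Z^t_{t+1}(B_{t+1})\mid\mathcal F_{t-1}]$, with $\bar e_t:=\sum_j\omega_je_{j,t}$. Since $Z^t_s$ is nondecreasing in the queue state (Proposition~\ref{prop:bellman-stability}(ii)) and acceptance is forced at $\bar B$, we have $Z^t_s(\bar B)\le\bar e_t+Z^t_{s+1}(\bar B)$. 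Set $\Phi^t_s(b):=Z^t_s(\bar B)-Z^t_s(b)\in[0,\bar Bc_e]$. The one-step regret is then at most the deferral charge plus $\mathbb E[\Phi^t_{t+1}(B_{t+1})\mid\mathcal F_{t-1}]-\Phi^t_t(B_t)$. Only for this centered potential does a model update cost $\bar Bc_e\|\underline h_t-\underline h_{t-1}\|_\infty$, via the span bound, and the last boundary term contributes the $1$. Cross-level threshold shifts, the obstacle you flag, are absorbed automatically in $\Phi$.

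Three smaller points:
\begin{itemize}
\item Restricting to the non-adapted event $\mathcal E$ breaks this conditional-expectation argument. Use the stopping time $\sigma$ instead, for which $\{t<\sigma\}\in\mathcal F_{t-1}$.
\item At the $\ell$-th level-$j$ deferral you need $N_{j,t}\ge(\ell-\bar B)_+$, using $B_t\le\bar B-1$ when deferral is feasible. Your $N_{j,t}\ge D_{j,t-1}-\bar B$ loses an extra $h_j^\star$.
\item Doubling epochs do not pin down the constant $2$ or the $\phi(T)$ in $\log(1+N/\phi(T))$. The paper shows that successes only raise $g_{j,n}$ and failures only lower it. It bounds the upward variation by $\int_0^N\Gamma/(m^2(e^{\Gamma/m}-1))\,dm\le\log(1+N/\Gamma)$, and the downward variation is at most the upward one because $g_0=0\le g_N$.
\end{itemize}
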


Recall the history $\mathcal F_t$ from
Appendix~\ref{app:problem-formulation}.
Immediately before $J_t$ is observed, $B_t$, the completed-label
counts, and $\underline h_t$ are $\mathcal F_{t-1}$-measurable,
while $J_t$ is independent of $\mathcal F_{t-1}$ with distribution
$\omega$. Define
\[\mathcal E_t:=\left\{
N_{j,t}\kl(\widehat h_{j,N_{j,t}},h_j^\star)\le\phi(T)
\text{ for every $j$ with $N_{j,t}\ge1$}
\right\},
\quad
\sigma:=\inf\{t\in[T]:\mathcal E_t^c\}\wedge(T+1).\]
Then $\{t<\sigma\}\in\mathcal F_{t-1}$. Since $N_{j,t}\le T$, we have $\mathcal E_t^c\subseteq\mathcal E^c$ for every $t$, so Lemma~\ref{lem:regional-concentration} gives
\begin{equation}
\mathbb{P}(\sigma\le T)\le\mathbb{P}(\mathcal E^c)\le\frac{\eta_T}{T}.
\label{eq:pbd-failure-prob}
\end{equation}

For $t<\sigma$, let $b_{0,t}:=0$ and for $j \in [K]$, define
\begin{equation}
b_{j,t}:=c_e(h_j^\star-\underline h_{j,t}).
\label{eq:pbd-current-b}
\end{equation}

For $j \in[K],$ Lemma~\ref{lem:kl-width} gives
\begin{equation}
0\le b_{j,t}\le
\begin{cases}
c_eh_j^\star,&N_{j,t}=0,\\[1mm]
c_ew_{h_j^\star}(\phi(T)/N_{j,t}),&N_{j,t}\ge1.
\end{cases}
\label{eq:pbd-current-b-bound}
\end{equation}

\paragraph{Regret decomposition.}
Define
\begin{align*}
\mathcal R_{\rm fail}
&:=
\E\!\left[
\sum_{t=\sigma}^T
\Delta_t\1\{\sigma\le T\}
\right],
\\
\mathcal R_{\rm defer}
&:=
\E\!\left[
\sum_{t<\sigma}
b_{J_t,t}\1\{A_t=1\}
\right],
\\
\mathcal R_{\rm update}
&:=
\bar{B}c_e
\left[
1+
\E\!\left[
\sum_{t=2}^T
\1\{t<\sigma\}
\left\|
\underline h_t-\underline h_{t-1}
\right\|_\infty
\right]
\right].
\end{align*}

\begin{lemma}[Regret decomposition]
\label{lem:pbd-regret-decomposition-fresh}
The regret satisfies
\begin{equation}
\label{eq:pbd-master-decomposition-fresh}
R_T(\pi^{\texttt{KL-PBD}})
\le
\underbrace{\mathcal R_{\rm fail}}_{\mathrm{(I)}}
+
\underbrace{\mathcal R_{\rm defer}}_{\mathrm{(II)}}
+
\underbrace{\mathcal R_{\rm update}}_{\mathrm{(III)}}.
\end{equation}
\end{lemma}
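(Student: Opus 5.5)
The plan starts from the exact identity of Lemma~\ref{lem:regret-decomposition}, $R_T=\E[\sum_t\Delta_t]$. I split the sum at the stopping time $\sigma$. Rounds $t\ge\sigma$ (on $\{\sigma\le T\}$) give exactly $\mathcal R_{\rm fail}$. The work is to show
\[
\E\Bigl[\sum_{t<\sigma}\Delta_t\Bigr]\le\mathcal R_{\rm defer}+\mathcal R_{\rm update}.
\]
For $t<\sigma$, the event $\mathcal E_t$ holds, so the KL lower confidence bound gives $\underline h_{j,t}\le h_j^\star$ and $0\le b_{j,t}$ by \eqref{eq:pbd-current-b-bound}. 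Abbreviate $h:=\underline h_t$, which is $\mathcal F_{t-1}$-measurable. \texttt{KL-PBD} plays the greedy action of Lemma~\ref{lem:optimal-deferral} for $h$, so $Q_t^{h}(S_t,A_t)=V_t^{h}(S_t)$.

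\textbf{Step 1: one-round identity.} Under a fixed model $h$, only the acceptance cost changes, from $L^\star(j)$ to $L^h(j)=L^\star(j)+b_{j,t}$. Writing $G_{t}^{h}:=V_{t}^{h}-V_{t}^\star$, we get
\[
Q_t^\star(S_t,A_t)=Q_t^{h}(S_t,A_t)-b_{J_t,t}(1-A_t)-P_{A_t}G^{h}_{t+1}(B_t,\cdot),
\]
where $P_a$ acts on the queue coordinate and averages over the next level. Hence
\[
\Delta_t=G_t^{h}(S_t)-b_{J_t,t}(1-A_t)-\E\bigl[G^{h}_{t+1}(S_{t+1})\mid\mathcal F_{t-1},J_t\bigr].
\]
Since $J_t\sim\omega$ is independent of $\mathcal F_{t-1}$, I write $-b_{J_t,t}(1-A_t)=b_{J_t,t}A_t-b_{J_t,t}$ and take the expectation over $J_t$. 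This produces the charged term $b_{J_t,t}\1\{A_t=1\}$, which is the summand of $\mathcal R_{\rm defer}$, plus the residual $\bar G_t^h(B_t)-\bar b_t-\E[\bar G^h_{t+1}(B_{t+1})]$. Here $\bar G$ averages over the level and $\bar b_t:=\sum_j\omega_jb_{j,t}$.

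\textbf{Step 2: telescoping with a recentered gap.} Set
\[
H_s^{h}(b):=\bar G_s^{h}(b)-(T-s+1)\textstyle\sum_j\omega_jc_e(h_j^\star-h_j),
\]
which removes the part of the value gap that does not depend on the queue state. The residual then becomes $H_t^{h}(B_t)-\E[H_{t+1}^{h}(B_{t+1})]$. Summing over $t<\sigma$, the sum telescopes except for three kinds of terms:
\begin{itemize}[leftmargin=*]
\item boundary terms at $t=1$ (where $B_1=0$) and at the stopping round;
\item the terminal term, where $V_{T+1}^h=V_{T+1}^\star=D$ gives $H_{T+1}=0$;
\item model-switch terms $H_{t}^{\underline h_{t}}(B_{t})-H_{t}^{\underline h_{t-1}}(B_{t})$ for $2\le t<\sigma$.
\end{itemize}

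\textbf{Step 3: stability of the recentered gap (main obstacle).} I need a Bellman-stability bound of the form $\operatorname{sp}(H_s^{h})\le\bar B c_e\|h-h^\star\|_\infty$. I also need Lipschitz continuity of the recentered part in the model: after removing the common additive term, $|H_s^{h}(b)-H_s^{h'}(b)|\le\bar Bc_e\|h-h'\|_\infty$ for all $b$.

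The intuition is a coupling argument. Changing $h$ changes every accept cost by at most $c_e\|h-h'\|_\infty$ uniformly, and that uniform part is exactly what the recentering subtracts. The state-dependent residue arises only through how the queue position affects the feasibility and timing of acceptances, and the queue position can differ by at most $\bar B$. I would prove this by backward induction on the increments $x_t^h(q)$, using the median recursion \eqref{eq:service-increment-recursion}, whose right-hand side is $1$-Lipschitz in each $r_j^h$. This is the step I expect to be hardest, because both the thresholds and the costs move with $h$.

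\textbf{Step 4: assembling the bound.} Given Step 3, the switch terms sum to at most $\bar Bc_e\sum_t\1\{t<\sigma\}\|\underline h_t-\underline h_{t-1}\|_\infty$. The boundary terms contribute at most $\bar Bc_e\cdot\|\underline h-h^\star\|_\infty\le\bar Bc_e$, which gives the additive ``$1$'' in $\mathcal R_{\rm update}$. The stopping-time boundary is handled by noting that the terms dropped there are nonnegative, or are absorbed since $\{t<\sigma\}\in\mathcal F_{t-1}$ lets the telescoping be stopped legitimately. Collecting everything yields \eqref{eq:pbd-master-decomposition-fresh}.
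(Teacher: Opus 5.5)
\textbf{Gap: Step 3 is false, not just hard.} Your Steps 1 and 2 are correct. The one-round identity is exactly the paper's starting point, and subtracting $(T-s+1)\bar b_t$ is an exact relabeling. The span claim in Step 3 is also true: it is Proposition~\ref{prop:bellman-stability}(i). But the Lipschitz claim on the \emph{levels} of $H$ is false, and the plan breaks there.

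The term you subtract is the value gap of the \emph{always-accept} policy. The queue-independent part of $U_s^h-U_s^\star$ instead depends on which levels each model's optimal policy accepts. A counterexample:
\begin{itemize}
\item Take $\mu=1$ (so the queue stays empty), $\bar B=1$, $K=1$, $\omega_1=1$.
\item Take $h_1<h_1'<h_1^\star$ with $c_e(1-h_1)$ and $c_e(1-h_1')$ both above $c_{\rm exp}>c_e(1-h_1^\star)$.
\end{itemize}
Then $U_s^h(0)=U_s^{h'}(0)=(T-s+1)c_{\rm exp}$, so $H_s^h(0)-H_s^{h'}(0)=(T-s+1)c_e(h_1-h_1')$. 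This grows with the remaining horizon instead of staying below $\bar Bc_e|h_1-h_1'|$.

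Consequences for your telescoping:
\begin{itemize}
\item Each switch term can be of order $(T-t+1)\|\underline h_t-\underline h_{t-1}\|_\infty$, so their sum can be of order $T$ times the total variation.
\item These are compensated only by the nonpositive, possibly order-$T$ boundary term $H_1^{\underline h_1}(0)$, which you bound separately.
\item The stopping boundary $-H_\sigma^{\underline h_{\sigma-1}}(B_\sigma)$ is nonnegative and can be of order $(T-\sigma+1)c_e$. It can be neither dropped nor bounded by $\bar Bc_e$.
\end{itemize}

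\textbf{Missing idea.} Remove the level round by round with a one-sided inequality, and telescope only a span-controlled potential. Write $Z_s^h:=U_s^h-U_s^\star$ and $\Phi_s^h(b):=Z_s^h(\bar B)-Z_s^h(b)$.

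Because $\underline h_t\le h^\star$, Proposition~\ref{prop:bellman-stability}(ii) makes $Z_s^{\underline h_t}$ nondecreasing in the queue state. This is where pessimism enters the potential argument, beyond $b_{j,t}\ge0$. Hence $\Phi_s^{\underline h_t}\in[0,\operatorname{sp}(Z_s^{\underline h_t})]$. Since only acceptance is feasible at $\bar B$,
\[
Z_s^{\underline h_t}(\bar B)=\bar b_t+\E\left[Z_{s+1}^{\underline h_t}((\bar B-C_s)^+)\right]\le\bar b_t+Z_{s+1}^{\underline h_t}(\bar B).
\]
So in your residual $Z_t^{\underline h_t}(B_t)-\bar b_t-\E[Z_{t+1}^{\underline h_t}(B_{t+1})]$, the level part $Z_t^{\underline h_t}(\bar B)-\bar b_t-Z_{t+1}^{\underline h_t}(\bar B)\le0$ can be discarded \emph{before} summing. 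What remains is $\E[\Phi_{t+1}^{\underline h_t}(B_{t+1})]-\Phi_t^{\underline h_t}(B_t)$.

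Telescoping this gives three pieces:
\begin{itemize}
\item an initial term $-\Phi_1^{\underline h_1}(B_1)\le0$;
\item a final term $\Phi_\sigma^{\underline h_{\sigma-1}}(B_\sigma)\le\bar Bc_e$, which is zero if the sum reaches $T+1$, since $Z_{T+1}^h=0$;
\item switch terms $\Phi_t^{\underline h_{t-1}}(B_t)-\Phi_t^{\underline h_t}(B_t)$, each at most $\operatorname{sp}(U_t^{\underline h_{t-1}}-U_t^{\underline h_t})\le\bar Bc_e\|\underline h_t-\underline h_{t-1}\|_\infty$ by Proposition~\ref{prop:bellman-stability}(i).
\end{itemize}
Only spans ever enter, and that is what yields $\mathcal R_{\rm update}$. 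This is exactly the potential argument of Lemma~\ref{lem:pbd-model-comparison}, which the paper applies with $\widetilde h_t=h^\star$.
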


\begin{proof}[Proof of Lemma~\ref{lem:pbd-regret-decomposition-fresh}]
By Lemma~\ref{lem:regret-decomposition},
\[
R_T(\pi^{\texttt{KL-PBD}})
=\mathbb E\!\left[\sum_{t<\sigma}\Delta_t\right]+\mathcal R_{\rm fail}.
\]
Thus it remains to control the regret accumulated before \(\sigma\). On these rounds, \(\underline h_t\le h^\star\) coordinatewise. Apply Lemma~\ref{lem:pbd-model-comparison} with \(\widetilde h_t=h^\star\). Then \(\widetilde\Delta_t=\Delta_t\), \(e_{j,t}=b_{j,t}\), and the reference vector does not change with \(t\). Consequently,
\[
\begin{aligned}
\mathbb E\!\left[\sum_{t<\sigma}\Delta_t\right]
&\le\mathbb E\!\left[\sum_{t<\sigma}b_{J_t,t}\mathbf1\{A_t=1\}\right]
+\bar Bc_e\left(1+\mathbb E\!\left[\sum_{t=2}^{\sigma-1}\|\underline h_t-\underline h_{t-1}\|_\infty\right]\right)\\
&=\mathcal R_{\rm defer}+\mathcal R_{\rm update}.
\end{aligned}
\]
Adding \(\mathcal R_{\rm fail}\) proves \eqref{eq:pbd-master-decomposition-fresh}.
\end{proof}

\begin{lemma}[Term (I): concentration failure]
\label{lem:pbd-failure-fresh}
We have $\mathcal R_{\rm fail}\le\eta_T\Delta$.
\end{lemma}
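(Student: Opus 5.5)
We need to show $\mathcal R_{\rm fail}=\E[\sum_{t=\sigma}^T\Delta_t\1\{\sigma\le T\}]\le\eta_T\Delta$. The plan is to bound each instantaneous gap $\Delta_t$ by a deterministic constant of order $\Delta$. Since $\{\sigma\le T\}$ has probability at most $\eta_T/T$ by \eqref{eq:pbd-failure-prob} and there are at most $T$ such rounds, this gives
\[
\mathcal R_{\rm fail}\le T\,\sup_t\|\Delta_t\|_\infty\,\mathbb P(\sigma\le T)\le \eta_T\,\sup_t\|\Delta_t\|_\infty .
\]
So the whole content is the uniform bound $\Delta_t\le\Delta$.

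By definition, $\Delta_t\le|L^\star(J_t)-M_t^\star(B_t)|$ whenever $J_t\in[K]$ and $B_t<\bar B$. Here $L^\star(j)=c_e(1-h_j^\star)\in[0,c_e]$. By Proposition~\ref{prop:queue-threshold}, $M_t^\star(b)\ge c_{\rm exp}\ge0$. The difference of two nonnegative numbers is at most the larger of them, so it suffices to show
\[
M_t^\star(b)\le c_{\rm exp}+c_{\rm hold}\bar B/\mu .
\]
Then $|L^\star-M_t^\star|\le\max\{c_e,\,c_{\rm exp}+c_{\rm hold}\bar B/\mu\}\le\Delta$.

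To bound $M_t^\star(b)$, use \eqref{eq:service-bellman-marginal} and \eqref{eq:service-smoothed-increment}: $M_t^\star(b)-c_{\rm exp}=d_t^\star(b)$, and $d_t^\star(b)$ is a subprobability-weighted average of the increments $x_{t+1}^\star(q)$. So it suffices to bound $x_{t+1}^\star(q)=U_{t+1}^\star(q)-U_{t+1}^\star(q-1)$ by $c_{\rm hold}\bar B/\mu$ plus anything absorbed by the $\max$. This is the main step. I would use a coupling argument. Start the system from queue $q$ and from queue $q-1$, and apply in the larger system the policy that is optimal for the smaller one, with the same arrivals and service capacities. Deferral is feasible in the larger system whenever the smaller one defers, except at the boundary $q=\bar B$, where I instead let the larger system accept. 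The error cost of that substitution is absorbed because accepting rather than deferring changes the cost by at most $c_e$ minus a saved $c_{\rm exp}$.

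Under this coupling, the queue lengths differ by at most one until the first round with $C_s\ge1$ while the larger queue is nonempty, after which they coincide. During that period the extra holding cost is $c_{\rm hold}$ per round, or is covered by the terminal $D$-increment. The expected length of the period is at most $1/\mu$, which gives $x_{t+1}^\star(q)\le c_{\rm hold}/\mu$. Any looser slack, such as the $\bar B$ factor or the boundary, fits inside $c_{\rm hold}\bar B/\mu$.

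Alternatively, one can bound the terminal increment $D(q)-D(q-1)=c_{\rm hold}\sum_k\mathbb P(S_k<q)\le c_{\rm hold}q/\mu$ directly, and propagate the bound backward with \eqref{eq:service-increment-recursion}. The median there is at most $d_t(q)$, and $r_j\le c_e$, so one shows inductively that $x_t(q)\le \max\{c_e,\,c_{\rm hold}\bar B/\mu\}$ plus a term controlled by $c_{\rm exp}$. I expect the coupling or backward-induction bound on the increments to be the only nonroutine step. The rest is the probability bound already given in \eqref{eq:pbd-failure-prob}.
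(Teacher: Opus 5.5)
Your reduction $\mathcal R_{\rm fail}\le T\Delta\,\mathbb P(\sigma\le T)\le\eta_T\Delta$ is exactly the paper's proof. The paper takes the pathwise bound $0\le\Delta_t\le\Delta$ from the separately proved Lemma~\ref{lem:uniform-bellman-gap}, so citing that lemma would finish your argument. The gap is in your own derivation of this bound, which you identify as the whole content.

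First, your sufficient condition $M_t^\star(b)\le c_{\rm exp}+c_{\rm hold}\bar B/\mu$ is false. The full-queue constraint makes queue increments positive even without holding costs. Take $\bar B=1$, $c_{\rm hold}=0$ (so $D\equiv0$), $\mu<1$, and a level with $\omega_j>0$ and $r_j^\star=c_e(1-h_j^\star)-c_{\rm exp}>0$. Then \eqref{eq:service-increment-boundary} gives $x_T^\star(1)=\sum_{j=1}^K\omega_j(r_j^\star)_+>0$, hence $M_{T-1}^\star(0)=c_{\rm exp}+(1-\mu)x_T^\star(1)>c_{\rm exp}+c_{\rm hold}\bar B/\mu$. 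An additive $(c_e-c_{\rm exp})_+$ is unavoidable. It is absorbed by $\Delta=\max\{c_e,c_{\rm exp}\}+c_{\rm hold}\bar B/\mu$, not by your $\max\{c_e,c_{\rm exp}+c_{\rm hold}\bar B/\mu\}$.

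Second, the coupling step fails. Suppose the larger system mirrors the smaller one's deferrals, and write $B_s^{\rm low}$ for the smaller queue. The one-job difference then survives every round with $C_s\le B_s^{\rm low}+A_s$. It disappears only when the capacity exceeds the smaller system's post-decision queue. That takes about $q$ service rounds even without deferrals, and may not happen before the horizon if the smaller system defers whenever its queue is empty. Your conclusion $x_{t+1}^\star(q)\le c_{\rm hold}/\mu$ is already false at $t+1=T+1$: for Bernoulli service, $x_{T+1}(q)=D(q)-D(q-1)=qc_{\rm hold}/\mu$.

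The paper's coupling instead lets the larger system accept whenever the smaller one defers, at every queue state and not only at $\bar B$. The post-decision queues then coincide at once, at a one-time extra cost of at most $(c_e-c_{\rm exp})_+$. Otherwise the queues merge after $q$ rounds with $C_t\ge1$. This gives $x_t^h(q)\le qc_{\rm hold}/\mu+(c_e-c_{\rm exp})_+$, hence $c_{\rm exp}\le M_t^\star(b)\le\Delta$.

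Your backward-induction alternative also does not close as sketched. Bounding the median in \eqref{eq:service-increment-recursion} by $d_t(q)$ loses all contraction, and the resulting recursion only yields a bound growing linearly in $T-t$. What works is $\operatorname{med}\{d_t(q-1),d_t(q),r_j\}\le\max\{d_t(q-1),(c_e-c_{\rm exp})_+\}$, combined with $d_t(q-1)\le(1-\mu)x_{t+1}(q)+\mu\max_{k<q}x_{t+1}(k)$. The latter holds because the top increment enters $d_t(q-1)$ only with weight $\rho_C(0)=1-\mu$. Together these propagate $x_t(q)\le qc_{\rm hold}/\mu+(c_e-c_{\rm exp})_+$ backward from the base case $D(q)-D(q-1)\le qc_{\rm hold}/\mu$.
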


\begin{proof}
Lemma~\ref{lem:uniform-bellman-gap} and
\eqref{eq:pbd-failure-prob} give
\[
\mathcal R_{\rm fail}
\le T\Delta\mathbb{P}(\sigma\le T)
\le\eta_T\Delta.
\]
\end{proof}

\begin{lemma}[Term (II): deferred-round contribution]
\label{lem:pbd-deferral-fresh}
We have
\begin{equation}
\mathcal R_{\rm defer}
\le c_e\sum_{j=1}^K\min\left\{
h_j^\star\omega_jT,\,
\bar{B}h_j^\star+F_{h_j^\star,\phi(T)}(\omega_jT)
\right\}.
\label{eq:pbd-deferral-total-fresh}
\end{equation}
\end{lemma}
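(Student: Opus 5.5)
We bound the deferred-round contribution separately for each level $j\in[K]$ and then sum. Write $\mathcal R_{\rm defer}=\sum_{j=1}^K\mathbb E[\sum_{t<\sigma}b_{j,t}\mathbf 1\{J_t=j,A_t=1\}]$. The first term of the minimum follows at once from \eqref{eq:pbd-current-b-bound}. Since $b_{j,t}\le c_eh_j^\star$ in all cases, level $j$ contributes at most $c_eh_j^\star\,\mathbb E[\sum_{t\le T}\mathbf 1\{J_t=j\}]=c_eh_j^\star\omega_jT$. The real work is the second term, $c_e(\bar Bh_j^\star+F_{h_j^\star,\phi(T)}(\omega_jT))$.

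The plan is to index the level-$j$ deferred rounds by their deferral count. Let $t_1<t_2<\cdots$ be the rounds with $J_t=j$ and $A_t=1$, so that $D_{j,t_k}=k$.

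The key delay fact is that at round $t_k$ at most $\bar B$ previously deferred queries can still be unlabeled. These pending queries are exactly those in the queue, and $B_{t}\le\bar B$. Hence $N_{j,t_k}\ge (k-1-\bar B)^+$. Since $w_p$ is nondecreasing in its argument, \eqref{eq:pbd-current-b-bound} then gives $b_{j,t_k}\le c_e h_j^\star$ for $k\le\bar B+1$ and $b_{j,t_k}\le c_ew_{h_j^\star}(\phi(T)/(k-1-\bar B))$ for $k>\bar B+1$. One subtlety: the order of service is not FIFO, so pending queries need not be the last-deferred ones. The bound only uses the count $N_{j,t_k}\ge k-1-B_{t_k}$, since every deferred level-$j$ query not yet completed sits in the queue. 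That count holds regardless of service order.

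Summing over $k$ from $1$ to $D_{j,T}$, the first $\bar B$ terms (those with $N_{j,t_k}$ possibly $0$) give $\bar Bh_j^\star c_e$. The remaining terms give $c_e\sum_{m=0}^{D_{j,T}-\bar B-1} w_{h_j^\star}(\phi(T)/m)$, where the $m=0$ term equals $h_j^\star$ by the convention $N=0$. I would compare this sum to the integral $F_{h_j^\star,\phi(T)}$. Because $u\mapsto w_p(\Gamma/u)$ is nonincreasing, $\sum_{m=0}^{n-1}w_p(\Gamma/m)\le h_j^\star+\int_0^{n-1}w_p(\Gamma/u)\,du$. To avoid a spare $h_j^\star$ I will instead use $w_p(\Gamma/m)\le\int_{m-1}^{m}w_p(\Gamma/u)du$ for $m\ge1$, and absorb the $m=0$ term into the $\bar B$ block; that is, shift so the $\bar B+1$ unlabeled-possible terms are paired carefully, or accept a $(\bar B+1)$ and adjust.

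I expect this careful index bookkeeping to be the fiddliest step: the boundary term must be absorbed exactly so that the constant is $\bar B h_j^\star$ and not $(\bar B+1)h_j^\star$. One option is to note that $k=1$ deferral at an empty-label state is counted among the $\bar B$ pending when $B_{t_1}\ge$ something. Failing that, I would use $N_{j,t_k}\ge k-\bar B$ after observing that at deferral time $B_{t_k}<\bar B$, so at most $\bar B-1$ level-$j$ queries are pending. That gives $N_{j,t_k}\ge k-1-(\bar B-1)=k-\bar B$, which yields exactly $\bar B$ terms bounded by $h_j^\star$ followed by $w(\phi/m)$ for $m\ge1$, each dominated by $\int_{m-1}^m$.

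This gives the pathwise bound $c_e(\bar Bh_j^\star+F_{h_j^\star,\phi(T)}(D_{j,T}))$. Finally, $F_{p,\Gamma}$ is concave, being the integral of a nonincreasing function, and nondecreasing. With $D_{j,T}\le\sum_t\mathbf 1\{J_t=j\}$, whose mean is $\omega_jT$, Jensen gives $\mathbb E[F(D_{j,T})]\le F(\omega_jT)$. Taking the minimum of the two per-level bounds, both of which hold in expectation, and summing over $j$ yields \eqref{eq:pbd-deferral-total-fresh}.
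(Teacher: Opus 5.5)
Your proposal is correct and follows the paper's proof essentially step for step. The paper also bounds $b_{j,t}\le c_e f_j(N_{j,t})$ with $f_j(0)=h_j^\star$ and $f_j(u)=w_{h_j^\star}(\phi(T)/u)$, uses exactly your final count $N_{j,t_{j,\ell}}\ge(\ell-\bar B)_+$ (from $B_t<\bar B$ at a feasible deferral), dominates $f(n)$ by $\int_{n-1}^n f$, and concludes with Jensen via concavity; it merely packages this as a general pending-label lemma (Lemma~\ref{lem:pbd-deferred-sum}) for arbitrary nonincreasing $f$. The only point to tidy is that \eqref{eq:pbd-current-b-bound} holds only for $t<\sigma$, so your enumeration should run over deferrals before $\sigma$, i.e.\ at most $D_{j,\sigma-1}\le D_{j,T}$ terms, which is harmless since $F_{h_j^\star,\phi(T)}$ is nondecreasing.
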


\begin{proof}
Fix a self-confidence level \(j\). Define \(f_j(0):=h_j^\star\) and \(f_j(u):=w_{h_j^\star}(\phi(T)/u)\) for \(u>0\). Since \(w_{h_j^\star}\) is nondecreasing and takes values in \([0,h_j^\star]\), \(f_j\) is nonnegative and nonincreasing. By \eqref{eq:pbd-current-b-bound},
\[
b_{j,t}\le c_ef_j(N_{j,t}),\qquad t<\sigma.
\]
Moreover, the definition of \(F_{p,\Gamma}\) gives \(\int_0^{\omega_jT}f_j(u)\,du=F_{h_j^\star,\phi(T)}(\omega_jT)\). Applying Lemma~\ref{lem:pbd-deferred-sum}, we obtain
\[
\begin{aligned}
\mathbb E\!\left[\sum_{t<\sigma}b_{j,t}\mathbf1\{J_t=j,A_t=1\}\right]
&\le c_e\mathbb E\!\left[\sum_{t<\sigma}\mathbf1\{J_t=j,A_t=1\}f_j(N_{j,t})\right]\\
&\le c_e\min\left\{h_j^\star\omega_jT,\ \bar Bh_j^\star+F_{h_j^\star,\phi(T)}(\omega_jT)\right\}.
\end{aligned}
\]
Summing over \(j\in[K]\) proves \eqref{eq:pbd-deferral-total-fresh}.
\end{proof}

\begin{lemma}[Term (III): Bellman-model updates]
\label{lem:pbd-update-fresh}
We have
\[
\mathcal R_{\rm update}\le \bar{B}c_e(1+2\Lambda_T).
\]
\end{lemma}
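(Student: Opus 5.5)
We need to bound $\mathbb E\bigl[\sum_{t=2}^T\mathbf 1\{t<\sigma\}\|\underline h_t-\underline h_{t-1}\|_\infty\bigr]\le 2\Lambda_T$.

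\textbf{Reduction to per-level sums.} First replace the sup-norm by the $\ell_1$ norm, $\|\underline h_t-\underline h_{t-1}\|_\infty\le\sum_{j=1}^K|\underline h_{j,t}-\underline h_{j,t-1}|$. Since $\underline h_{j,t}=g_{j,N_{j,t}}$ and $N_{j,t}$ is nondecreasing in $t$, the level-$j$ contribution is bounded by the total variation of the sequence $(g_{j,n})_{n\ge0}$ along the visited indices. The coordinate $\underline h_{j,\cdot}$ changes only at rounds where $N_{j,t}$ increases, possibly by several units at once. By the triangle inequality, a jump from $N_{j,t-1}$ to $N_{j,t}$ costs at most $\sum_{n=N_{j,t-1}+1}^{N_{j,t}}|g_{j,n}-g_{j,n-1}|$. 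Each level therefore contributes at most $\sum_{n=1}^{N_{j,T}}|g_{j,n}-g_{j,n-1}|$, where we truncate at the last $n$ observed before $\sigma$.

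\textbf{Per-step bound.} The first step, from $g_{j,0}=0$ to $g_{j,1}$, costs at most $1$; since $\phi(T)>1$, in fact $g_{j,1}=0$ whenever $\widehat h_{j,1}\in\{0,1\}$ makes the width large. For $n\ge2$ we need $|g_{j,n}-g_{j,n-1}|\lesssim \min\{1,\phi(T)/n\}$, or more precisely something summing to $2\log(1+N/\phi(T))$. The approach is to write $g_{j,n}=\widehat h_{j,n}-w_n$, where $w_n$ is the KL width. The empirical mean moves by at most $1/n$. The width at empirical mean $x$ with count $n$ is a smooth decreasing function of $n$, and it changes by at most of order $w_{n}/n$. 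Using Pinsker-type control only gives a $\sqrt{\phi/n}/n$-type increment, which is summable. The intended, cleaner route is to bound the total variation by $\sum_n \min\{1,\,2\phi(T)/(n+\phi(T))\}$-style terms. Summing gives $\int_0^{N}\frac{2\,du}{u/\phi(T)+1}\cdot\frac1{\phi(T)}\cdots$, that is, $2\log(1+N/\phi(T))$ after the change of variable. I expect this per-step bound, $|g_{j,n}-g_{j,n-1}|\le 2/(n+\phi(T))$ or a similar form, to be the main obstacle. My first attempt would use monotonicity of $q\mapsto\kl(x,q)$ and the convexity identity $n\,\kl(\widehat h_n,q)$ versus $(n-1)\kl(\widehat h_{n-1},q)+\kl(Y_n,q)$. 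The aim is to show that the new LCB lies between explicit perturbations of the old one. If that proves too delicate, the fallback is to accept a constant other than $2$.

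\textbf{Concavity and expectation.} Finally, take expectations. We have $\mathbb E[\log(1+N_{j,T}/\phi(T))]\le\log(1+\mathbb E N_{j,T}/\phi(T))$ by Jensen, and $N_{j,T}\le D_{j,T}\le\#\{t\le T:J_t=j\}$, whose mean is $\omega_jT$. Summing over $j$ gives $2\Lambda_T$. The leading $1$ in $\bar Bc_e(1+2\Lambda_T)$ is already part of the definition of $\mathcal R_{\rm update}$. Any first-step cost of at most $1$ per level that is not absorbed would need to be folded in. To keep the stated constant, I would check that the per-step bound covers $n=1$, using $g_{j,1}=0$ when $\phi(T)>\log 2\ge\kl(\cdot,0)$ is infinite or irrelevant.
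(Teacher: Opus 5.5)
\textbf{Gap: the per-step bound you aim for is false.} Your reduction to per-level total variations, and your final Jensen step using $\mathbb E[N_{j,T}]\le\omega_jT$, match the paper. The paper drops $\mathbf 1\{t<\sigma\}\le1$ and invokes the pathwise bound of Lemma~\ref{lem:pbd-lcb-variation}. The gap is that middle bound, $\sum_{n=1}^N|g_n-g_{n-1}|\le2\log(1+N/\Gamma)$. It is the whole content of the lemma, and you leave it open.

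Worse, the bound you target, $|g_n-g_{n-1}|\le2/(n+\Gamma)$, fails for downward steps. After $s\gg\Gamma$ successes, $g_s=e^{-\Gamma/s}\approx1-\Gamma/s$. A single failure moves the bound to about $1-y/s$ with $y-\log y\approx\Gamma+1$. That is a drop of about $(y-\Gamma)/s=(1+\log y)/s$, which exceeds $2/s$ once $\Gamma\ge1$. For example, $\Gamma=10$ and $s=1000$ give a drop of $\approx3.6\cdot10^{-3}$, against $2/1011\approx2.0\cdot10^{-3}$.

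The size of a failure-induced drop depends on the empirical mean and is largest near $\widehat h=1$. Your heuristic that the width changes by $O(w_n/n)$ as a function of $n$ alone ignores this. Separately, $g_{j,1}=e^{-\phi(T)}\neq0$ when the first label is $1$. This is harmless, but your treatment of the first step does not work as written.

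\textbf{Missing idea: never bound downward steps directly.} Interpolate the success count $s$ and failure count $f$ continuously, and differentiate $m\kl(s/m,q)=\Gamma$ implicitly. This gives $\partial q/\partial f\le0$ and $\partial q/\partial s=\frac{1-q}{m}\frac{a}{e^a-1}$, where $a=\log(z/q)\ge\kl(z,q)=\Gamma/m$. Hence $0\le\partial q/\partial s\le\Gamma/(m^2(e^{\Gamma/m}-1))$, uniformly in the empirical mean.

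Three consequences follow:
\begin{enumerate}
\item Successes only raise the bound and failures only lower it.
\item The total upward variation is at most $\int_0^N\Gamma/(m^2(e^{\Gamma/m}-1))\,dm=-\log(1-e^{-\Gamma/N})\le\log(1+N/\Gamma)$. Starting the integral at $0$ covers the first step, since $q(s,0)=e^{-\Gamma/s}$.
\item Because $g_0=0\le g_N$, the downward variation cannot exceed the upward variation.
\end{enumerate}

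This is where the factor $2$ and the $N/\Gamma$ scaling come from. A fallback with a different constant would not prove the lemma as stated.
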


\begin{proof}
Since \(\mathbf 1\{t<\sigma\}\le1\),
Lemma~\ref{lem:pbd-lcb-variation} gives
\[
\begin{aligned}
\mathcal R_{\rm update}
&\le
\bar{B} c_e
\left[
1+\E\!\left[
\sum_{t=2}^T
\left\|\underline h_t-\underline h_{t-1}\right\|_\infty
\right]
\right]
\\
&\le
\bar{B} c_e(1+2\Lambda_T).
\end{aligned}
\]
\end{proof}

\begin{proof}[Proof of Theorem~\ref{thm:kl-pbd-regret-refined}]
Combine Lemmas~\ref{lem:pbd-regret-decomposition-fresh},
\ref{lem:pbd-failure-fresh}, \ref{lem:pbd-deferral-fresh}, and
\ref{lem:pbd-update-fresh}.
\end{proof}

\begin{proof}[Proof of Theorem~\ref{thm:pbd-regret}]
For the main-text budget $\phi(T)=\log T+\log\log T$,
Lemma~\ref{lem:regional-concentration} gives $\eta_T\le4eK\phi(T)$.
Using $h_j^\star\le1$ and
$F_{h_j^\star,\phi(T)}(\omega_jT)\le2\sqrt{2\omega_jT\phi(T)}$
in Theorem~\ref{thm:kl-pbd-regret-refined} gives
\[
\begin{aligned}
R_T(\pi^{\texttt{KL-PBD}})
\le{}&4eK\phi(T)\Delta
+c_e\sum_{j=1}^K\min\left\{
\omega_jT,\,\bar{B}+2\sqrt{2\omega_jT\phi(T)}
\right\}
\\
&+\bar{B}c_e\left[
1+2\sum_{j=1}^K\log\left(1+\frac{\omega_jT}{\phi(T)}\right)
\right].
\end{aligned}
\]
Since $\kappa=4e\Delta$, this is the main-text bound.
\end{proof}

\begin{remark}[Worst case over $\omega$]
\label{rem:ub-worst-case-omega}
Since $\sum_{j=1}^K\omega_j\le1$, Jensen's inequality gives
\begin{align*}
\Lambda_T
&=
\sum_{j=1}^K
\log\left(1+\frac{\omega_jT}{\phi(T)}\right)\\
&\le
K\log\left(
1+\frac{T}{K\phi(T)}
\sum_{j=1}^K\omega_j
\right)\\
&\le
K\log\left(1+\frac{T}{K\phi(T)}\right).
\end{align*}

Moreover,
\[
\sum_{j=1}^K\sqrt{\omega_jT}
\le
\sqrt{KT\sum_{j=1}^K\omega_j}
\le
\sqrt{KT},
\qquad
\sum_{j=1}^K\omega_jT\le T.
\]

Hence
\[
\sum_{j=1}^K
\min\left\{
\omega_jT,\,
\bar{B}+2\sqrt{2\omega_jT\phi(T)}
\right\}
\le
\min\left\{
T,\,
K\bar{B}+2\sqrt{2KT\phi(T)}
\right\},
\] and therefore
\[
\begin{aligned}
R_T(\pi^{\texttt{KL-PBD}})
\le{}&4eK\phi(T)\Delta
+c_e\min\{T,\,K\bar{B}+2\sqrt{2KT\phi(T)}\}
\\
&+\bar{B}c_e\left[
1+2K\log\left(1+\frac{T}{K\phi(T)}\right)
\right].
\end{aligned}
\]

These inequalities hold with equality when $\omega_0=0$ and $\omega_j=1/K$ for every $j\in[K]$, so this bound is the largest value of the bound of Theorem~\ref{thm:pbd-regret} over $\omega$.
\end{remark}

Alternatively, retaining the variance-sensitive term in
\eqref{eq:kl-F-explicit} gives, for any fixed budget $\phi(T)>1$,
\[
\begin{aligned}
R_T(\pi^{\texttt{KL-PBD}})
\le{}&\eta_T\Delta+\bar{B}c_e(K+1)
\\
&+4c_e\sqrt{2T\phi(T)}
\sum_{j=1}^K\sqrt{h_j^\star(1-h_j^\star)\omega_j}
+c_e(7\phi(T)+2\bar{B})\Lambda_T.
\end{aligned}
\]
The term proportional to $\phi(T)\Lambda_T$ is a statistical
remainder; it is separate from the Bellman-model update contribution.

\subsection{Gap-dependent regret}\label{app:gap-dependent-ub-proof}

We use the gap definitions from the main text. For $p\in[0,1]$ and $a>0$,
define
\begin{equation}
I_p(a):=\int_0^\infty\bigl(w_p(1/u)-a\bigr)_+\,du,
\label{eq:kl-residual-integral}
\end{equation}
with the integrand at zero defined by its right limit.

\begin{theorem}[Refined gap-dependent regret bound]
\label{thm:pbd-gap-dependent-refined}
If $\mathcal G_T=\emptyset$, the regret is zero. Otherwise, let $a_T=\gamma_T/2$. For any budget $\phi(T)>1$ held fixed throughout
the run,
\begin{equation}
\begin{aligned}
R_T(\pi^{\texttt{KL-PBD}})
\le{}&\eta_T\Delta
+2c_e\sum_{j=1}^K\min\left\{
(h_j^\star-a_T)_+\omega_jT,\,
\bar{B}(h_j^\star-a_T)_++\phi(T) I_{h_j^\star}(a_T)
\right\}
\\
&+2\bar{B}c_e(1+4\Lambda_T).
\end{aligned}
\label{eq:kl-gap-bound}
\end{equation}
\end{theorem}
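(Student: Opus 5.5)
The plan is to follow the structure of the worst-case proof (Theorem~\ref{thm:kl-pbd-regret-refined}) but to compare the pessimistic model with an intermediate reference model instead of $h^\star$. First, if $\mathcal G_T=\emptyset$, every oracle comparison is a tie, so $\Delta_t=0$ for all $t$ and Lemma~\ref{lem:regret-decomposition} gives zero regret. Otherwise set $a_T=\gamma_T/2$ and, for $t<\sigma$, define the reference vector
\[
\widetilde h_{j,t}:=\max\{\underline h_{j,t},\,h_j^\star-a_T\}.
\]
On $\{t<\sigma\}$ we have $\underline h_t\le\widetilde h_t\le h^\star$ and $\|\widetilde h_t-h^\star\|_\infty\le a_T$. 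By the stability result (Proposition~\ref{prop:bellman-stability}(ii), as announced in the proof sketch), a perturbation of size at most $\gamma_T/2$ cannot flip any strict oracle comparison. Hence $A_t^{\widetilde h_t}$ agrees with $A_t^\star$ whenever the oracle strictly prefers one action. Moreover, whenever the learner disagrees with the oracle at a strict state, the gap $|L^\star-M_t^\star|$ should be controlled by a constant multiple of the corresponding $\widetilde h_t$-gap. This is the source of the factor $2$, and I would get it from the same stability bound, since $|M^{\widetilde h}-M^\star|\le c_e a_T\le c_e\gamma_T/2$.

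Next I apply Lemma~\ref{lem:pbd-model-comparison} with this time-varying $\widetilde h_t$. It bounds the regret before $\sigma$, measured against $\widetilde h_t$, by the deferred-round errors $e_{j,t}=c_e(\widetilde h_{j,t}-\underline h_{j,t})=c_e(h_j^\star-a_T-\underline h_{j,t})_+$, plus $\bar Bc_e$ times the total variation of both $\underline h_t$ and $\widetilde h_t$. The failure term is bounded by $\eta_T\Delta$ exactly as in Lemma~\ref{lem:pbd-failure-fresh}. For the variation term, $x\mapsto\max\{x,c\}$ is $1$-Lipschitz, so the variation of $\widetilde h_t$ is at most that of $\underline h_t$. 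Lemma~\ref{lem:pbd-lcb-variation} then controls both, and after the doubling this yields the $2\bar Bc_e(1+4\Lambda_T)$ term.

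For the deferred-round term, Lemma~\ref{lem:kl-width} gives $e_{j,t}\le c_e f_j(N_{j,t})$ with $f_j(0)=(h_j^\star-a_T)_+$ and $f_j(u)=(w_{h_j^\star}(\phi(T)/u)-a_T)_+$, which is nonnegative and nonincreasing. Lemma~\ref{lem:pbd-deferred-sum} then bounds the sum by $\min\{f_j(0)\omega_jT,\ \bar Bf_j(0)+\int_0^{\infty}f_j\}$, and the substitution $u=\phi(T)v$ gives $\int_0^\infty f_j=\phi(T)I_{h_j^\star}(a_T)$. Summing over $j$ and multiplying by the factor $2$ from the first step produces \eqref{eq:kl-gap-bound}.

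I expect the main obstacle to be the first step: converting regret against the oracle into regret against the moving model $\widetilde h_t$. The difficulty is that $\widetilde h_t$ also changes the marginal costs $M_t$, not just the $L$-terms. One must check that Proposition~\ref{prop:bellman-stability}(ii) gives $|M_t^{\widetilde h}-M_t^\star|\le c_e\|\widetilde h-h^\star\|_\infty$, so that strict oracle gaps of size at least $c_e\gamma_T$ survive with margin. One must also handle states where $\underline h_t$ and $\widetilde h_t$ disagree at a near-tie, and check that Lemma~\ref{lem:pbd-model-comparison} permits a reference model that varies with $t$. If the factor-$2$ charging fails, the fallback is to bound $\Delta_t\le 2\widetilde\Delta_t$ directly from $\widetilde\Delta_t\ge c_e\gamma_T/2$ at every disagreement together with $\Delta_t\le\widetilde\Delta_t+c_e\gamma_T/2$.
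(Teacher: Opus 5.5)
Your proposal follows the paper's proof step for step: the same clipped reference $\widetilde h_{j,t}=\max\{\underline h_{j,t},h_j^\star-a_T\}$, Lemma~\ref{lem:pbd-model-comparison} with $e_{j,t}=c_e(h_j^\star-a_T-\underline h_{j,t})_+$, the $1$-Lipschitz variation bound, and Lemma~\ref{lem:pbd-deferred-sum} with the substitution $u=\phi(T)v$. These steps are correct, and a time-varying reference is allowed because it is $\mathcal F_{t-1}$-measurable.

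\textbf{The gap is in the comparison $\Delta_t\le 2\widetilde\Delta_t$.} You justify it by $|M_t^{\widetilde h_t}-M_t^\star|\le c_ea_T$, but that bound alone does not suffice. The decision margin is $D_t^h(j,b)=M_t^h(b)-c_e(1-h_j)$. Passing from $h^\star$ to $\widetilde h_t$ also raises the error-cost term by $c_e(h_j^\star-\widetilde h_{j,t})\in[0,c_ea_T]$. With size bounds alone, the margin can therefore move by $2c_ea_T=c_e\gamma_T$, which is exactly the smallest strict gap.

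Concretely, take $D_t^\star(j,b)=c_e\gamma_T$, so the oracle strictly accepts and the learner defers. Two-sided bounds would allow $M_t^{\widetilde h_t}(b)=M_t^\star(b)-c_ea_T$ together with $\widetilde h_{j,t}=h_j^\star-a_T$. That gives $D_t^{\widetilde h_t}(j,b)=0$, hence $\widetilde\Delta_t=0<\Delta_t$, and the factor $2$ fails. Your fallback inequality $\Delta_t\le\widetilde\Delta_t+c_e\gamma_T/2$ presupposes the same missing bound.

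\textbf{The fix.} What you need from Proposition~\ref{prop:bellman-stability}(ii) is its sign, not its size; the size is already given by part (i). Since $\widetilde h_t\le h^\star$, you have $0\le M_t^{\widetilde h_t}(b)-M_t^\star(b)\le c_ea_T$ and also $0\le c_e(1-\widetilde h_{j,t})-c_e(1-h_j^\star)\le c_ea_T$. These two shifts have the same sign but enter $D$ with opposite signs, so they partially cancel: $|D_t^{\widetilde h_t}(j,b)-D_t^\star(j,b)|\le c_ea_T$.

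At strict states $|D_t^\star|\ge c_e\gamma_T=2c_ea_T$. Hence $D_t^{\widetilde h_t}$ keeps the strict sign and $|D_t^{\widetilde h_t}|\ge|D_t^\star|/2$. So any action that is suboptimal under $h^\star$ is also suboptimal under $\widetilde h_t$, and $\Delta_t\le2\widetilde\Delta_t$ for $t<\sigma$ (trivially when $A_t$ is oracle-optimal). With only part (i), you would have to take $a_T=\gamma_T/4$, which changes the stated bound.

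With this step inserted, the rest of your argument reproduces the paper's proof with the stated constants.
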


\begin{proof}[Proof of Theorem~\ref{thm:pbd-gap-dependent-refined}]
If \(\mathcal G_T=\emptyset\), then
\(h_j^\star=\tau^\star_t(b)\) for every \(j \in [K]\), \(t \le T\), and $b < \bar{B}$. Hence every feasible action contributes $0$ to regret and \(R_T(\pi^{\texttt{KL-PBD}})=0\).
We therefore assume below that \(\mathcal G_T \ne \emptyset\).

Set \( a:=a_T=\frac{\gamma_T}{2}\), \(\theta:=c_ea. \)
Recall from \eqref{eq:pbd-current-b} that, for \(t<\sigma\),
\( b_{j,t}=c_e(h_j^\star-\underline h_{j,t})\ge0. \)

\paragraph{Clipped intermediate model.}
For \(t<\sigma\) and \(j \in [K]\), define
\[ b_{j,t}^{\circ}:=\min\{b_{j,t},\theta\}, \qquad h_{j,t}^{\circ}:=h_j^\star-\frac{b_{j,t}^{\circ}}{c_e}, \]
and let
\[ x_{j,t}:=b_{j,t}-b_{j,t}^{\circ} =(b_{j,t}-\theta)_+. \]

For notational convenience, set \(x_{0,t}:=0.\) Thus
\[ \underline h_t\le h_t^\circ\le h^\star, \qquad \|h_t^\circ-h^\star\|_\infty\le a. \]

For $j\in[K]$, $t\le T$, and $b<\bar{B}$, define the action margin
\[ D_t^h(j,b):=M_t^h(b)-c_e(1-h_j). \]
Under $h^\star$, the margin satisfies
\(D_t^\star(j,b)=c_e\bigl(h_j^\star-\tau^\star_t(b)\bigr).\) Since \(h_t^\circ\le h^\star\), 
Proposition~\ref{prop:bellman-stability}(ii) gives, for every $b<\bar{B},$
\[ 0\le M_t^{h_t^\circ}(b)-M_t^\star(b) \le c_e\|h_t^\circ-h^\star\|_\infty \le\theta. \]
Moreover, for every $j \in [K],$
\[
c_e(1-h_{j,t}^\circ)-c_e(1-h_j^\star)
=b_{j,t}^\circ\in[0,\theta].
\]
Consequently, for every $j \in [K]$ and $b<\bar{B},$
\[ \left|D_t^{h_t^\circ}(j,b)-D_t^\star(j,b)\right| \le\theta. \]

Whenever $D_t^\star(j,b)\ne0$, the definition of $\gamma_T$ gives $|D_t^\star(j,b)|=c_e|h_j^\star-\tau_t^\star(b)|\ge c_e\gamma_T=2\theta$.

Hence \(D_t^{h_t^\circ}(j,b)\) has the same sign as
\(D_t^\star(j,b)\) and
\[ \left|D_t^{h_t^\circ}(j,b)\right| \ge\frac12\left|D_t^\star(j,b)\right|. \]

For \(t<\sigma\), define
\[ \Delta_t^\circ :=Q_t^{h_t^\circ}(S_t,A_t)-V_t^{h_t^\circ}(S_t). \]

If $A_t$ is optimal under $h^\star$, then
$\Delta_t=0\le2\Delta_t^\circ$.
Otherwise, necessarily $J_t \in [K]$ and $B_t<\bar{B}$, because action $0$ is forced on no-query and full-queue states. The sign comparison above then
shows that $A_t$ is also suboptimal under $h_t^\circ$, and
\[ \Delta_t^\circ = \left|D_t^{h_t^\circ}(J_t,B_t)\right| \ge \frac12\left|D_t^\star(J_t,B_t)\right| = \frac12\Delta_t. \]

Therefore
\begin{equation}
\Delta_t\le2\Delta_t^\circ,
\qquad t<\sigma.
\label{eq:kl-clipped-gap-comparison}
\end{equation}

\paragraph{Regret relative to the clipped model.}
For every \(j\in[K]\) and \(t<\sigma\), the conditional accuracy vectors \(\underline h_t\) and \(h_t^\circ\) satisfy
\[
c_e(h_{j,t}^\circ-\underline h_{j,t})
=c_e(h_j^\star-\underline h_{j,t})-b_{j,t}^\circ
=b_{j,t}-b_{j,t}^\circ
=x_{j,t}.
\]
Apply Lemma~\ref{lem:pbd-model-comparison} with \(\widetilde h_{j,t}:=\max\{\underline h_{j,t},h_j^\star-a\}\). This vector is \(\mathcal F_{t-1}\)-measurable, satisfies \(\underline h_t\le\widetilde h_t\), and equals \(h_t^\circ\) for \(t<\sigma\). Thus \(e_{j,t}=x_{j,t}\) and \(\widetilde\Delta_t=\Delta_t^\circ\) on these rounds, giving
\[
\begin{aligned}
\mathbb E\!\left[\sum_{t<\sigma}\Delta_t^\circ\right]
\le{}&\mathbb E\!\left[\sum_{t<\sigma}x_{J_t,t}\mathbf1\{A_t=1\}\right]\\
&+\bar Bc_e\left(1+\mathbb E\!\left[\sum_{t=2}^{\sigma-1}\left(\|\underline h_t-\underline h_{t-1}\|_\infty+\|h_t^\circ-h_{t-1}^\circ\|_\infty\right)\right]\right).
\end{aligned}
\]

It remains to control the variation of the clipped model. The clipping map is \(1\)-Lipschitz: for every \(j\in[K]\) and \(2\le t<\sigma\),
\[
\begin{aligned}
|h_{j,t}^\circ-h_{j,t-1}^\circ|
&=\left|\max\{\underline h_{j,t},h_j^\star-a\}-\max\{\underline h_{j,t-1},h_j^\star-a\}\right|\\
&\le|\underline h_{j,t}-\underline h_{j,t-1}|.
\end{aligned}
\]
Hence \(\|h_t^\circ-h_{t-1}^\circ\|_\infty\le\|\underline h_t-\underline h_{t-1}\|_\infty\). By Lemma~\ref{lem:pbd-lcb-variation},
\[
\begin{aligned}
&\mathbb E\!\left[\sum_{t=2}^{\sigma-1}\left(\|\underline h_t-\underline h_{t-1}\|_\infty+\|h_t^\circ-h_{t-1}^\circ\|_\infty\right)\right]\\
&\qquad\le2\mathbb E\!\left[\sum_{t=2}^T\|\underline h_t-\underline h_{t-1}\|_\infty\right]
\le4\Lambda_T.
\end{aligned}
\]
It follows that
\[
\mathbb E\!\left[\sum_{t<\sigma}\Delta_t^\circ\right]
\le\mathbb E\!\left[\sum_{t<\sigma}x_{J_t,t}\mathbf1\{A_t=1\}\right]+\bar Bc_e(1+4\Lambda_T).
\]
Combining this with \eqref{eq:kl-clipped-gap-comparison},
\begin{equation}
\mathbb E\!\left[\sum_{t<\sigma}\Delta_t\right]
\le2\mathbb E\!\left[\sum_{t<\sigma}x_{J_t,t}\mathbf1\{A_t=1\}\right]+2\bar Bc_e(1+4\Lambda_T).
\label{eq:kl-clipped-master}
\end{equation}

\paragraph{Residual error on deferred rounds.}
Fix a self-confidence level \(j\). Define \(f_j(0):=(h_j^\star-a)_+\) and \(f_j(u):=(w_{h_j^\star}(\phi(T)/u)-a)_+\) for \(u>0\). This function is nonnegative and nonincreasing. Since \(\theta=c_ea\), \eqref{eq:pbd-current-b-bound} gives
\[
x_{j,t}=(b_{j,t}-\theta)_+
\le
\begin{cases}
c_e(h_j^\star-a)_+,&N_{j,t}=0,\\
c_e\bigl(w_{h_j^\star}(\phi(T)/N_{j,t})-a\bigr)_+,&N_{j,t}\ge1,
\end{cases}
\qquad t<\sigma.
\]
Thus \(x_{j,t}\le c_ef_j(N_{j,t})\). Applying Lemma~\ref{lem:pbd-deferred-sum},
\[
\begin{aligned}
\mathbb E\!\left[\sum_{t<\sigma}x_{j,t}\mathbf1\{J_t=j,A_t=1\}\right]
&\le c_e\mathbb E\!\left[\sum_{t<\sigma}\mathbf1\{J_t=j,A_t=1\}f_j(N_{j,t})\right]\\
&\le c_e\min\left\{(h_j^\star-a)_+\omega_jT,\ \bar B(h_j^\star-a)_++\int_0^{\omega_jT}f_j(u)\,du\right\}.
\end{aligned}
\]
The integral is bounded using nonnegativity and the change of variables \(u=\phi(T)v\):
\[
\begin{aligned}
\int_0^{\omega_jT}f_j(u)\,du
&\le\int_0^\infty\bigl(w_{h_j^\star}(\phi(T)/u)-a\bigr)_+\,du\\
&=\phi(T)\int_0^\infty\bigl(w_{h_j^\star}(1/v)-a\bigr)_+\,dv\\
&=\phi(T) I_{h_j^\star}(a).
\end{aligned}
\]
Substituting this bound and summing over \(j\in[K]\) gives
\[
\mathbb E\!\left[\sum_{t<\sigma}x_{J_t,t}\mathbf1\{A_t=1\}\right]
\le c_e\sum_{j=1}^K\min\left\{(h_j^\star-a)_+\omega_jT,\ \bar B(h_j^\star-a)_++\phi(T) I_{h_j^\star}(a)\right\}.
\]
Substituting into \eqref{eq:kl-clipped-master} and adding the failure contribution \(\mathcal R_{\rm fail}\le\eta_T\Delta\) from Lemma~\ref{lem:pbd-failure-fresh} proves \eqref{eq:kl-gap-bound}.
\end{proof}

\begin{proof}[Proof of Theorem~\ref{thm:pbd-gap-dependent}]
If $\mathcal G_T=\emptyset$, the refined theorem gives zero regret. Otherwise, we apply Theorem~\ref{thm:pbd-gap-dependent-refined}
with $a_T=\gamma_T/2$. We have
$(h_j^\star-a_T)_+\le1$ and, by Lemma~\ref{lem:kl-residual-width},
$I_{h_j^\star}(a_T)\le2/a_T=4/\gamma_T$.
For $\phi(T)=\log T+\log\log T$,
Lemma~\ref{lem:regional-concentration} gives $\eta_T\le4eK\phi(T)$.
Consequently,
\[
\begin{aligned}
R_T(\pi^{\texttt{KL-PBD}})
\le{}&4eK\phi(T)\Delta
+2c_e\sum_{j=1}^K\min\left\{
\omega_jT,\,\bar{B}+\frac{4\phi(T)}{\gamma_T}
\right\}
\\
&+2\bar{B}c_e\left[
1+4\sum_{j=1}^K\log\left(1+\frac{\omega_jT}{\phi(T)}\right)
\right],
\end{aligned}
\]
which proves the main-text theorem since $\kappa=4e\Delta$.
\end{proof}

In particular, the main-text bound implies
\[
R_T(\pi^{\texttt{KL-PBD}})
\le 2\bar{B}c_e(K+1)
+K\phi(T)\left(\frac{8c_e}{\gamma_T}+4e\Delta\right)
+8\bar{B}c_e\Lambda_T.
\]
When the gap is bounded away from zero, the regret remains logarithmic in $T$. The failure contribution now has the same logarithmic order, but does not worsen the inverse-gap dependence. The refined theorem and \eqref{eq:kl-I-explicit} retain the variance-sensitive improvement.

\subsection{Technical lemmas}
\label{app:ub-technical}

\subsubsection{Concentration and KL confidence bounds}

\begin{lemma}[Concentration within each self-confidence level under delayed feedback]
\label{lem:regional-concentration}
For every $j\in[K]$, the observed-label sequence can be extended to an infinite i.i.d.\ $\operatorname{Bernoulli}(h_j^\star)$ sequence. For any fixed budget $\phi(T)>1$, the event $\mathcal E$ of \eqref{eq:kl-good-event} satisfies $\mathbb{P}(\mathcal E^c)\le\eta_T/T$. 
For \(T\ge3\) and \(\phi(T)=\log T+\log\log T\), we also have
\begin{equation}
\eta_T\le4eK\phi(T).
\label{eq:kl-default-failure}
\end{equation}
\end{lemma}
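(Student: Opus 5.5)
The plan has three parts: a skip-process coupling for the i.i.d.\ extension, a per-level deviation inequality, and a union bound followed by the explicit estimate of $\eta_T$.

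\emph{Step 1: the i.i.d.\ extension.} Attach to every round an independent label. Ground-truth labels of level-$j$ queries are then i.i.d.\ $\operatorname{Bernoulli}(h_j^\star)$, independent of $(C_t)$ and of the algorithm's randomization. The $n$th completed level-$j$ review reveals the label of some previously deferred query. Which query is served is chosen from the observed history and the service capacity only, before its label is revealed. So the revealed label is a fresh $\operatorname{Bernoulli}(h_j^\star)$ draw conditional on everything observed so far. Formally, let $\mathcal G_{j,n}$ be the history just before the $n$th level-$j$ label is revealed. Then $Y_{j,n}$ is independent of $\mathcal G_{j,n}$ with law $\operatorname{Bernoulli}(h_j^\star)$. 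This is the usual skip-process argument, as in Doob's optional skipping theorem. If fewer than $n$ labels are ever revealed, we append independent draws. The resulting sequence $(Y_{j,n})_{n\ge1}$ is i.i.d.\ $\operatorname{Bernoulli}(h_j^\star)$, and $\widehat h_{j,n}$ is its running mean.

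\emph{Step 2: the per-level deviation bound.} Fix $j$. We need $\mathbb P(\exists n\le T:\ n\kl(\widehat h_{j,n},h_j^\star)>\phi(T))$. We bound it in two ways and take the minimum, which gives the three branches of $\eta_T/T$.
\begin{itemize}
\item The trivial bound $1$ holds always.
\item For each fixed $n$, the Chernoff bound for Bernoulli means gives $\mathbb P(n\kl(\widehat h_{j,n},h_j^\star)>\phi)\le 2e^{-\phi}$, with one factor $e^{-\phi}$ per tail. A union over $n\le T$ and $j\le K$ yields $2KTe^{-\phi(T)}$.
\item The maximal inequality of Garivier--Capp\'e (KL-UCB, Theorem 10) handles the peeling. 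It works on geometric slices $n\in[(1+\varepsilon)^{k-1},(1+\varepsilon)^k)$ with exponential supermartingales, and gives per tail $\mathbb P(\exists n\le T: n\kl(\widehat h_n,h)>\phi,\ \widehat h_n\lessgtr h)\le e\lceil\phi\log T\rceil e^{-\phi}$. Two tails and a union over $j$ give $2eK\lceil\phi(T)\log T\rceil e^{-\phi(T)}$.
\end{itemize}
Taking the minimum of the three bounds gives $\mathbb P(\mathcal E^c)\le\eta_T/T$.

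\emph{Step 3: the explicit estimate of $\eta_T$.} For $\phi(T)=\log T+\log\log T$ we have $e^{-\phi(T)}=1/(T\log T)$. The third branch then gives
\[
\eta_T\le \frac{2eK\lceil\phi(T)\log T\rceil}{\log T}\le \frac{2eK(\phi(T)\log T+1)}{\log T}=2eK\Bigl(\phi(T)+\frac1{\log T}\Bigr).
\]
For $T\ge3$, $\log T>1$ and $\log\log T>0$, so $1/\log T<1<\phi(T)$. Hence $\eta_T\le 4eK\phi(T)$.

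\emph{Main obstacle.} I expect the delicate step to be the first one. We must make rigorous that delayed, history-dependent selection of which queries get served, possibly out of order, preserves the i.i.d.\ structure. The key facts are that the scheduler never peeks at unrevealed labels, and that deferral decisions depend only on $J_t$ and past revealed labels. Given the paper's assumptions on $\mathcal F_t$ and the service order, I would argue it via a filtration $\mathcal G_{j,n}$ and induction on $n$. The peeling bound itself I would cite rather than reprove.
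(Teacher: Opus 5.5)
Your proposal is correct and follows the paper's proof essentially step for step. Both use the same skipping (stack-of-rewards) argument for the i.i.d.\ extension, the same Chernoff-plus-union branch, the same Garivier(--Capp\'e) peeling branch, and the same arithmetic $Te^{-\phi(T)}=1/\log T$ giving $\eta_T\le 2eK(\phi(T)+1/\log T)\le 4eK\phi(T)$.

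The paper makes two details explicit that you leave implicit. First, the maximal-inequality branch is used only for $T\ge2$; at $T=1$ its bound would read $0$, which is why $\eta_1$ is defined from the Chernoff branch alone. Second, labels that complete in the same round are revealed in submission order, so that ``the $n$th level-$j$ label'' is well defined.
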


\begin{proof}
Conditional on the self-confidence level sequence, the ground-truth labels are independent, with $Y_t\sim\operatorname{Bernoulli}(h_j^\star)$
when $J_t=j$. Since deferral and completion decisions use only self-confidence levels, service information and previously revealed labels, conditional on the history immediately before a level-$j$ label is revealed, that label has distribution $\operatorname{Bernoulli}(h_j^\star)$. Revealing completed labels in submission order within each round, and appending independent $\operatorname{Bernoulli}(h_j^\star)$ labels after the observed sequence for level $j$, gives the claimed infinite i.i.d.\ sequence \citep[Section~4.6]{lattimore2020bandit}.

For a Bernoulli sample mean, Chernoff's inequality and a union bound give
\[
 \mathbb{P}(\mathcal E^c)\le2KT e^{-\phi(T)}.
\]
For $T\ge2$, applying the Bernoulli maximal inequality
\citep[equation (5)]{garivier2013informational}
to the ground-truth label sequence for each self-confidence level
and taking a union bound over levels gives
\begin{equation}
 \mathbb{P}(\mathcal E^c)
 \le2eK\lceil\phi(T)\log T\rceil e^{-\phi(T)},\qquad \phi(T)>1.
 \label{eq:kl-maximal}
\end{equation}
If $h_j^\star\in\{0,1\}$, then $\widehat h_{j,n}=h_j^\star$ almost surely for every $n \ge 1$, so the KL concentration event holds automatically for that $j$.

Combining these bounds with $\mathbb{P}(\mathcal E^c)\le1$ gives
$\mathbb{P}(\mathcal E^c)\le\eta_T/T$ for $T\ge2$. For $T=1$, the same conclusion follows from the Chernoff union bound and the definition of $\eta_1$. Picking $\phi(T)=\log{T}+\log\log{T}$ gives, using $T e^{-\phi(T)}=(\log{T})^{-1}$,
\[
\eta_T
\le 2eK\frac{\lceil\phi(T)\log{T}\rceil}{\log{T}}
\le 2eK\left(\phi(T)+\frac{1}{\log{T}}\right)
\le4eK\phi(T).
\]
The last inequality uses $\log{T}>1$ and $\phi(T)>1$.
\end{proof}
\begin{lemma}[KL diameter and its integral bounds]
\label{lem:kl-width}
For every $p\in[0,1]$ and $x\ge0$,
\begin{equation}
 w_p(x)\le\min\{p,\sqrt{2x}\}.
 \label{eq:kl-rho-bound}
\end{equation}
Whenever $n\kl(\widehat h_{j,n},h_j^\star)\le\phi(T)$ with $n\ge1$,
\begin{equation}
 0\le h_j^\star-g_{j,n}
 \le w_{h_j^\star}(\phi(T)/n).
 \label{eq:kl-width-good}
\end{equation}
Moreover, $F_{p,\Gamma}$ is nondecreasing and concave, and
\begin{equation}
F_{p,\Gamma}(d)\le
\min\left\{pd,\,2\sqrt{2\Gamma d}\right\}.
\label{eq:kl-F-explicit}
\end{equation}
\end{lemma}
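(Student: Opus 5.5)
The lemma has three parts: the pointwise bound on $w_p$, the sandwich bound \eqref{eq:kl-width-good}, and the properties of $F_{p,\Gamma}$. I would prove them in that order.

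\emph{Bound on $w_p$.} The bound $w_p(x)\le p$ is immediate, since the feasible $q$ satisfy $0\le q\le p$. For the other bound, take any feasible $(z,q)$ in \eqref{eq:kl-diameter}. Pinsker's inequality $\kl(z,r)\ge 2(z-r)^2$ gives $|z-p|\le\sqrt{x/2}$ and $|z-q|\le\sqrt{x/2}$. By the triangle inequality, $p-q\le 2\sqrt{x/2}=\sqrt{2x}$. Taking the supremum gives \eqref{eq:kl-rho-bound}. Monotonicity of $w_p$ in $x$ holds because the constraint set grows with $x$.

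\emph{Sandwich bound.} Fix $n\ge1$ with $n\kl(\widehat h_{j,n},h_j^\star)\le\phi(T)$. I would split into two cases.
\begin{itemize}
\item If $h_j^\star\le\widehat h_{j,n}$, then $h_j^\star$ is itself feasible in the infimum defining $g_{j,n}$ in \eqref{eq:kl-lcb}, so $g_{j,n}\le h_j^\star$.
\item If $h_j^\star>\widehat h_{j,n}$, then $g_{j,n}\le\widehat h_{j,n}<h_j^\star$.
\end{itemize}
Either way, $h_j^\star-g_{j,n}\ge0$.

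For the upper bound, $q\mapsto\kl(\widehat h_{j,n},q)$ is continuous on $[0,\widehat h_{j,n}]$ (or equals $+\infty$ only at the endpoint $0$). The feasible set is therefore closed, so $g_{j,n}$ is attained and satisfies $n\kl(\widehat h_{j,n},g_{j,n})\le\phi(T)$. Set $z=\widehat h_{j,n}$, $p=h_j^\star$, $q=g_{j,n}$ and $x=\phi(T)/n$. Then $0\le q\le p$ and both KL constraints hold, so $p-q\le w_p(x)$ by definition. The degenerate case $\widehat h_{j,n}=0$ forces $g_{j,n}=0$ and $\kl(0,p)\le x$, and the same argument covers it.

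\emph{Properties of $F_{p,\Gamma}$.} The integrand $u\mapsto w_p(\Gamma/u)$ is nonnegative and nonincreasing in $u$, because $w_p$ is nondecreasing. An integral of a nonnegative nonincreasing function is nondecreasing and concave in its upper limit. For \eqref{eq:kl-F-explicit}, bound the integrand by $p$ to get $F_{p,\Gamma}(d)\le pd$. Bounding it instead by $\sqrt{2\Gamma/u}$ gives
\[
\int_0^d\sqrt{2\Gamma/u}\,du=2\sqrt{2\Gamma d}.
\]

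\emph{Main obstacle.} The only delicate step is attainment and measurability of the infimum at the boundary values of $\widehat h_{j,n}$. Continuity of the Bernoulli KL in its second argument on $(0,1]$ handles this.
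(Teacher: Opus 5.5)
Your proof is correct and follows the paper's argument: Pinsker applied to both KL constraints gives the $\sqrt{2x}$ bound, $g_{j,n}$ is attained and $g_{j,n}\le h_j^\star$, so the definition of $w_{h_j^\star}$ yields the sandwich bound, and integrating the two pointwise bounds handles $F_{p,\Gamma}$. Your case split on whether $h_j^\star\le\widehat h_{j,n}$ is just an explicit version of the paper's observation that the full confidence set $\{q\in[0,1]:n\kl(\widehat h_{j,n},q)\le\phi(T)\}$ contains both $\widehat h_{j,n}$ and $h_j^\star$, and that its minimum is $g_{j,n}$.
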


\begin{proof}
The full confidence set
$\{q\in[0,1]:n\kl(\widehat h_{j,n},q)\le\phi(T)\}$
is nonempty and compact. Its minimum is $g_{j,n}$, since it contains
$\widehat h_{j,n}$. Under the stated KL constraint, it also contains
$h_j^\star$. Thus $g_{j,n}\le h_j^\star$, and the definition of
$w_{h_j^\star}$ gives \eqref{eq:kl-width-good}.

The bound $w_p(x)\le p$ is immediate. Pinsker's inequality applied to each of the two KL constraints gives
$|z-p|,|z-q|\le\sqrt{x/2}$, hence $p-q\le\sqrt{2x}$, proving
\eqref{eq:kl-rho-bound}.

As $w_p$ is nondecreasing, the integrand defining $F_{p,\Gamma}$ is nonincreasing in $u$, so $F_{p,\Gamma}$ is nondecreasing and concave. Integrating the two bounds in \eqref{eq:kl-rho-bound} gives \eqref{eq:kl-F-explicit}.
\end{proof}

\begin{lemma}[Pathwise total variation]
\label{lem:pbd-lcb-variation}
For any binary sequence and any fixed $\Gamma>0$, let
$g_0=0$ and let $g_n$ be its Bernoulli KL lower confidence bound with budget
$\Gamma$. For every integer $N\ge1$,
\begin{equation}
 \sum_{n=1}^N|g_n-g_{n-1}|
 \le-2\log(1-e^{-\Gamma/N})
 \le2\log(1+N/\Gamma).
 \label{eq:kl-scalar-tv}
\end{equation}
For \texttt{KL-PBD}, on every sample path,
\begin{equation}
 \sum_{t=2}^T\|\underline h_t-\underline h_{t-1}\|_\infty
 \le2\sum_{j=1}^K\log\left(1+\frac{N_{j,T}}{\phi(T)}\right).
 \label{eq:pbd-lcb-variation-pathwise}
\end{equation}
Consequently,
\begin{equation}
\E\left[\sum_{t=2}^T
\|\underline h_t-\underline h_{t-1}\|_\infty\right]
\le2\Lambda_T.
\label{eq:pbd-lcb-variation}
\end{equation}
\end{lemma}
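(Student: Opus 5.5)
The scalar bound \eqref{eq:kl-scalar-tv} carries the whole lemma. The \texttt{KL-PBD} statements then follow by bookkeeping.

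\emph{Scalar reduction.} Since $g_0=0$ and $g_N\ge0$, writing the path as upward and downward moves gives
\[
\sum_{n=1}^N|g_n-g_{n-1}| = 2\sum_{n=1}^N(g_n-g_{n-1})_+ - g_N \le 2\sum_{n=1}^N(g_n-g_{n-1})_+ .
\]
So it suffices to show that the total upward movement is at most $-\log(1-e^{-\Gamma/N})$.

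\emph{Ceiling on the bound.} For $q\le p$, the map $p\mapsto \kl(p,q)$ is nondecreasing on $[q,1]$, with $\kl(1,q)=-\log q$. If $g_n>0$, continuity gives $n\kl(\widehat h_n,g_n)=\Gamma$, hence $\Gamma/n\le-\log g_n$. Therefore $g_n\le e^{-\Gamma/n}\le e^{-\Gamma/N}$ for all $n\le N$.

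\emph{Potential argument.} I would use the potential $\Phi(g)=-\log(1-g)$. Every up move satisfies $g_n-g_{n-1}\le\Phi(g_n)-\Phi(g_{n-1})$, because $\Phi'\ge1$. The difficulty is that down moves also lower $\Phi$, so the $\Phi$-increments over up moves do not telescope.

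I would first show that an up move forces $Y_n=1$. A zero label lowers $\widehat h$ and shrinks the radius $\Gamma/n$; since $g$ depends monotonically on both $\widehat h$ and the radius, I would try to verify that $g$ cannot increase through such a step. I would then try a sharper per-step inequality. Let $m$ be the running maximum of $g$ before step $n$. The aim is to show that an up move from $g_{n-1}$ either stays below $m$, in which case it is paid for by the earlier drop from $m$, or is dominated by $\Phi(g_n)-\Phi(m)$. This would charge the upward movement to $\Phi(\max_n g_n)\le\Phi(e^{-\Gamma/N})$, which is exactly the claimed bound.

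I expect this charging step to be the main obstacle. If it fails, the fallback is to bound downward moves directly. A one-label decrease at sample size $n$ moves $\widehat h$ by at most $1/n$, and the KL curvature at $g$ then controls the drop. Summing these drops yields a harmonic-type sum that is again bounded by $\Phi(e^{-\Gamma/N})$. The final inequality is elementary: $1-e^{-x}\ge x/(1+x)$ gives $-\log(1-e^{-\Gamma/N})\le\log(1+N/\Gamma)$.

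\emph{The \texttt{KL-PBD} statements.} Coordinate $j$ of $\underline h_t$ changes only when $N_{j,t}$ increases. Several labels may arrive in one round, but the triangle inequality bounds the change in that round by the sum of the single-label increments of $g_{j,\cdot}$. Using $\|\cdot\|_\infty\le\sum_j|\cdot|$ and applying the scalar bound to each level with $N=N_{j,T}$ (levels with $N_{j,T}=0$ contribute nothing) gives \eqref{eq:pbd-lcb-variation-pathwise}. Each completed label comes from a level-$j$ arrival, so $N_{j,T}\le\sum_{s\le T}\mathbf 1\{J_s=j\}$ and hence $\mathbb E[N_{j,T}]\le\omega_jT$. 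Concavity of $\log$ and Jensen's inequality then give \eqref{eq:pbd-lcb-variation}.
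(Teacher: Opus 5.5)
Your reduction to upward movement, the ceiling $g_n\le e^{-\Gamma/n}$, the final elementary inequality, and the \texttt{KL-PBD} bookkeeping all match the paper. The bookkeeping covers several labels per round, $\|\cdot\|_\infty\le\sum_j|\cdot|$, $\mathbb E[N_{j,T}]\le\omega_jT$ and Jensen. The gap is the core scalar estimate.

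\textbf{The charging target is false.} Your charging step aims at $\sum_n(g_n-g_{n-1})_+\le\Phi(\max_n g_n)$. Take the alternating sequence $1,0,1,0,\dots$. Here $g_1=e^{-\Gamma}<1$ and $g_n\le\widehat h_n\le 2/3$ for $n\ge2$, so $\Phi(\max_n g_n)$ stays bounded as $N\to\infty$. On the other hand, $\widehat h_n$ and $g_n$ tend to $1/2$, and each success raises $g$ by roughly $(1-g)/n\approx 1/(2n)$. So the upward movement grows like $\tfrac14\log N$. No argument that charges upward movement to the \emph{value} of $g$ can work.

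\textbf{The other steps are incomplete.}
\begin{itemize}
\item The clause ``paid for by the earlier drop'' is circular, since total downward movement equals total upward movement minus $g_N$.
\item The fallback is only a sketch: no per-step bound on drops is derived. Even $\sum_n(g_n-g_{n-1})_-\le\Phi(e^{-\Gamma/N})$ would give total variation $2\Phi(e^{-\Gamma/N})+g_N$, not the stated constant.
\item The failure-step monotonicity does not follow from monotonicity in $\widehat h$ and in the radius. A zero label lowers $\widehat h$, which pushes $g$ down, but it also shrinks $\Gamma/n$, which pushes $g$ up. The two effects compete, so a computation is needed.
\end{itemize}

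\textbf{The missing idea} is an increment bound that depends only on the sample size, so the sum telescopes in $n$ rather than in $g$. The paper interpolates continuously in the numbers of successes $s$ and failures $f$. Set $m=s+f$, $z=s/m$, and let $q(s,f)$ solve $m\kl(z,q)=\Gamma$. Implicit differentiation gives
\[
\frac{\partial q}{\partial f}=\frac{q(1-q)}{m(z-q)}\log\frac{1-z}{1-q}\le0,
\qquad
\frac{\partial q}{\partial s}=\frac{1-q}{m}\,\frac{a}{e^a-1},\quad a:=\log\frac zq .
\]
Since $\Gamma/m=\kl(z,q)\le za\le a$ and $a\mapsto a/(e^a-1)$ is decreasing, we get $\partial q/\partial s\le\Gamma/\bigl(m^2(e^{\Gamma/m}-1)\bigr)$. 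This is exactly $\frac{d}{dm}\Phi(e^{-\Gamma/m})$.

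Consequently a failure never raises $g$. A success at step $n$ raises it by at most $\Phi(e^{-\Gamma/n})-\Phi(e^{-\Gamma/(n-1)})$, reading this as $\Phi(e^{-\Gamma})$ for $n=1$. Summing over $n\le N$ gives $-\log(1-e^{-\Gamma/N})$. Your target constant is therefore the time integral of a sample-size-dependent increment bound, not $\Phi$ evaluated at the ceiling of $g$. Combined with your up/down reduction, this yields the first inequality of the lemma.
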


\begin{proof}
We interpolate the number of successes $s$ and failures $f$ continuously.
For $s>0$, put $m=s+f$, $z=s/m$, and let $q=q(s,f)\in(0,z)$ solve
\[
 m\kl(z,q)=\Gamma.
\]
For $s=0$, set $q(0,f)=0$, and set $q(0,0)=0$. These definitions agree
continuously at the boundaries relevant below: $q\le z$ gives continuity
as $s\downarrow0$ with $f>0$, while
$\kl(z,q)\le\kl(1,q)=-\log q$ gives $q\le e^{-\Gamma/m}$ as
$m\downarrow0$. In particular, $q(s,0)=e^{-\Gamma/s}$ for $s>0$.

Consider first adding a success, keeping $f$ fixed. In the interior,
implicit differentiation gives
\[
 \frac{\partial}{\partial s}\big[m\kl(z,q)\big]_{q\text{ fixed}}
 =\log(z/q),
 \qquad
 \frac{\partial}{\partial q}\big[m\kl(z,q)\big]
 =-\frac{m(z-q)}{q(1-q)}.
\]
Writing $a=\log(z/q)>0$ and $x=\Gamma/m$, we obtain
\begin{equation}
 \frac{\partial q}{\partial s}
 =\frac{1-q}{m}\frac{a}{e^a-1}
 \le\frac{\Gamma}{m^2(e^{\Gamma/m}-1)}.
 \label{eq:kl-success-derivative}
\end{equation}
Indeed,
$x=za+(1-z)\log((1-z)/(1-q))\le a$, and
$a\mapsto a/(e^a-1)$ is decreasing on $(0,\infty)$.
The same inequality holds when $f=0$, either directly from
$q(s,0)=e^{-\Gamma/s}$ or by a limit.

Adding a failure while keeping $s$ fixed gives instead the partial
derivative $\log((1-z)/(1-q))\le0$ at fixed $q$. Implicit
differentiation therefore gives $\partial q/\partial f\le0$.
Continuity extends this monotonicity to $f=0$ and $s=0$.
Thus successes can only increase the lower confidence bound, and failures
can only decrease it.

We interpolate each observation over its own interval $m\in[n-1,n]$.
The total upward variation is at most the integral of the right-hand
side of \eqref{eq:kl-success-derivative} over all intervals:
\[
 \sum_{n=1}^N(g_n-g_{n-1})_+
 \le\int_0^N\frac{\Gamma}{m^2(e^{\Gamma/m}-1)}\,dm
 =-\log(1-e^{-\Gamma/N}).
\]
By continuity of the lower confidence bound at $m=0$ and $s=0$, the integrals may start at $0$.
Since $g_0=0$ and $g_N\ge0$, downward variation is no larger than upward variation.
Finally, $e^x\ge1+x$ implies
$-\log(1-e^{-x})\le\log(1+1/x)$, proving \eqref{eq:kl-scalar-tv}.

Fix a self-confidence level \(j\). Since \(N_{j,t}\) is nondecreasing and
\(\underline h_{j,t}=g_{j,N_{j,t}}\), the triangle inequality gives
\[
\sum_{t=2}^T
\left|\underline h_{j,t}-\underline h_{j,t-1}\right|
\le
\sum_{n=1}^{N_{j,T}}|g_{j,n}-g_{j,n-1}|.
\]
Applying \eqref{eq:kl-scalar-tv}, with an empty sum when
\(N_{j,T}=0\), and using
\[
\left\|\underline h_t-\underline h_{t-1}\right\|_\infty
\le
\sum_{j=1}^K
\left|\underline h_{j,t}-\underline h_{j,t-1}\right|
\]
proves \eqref{eq:pbd-lcb-variation-pathwise}.

It remains to take expectations. Since \(N_{j,T}\) counts completed
self-confidence level  \(j\) labels available before round \(T\), it is bounded by
the number of self-confidence level \(j\) arrivals before that round. Thus
\(\E[N_{j,T}]\le\omega_jT\). By concavity of
\(x\mapsto\log(1+x)\),
\[
\begin{aligned}
\E\!\left[
\sum_{t=2}^T
\left\|\underline h_t-\underline h_{t-1}\right\|_\infty
\right]
&\le
2\sum_{j=1}^K
\E\!\left[\log\left(1+\frac{N_{j,T}}{\phi(T)}\right)\right]
\\
&\le
2\sum_{j=1}^K
\log\left(1+\frac{\omega_jT}{\phi(T)}\right)
=2\Lambda_T.
\end{aligned}
\]
\end{proof}
\begin{lemma}[Integrated residual width]
\label{lem:kl-residual-width}
If $p\le a$, then $I_p(a)=0$. For $p>a>0$,
\begin{equation}
I_p(a)\le\min\left\{
\frac2a,\,
8p(1-p)\left(\frac1a-\frac1p\right)+12\log\frac pa
\right\}.
\label{eq:kl-I-explicit}
\end{equation}
\end{lemma}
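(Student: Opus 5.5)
We prove the first claim and the $2/a$ bound directly from Lemma~\ref{lem:kl-width}, and the variance-sensitive bound from a Bernstein-type refinement of the KL diameter.

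\emph{Case $p\le a$.} By definition $w_p(x)\in[0,p]$ for all $x\ge0$. Hence $w_p(1/u)-a\le p-a\le0$ for every $u>0$, the integrand of $I_p(a)$ vanishes identically, and $I_p(a)=0$.

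\emph{The bound $2/a$ for $p>a$.} By \eqref{eq:kl-rho-bound}, $w_p(1/u)\le\sqrt{2/u}$. The positive part $(\sqrt{2/u}-a)_+$ vanishes for $u\ge U:=2/a^2$. Therefore
\[
I_p(a)\le\int_0^{U}\sqrt{2/u}\,du-aU=2\sqrt{2U}-aU=\frac4a-\frac2a=\frac2a .
\]

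\emph{The variance-sensitive bound.} The plan is to replace Pinsker by a Bernstein-type lower bound on the Bernoulli KL. Set $v:=p(1-p)$. We want a width bound of the form
\[
w_p(x)\le\min\Bigl\{p,\;\sqrt{c_1vx}+c_2x\Bigr\},
\]
with absolute constants $c_1,c_2$. To obtain it, take $q\le p$ and $z$ with $\kl(z,p)\le x$ and $\kl(z,q)\le x$. Use the standard inequality
\[
\kl(z,r)\ge\frac{(z-r)^2}{2\max\{z(1-z),\,r(1-r)\}+\tfrac23|z-r|}
\]
(or its one-sided sub-gamma version). This gives $|z-p|\lesssim\sqrt{vx}+x$ from the first constraint. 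It also gives $|z-q|\lesssim\sqrt{z(1-z)x}+x$ from the second. Then $z(1-z)\le v+|z-p|$, and the two deviations are combined by the triangle inequality. Absorbing cross terms with $\sqrt{xy}\le(x+y)/2$ yields the displayed bound.

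Next we integrate with $x=1/u$, splitting the positive part as $(A+B-a)_+\le(A-a/2)_++(B-a/2)_+$.
\begin{itemize}[leftmargin=*]
\item The square-root part $(\sqrt{c_1v/u}-a/2)_+$ is handled as in the $2/a$ computation. Because $w_p\le p$, it is integrated only over $u$ at least the point $u_p$ where the width bound first drops below $p$. Removing the region $u<u_p$ turns $\propto v/a$ into $\propto v(1/a-1/p)$, which gives the $8p(1-p)(1/a-1/p)$ term.
\item The linear part contributes $\int(c_2/u-a/2)_+\,du$, taken over $u$ from about $c_2/p$ (below this the cap $p$ is active) up to $2c_2/a$. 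This is $O(\log(p/a))$, giving the $12\log(p/a)$ term.
\item On $u<u_p$ the integrand is at most $p-a$. That region has length of order $v/p^2+1/p$, which is absorbed into the same two terms.
\end{itemize}
Taking the minimum with the $2/a$ bound gives \eqref{eq:kl-I-explicit}.

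The main obstacle is the width bound: getting a clean Bernstein-type KL inequality that controls both deviations $|z-p|$ and $|z-q|$ simultaneously with explicit constants. Care is needed when $q$ is near $0$, where $z(1-z)$ differs substantially from $v$. We would first try the sub-gamma form $\kl(z,r)\ge(z-r)^2/(2(r(1-r)+|z-r|/3))$ applied around $p$, and only then tune constants so that the integration produces exactly $8$ and $12$. If the constants do not come out, we would loosen them, since only the orders $v(1/a-1/p)$ and $\log(p/a)$ matter downstream.
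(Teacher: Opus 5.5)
Your handling of the case $p\le a$ and of the $2/a$ bound is correct and coincides with the paper's. The variance-sensitive bound, however, does not follow from your integration step, and the problem is not just the constants. The split $(\alpha+\beta-a)_+\le(\alpha-a/2)_++(\beta-a/2)_+$ is too lossy. On $u\ge u_p$ the left side is at most $p-a$, and its support shrinks to $u_p$, so its integral tends to $0$ as $a\uparrow p$. The right side does not. For instance, your linear piece equals $\int_{c_2/p}^{2c_2/a}(c_2/u-a/2)\,du=c_2\log(2p/a)-c_2+c_2a/(2p)$, which tends to $c_2(\log 2-\tfrac12)>0$. Meanwhile $8p(1-p)(1/a-1/p)+12\log(p/a)\to0$. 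So the claims that this piece is $O(\log(p/a))$, and that cutting at $u_p$ turns $v/a$ into $v(1/a-1/p)$, fail near $a=p$, and no choice of constants repairs them. Loosening constants therefore would not prove the lemma even up to orders. The split also doubles the square-root contribution, since $\int_0^\infty(A/\sqrt u-a/2)_+\,du=2A^2/a$ rather than $A^2/a$.

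The missing idea is the layer-cake representation the paper uses. By Tonelli, $I_p(a)=\int_a^p\mathrm{Leb}\{u>0:w_p(1/u)>r\}\,dr$, where the upper limit is $p$ because $w_p\le p$; this builds in both the cap and the vanishing at $a=p$. If $w_p(x)\le A\sqrt{x}+6x$ with $A=2\sqrt{2p(1-p)}$, then $w_p(1/u)>r$ forces $r<A/\sqrt u+6/u\le r/2+A^2/(2ru)+6/u$, i.e.\ $u<A^2/r^2+12/r$. Integrating over $r\in[a,p]$ gives exactly $8p(1-p)(1/a-1/p)+12\log(p/a)$.

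That computation needs your width bound with precisely $c_1=8$ and $c_2=6$, i.e.\ $w_p(x)\le 2\sqrt{2p(1-p)x}+6x$. The paper's proof invokes this bound via \eqref{eq:kl-rho-bound}, whose displayed form only records $\min\{p,\sqrt{2x}\}$, so you are right that it needs its own argument. Your route, though, absorbs the cross term $\sqrt{x|z-p|}$ by AM--GM. That adds a positive multiple of $\sqrt{vx}$ to the leading term and would give a constant above $8$.

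A sharp version goes as follows. Use the Bernstein form $\kl(z,r)\ge(z-r)^2/\bigl(2r(1-r)+\tfrac23|z-r|\bigr)$, with the variance at the second argument.
\begin{enumerate}
\item From $\kl(z,p)\le x$, get $d_1:=(p-z)_+\le\sqrt{2vx}+2x/3$.
\item From $\kl(z,q)\le x$ and $q(1-q)\le v+(p-q)\le v+d_1+d_2$, get $d_2:=(z-q)_+\le 8x/3+\sqrt{2x(v+d_1)}$.
\item Note $p-q\le d_1+d_2$.
\item Bound $\sqrt{2x(v+d_1)}$ by $\sqrt{2xv}+xd_1/\sqrt{2xv}\le\sqrt{2xv}+5x/3$ when $\sqrt{2vx}\ge x$, and by $\sqrt{2xv}+\sqrt{10/3}\,x$ otherwise.
\end{enumerate}
This gives $w_p(x)\le 2\sqrt{2vx}+6x$ without inflating the $\sqrt{vx}$ coefficient.
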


\begin{proof}
For $p\le a$, $I_p(a)=0$. Pinsker's bound gives, for $a>0$,
\[  I_p(a)\le\int_0^\infty(\sqrt{2/u}-a)_+\,du=\frac2a. \]
For the other bound, assume $p>a$ and put $A=2\sqrt{2p(1-p)}$.
By Tonelli's theorem,
\begin{equation}
 I_p(a)=\int_a^p
 \operatorname{Leb}\{u>0:w_p(1/u)>r\}\,dr.
 \label{eq:kl-layer-cake}
\end{equation}
If \(w_p(1/u)>r\), then \eqref{eq:kl-rho-bound} gives \(r<A/\sqrt u+6/u\). By \(2xy\le x^2+y^2\), applied with \(x=\sqrt r\) and \(y=A/\sqrt{ru}\),
\[
\frac{A}{\sqrt u}\le\frac r2+\frac{A^2}{2ru}.
\]
Consequently,
\[
r<\frac r2+\frac{A^2}{2ru}+\frac6u,
\qquad\text{so}\qquad
u<\frac{A^2}{r^2}+\frac{12}{r}.
\]
Thus
\[
\operatorname{Leb}\{u>0:w_p(1/u)>r\}\le\frac{A^2}{r^2}+\frac{12}{r}.
\]
This argument also applies when \(A=0\). Substituting this length bound into \eqref{eq:kl-layer-cake} yields
\[  I_p(a)\le 8p(1-p)\left(\frac1a-\frac1p\right)+12\log\frac pa, \]
as claimed. In particular, all integrals used in the theorems are finite.
\end{proof}

\subsubsection{Service-capacity stability bounds}\label{app:service-capacity-bounds}

Recall from Appendix~\ref{app:problem-formulation} the queue value
\(U_t^h(b):=\sum_{j=0}^K\omega_j V_t^h((b,j)).
\)
Since $V_{T+1}^h((b,j))=D(b)$ for every $j=0,\ldots,K$ and
$\sum_{j=0}^K\omega_j=1$, we have $U_{T+1}^h=D$, independently of $h$, and the Bellman recursion can be written as
\[
U_t^h(b)=c_{\mathrm{hold}}b+\sum_{j=0}^K\omega_j\min_{a\in\mathcal A(b,j)}\left\{c_{\mathrm{exp}}a+L^h(j)(1-a)+\mathbb E\!\left[U_{t+1}^h((b+a-C_t)^+)\right]\right\}.
\]

We first show that the service-capacity structure yields a stability property useful to control the effect of a
pessimistic conditional accuracy vector on the Bellman recursion.

\begin{proposition}[Bellman stability]
\label{prop:bellman-stability}
Let \(h,\widetilde h\in[0,1]^K\). For every \(t\le T\):

\begin{enumerate}
    \item[(i)] For arbitrary \(h,\widetilde h\),
    \(\operatorname{sp} \left( U_t^h-U_t^{\widetilde h} \right) \le \bar{B}c_e\|h-\widetilde h\|_\infty\), and \(
    \left| M_t^h(b)-M_t^{\widetilde h}(b) \right|   \le  c_e\|h-\widetilde h\|_\infty \) for every \(b<\bar{B}\).

    \item[(ii)] If \(h\le\widetilde h\) coordinatewise, then
    \(U_t^h-U_t^{\widetilde h}\) is nondecreasing in the queue state and,
    for every \(b<\bar{B}\),
    \[0 \le  M_t^h(b)-M_t^{\widetilde h}(b)  \le c_e\|h-\widetilde h\|_\infty. \]
\end{enumerate}
\end{proposition}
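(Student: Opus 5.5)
The plan is a backward induction on the adjacent increments of the queue value, reusing the median representation from the proof of Lemma~\ref{lem:service-queue-increments}. Write $\delta:=\|h-\widetilde h\|_\infty$ and $W_t:=U_t^h-U_t^{\widetilde h}$. Since $W_t(q)-W_t(q-1)=x_t^h(q)-x_t^{\widetilde h}(q)$, it suffices to prove, for every $t\le T+1$ and $q\in\{1,\ldots,\bar B\}$, the increment bound
\[
\bigl|x_t^h(q)-x_t^{\widetilde h}(q)\bigr|\le c_e\delta .
\]
Summing over the $\bar B$ increments then gives $\operatorname{sp}(W_t)\le \bar Bc_e\delta$.

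For the marginal costs, \eqref{eq:service-bellman-marginal} and \eqref{eq:service-smoothed-increment} give
\[
M_t^h(b)-M_t^{\widetilde h}(b)=d_t^h(b)-d_t^{\widetilde h}(b)=\sum_{c=0}^{b}\rho_C(c)\bigl(x_{t+1}^h(b+1-c)-x_{t+1}^{\widetilde h}(b+1-c)\bigr).
\]
This is a subprobability-weighted average of increment differences at round $t+1$, so $|M_t^h(b)-M_t^{\widetilde h}(b)|\le c_e\delta$ follows from the increment bound at $t+1$.

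\textbf{Proof of (i).} The base case is $t=T+1$: there $U_{T+1}^h=U_{T+1}^{\widetilde h}=D$, so all increment differences vanish. For the inductive step, assume the increment bound at $t+1$; by the display above, $|d_t^h(b)-d_t^{\widetilde h}(b)|\le c_e\delta$ for every $b<\bar B$. The recursion \eqref{eq:service-increment-recursion} writes, for $q<\bar B$,
\[
x_t^h(q)=c_{\rm hold}+\sum_{j=0}^K\omega_j\operatorname{med}\{d_t^h(q-1),d_t^h(q),r_j^h\},
\]
where $r_j^h=L^h(j)-c_{\rm exp}$. This representation is valid for both $h$ and $\widetilde h$, because Lemma~\ref{lem:service-queue-increments} guarantees $d_t(q-1)\le d_t(q)$ for each model separately. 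Moreover $|r_j^h-r_j^{\widetilde h}|=c_e|h_j-\widetilde h_j|\le c_e\delta$, and $r_0^h=r_0^{\widetilde h}$.

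The key elementary fact is that $\operatorname{med}$ and $\max$ are $1$-Lipschitz from $(\mathbb R^3,\|\cdot\|_\infty)$ to $\mathbb R$. Hence each summand changes by at most $c_e\delta$, and since $\sum_j\omega_j=1$, so does $x_t(q)$. The boundary increment \eqref{eq:service-increment-boundary} is handled identically with $\max$ in place of $\operatorname{med}$. This closes the induction and proves (i).

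\textbf{Proof of (ii).} Now assume $h\le\widetilde h$ coordinatewise. Then $L^h(j)\ge L^{\widetilde h}(j)$, so $r_j^h\ge r_j^{\widetilde h}$ for all $j$. I will strengthen the induction to the one-sided statement
\[
x_t^h(q)\ge x_t^{\widetilde h}(q)\quad\text{for all } q.
\]
At $t=T+1$ this holds with equality. If it holds at $t+1$, the smoothed-increment formula \eqref{eq:service-smoothed-increment} has nonnegative weights, so $d_t^h(b)\ge d_t^{\widetilde h}(b)$. Since $\operatorname{med}$ and $\max$ are nondecreasing in each argument, the recursions \eqref{eq:service-increment-recursion} and \eqref{eq:service-increment-boundary} then give $x_t^h(q)\ge x_t^{\widetilde h}(q)$. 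Nonnegativity of the increment differences means $W_t$ is nondecreasing in the queue state. The same nonnegativity gives $M_t^h(b)-M_t^{\widetilde h}(b)=d_t^h(b)-d_t^{\widetilde h}(b)\ge0$, and the upper bound $c_e\delta$ is inherited from (i).

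\textbf{Main obstacle.} The only delicate point is justifying the median form of the recursion simultaneously for both models, including at the full-queue boundary $q=\bar B$ and for the forced no-query term $j=0$. This is already settled by the monotonicity of $d_t$ established in Lemma~\ref{lem:service-queue-increments}. Everything else is the Lipschitz and monotonicity property of $\operatorname{med}$ and $\max$, together with the fact that the averaging weights ($\omega_j$, and $\rho_C(c)$ over $c\le b$) sum to at most one.
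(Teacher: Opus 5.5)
Your proposal is correct and follows essentially the same argument as the paper. Both use backward induction on the increment differences $x_t^h(q)-x_t^{\widetilde h}(q)$ through the smoothed-increment formula and the median/max recursions, relying on median and max being $1$-Lipschitz and coordinatewise nondecreasing. Both then sum increments for the span bound and read off the $M$ bounds from $d_t^h-d_t^{\widetilde h}$. The only cosmetic difference is that you run the one-sided induction for (ii) separately rather than as part of a single common induction, which changes nothing.
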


Part~(i) controls changes in the Bellman quantities under arbitrary
conditional accuracy perturbations, while part~(ii) shows that a pessimistic
conditional accuracy vector increases the Bellman marginal deferral costs and
induces an ordered change in the queue values.

\begin{proof}[Proof of Proposition~\ref{prop:bellman-stability}]
Let \(h,\widetilde h\in[0,1]^K\) and set \(\varepsilon:=c_e\|h-\widetilde h\|_\infty\). Recall from Appendix~\ref{app:problem-formulation} the notations \(x_t^h(q)=U_t^h(q)-U_t^h(q-1)\) and \(d_t^h(b)=F_t^h(b+1)-F_t^h(b)\). We prove both statements by a common backward induction. For arbitrary \(h,\widetilde h\), we show that
\begin{equation}
\left|x_t^h(q)-x_t^{\widetilde h}(q)\right|\le\varepsilon,
\label{eq:two-sided-increment-stability}
\end{equation}
and, if \(h\le\widetilde h\) coordinatewise, that
\begin{equation}
0\le x_t^h(q)-x_t^{\widetilde h}(q)\le\varepsilon,
\label{eq:one-sided-increment-stability}
\end{equation}
for every \(t\le T+1\) and \(q=1,\ldots,\bar B\).

At \(t=T+1\), both queue-value functions are equal. Hence \(x_{T+1}^h(q)=x_{T+1}^{\widetilde h}(q)\) for every \(q=1,\ldots,\bar B\), so both claims hold. Suppose they hold at time \(t+1\). By \eqref{eq:service-smoothed-increment}, for every \(b<\bar B\),
\[
d_t^h(b)-d_t^{\widetilde h}(b)
=\sum_{c=0}^b\rho_C(c)\left[x_{t+1}^h(b+1-c)-x_{t+1}^{\widetilde h}(b+1-c)\right].
\]
The weights are nonnegative and sum to at most one. Therefore,
\[
\left|d_t^h(b)-d_t^{\widetilde h}(b)\right|\le\varepsilon,
\]
and, when \(h\le\widetilde h\),
\[
0\le d_t^h(b)-d_t^{\widetilde h}(b)\le\varepsilon.
\]

Recall the notation \(r_j^h=L^h(j)-c_{\mathrm{exp}}\). For \(j\in[K]\), we have \(r_j^h-r_j^{\widetilde h}=c_e(\widetilde h_j-h_j)\), while \(r_0^h=r_0^{\widetilde h}=-c_{\mathrm{exp}}\). Thus
\(|r_j^h-r_j^{\widetilde h}|\le\varepsilon\) for $j \in \{0\} \cup [K]$
and these differences are nonnegative when \(h\le\widetilde h\).

For \(1\le q<\bar B\), the preceding bounds and \eqref{eq:service-increment-recursion}  give
\[
\begin{aligned}
|x_t^h(q)-x_t^{\widetilde h}(q)|
&\le\sum_{j=0}^K\omega_j
\left|\operatorname{med}\{d_t^h(q-1),d_t^h(q),r_j^h\}
-\operatorname{med}\{d_t^{\widetilde h}(q-1),d_t^{\widetilde h}(q),r_j^{\widetilde h}\}\right|\\
&\le\sum_{j=0}^K\omega_j\varepsilon=\varepsilon,
\end{aligned}
\]
where we used that for any \(a,\widetilde a\in\mathbb R^3\),
\(
\left|\operatorname{med}\{a_1,a_2,a_3\}-\operatorname{med}\{\widetilde a_1,\widetilde a_2,\widetilde a_3\}\right|
\le\max_{i=1,2,3}|a_i-\widetilde a_i|.
\)
At \(q=\bar B\), the same conclusion follows from \eqref{eq:service-increment-boundary} and
\(|\max\{a_1,a_2\}-\max\{\widetilde a_1,\widetilde a_2\}|
\le\max_{i=1,2}|a_i-\widetilde a_i|.\)
This proves \eqref{eq:two-sided-increment-stability} at time \(t\).

If \(h\le\widetilde h\), each argument of the median or maximum under \(h\) is at least the corresponding argument under \(\widetilde h\). Since increasing any argument cannot decrease either its median or its maximum, \eqref{eq:service-increment-recursion} and \eqref{eq:service-increment-boundary} also give
\(x_t^h(q)\ge x_t^{\widetilde h}(q)\) for  $q=1,\ldots,\bar B$.
Combining this with the absolute bound proves \eqref{eq:one-sided-increment-stability} at time \(t\), completing the common induction.

Now define \(Z_t(b):=U_t^h(b)-U_t^{\widetilde h}(b)\). By \eqref{eq:two-sided-increment-stability},
\(|Z_t(b)-Z_t(b-1)|\le\varepsilon\)  for $b=1,\ldots,\bar B.$
Hence, \[\operatorname{sp}(Z_t)\le\bar B\varepsilon=\bar Bc_e\|h-\widetilde h\|_\infty.\]
Moreover, by \eqref{eq:service-bellman-marginal},
\(M_t^h(b)-M_t^{\widetilde h}(b)=d_t^h(b)-d_t^{\widetilde h}(b),\)
so
\[|M_t^h(b)-M_t^{\widetilde h}(b)|\le c_e\|h-\widetilde h\|_\infty,\qquad b<\bar B.\]
This proves part~(i).

If \(h\le\widetilde h\), \eqref{eq:one-sided-increment-stability} shows that \(Z_t\) is nondecreasing in the queue state. The corresponding bound on the smoothed increments also gives
\[
0\le M_t^h(b)-M_t^{\widetilde h}(b)\le c_e\|h-\widetilde h\|_\infty,\qquad b<\bar B.
\]
This proves part~(ii).
\end{proof}
We next record a uniform bound on the gaps $\Delta_t$.

\begin{lemma}[Uniform gap bound]
\label{lem:uniform-bellman-gap}
For every conditional accuracy vector $h$, every $t\le T+1$,
and every $q=1,\ldots,\bar B$,
\begin{equation}
\label{eq:service-adjacent-value-bound}
0
\le
U_t^h(q)-U_t^h(q-1)
\le
\frac{q\,c_{\mathrm{hold}}}{\mu}
+
(c_e-c_{\mathrm{exp}})_+.
\end{equation}
Consequently, for every $t\le T$ and every $b<\bar B$,
\begin{equation}
\label{eq:service-M-uniform-bound}
c_{\mathrm{exp}}
\le
M_t^h(b)
\le
\max\{c_e,c_{\mathrm{exp}}\}
+
\frac{c_{\mathrm{hold}}\bar{B}}{\mu}.
\end{equation}
In particular, \(0\le\Delta_t\le\Delta\) on every sample path.
\end{lemma}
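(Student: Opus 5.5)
We prove \eqref{eq:service-adjacent-value-bound} by backward induction on $t$, using the increment recursions \eqref{eq:service-increment-recursion} and \eqref{eq:service-increment-boundary} from the proof of Lemma~\ref{lem:service-queue-increments}. Nonnegativity of $x_t^h(q)$ is already given by that lemma, so only the upper bound needs work.

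For the base case $t=T+1$ we have $U_{T+1}^h=D$, so $x_{T+1}^h(q)=g(q)=c_{\rm hold}\sum_{k\ge0}\mathbb P(S_k<q)$. We bound this expected number of steps by $q/\mu$. The random walk $S_k$ increases by at least one with probability $\mu$ at each step. Hence the number of indices $k$ with $S_k<q$ is at most the number of trials needed to collect $q$ successes of a $\operatorname{Bernoulli}(\mu)$ sequence, whose expectation is $q/\mu$. This gives $g(q)\le q c_{\rm hold}/\mu$.

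For the inductive step, assume $x_{t+1}^h(q)\le qc_{\rm hold}/\mu+(c_e-c_{\rm exp})_+$ for all $q$. Directly substituting this into \eqref{eq:service-smoothed-increment} would let the holding term grow by $c_{\rm hold}$ each round, which is not a uniform bound. The fix is to exploit service in the smoothing. By \eqref{eq:service-smoothed-increment} and monotonicity of $x_{t+1}^h$,
\[
d_t^h(q)\le (1-\mu)\,x_{t+1}^h(q+1)+\mu\, x_{t+1}^h(q),
\]
since when $C_t\ge1$ the relevant index is at most $q$. This gives $d_t^h(q)\le (q+1-\mu)c_{\rm hold}/\mu+(c_e-c_{\rm exp})_+$, and hence $c_{\rm hold}+d_t^h(q-1)\le qc_{\rm hold}/\mu+(c_e-c_{\rm exp})_+$.

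Now consider the median term in \eqref{eq:service-increment-recursion}. Each $\operatorname{med}\{d_t^h(q-1),d_t^h(q),r_j^h\}$ lies between $d_t^h(q-1)$ and $\max\{d_t^h(q-1),\min\{d_t^h(q),r_j^h\}\}$, and $r_j^h\le (c_e-c_{\rm exp})_+$. A plain $\max$ of the two bounds would double-count the $(c_e-c_{\rm exp})_+$ offset. The key step, and the main obstacle, is to bound the median by
\[
\max\{d_t^h(q-1),\,r_j^h\}\le d_t^h(q-1)\text{-type bound}.
\]
To make this work, I would strengthen the induction to carry separately the bounds $x\le qc_{\rm hold}/\mu+(c_e-c_{\rm exp})_+$ and $d\le$ the corresponding quantity. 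I would then check that $\max\{A+s,\,r\}\le A+s$ whenever $r\le s:=(c_e-c_{\rm exp})_+$ and $A\ge0$. This holds because the offset $s$ is already present in the bound for $d$, so the maximum with $r_j^h$ costs nothing extra. The boundary case $q=\bar B$ is handled identically via \eqref{eq:service-increment-boundary}, using $\max\{d_t^h(\bar B-1),r_j^h\}$.

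Finally, \eqref{eq:service-M-uniform-bound} follows from $M_t^h(b)=c_{\rm exp}+d_t^h(b)$. We have $d_t^h(b)\ge0$, and $d_t^h(b)$ is a subconvex combination of increments $x_{t+1}^h(b+1-c)\le \bar Bc_{\rm hold}/\mu+(c_e-c_{\rm exp})_+$. This gives $M_t^h(b)\le \max\{c_e,c_{\rm exp}\}+\bar Bc_{\rm hold}/\mu=\Delta$. Since $\Delta_t=|c_e(1-h_j^\star)-M_t^\star(b)|$ compares two quantities in $[0,\Delta]$, we get $0\le\Delta_t\le\Delta$.
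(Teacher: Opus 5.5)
Your proof is correct, but it takes a different route from the paper.

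\textbf{The paper's route.} The paper uses a sample-path coupling. It runs two copies of the system from queue states $q$ and $q-1$ on common arrivals and services. The lower copy acts optimally; the upper copy mirrors every acceptance but accepts whenever the lower copy defers, which merges the two queues. The holding-cost gap $c_{\rm hold}$ is then paid only until the $q$-th round with $C_t\ge1$, which takes $q/\mu$ rounds in expectation. The substitution costs at most $(c_e-c_{\rm exp})_+$, and only once.

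\textbf{Your route.} You instead run an analytic backward induction through the median recursion from the proof of Lemma~\ref{lem:service-queue-increments}. The step that does the real work is your smoothing inequality $d_t^h(q-1)\le(1-\mu)x_{t+1}^h(q)+\mu x_{t+1}^h(q-1)$. With $s:=(c_e-c_{\rm exp})_+$, it yields $c_{\rm hold}+d_t^h(q-1)\le qc_{\rm hold}/\mu+s$, so the added $c_{\rm hold}$ is absorbed exactly.

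\textbf{The ``main obstacle'' is not one.} Three facts close the induction at once: $\operatorname{med}\{d_t^h(q-1),d_t^h(q),r_j^h\}\le\max\{d_t^h(q-1),r_j^h\}$, $r_j^h\le s$, and $q/\mu\ge1$. Together they give $x_t^h(q)\le\max\{c_{\rm hold}+d_t^h(q-1),\,c_{\rm hold}+s\}\le qc_{\rm hold}/\mu+s$. No double counting of $s$ arises. No strengthened induction hypothesis is needed either, since the bound on $d_t^h$ is derived afresh from the hypothesis on $x_{t+1}^h$ at each step. Also, the median does not merely lie between the two quantities you name; it equals $\max\{d_t^h(q-1),\min\{d_t^h(q),r_j^h\}\}$.

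\textbf{What each approach buys.} The coupling is more intuitive and does not depend on the Bellman increment structure. Making it rigorous, however, takes some care: the draining continuation after round $T$, the cost accounting for the suboptimal upper policy, and the negative-binomial count. Your argument is mechanical and fully rigorous given Lemma~\ref{lem:service-queue-increments}. It is in the same spirit as the backward propagation the paper later uses in Lemma~\ref{lem:passive-stability}.

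\textbf{One small gap.} For the final claim $0\le\Delta_t\le\Delta$, you should also note that $\Delta_t=0$ on no-query and full-queue rounds.
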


\begin{proof}
The lower bound in \eqref{eq:service-adjacent-value-bound} follows from
Lemma~\ref{lem:service-queue-increments}. For the upper bound, fix
\(t\) and \(q\ge1\), and couple two systems starting from queue states
\(q\) and \(q-1\), respectively, using the same self-confidence level and service realizations through round \(T\), and the same service realizations
thereafter. Let the lower system follow an optimal policy.

To account for the terminal value, we regard both systems as continuing
after round \(T\) with no further arrivals and incurring holding costs
until their queues are empty; by definition, the expected cost of this
continuation from queue state \(b\) is \(D(b)\).

While the queues differ, they differ by exactly one job. If the lower system takes action $0$, let the upper system also take action $0$. If the lower system defers, let the upper system take action $0$ instead. In the latter case, the two pre-service queue states after the deferral decision coincide, and hence the queues couple at the end of that round. Once they have coupled, let the upper system follow the same actions as the lower system.

Before coupling, the difference in holding cost is \(c_{\mathrm{hold}}\)
per round. If the lower system never triggers coupling by deferring,
the queues couple once \(q\) rounds with \(C_t\ge1\) have occurred.
The expected time required for \(q\) such service opportunities is
\(q/\mu\). If coupling instead occurs because the lower system defers,
the only additional non-holding cost incurred by the upper system on
that round is \(c_e(1-h_{J_t})-c_{\mathrm{exp}} \le (c_e-c_{\mathrm{exp}})_+. \)
This can occur at most once. Thus the constructed algorithm for the upper
system has expected cost at most
\[U_t^h(q-1) + \frac{q\,c_{\mathrm{hold}}}{\mu} + (c_e-c_{\mathrm{exp}})_+. \] Optimality of \(U_t^h(q)\) proves
\eqref{eq:service-adjacent-value-bound}.

By \eqref{eq:service-smoothed-increment},
\[ d_t^h(b) = \sum_{c=0}^b \rho_C(c)\, x_{t+1}^h(b+1-c). \]
Using \eqref{eq:service-adjacent-value-bound} and \(b+1\le \bar{B}\),
\[ 0 \le d_t^h(b) \le \frac{c_{\mathrm{hold}}\bar{B}}{\mu} + (c_e-c_{\mathrm{exp}})_+. \]
Together with \eqref{eq:service-bellman-marginal}, this yields
\eqref{eq:service-M-uniform-bound}.

Finally, for $j\in[K]$, $c_e(1-h_j^\star)\in[0,c_e]$, while
\eqref{eq:service-M-uniform-bound} gives
$0\le M_t^\star(b)\le\Delta$. Hence, for $b<\bar{B}$,
\[ \left| c_e(1-h_j^\star)-M_t^\star(b) \right| \le \Delta. \]
On no-query or full-queue rounds $\Delta_t=0$.
Thus the definition of $\Delta_t$ gives
$0\le\Delta_t\le\Delta$ on every sample path.

\end{proof}

\subsubsection{Model comparison and pending labels}

\begin{lemma}[Bellman comparison]
\label{lem:pbd-model-comparison}
Let \(\widetilde h_t\in[0,1]^K\) be \(\mathcal F_{t-1}\)-measurable and satisfy \(\underline h_t\le\widetilde h_t\) coordinatewise for \(t<\sigma\). For \(j\in[K]\), define \(e_{j,t}:=c_e(\widetilde h_{j,t}-\underline h_{j,t})\), set \(e_{0,t}:=0\), and let
\[
\widetilde\Delta_t:=Q_t^{\widetilde h_t}(S_t,A_t)-V_t^{\widetilde h_t}(S_t)
\]
be the decision gap of the learner's action under \(\widetilde h_t\). Then
\begin{align}
\mathbb E\!\left[\sum_{t<\sigma}\widetilde\Delta_t\right]
\le{}&\mathbb E\!\left[\sum_{t<\sigma}e_{J_t,t}\mathbf1\{A_t=1\}\right]\nonumber\\
&+\bar Bc_e\left(1+\mathbb E\!\left[\sum_{t=2}^{\sigma-1}\left(\|\underline h_t-\underline h_{t-1}\|_\infty+\|\widetilde h_t-\widetilde h_{t-1}\|_\infty\right)\right]\right).
\label{eq:pbd-model-comparison}
\end{align}
\end{lemma}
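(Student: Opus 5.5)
The plan is to use that \texttt{KL-PBD} acts optimally for the model \(\underline h_t\), so \(V_t^{\underline h_t}(S_t)=Q_t^{\underline h_t}(S_t,A_t)\). This gives the exact identity
\[
\widetilde\Delta_t=\bigl[Q_t^{\widetilde h_t}(S_t,A_t)-Q_t^{\underline h_t}(S_t,A_t)\bigr]-\bigl[V_t^{\widetilde h_t}(S_t)-V_t^{\underline h_t}(S_t)\bigr].
\]
We will turn the right-hand side into a telescoping sum along the learner's trajectory. Write \(Z_s^{(t)}:=U_s^{\widetilde h_t}-U_s^{\underline h_t}\) for the queue-value difference built from the two models frozen at round \(t\). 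Since \(\underline h_t\le\widetilde h_t\), Proposition~\ref{prop:bellman-stability}(ii) (applied with \(h=\underline h_t\)) shows that \(Z_s^{(t)}\) is nonincreasing in the queue state. Proposition~\ref{prop:bellman-stability}(i)--(ii) also shows that its adjacent increments lie in \([-\bar Bc_e\|\widetilde h_t-\underline h_t\|_\infty/\bar B\cdot\bar B,0]\), i.e.\ each increment has size at most \(c_e\|\widetilde h_t-\underline h_t\|_\infty\). The two models have the same transitions and differ only in the acceptance cost. Hence
\[
Q_t^{\widetilde h_t}(S_t,a)-Q_t^{\underline h_t}(S_t,a)=-e_{J_t,t}(1-a)+\mathbb E\bigl[Z_{t+1}^{(t)}((B_t+a-C_t)^+)\mid\mathcal F_{t-1},J_t\bigr].
\]
Averaging the second bracket over \(J_t\sim\omega\), which is independent of \(\mathcal F_{t-1}\), gives \(Z_t^{(t)}(B_t)\).

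First, I would split each \(Z\) into its value at the empty queue and the normalized part \(\bar Z_s^{(t)}(b):=Z_s^{(t)}(b)-Z_s^{(t)}(0)\), which is nonpositive and nonincreasing, with \(|\bar Z|\le \bar Bc_e\|\widetilde h_t-\underline h_t\|_\infty\le\bar Bc_e\). The ``level'' parts \(Z_{t+1}^{(t)}(0)-Z_t^{(t)}(0)\) have to be handled through the Bellman recursion at \(b=0\). The deviation of the learner's realized immediate cost difference \(-e_{J_t,t}(1-A_t)\) from the model-averaged one should leave a remainder that is nonpositive on acceptance rounds and at most \(e_{J_t,t}\) on deferral rounds. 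This is where \(\mathbb E[\sum e_{J_t,t}\mathbf 1\{A_t=1\}]\) comes from, and the sign is exactly where the ordering \(\underline h_t\le\widetilde h_t\) is used.

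Second, the normalized parts telescope along \(B_t\):
\[
\sum_{t<\sigma}\bigl(\bar Z_{t+1}^{(t)}(B_{t+1})-\bar Z_t^{(t)}(B_t)\bigr)
=\sum_{t}\bigl(\bar Z_{t+1}^{(t)}-\bar Z_{t+1}^{(t+1)}\bigr)(B_{t+1})+\text{boundary terms}.
\]
The boundary terms are bounded by \(\bar Bc_e\), since \(\bar Z\) vanishes at \(B_1=0\) and has magnitude at most \(\bar Bc_e\); this produces the additive \(\bar Bc_e\cdot1\). For each mismatch term, apply Proposition~\ref{prop:bellman-stability}(i) separately to the pair \((\widetilde h_t,\widetilde h_{t+1})\) and to the pair \((\underline h_t,\underline h_{t+1})\). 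This bounds the span, hence the normalized value, by \(\bar Bc_e(\|\underline h_{t+1}-\underline h_t\|_\infty+\|\widetilde h_{t+1}-\widetilde h_t\|_\infty)\), which is the variation term. Stopping at \(\sigma\) is harmless because \(\{t<\sigma\}\in\mathcal F_{t-1}\), and the conditional expectations can be taken inside.

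The main obstacle I expect is the first step. I must check that the level parts \(Z_s(0)\), which are not controlled by spans, really cancel against the immediate-cost terms, up to the one-sided remainder charged only on deferrals. The difficulty is that the case where the learner accepts while \(\widetilde h_t\) would defer produces a positive contribution \(\mathbb E[Z((B_t-C)^+)-Z((B_t+1-C)^+)]\). I would first try to show that this contribution is absorbed exactly into the telescoped \(\bar Z\) differences, rather than bounded per round. If that fails, I would fall back on the coupling in the proof of Lemma~\ref{lem:uniform-bellman-gap} to bound the discrepancy.
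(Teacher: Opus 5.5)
You have the same ingredients as the paper's proof: the exact one-step identity from optimality of $A_t$ under $\underline h_t$, monotonicity of the value difference from Proposition~\ref{prop:bellman-stability}(ii), span control of model updates from part~(i), and stopping via $\{t<\sigma\}\in\mathcal F_{t-1}$. There is a genuine gap, however, at the step you flag as the main obstacle, and the route you propose for it fails. After conditioning on $\mathcal F_{t-1}$, your identity is exact:
\[
\mathbb E[\widetilde\Delta_t\mid\mathcal F_{t-1}]=-\bar e_t+\mathbb E[e_{J_t,t}\mathbf 1\{A_t=1\}\mid\mathcal F_{t-1}]+\mathbb E[Z^{(t)}_{t+1}(B_{t+1})\mid\mathcal F_{t-1}]-Z^{(t)}_t(B_t),\qquad \bar e_t:=\sum_{j=1}^K\omega_je_{j,t}.
\]
So the ``accept while $\widetilde h_t$ would defer'' case needs no separate treatment. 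What you need is an anchor state $b_0$ with $Z^{(t)}_{t+1}(b_0)-Z^{(t)}_t(b_0)-\bar e_t\le0$ in every round. The empty queue is not such a state, because both actions are feasible there. The Bellman recursion at $b=0$ gives
\[
Z^{(t)}_{t+1}(0)-Z^{(t)}_t(0)-\bar e_t=\sum_{j=1}^K\omega_j\Bigl[\min\{L^{\underline h_t}(j),M_t^{\underline h_t}(0)\}-\min\{L^{\widetilde h_t}(j),M_t^{\widetilde h_t}(0)\}-e_{j,t}\Bigr].
\]
For a level deferred at the empty queue under both models, the summand is $M_t^{\underline h_t}(0)-M_t^{\widetilde h_t}(0)-e_{j,t}$. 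This can be strictly positive, e.g.\ when $e_{j,t}=0$ while other coordinates differ, since $M_t^{h}(0)=c_{\rm exp}+(1-\mu)x^{h}_{t+1}(1)$ generally depends on $h$ when $\mu<1$. Hence the level part at $0$ has no per-round one-sided bound, and the recursion at the empty queue cannot close the argument.

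The missing idea is to anchor at the full queue, where acceptance is forced. There $U_s^h(\bar B)=c_{\rm hold}\bar B+\sum_j\omega_jL^h(j)+\mathbb E[U^h_{s+1}((\bar B-C_s)^+)]$, so the monotonicity you already have gives
\[
Z^{(t)}_s(\bar B)=-\bar e_t+\mathbb E\bigl[Z^{(t)}_{s+1}((\bar B-C_s)^+)\bigr]\ge-\bar e_t+Z^{(t)}_{s+1}(\bar B).
\]
With the nonnegative potential $\Phi^{(t)}_s(b):=Z^{(t)}_s(b)-Z^{(t)}_s(\bar B)$, this yields
\[
\mathbb E[\widetilde\Delta_t\mid\mathcal F_{t-1}]\le\mathbb E[e_{J_t,t}\mathbf 1\{A_t=1\}\mid\mathcal F_{t-1}]+\mathbb E[\Phi^{(t)}_{t+1}(B_{t+1})\mid\mathcal F_{t-1}]-\Phi^{(t)}_t(B_t).
\]
Your telescoping then goes through with $\Phi$ in place of $\bar Z$:
\begin{itemize}
\item The initial term $-\Phi^{(1)}_1(B_1)$ does not vanish, but it is nonpositive.
\item The terminal term is $0$ if the sum reaches $T$, since $Z^{(T)}_{T+1}=D-D=0$, and at most $\bar Bc_e$ if it stops at $\sigma-1$.
\item The model-update mismatch terms are bounded by spans exactly as you planned.
\end{itemize}
The coupling fallback is not needed.
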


\begin{proof}
Fix \(t<\sigma\). For \(t\le s\le T+1\), define \(Z_s^t(b):=U_s^{\underline h_t}(b)-U_s^{\widetilde h_t}(b)\). By Proposition~\ref{prop:bellman-stability}(ii), \(Z_s^t\) is nondecreasing in \(b\). Define the centered potential \(\Phi_s^t(b):=Z_s^t(\bar B)-Z_s^t(b)\ge0\), and write \(\bar e_t:=\sum_{j=1}^K\omega_je_{j,t}\).

At queue state \(\bar B\), action \(0\) is the only feasible action. Therefore, for \(t\le s\le T\),
\[
Z_s^t(\bar B)
=\bar e_t+\mathbb E\!\left[Z_{s+1}^t((\bar B-C_s)^+)\mid\mathcal F_{t-1}\right]
\le\bar e_t+Z_{s+1}^t(\bar B),
\]
and hence
\begin{equation}
Z_s^t(\bar B)-\bar e_t\le Z_{s+1}^t(\bar B).
\label{eq:pbd-potential-top}
\end{equation}

Since the \texttt{KL-PBD} action at round \(t\) is optimal under \(\underline h_t\), the definitions of \(Q_t^h\) and \(V_t^h\) give
\begin{align}
\widetilde\Delta_t
={}&V_t^{\underline h_t}(S_t)-V_t^{\widetilde h_t}(S_t)-e_{J_t,t}(1-A_t)\nonumber\\
&-\mathbb E\!\left[Z_{t+1}^t(B_{t+1})\mid\mathcal F_{t-1},J_t,A_t\right].
\label{eq:pbd-fresh-one-step}
\end{align}
Because \(J_t\) is independent of \(\mathcal F_{t-1}\) with distribution \(\omega\),
\[
\mathbb E[e_{J_t,t}\mid\mathcal F_{t-1}]
=\sum_{j=0}^K\omega_je_{j,t}=\bar e_t,
\]
and
\[
\mathbb E\!\left[V_t^{\underline h_t}(S_t)-V_t^{\widetilde h_t}(S_t)\mid\mathcal F_{t-1}\right]
=Z_t^t(B_t).
\]
Taking conditional expectation in \eqref{eq:pbd-fresh-one-step} therefore yields
\[
\begin{aligned}
\mathbb E[\widetilde\Delta_t\mid\mathcal F_{t-1}]
={}&Z_t^t(B_t)-\bar e_t
+\mathbb E\!\left[e_{J_t,t}\mathbf1\{A_t=1\}\mid\mathcal F_{t-1}\right]\\
&-\mathbb E\!\left[Z_{t+1}^t(B_{t+1})\mid\mathcal F_{t-1}\right].
\end{aligned}
\]
Using \(Z_t^t(B_t)=Z_t^t(\bar B)-\Phi_t^t(B_t)\) and \eqref{eq:pbd-potential-top}, we obtain
\begin{align}
\mathbb E[\widetilde\Delta_t\mid\mathcal F_{t-1}]
\le{}&\mathbb E\!\left[e_{J_t,t}\mathbf1\{A_t=1\}\mid\mathcal F_{t-1}\right]\nonumber\\
&+\mathbb E\!\left[\Phi_{t+1}^t(B_{t+1})\mid\mathcal F_{t-1}\right]-\Phi_t^t(B_t).
\label{eq:pbd-fresh-one-step-potential}
\end{align}

Since \(\{t<\sigma\}\in\mathcal F_{t-1}\), we may multiply \eqref{eq:pbd-fresh-one-step-potential} by \(\mathbf1\{t<\sigma\}\), take expectations, and sum over \(t\). Let \(m:=\min\{T,\sigma-1\}\). On paths with \(m\ge1\),
\[
\begin{aligned}
\sum_{t=1}^m\left[\Phi_{t+1}^t(B_{t+1})-\Phi_t^t(B_t)\right]
={}&-\Phi_1^1(B_1)+\Phi_{m+1}^m(B_{m+1})\\
&+\sum_{t=2}^m\left[\Phi_t^{t-1}(B_t)-\Phi_t^t(B_t)\right].
\end{aligned}
\]
The first term is nonpositive. If \(m=T\), then \(Z_{T+1}^T=D-D=0\), and therefore \(\Phi_{T+1}^T\equiv0\). Otherwise, \(m+1=\sigma\), and Proposition~\ref{prop:bellman-stability}(i) gives
\[
0\le\Phi_\sigma^{\sigma-1}(B_\sigma)
\le\bar Bc_e\|\underline h_{\sigma-1}-\widetilde h_{\sigma-1}\|_\infty
\le\bar Bc_e.
\]

It remains to control the change of potential when the conditional accuracy vectors are updated. Successive vectors need not be coordinatewise ordered, so here we use the two-sided stability in Proposition~\ref{prop:bellman-stability}(i). For \(2\le t<\sigma\),
\[
\begin{aligned}
|\Phi_t^{t-1}(B_t)-\Phi_t^t(B_t)|
&=\left|(Z_t^{t-1}-Z_t^t)(\bar B)-(Z_t^{t-1}-Z_t^t)(B_t)\right|\\
&\le\operatorname{sp}(Z_t^{t-1}-Z_t^t)\\
&\le\operatorname{sp}\left(U_t^{\underline h_{t-1}}-U_t^{\underline h_t}\right)
+\operatorname{sp}\left(U_t^{\widetilde h_{t-1}}-U_t^{\widetilde h_t}\right)\\
&\le\bar Bc_e\left(\|\underline h_t-\underline h_{t-1}\|_\infty+\|\widetilde h_t-\widetilde h_{t-1}\|_\infty\right).
\end{aligned}
\]
Here the third line uses subadditivity of the span. Combining these bounds controls the potential sum by
\[
\bar Bc_e\left(1+\sum_{t=2}^{\sigma-1}\left(\|\underline h_t-\underline h_{t-1}\|_\infty+\|\widetilde h_t-\widetilde h_{t-1}\|_\infty\right)\right).
\]
On paths with \(m=0\), the potential sum is empty and the same upper bound holds. Taking expectations and substituting into the summed one-step inequality proves \eqref{eq:pbd-model-comparison}.
\end{proof}

\begin{lemma}[Pending-label bound for deferred queries]
\label{lem:pbd-deferred-sum}
For \(j\in[K]\) and any nonnegative nonincreasing function \(f:[0,\infty)\to[0,\infty)\),
\[
\mathbb E\!\left[\sum_{t<\sigma}\mathbf1\{J_t=j,A_t=1\}f(N_{j,t})\right]
\le\min\left\{f(0)\omega_jT,\ \bar Bf(0)+\int_0^{\omega_jT}f(u)\,du\right\}.
\]
\end{lemma}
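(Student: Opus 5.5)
The first bound follows from $f(N_{j,t})\le f(0)$, since $f$ is nonincreasing and $N_{j,t}\ge0$. It then suffices to note that $\mathbb E\bigl[\sum_{t\le T}\mathbf 1\{J_t=j\}\bigr]=\omega_jT$, and that dropping the constraints $t<\sigma$ and $A_t=1$ only increases the sum. The second bound is the real content, and it rests on a pathwise counting argument combined with the queue-capacity constraint.

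Fix a sample path and let $t_1<t_2<\cdots<t_M$ be the rounds $t<\sigma$ with $J_t=j$ and $A_t=1$. Write $D^-_{j,t}:=D_{j,t-1}$ for the number of level-$j$ deferrals strictly before round $t$, so that $D^-_{j,t_k}=k-1$. Every deferred level-$j$ query that is not yet labelled at the start of round $t$ is still in the queue, and the queue holds at most $B_t\le\bar B$ queries. Deferral at $t_k$ is feasible only if $B_{t_k}<\bar B$, and it is in fact enough to use $B_t\le \bar B$. This gives
\[
N_{j,t_k}\ \ge\ D^-_{j,t_k}-\bar B=(k-1-\bar B)_+ .
\]
Since $f$ is nonincreasing, $f(N_{j,t_k})\le f((k-1-\bar B)_+)$. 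Summing over $k$, the first $\bar B+1$ terms are each at most $f(0)$. The remaining terms are $\sum_{i\ge1}f(i)$ taken up to $i=M-1-\bar B$, and each is dominated by $\int_{i-1}^{i}f(u)\,du$. This yields the pathwise bound
\[
\sum_{k}f(N_{j,t_k})\le (\bar B+1)f(0)+\int_0^{M}f(u)\,du .
\]
To sharpen $(\bar B+1)f(0)$ to $\bar Bf(0)$, I would compare the $k$-th term directly with $\int_{k-1-\bar B}^{k-\bar B}$ whenever $k\ge\bar B+1$. This uses $f(k-1-\bar B)\le\int_{k-2-\bar B}^{k-1-\bar B}f$, which is valid for $k\ge\bar B+2$. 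The term $k=\bar B+1$ is bounded by $f(0)\le$ the integral over $[0,1]$ only if we shift the index, so I would index carefully: the terms $k\le\bar B$ give $\bar Bf(0)$, and the terms $k\ge\bar B+1$ give $\sum_{i\ge0}f(i)$ with $i\le M-1-\bar B$. Here $f(i)\le\int_{i-1}^{i}f$ for $i\ge1$, and $f(0)$ is absorbed by using the left-endpoint bound on $[0,1]$ in the reverse direction. I expect this to need a slightly stronger pending count, $N_{j,t_k}\ge k-\bar B$ whenever $k>\bar B$. That count holds because at round $t_k$ the queue strictly below capacity contains at most $\bar B-1$ pending queries, since $B_{t_k}<\bar B$ is required to defer. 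With this, the terms $k\le\bar B$ contribute $\bar Bf(0)$, and the term $k$ is bounded by $f(k-\bar B)\le\int_{k-\bar B-1}^{k-\bar B}f$. The total is at most $\bar Bf(0)+\int_0^{M}f$.

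Finally I take expectations. Since $M\le\sum_{t\le T}\mathbf 1\{J_t=j\}$ and $d\mapsto\int_0^d f$ is concave, because $f$ is nonincreasing, Jensen's inequality gives $\mathbb E\int_0^M f\le\int_0^{\mathbb E M}f\le\int_0^{\omega_jT}f$. Combined with the first bound, this gives the minimum.

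The main obstacle is the pending-label count: justifying that unlabelled deferred level-$j$ queries are exactly those still in the queue, and that the strict inequality $B_{t_k}<\bar B$ gives the $\bar B-1$ slack needed for the clean constant $\bar B$.
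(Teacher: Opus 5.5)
Your final argument is correct and is essentially the paper's proof. It uses the pending count $N_{j,t_k}\ge(k-\bar B)_+$, which follows from feasibility $B_{t_k}\le\bar B-1$. It bounds the first $\bar B$ terms by $f(0)$ and the $k$-th term for $k>\bar B$ by $f(k-\bar B)\le\int_{k-\bar B-1}^{k-\bar B}f(u)\,du$. It then applies Jensen via concavity of $d\mapsto\int_0^d f(u)\,du$ together with $\mathbb E[M]\le\omega_jT$.

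In a write-up, drop the intermediate detour, in particular the suggestion that $f(0)$ could be absorbed into $\int_0^1 f(u)\,du$. That step fails, since $f(0)\ge\int_0^1 f(u)\,du$ for nonincreasing $f$.
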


\begin{proof}
Fix a self-confidence level \(j\), write \(D_j:=D_{j,\sigma-1}\), and enumerate its deferrals before \(\sigma\) as \(t_{j,1},\ldots,t_{j,D_j}\). Immediately before the \(\ell\)-th such deferral, \(\ell-1\) previous self-confidence level \(j\) jobs have been submitted. Since deferral is feasible at \(t_{j,\ell}\), at most \(\bar B-1\) of them can still be pending. Hence
\begin{equation}
N_{j,t_{j,\ell}}\ge(\ell-\bar B)_+.
\label{eq:kl-delayed-count}
\end{equation}

For the first at most \(\bar B\) deferrals, use \(f(N_{j,t_{j,\ell}})\le f(0)\). For \(\ell>\bar B\), monotonicity of \(f\) and \eqref{eq:kl-delayed-count} give \(f(N_{j,t_{j,\ell}})\le f(\ell-\bar B)\). Therefore,
\[
\sum_{\ell=1}^{D_j}f(N_{j,t_{j,\ell}})
\le f(0)\min\{D_j,\bar B\}+\sum_{n=1}^{(D_j-\bar B)_+}f(n).
\]
Since \(f\) is nonincreasing, \(f(n)\le\int_{n-1}^nf(u)\,du\). Thus
\[
\begin{aligned}
\sum_{\ell=1}^{D_j}f(N_{j,t_{j,\ell}})
&\le f(0)\min\{D_j,\bar B\}+\int_0^{(D_j-\bar B)_+}f(u)\,du\\
&\le\bar Bf(0)+\int_0^{D_j}f(u)\,du.
\end{aligned}
\]

Every such deferral occurs on a self-confidence level \(j\) arrival, so \(\mathbb E[D_j]\le\omega_jT\). The map \(d\mapsto\int_0^df(u)\,du\) is nondecreasing and concave because \(f\) is nonnegative and nonincreasing. Consequently, Jensen's inequality gives
\[
\begin{aligned}
\mathbb E\!\left[\sum_{\ell=1}^{D_j}f(N_{j,t_{j,\ell}})\right]
&\le\bar Bf(0)+\mathbb E\!\left[\int_0^{D_j}f(u)\,du\right]\\
&\le\bar Bf(0)+\int_0^{\mathbb E[D_j]}f(u)\,du\\
&\le\bar Bf(0)+\int_0^{\omega_jT}f(u)\,du.
\end{aligned}
\]
On the other hand, \(f(N_{j,t})\le f(0)\) gives
\[
\mathbb E\!\left[\sum_{\ell=1}^{D_j}f(N_{j,t_{j,\ell}})\right]
\le f(0)\mathbb E[D_j]\le f(0)\omega_jT.
\]
Taking the better of these two bounds proves the claim.
\end{proof}

\subsection{Extension to unknown arrival and service distributions}\label{app:unknown-distributions}

The arrival distribution $\omega$ and the truncated service distribution defined below can be learned without additional deferrals when their realizations are observed independently of the deferral decisions. In this section, we show that using \texttt{KL-PBD} with plug-in empirical estimates of these distributions preserves the worst-case regret rate of Theorem~\ref{thm:pbd-regret}, including its dependence on the arrival distribution $\omega$. Throughout this section, assume $c_{\mathrm{hold}}>0$. For unknown $\rho_C$, additionally assume that
$\bar C_t:=\min\{C_t,\bar B\}$ is observed at the end of every round,
including rounds with an empty queue. Write $\bar\rho_c:=\mathbb{P}(\bar C_t=c)$ for
$c=0,\ldots,\bar B$, so $\mu=1-\bar\rho_0>0$, and set
$\Omega:=\mathbb{P}(J_t\neq0)=1-\omega_0$. Only this truncated service distribution affects the queue dynamics and terminal cost. We write $Q_t^{h,\omega,\bar\rho}$, $V_t^{h,\omega,\bar\rho}$ and $M_t^{h,\omega,\bar\rho}$ for the corresponding Bellman quantities.

\paragraph{Estimators of $\omega$ and $\bar\rho$.}
For $t\geq1$ and $t\geq2$ respectively, we form the estimates
\begin{equation}\label{eq:passive-estimates}
 \widehat\omega_{j,t}=\frac{1}{t}\sum_{s\leq t}\mathbf1\{J_s=j\},\quad j=0,\ldots,K,
 \qquad
 \widehat\rho_{c,t}=\frac{1}{t-1}\sum_{s<t}\mathbf1\{\bar C_s=c\},\quad c=0,\ldots,\bar B.
\end{equation}
The arrival estimate uses the current level $J_t$, which is observed before the decision; the capacity $\bar C_t$ is realized after it.

When $\bar\rho$ is known, set $\widetilde\rho_t:=\bar\rho$. When service is estimated and $t\geq2$, set $\widehat\mu_t:=1-\widehat\rho_{0,t}$ and, if $\widehat\mu_t>0$, set $\widetilde\rho_t:=\widehat\rho_t$. Whenever $\widetilde\rho_t$ is defined, we solve the remaining-horizon Bellman recursion using $\widehat\omega_t$ and $\widetilde\rho_t$, with terminal cost $D_{\widetilde\rho_t}$, and choose
\begin{equation}\label{eq:passive-action}
 A_t=\mathbf1\bigl\{J_t\in[K],\
 \underline h_{J_t,t}<\widehat\tau_t(B_t)\bigr\},
 \qquad
 \widehat\tau_t(b):=1-\frac{M_t^{\underline h_t,\widehat\omega_t,\widetilde\rho_t}(b)}{c_e},
\end{equation}
with the convention $M_t^{\cdot}(\bar B)=+\infty$ of Section~\ref{sec:problem-formulation}.
If service is estimated and either $t=1$ or $\widehat\mu_t=0$, we set $A_t=0$ instead.

Define
\[
 \Psi_T(\bar\rho):=\sum_{c=0}^{\bar B}\min\{\bar\rho_cT,\sqrt{\bar\rho_c(1-\bar\rho_c)T}\},
 \qquad S_T(\omega):=\sum_{j=1}^K\min\{\omega_jT,\sqrt{\omega_jT}\}.
\]
Denote the known-parameter worst-case bound by
\begin{equation}\label{eq:passive-base-bound}
 \mathcal B_T:=K\kappa\phi(T)
 +c_e\sum_{j=1}^K\min\{\omega_jT,\bar B+2\sqrt{2\omega_jT\phi(T)}\}
 +\bar Bc_e\left[1+2\sum_{j=1}^K\log\left(1+\frac{\omega_jT}{\phi(T)}\right)\right]
\end{equation}
where we recall $\kappa=4e\Delta=4e(\max\{c_e,c_{\mathrm{exp}}\}+c_{\mathrm{hold}}\bar B/\mu)$.
For $\mu_0>0$, set
\begin{equation}\label{eq:passive-constants}
 L_{\mu_0}:=\frac{\bar Bc_{\mathrm{hold}}}{\mu_0}+(c_e-c_{\mathrm{exp}})_+,
 \qquad a_\omega(\mu_0):=\frac{c_eL_{\mu_0}}{c_{\mathrm{hold}}},
 \qquad a_C(\mu_0):=\frac{L_{\mu_0}^2}{c_{\mathrm{hold}}}+L_{\mu_0}+
 \frac{c_{\mathrm{hold}}\bar B^2}{\mu_0^2}.
\end{equation}

In the rest of this section, we prove the following theorem. 

\begin{theorem}[Regret of \texttt{KL-PBD} for unknown $\omega$ and $\rho_C$]\label{thm:passive-estimation}
For $T\geq3$, the plug-in variants of \texttt{KL-PBD} defined above satisfy
\begin{align}
 R_T&\leq\mathcal B_T+4\,a_\omega(\mu)S_T(\omega),
 &&\text{if $\bar\rho$ is known},\label{eq:passive-known-service}\\
 R_T&\leq\mathcal B_T+\Omega\Delta(1+8/\mu)
 +4\,a_\omega(\mu/2)S_T(\omega)
 +2\Omega\,a_C(\mu/2)\Psi_T(\bar\rho),
 &&\text{if $\bar\rho$ is estimated}.\label{eq:passive-unknown-service}
\end{align}
Both bounds are
$\widetilde O(K+S_T(\omega))$, uniformly over arrival distributions.
\end{theorem}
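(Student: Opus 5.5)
The plan is to redo the proof of Theorem~\ref{thm:kl-pbd-regret-refined}, charging the extra error from plugging in $(\widehat\omega_t,\widetilde\rho_t)$ for $(\omega,\bar\rho)$ to a separate misspecification term.

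\textbf{Step 1 (decomposition).} Lemma~\ref{lem:regret-decomposition} still expresses regret as the expected sum of the true gaps $\Delta_t$. On $\{t<\sigma\}$, I would introduce the true-parameter pessimistic margin $M_t^{\underline h_t,\omega,\bar\rho}$. The learner's action differs from the action that policy would take only when $c_e(1-\underline h_{J_t,t})$ lies between $M_t^{\underline h_t,\omega,\bar\rho}(B_t)$ and $M_t^{\underline h_t,\widehat\omega_t,\widetilde\rho_t}(B_t)$. This suggests writing
\[
\Delta_t\le \Delta_t'+\bigl|M_t^{\underline h_t,\omega,\bar\rho}(B_t)-M_t^{\underline h_t,\widehat\omega_t,\widetilde\rho_t}(B_t)\bigr|\,\mathbf 1\{\text{actions differ}\},
\]
where $\Delta_t'$ is what the known-parameter analysis controls. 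Lemma~\ref{lem:pbd-model-comparison} would then bound the $\Delta_t'$ part by $\mathcal B_T$. However, the learner's trajectory is no longer the one generated by the known-parameter \texttt{KL-PBD}. The cleaner route is therefore to rerun the potential argument of Lemma~\ref{lem:pbd-model-comparison} directly with the learner's actions. In that argument, the action is no longer exactly optimal under $\underline h_t$; it is optimal only up to the margin error $\epsilon_t:=\max_b|M_t^{\underline h_t,\omega,\bar\rho}(b)-M_t^{\underline h_t,\widehat\omega_t,\widetilde\rho_t}(b)|$. This adds $\E\sum_t\epsilon_t\mathbf 1\{J_t\in[K]\}$ to the bound. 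The potential-update term also acquires spans of $U^{\cdot,\widehat\omega_t}-U^{\cdot,\widehat\omega_{t-1}}$, which are summable (roughly $\sum_t 1/t$) and therefore contribute only logarithmically.

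\textbf{Step 2 (sensitivity to $\omega$).} By a backward induction parallel to Proposition~\ref{prop:bellman-stability}, I would show that the increments $x_t$ are Lipschitz in $\omega$ in the $\ell_1$ sense weighted by $|r_j|$. Using the median recursion \eqref{eq:service-increment-recursion} and the bound $L_\mu$ on increments from Lemma~\ref{lem:uniform-bellman-gap}, each step perturbs by about $\sum_j|\omega_j-\widehat\omega_j|\,c_e$. Accumulating this over the remaining horizon would give a factor $T-t$, which is too large. The key device is the holding cost: because $c_{\rm hold}>0$, increments stay bounded by $L_\mu$ while an error introduced $k$ steps ahead is damped. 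I expect this to yield $|\Delta M|\lesssim (c_eL_\mu/c_{\rm hold})\sum_j|\omega_j-\widehat\omega_{j,t}|=a_\omega(\mu)\|\omega-\widehat\omega_t\|_1$. Only levels $j\ge1$ matter, since level~$0$ enters both $\omega$ and $\widehat\omega_t$ symmetrically. The expected error then satisfies $\E|\widehat\omega_{j,t}-\omega_j|\le\min\{2\omega_j,\sqrt{\omega_j/t}\}$, and summing over $t$ gives $\le 2\min\{\omega_jT,\sqrt{\omega_jT}\}$ up to constants. This produces the $4a_\omega S_T(\omega)$ term in \eqref{eq:passive-known-service}.

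\textbf{Step 3 (unknown service).} The rounds with $t=1$ or $\widehat\mu_t=0$, together with the rounds where $\widehat\mu_t<\mu/2$, cost at most $\Omega\Delta$ each. A Chernoff bound on the number of such rounds gives $\Omega\Delta(1+8/\mu)$. On the remaining rounds, I would apply the same stability analysis in $\bar\rho$ through the smoothed increment formula \eqref{eq:service-smoothed-increment} and the terminal $D_{\widetilde\rho_t}$. Using $\widehat\mu_t\ge\mu/2$ in Lemma~\ref{lem:uniform-bellman-gap}, this gives sensitivity $a_C(\mu/2)\|\bar\rho-\widehat\rho_t\|_1$, where the $c_{\rm hold}\bar B^2/\mu_0^2$ term comes from the derivative of $D$ with respect to $\mu$. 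Multiplying by $\Omega$, since only arrival rounds incur a gap, and summing the binomial deviations $\min\{\bar\rho_c,\sqrt{\bar\rho_c(1-\bar\rho_c)/t}\}$ over $t$ yields $2\Omega a_C\Psi_T$.

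\textbf{Main obstacle.} I expect Step~2 to be the hardest: obtaining a horizon-independent sensitivity of $M_t$ to $\omega$ and $\bar\rho$. My first attempt would be a discounting-by-holding-cost argument, in which any deviation from the recommended action under the perturbed model costs at least $c_{\rm hold}$ per round of extra backlog. This gives a performance-difference bound of the form $L^2/c_{\rm hold}$, which is where the constants $a_\omega$ and $a_C$ come from.
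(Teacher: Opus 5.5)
Your overall architecture matches the paper's. You run the potential argument of Lemma~\ref{lem:pbd-model-comparison} under the fixed true $(\omega,\bar\rho)$ with $\widetilde h_t=h^\star$, and charge the learner's non-optimality under $\underline h_t$ to an additive term $e_t\le\mathbf 1\{J_t\ne0\}\sup_{b<\bar B}|M_t^{\underline h_t,\widehat\omega_t,\widetilde\rho_t}(b)-M_t^{\underline h_t,\omega,\bar\rho}(b)|$. You then finish with a stability lemma, the empirical $\ell_1$ errors, and a lower-tail bound on $\widehat\mu_t$; your Step~3 is also the paper's. Two corrections to Step~1, though.

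First, the extra potential-update term you predict, spans of $U^{\cdot,\widehat\omega_t}-U^{\cdot,\widehat\omega_{t-1}}$, does not arise. The potential must be $U^{\underline h_t,\omega,\bar\rho}-U^{h^\star,\omega,\bar\rho}$ for the conditional-expectation identities and the telescoping to hold under the true dynamics. Only $\underline h_t$ varies, and all plug-in error is isolated in $e_t$. Keeping that term would give a bound strictly weaker than the one stated.

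Second, level $0$ does not enter symmetrically, since $\widehat\omega_{0,t}\ne\omega_0$. It is absorbed through $|\widehat\omega_{0,t}-\omega_0|=|\sum_{j\ge1}(\widehat\omega_{j,t}-\omega_j)|$, which accounts for one of the two factors of $2$ in $4a_\omega(\mu)S_T(\omega)$.

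The genuine gap is Step~2: the horizon-free Lipschitz bound on $M_t$ in $(\omega,\rho)$ is the heart of the theorem, and you only conjecture it. The errors are not damped step by step. Interpolate $(\omega^\lambda,\rho^\lambda)$ linearly and differentiate the median/max recursions almost everywhere, taking the derivative of an active argument. This gives
\[
\partial_\lambda x_t(i)=\sum_k a_{t,i,k}\,\partial_\lambda x_{t+1}(k)+f_t(i),
\]
with $a_{t,i,k}\ge0$, $\sum_k a_{t,i,k}\le1$, and $|f_t(i)|\le F:=c_e\|\omega'-\omega\|_1+L_{\mu_0}\|\rho'-\rho\|_1$. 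These substochastic weights are non-expansive but need not contract, so nothing discounts the injected error $F$ per step.

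The missing idea is that the same coefficients also satisfy
\[
x_t(i)\ge c_{\rm hold}+\sum_k a_{t,i,k}x_{t+1}(k).
\]
Levels whose median (or maximum) is attained by a $d_t$ value contribute exactly $\sum_k a_{t,i,k}x_{t+1}(k)$. Levels where it is attained by $r_j$ have $r_j\ge d_t(i-1)\ge0$.

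Backward induction then propagates the invariant $|\partial_\lambda x_t(i)|\le (F/c_{\rm hold})\,x_t(i)+G$, where $G=c_{\rm hold}\bar B^2\|\rho'-\rho\|_1/\mu_0^2$ bounds the derivative of the terminal increments of $D_\rho$. Using $x_t\le L_{\mu_0}$ and the explicit $\rho$-dependence of $d_t$ gives $|\partial_\lambda M_t(b)|\le a_\omega(\mu_0)\|\omega'-\omega\|_1+a_C(\mu_0)\|\rho'-\rho\|_1$. Your intuition that each round of extra backlog costs $c_{\rm hold}$ is this effective-horizon bound in disguise. However, a performance-difference argument on the values does not deliver it by itself, because $U^{\omega}-U^{\omega'}$ is of order $T-t$. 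You need a comparison at the level of the increments $x_t$ with matched propagation weights, and the proposal does not supply one.
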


Theorem~\ref{thm:passive-estimation} shows that learning these distributions requires no additional exploration under the stated observation assumptions and does not worsen the worst-case rate of Theorem~\ref{thm:pbd-regret}.


\begin{proof}[Proof of Theorem~\ref{thm:passive-estimation}]
All Bellman quantities used for comparison below are evaluated at the fixed
true distributions $(\omega,\bar\rho)$. We use the history from
Appendix~\ref{app:problem-formulation} and set
\[
 e_t:=Q_t^{\underline h_t,\omega,\bar\rho}(S_t,A_t)
       -V_t^{\underline h_t,\omega,\bar\rho}(S_t)\geq0.
\]
By Lemma~\ref{lem:uniform-bellman-gap},
$e_t\leq\Delta\mathbf1\{J_t\neq0\}$; also $e_t=0$ when $B_t=\bar B$.
Whenever $A_t$ is computed from \eqref{eq:passive-action}, \eqref{eq:bellman-margin} gives
\begin{equation}\label{eq:passive-action-error}
 e_t\leq\mathbf1\{J_t\neq0\}
 \sup_{b<\bar B}\left|M_t^{\underline h_t,\widehat\omega_t,\widetilde\rho_t}(b)
                    -M_t^{\underline h_t,\omega,\bar\rho}(b)\right|.
\end{equation}
The proof of Theorem~\ref{thm:pbd-regret} then extends as
\begin{equation}\label{eq:passive-transfer}
 R_T\leq\mathcal B_T+\sum_{t=1}^T\mathbb E[e_t].
\end{equation}
To see this, we use the proof of Lemma~\ref{lem:pbd-model-comparison}
with $\widetilde h_t=h^\star$.

The only use of exact optimality under
$\underline h_t$ is in \eqref{eq:pbd-fresh-one-step}. Here, by definition of $e_t$,
\[ Q_t^{\underline h_t,\omega,\bar\rho}(S_t,A_t)=V_t^{\underline h_t,\omega,\bar\rho}(S_t)+e_t, \]
so \eqref{eq:pbd-fresh-one-step} has the additional term $e_t$, and \eqref{eq:pbd-fresh-one-step-potential} has the additional term $\mathbb E[e_t\mid\mathcal F_{t-1}]$.

The potential bounds from Proposition~\ref{prop:bellman-stability} and
the subsequent telescoping argument are unchanged. Therefore, \eqref{eq:pbd-model-comparison}, and hence the decomposition in \eqref{eq:pbd-master-decomposition-fresh}, hold with the additional term $\mathbb E[\sum_{t<\sigma}e_t]$.

The bounds in Lemmas~\ref{lem:pbd-failure-fresh},
\ref{lem:pbd-deferral-fresh} and~\ref{lem:pbd-update-fresh} still apply:
the concentration, pending-label counting and lower-confidence-bound variation arguments in
Lemmas~\ref{lem:regional-concentration}, \ref{lem:pbd-deferred-sum}
and~\ref{lem:pbd-lcb-variation} hold for adaptive feasible actions.
The same simplification as in the proof of Theorem~\ref{thm:pbd-regret},
together with $e_t\geq0$, proves \eqref{eq:passive-transfer}.
No variation bound for $\widehat\omega_t$ or $\widehat\rho_t$ is needed, because the Bellman comparison in the proof is carried out under the fixed true distributions \((\omega,\bar\rho)\); the effect of using the empirical distributions to choose \(A_t\) is isolated in \(e_t\).

For an empirical distribution $\widehat v_n$ based on $n$ i.i.d.\
samples from $v$, we have
$\mathbb E[\widehat v_{i,n}]=v_i$ and
$\operatorname{Var}(\widehat v_{i,n})=v_i(1-v_i)/n$.
Cauchy--Schwarz and
$|\widehat v_{i,n}-v_i|\leq\widehat v_{i,n}+v_i$ therefore give
\begin{equation}\label{eq:passive-empirical-error}
 \mathbb E[|\widehat v_{i,n}-v_i|]
 \leq\min\{2v_i,\sqrt{v_i(1-v_i)/n}\},\qquad
 \sum_{n=1}^{T}\mathbb E[|\widehat v_{i,n}-v_i|]
 \leq2\min\{v_iT,\sqrt{v_i(1-v_i)T}\},
\end{equation}
where the second inequality uses $\sum_{n=1}^{T}n^{-1/2}\leq2\sqrt T$.
Since $\widehat\omega_t$ and $\omega$ are probability vectors,
$|\widehat\omega_{0,t}-\omega_0|
=\bigl|\sum_{j=1}^K(\widehat\omega_{j,t}-\omega_j)\bigr|$, so
\begin{equation}\label{eq:passive-arrival-error}
 \|\widehat\omega_t-\omega\|_1\leq2\sum_{j=1}^K|\widehat\omega_{j,t}-\omega_j|,
 \qquad
 \sum_{t=1}^T\mathbb E[\|\widehat\omega_t-\omega\|_1]\leq4S_T(\omega).
\end{equation}

Similarly, since $\widehat\rho_t$ is based on $t-1$ samples, \eqref{eq:passive-empirical-error} gives
\begin{equation}\label{eq:passive-service-error}
\sum_{t=2}^T\mathbb E[\|\widehat\rho_t-\bar\rho\|_1]\leq2\Psi_T(\bar\rho).
\end{equation}
For known $\bar\rho$, applying Lemma~\ref{lem:passive-stability} with $\mu_0=\mu$
in \eqref{eq:passive-action-error} and bounding the indicator by one gives
$\mathbb E[e_t]\leq a_\omega(\mu)
\mathbb E[\|\widehat\omega_t-\omega\|_1]$ for every $t\geq1$.
\Eqref{eq:passive-transfer} and \eqref{eq:passive-arrival-error}
prove \eqref{eq:passive-known-service}.

For estimated $\bar\rho$, let
$G_t:=\{\widehat\mu_t\geq\mu/2\}$ for $t\geq2$.
On $G_t$, Lemma~\ref{lem:passive-stability} applies with $\mu_0=\mu/2$;
on $G_t^c$, we have $e_t\leq\Delta\mathbf1\{J_t\neq0\}$.

Both $G_t$ and $\widehat\rho_t$ are $\mathcal F_{t-1}$-measurable, whereas $J_t$ is independent of $\mathcal F_{t-1}$. Bounding $\mathbf1\{J_t\neq0\}$ by one in the $\widehat\omega_t$ term and using independence in the other two terms therefore gives, for $t\geq2$,
\[
 \mathbb E[e_t]\leq a_\omega(\mu/2)
 \mathbb E[\|\widehat\omega_t-\omega\|_1]
 +\Omega\,a_C(\mu/2)\mathbb E[\|\widehat\rho_t-\bar\rho\|_1]
 +\Omega\Delta\mathbb{P}(G_t^c).
\]
The binomial lower-tail bound gives
$\mathbb{P}(G_t^c)\leq e^{-(t-1)\mu/8}$, so that
$\sum_{t=2}^T\mathbb{P}(G_t^c)\leq8/\mu$.
Together with $\mathbb E[e_1]\leq\Omega\Delta$,
\eqref{eq:passive-arrival-error}, \eqref{eq:passive-service-error} and \eqref{eq:passive-transfer} prove \eqref{eq:passive-unknown-service}.

Finally, for $T\geq1$,
\[ S_T(\omega)\geq\Omega\sqrt T,\qquad
 \Omega\Psi_T(\bar\rho)\leq\Omega\sqrt{\bar B T}
 \leq\sqrt{\bar B}\,S_T(\omega).\]

Indeed, $S_T(\omega)\geq\Omega\sqrt T$ follows from $\min\{\omega_jT,\sqrt{\omega_jT}\}\geq\omega_j\sqrt T$ for every $j\in[K]$. Moreover, Cauchy--Schwarz gives $\sum_{c=0}^{\bar B}\sqrt{\bar\rho_c(1-\bar\rho_c)}\leq\sqrt{\bar B}$ and hence $\Psi_T(\bar\rho)\leq\sqrt{\bar B T}$.
Thus the additional terms are $O(S_T(\omega))$, with constants
independent of $\omega$. If $\Omega=0$, all actions are forced and regret
is zero.
\end{proof}

\begin{lemma}[Uniform stability of the marginal cost]\label{lem:passive-stability}
Fix $h\in[0,1]^K$, with $h_0=1$. For two models\footnote{Each model uses its own terminal draining cost.} $(\omega,\rho)$ and $(\omega',\rho')$ satisfying $1-\rho_0,1-\rho'_0\geq\mu_0>0$, for every $t\leq T$ and $b<\bar B$,
\begin{equation}\label{eq:passive-stability}
\left|M_t^{h,\omega,\rho}(b)-M_t^{h,\omega',\rho'}(b)\right| \leq a_\omega(\mu_0)\|\omega-\omega'\|_1+a_C(\mu_0)\|\rho-\rho'\|_1.
\end{equation}
\end{lemma}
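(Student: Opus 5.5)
The plan is to avoid a naive backward induction on the increment recursion, because it accumulates errors over the whole horizon. I would instead use a sensitivity argument whose constants come from the \emph{expected lifetime of the marginal job}. The form of $a_\omega(\mu_0)=c_eL_{\mu_0}/c_{\mathrm{hold}}$ and $a_C(\mu_0)=L_{\mu_0}^2/c_{\mathrm{hold}}+L_{\mu_0}+c_{\mathrm{hold}}\bar B^2/\mu_0^2$ suggests exactly this: (per-round perturbation) $\times$ (lifetime $\le L_{\mu_0}/c_{\mathrm{hold}}$), plus an immediate term and a terminal-cost term.

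First, I write $M_t^{h,\omega,\rho}(b)=c_{\mathrm{exp}}+d_t(b)$ with $d_t(b)=\sum_{c\le b}\rho(c)\,x_{t+1}(b+1-c)$, as in \eqref{eq:service-smoothed-increment}. Then
\[
d_t(b)-d'_t(b)=\sum_{c\le b}\rho(c)\bigl[x_{t+1}(b+1-c)-x'_{t+1}(b+1-c)\bigr]+\sum_{c\le b}\bigl(\rho(c)-\rho'(c)\bigr)x'_{t+1}(b+1-c).
\]
By Lemma~\ref{lem:uniform-bellman-gap} applied with $\mu_0$ in place of $\mu$, every increment satisfies $0\le x'_{t+1}(q)\le L_{\mu_0}$. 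So the second sum is at most $L_{\mu_0}\|\rho-\rho'\|_1$, which is the immediate $L_{\mu_0}$ term of $a_C$. It therefore remains to bound $\sup_q|x_{t+1}(q)-x'_{t+1}(q)|$, i.e.\ the change in the value of one extra job, uniformly in $t$.

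For this I view $x_{t+1}(q)=U_{t+1}(q)-U_{t+1}(q-1)$ as the cost of one additional job, as in the coupling proof of Lemma~\ref{lem:uniform-bellman-gap}. The key step is a lifetime bound. Couple the $q$ and $q-1$ systems under an optimal policy, and let $\ell$ be the number of rounds, including the post-horizon draining, until the systems couple. Every uncoupled round costs at least $c_{\mathrm{hold}}$ extra, and the total extra cost is $x_{t+1}(q)\le L_{\mu_0}$. Since the optimal policy's extra cost can only be smaller than that of the coupled policy, this should give $c_{\mathrm{hold}}\,\mathbb E[\ell]\le L_{\mu_0}$.

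Next comes a simulation-lemma comparison. I evaluate the optimal policy of one model in the other model, and symmetrically the other way, so that $x-x'$ is sandwiched between two one-sided policy-evaluation differences. In each, the one-round discrepancy between the models only matters while the two coupled trajectories differ. For $\omega$, changing the arrival law changes the expected per-round cost difference between the upper and lower systems by at most $c_e\|\omega-\omega'\|_1$. Summed over the expected $L_{\mu_0}/c_{\mathrm{hold}}$ uncoupled rounds, this gives $a_\omega(\mu_0)\|\omega-\omega'\|_1$. For $\rho$, a change in the service law moves the continuation increment by at most $L_{\mu_0}\|\rho-\rho'\|_1$ per uncoupled round, which over the lifetime gives $L_{\mu_0}^2/c_{\mathrm{hold}}$. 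The two models also use different terminal draining costs $D_\rho$ and $D_{\rho'}$. Bounding $|g_\rho(q)-g_{\rho'}(q)|$ for the draining increments $g(q)=c_{\mathrm{hold}}\sum_k\mathbb P(S_k<q)$ should give the term $c_{\mathrm{hold}}\bar B^2\|\rho-\rho'\|_1/\mu_0^2$. This comes from the expected time to $q$ services, differentiated in $\rho$, which is of order $q\cdot q/\mu_0^2$.

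The main obstacle is making the lifetime step rigorous when the policies differ between models. The difference $x-x'$ involves two different optimal policies, so a single coupling does not directly apply. I would first try the sandwich argument, which only ever evaluates one fixed policy in two models and thereby keeps the lifetime bound for that policy. If that fails, the fallback is a backward induction on the increment recursion \eqref{eq:service-increment-recursion} with a weighted error potential that decreases by $c_{\mathrm{hold}}$ per step. This would reproduce the same $L_{\mu_0}/c_{\mathrm{hold}}$ factor.
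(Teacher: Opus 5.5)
Your decomposition of $d_t-d'_t$ and your bookkeeping of the constants agree with the paper. The main route, however, the sandwich argument, does not give a bound that is uniform in the horizon. To upper-bound $x'_{t+1}(q)-x_{t+1}(q)$ you need \emph{lower} bounds on the optimal value at $q$ under $(\omega,\rho)$ and at $q-1$ under $(\omega',\rho')$. These are attained only by the respective optimal policies $\pi$ and $\pi'$, so the best the sandwich can give is
\[
x'_{t+1}(q)-x_{t+1}(q)\le\bigl[V^{\pi}_{\omega',\rho'}(q)-V^{\pi}_{\omega,\rho}(q)\bigr]-\bigl[V^{\pi'}_{\omega',\rho'}(q-1)-V^{\pi'}_{\omega,\rho}(q-1)\bigr].
\]
Here two \emph{different} policies start from two different states. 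Each bracket is a fixed-policy model discrepancy accumulated over all remaining rounds, since a change of arrival law alters the expected cost of every round, not only rounds in which some coupled pair is still apart. Each bracket can therefore be of order $(T-t)\|\omega-\omega'\|_1$. Because $\pi$ and $\pi'$ can act differently on frequently visited states (e.g.\ near ties at the empty queue), the two brackets need not cancel. Your ``only uncoupled rounds matter'' accounting would require a single coupled policy pair evaluated in both models, and the sandwich never provides one.

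Your lifetime step is also argued in the wrong direction. $x_{t+1}(q)$ is a \emph{lower} bound on the extra cost of the coupled construction, and that extra cost also contains the possibly negative term $c_e(1-h_j)-c_{\mathrm{exp}}$. So $x_{t+1}(q)\le L_{\mu_0}$ says nothing about $\mathbb E[\ell]$. The bound $\mathbb E[\ell]\le q/\mu_0$ does follow directly from the coupling, but it does not rescue the sandwich.

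Your fallback is the paper's route, but you leave its essential content unstated: what the potential is, and why the error recursion propagates with weights that the potential controls. The paper proceeds as follows.
\begin{itemize}
\item It interpolates linearly, $(\omega^\lambda,\rho^\lambda)=(1-\lambda)(\omega,\rho)+\lambda(\omega',\rho')$, and notes that all $x_t$ and $d_t$ are absolutely continuous in $\lambda$.
\item At almost every $\lambda$, it replaces each median or maximum in the increment recursions of Lemma~\ref{lem:service-queue-increments} by an active argument. This gives $\mathrm dx_t(i)/\mathrm d\lambda=\sum_ka_{t,i,k}\,\mathrm dx_{t+1}(k)/\mathrm d\lambda+f_t(i)$, with $a_{t,i,k}\ge0$, $\sum_ka_{t,i,k}\le1$, and $|f_t(i)|\le F:=c_e\|\omega'-\omega\|_1+L_{\mu_0}\|\rho'-\rho\|_1$.
\item The crucial observation is that the \emph{same} coefficients satisfy $x_t(i)\ge c_{\mathrm{hold}}+\sum_ka_{t,i,k}x_{t+1}(k)$, because an active non-propagating argument $r_j$ is nonnegative. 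The increments $x_t$ themselves are therefore the potential, decreasing by $c_{\mathrm{hold}}$ along the propagation weights.
\item Backward induction then gives $|\mathrm dx_t(i)/\mathrm d\lambda|\le(F/c_{\mathrm{hold}})x_t(i)+G\le FL_{\mu_0}/c_{\mathrm{hold}}+G$. Here $G=c_{\mathrm{hold}}\bar B^2\|\rho'-\rho\|_1/\mu_0^2$ bounds the derivative of the terminal increments, obtained by differentiating $(1-\rho_0)g(i)=c_{\mathrm{hold}}+\sum_{c=1}^{i-1}\rho_cg(i-c)$.
\end{itemize}
Without a linearization that matches these two sets of coefficients, a direct induction on $x_t-x'_t$ only yields $\sup|x_t-x'_t|\le\sup|x_{t+1}-x'_{t+1}|+F$. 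That is exactly the horizon-dependent accumulation you set out to avoid.
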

\begin{proof}
We use the notation $x_t$, $d_t$ and $r_j$ from the proof of Lemma~\ref{lem:service-queue-increments}, suppressing model superscripts. The recursions in \eqref{eq:service-smoothed-increment}, \eqref{eq:service-increment-recursion} and \eqref{eq:service-increment-boundary} apply with $\rho_C(c)=\rho_c$ for $c<\bar B$ and terminal cost $D_\rho$.

We interpolate linearly between the two models:
\[
(\omega^\lambda,\rho^\lambda):=(1-\lambda)(\omega,\rho)+\lambda(\omega',\rho'),\qquad \lambda\in[0,1].
\]
Since $1-\rho_0^\lambda\geq\mu_0$, \eqref{eq:service-adjacent-value-bound} gives $0\leq x_t(i)\leq L_{\mu_0}$ for every $\lambda\in[0,1]$. We bound $\mathrm d M_t(b)/\mathrm d\lambda$ uniformly in $\lambda$ and then integrate over $[0,1]$.

For the model with service law $\rho^\lambda$, define $g(i):=D_{\rho^\lambda}(i)-D_{\rho^\lambda}(i-1)$. Subtracting the defining recursions for $D_{\rho^\lambda}(i)$ and $D_{\rho^\lambda}(i-1)$ gives, for $1\leq i\leq\bar B$,
\[
(1-\rho_0^\lambda)g(i)=c_{\mathrm{hold}}+\sum_{c=1}^{i-1}\rho_c^\lambda g(i-c),
\]
where the sum is empty for $i=1$. Hence each $g(i)$ is absolutely continuous in $\lambda$. The recursions in \eqref{eq:service-smoothed-increment}, \eqref{eq:service-increment-recursion} and \eqref{eq:service-increment-boundary} then imply by backward induction that all $d_t(b)$ and $x_t(i)$ are absolutely continuous in $\lambda$, since these recursions involve only finite sums, products, and the Lipschitz maps $\min$, $\max$ and $\operatorname{med}$. Below, derivatives are taken at values of $\lambda$ where these functions are differentiable, which holds almost everywhere. Let
\[
F:=c_e\|\omega'-\omega\|_1+L_{\mu_0}\|\rho'-\rho\|_1,\qquad G:=\frac{c_{\mathrm{hold}}\bar B^2}{\mu_0^2}\|\rho'-\rho\|_1.
\]

We first bound $\mathrm d g(i)/\mathrm d\lambda$. Since $\sum_{c=1}^{i-1}\rho_c^\lambda\leq1-\rho_0^\lambda$ and $1-\rho_0^\lambda\geq\mu_0$, the preceding recursion gives $g(i)\leq c_{\mathrm{hold}}/\mu_0+\max_{k<i}g(k)$, where the maximum over an empty set is zero, so that $0\leq g(i)\leq i c_{\mathrm{hold}}/\mu_0$. Differentiating the recursion gives
\[
(1-\rho_0^\lambda)\frac{\mathrm d g(i)}{\mathrm d\lambda}=\sum_{c=1}^{i-1}\rho_c^\lambda\frac{\mathrm d g(i-c)}{\mathrm d\lambda}+\sum_{c=0}^{i-1}(\rho'_c-\rho_c)g(i-c),
\]
so that
\[
\left|\frac{\mathrm d g(i)}{\mathrm d\lambda}\right|\leq\max_{k<i}\left|\frac{\mathrm d g(k)}{\mathrm d\lambda}\right|+\frac{c_{\mathrm{hold}}\bar B}{\mu_0^2}\|\rho'-\rho\|_1.
\]
A second induction on $i\leq\bar B$ gives $\left|\mathrm d g(i)/\mathrm d\lambda\right|\leq G$.

We next bound $\mathrm d x_t(i)/\mathrm d\lambda$ recursively in terms of $\mathrm d x_{t+1}(k)/\mathrm d\lambda$. Define
\[
\psi_{j,t}(i):=\begin{cases}
\operatorname{med}\{d_t(i-1),d_t(i),r_j\}, & i<\bar B,\\[1mm]
\max\{d_t(\bar B-1),r_j\}, & i=\bar B.
\end{cases}
\]
By \eqref{eq:service-increment-recursion} and \eqref{eq:service-increment-boundary},
\begin{equation}\label{eq:passive-x-derivative-expansion}
x_t(i)=c_{\mathrm{hold}}+\sum_{j=0}^K\omega_j^\lambda\psi_{j,t}(i),
\qquad
\frac{\mathrm d x_t(i)}{\mathrm d\lambda}=\sum_{j=0}^K(\omega'_j-\omega_j)\psi_{j,t}(i)+\sum_{j=0}^K\omega_j^\lambda\frac{\mathrm d\psi_{j,t}(i)}{\mathrm d\lambda}.
\end{equation}

At almost every $\lambda$, the derivative of each median or maximum equals the derivative of one of the arguments attaining its value (when several arguments attain it on a set of positive measure, their derivatives agree almost everywhere there). For each $j$, choose such an argument, and let $\mathcal J_{t,i}$ be the set of $j$ for which it is some $d_t(b_j)$. Since $h$ is fixed, $\mathrm d r_j/\mathrm d\lambda=0$. Hence $\psi_{j,t}(i)=d_t(b_j)$ and $\mathrm d\psi_{j,t}(i)/\mathrm d\lambda=\mathrm d d_t(b_j)/\mathrm d\lambda$ for $j\in\mathcal J_{t,i}$, while $\psi_{j,t}(i)=r_j$ and $\mathrm d\psi_{j,t}(i)/\mathrm d\lambda=0$ for $j\notin\mathcal J_{t,i}$. By \eqref{eq:service-smoothed-increment},
\begin{equation}\label{eq:passive-d-derivative}
\frac{\mathrm d d_t(b)}{\mathrm d\lambda}=\sum_{c=0}^b \rho_c^\lambda\frac{\mathrm d x_{t+1}(b+1-c)}{\mathrm d\lambda}+\sum_{c=0}^b(\rho'_c-\rho_c)x_{t+1}(b+1-c).
\end{equation}
Substituting \eqref{eq:passive-d-derivative} into \eqref{eq:passive-x-derivative-expansion} for $j\in\mathcal J_{t,i}$ gives
\begin{equation}\label{eq:passive-derivative}
\frac{\mathrm d x_t(i)}{\mathrm d\lambda}=\sum_{k=1}^{\bar B}a_{t,i,k}\frac{\mathrm d x_{t+1}(k)}{\mathrm d\lambda}+f_t(i),
\end{equation}
where
\[a_{t,i,k}:=\sum_{j\in\mathcal J_{t,i}}\sum_{c=0}^{b_j}\omega_j^\lambda \rho_c^\lambda\mathbf 1\{k=b_j+1-c\}\geq0,
\qquad \sum_{k=1}^{\bar B}a_{t,i,k}=\sum_{j\in\mathcal J_{t,i}}\omega_j^\lambda\sum_{c=0}^{b_j}\rho_c^\lambda\leq1,\]
and
\[f_t(i):=\sum_{j=0}^K(\omega'_j-\omega_j)\psi_{j,t}(i)+\sum_{j\in\mathcal J_{t,i}}\omega_j^\lambda\sum_{c=0}^{b_j}(\rho'_c-\rho_c)x_{t+1}(b_j+1-c).\]

The second term of $f_t(i)$ is at most $L_{\mu_0}\|\rho'-\rho\|_1$ in absolute value, since $0\leq x_{t+1}(k)\leq L_{\mu_0}$ and $\sum_{j}\omega_j^\lambda\leq1$. For the first, both $\operatorname{med}\{d_t(i-1),d_t(i),r\}$ and $\max\{d_t(\bar B-1),r\}$ are $1$-Lipschitz in $r$ for fixed $d_t$, and $r_j=L^h(j)-c_{\mathrm{exp}}$ with $L^h(j)\in[0,c_e]$, so the values $\psi_{j,t}(i)$ lie in an interval of length at most $c_e$. Since $\sum_j(\omega'_j-\omega_j)=0$, the first term is at most $c_e\|\omega'-\omega\|_1$ in absolute value. Hence $|f_t(i)|\leq F$.

The same coefficients satisfy
\begin{equation}\label{eq:passive-x-lower}
x_t(i)\geq c_{\mathrm{hold}}+\sum_{k=1}^{\bar B}a_{t,i,k}x_{t+1}(k).
\end{equation}
Indeed, for $j\in\mathcal J_{t,i}$, $\psi_{j,t}(i)=d_t(b_j)=\sum_{c=0}^{b_j}\rho_c^\lambda x_{t+1}(b_j+1-c)$, so these terms of \eqref{eq:passive-x-derivative-expansion} sum to $\sum_ka_{t,i,k}x_{t+1}(k)$. For $j\notin\mathcal J_{t,i}$, $\psi_{j,t}(i)=r_j\geq0$: $r_j$ attaining the median (resp. maximum) implies $r_j\geq d_t(i-1)$ (resp. $r_j\geq d_t(\bar B-1)$), which is nonnegative by \eqref{eq:service-d-monotone}.

We now prove by backward induction that
\begin{equation}\label{eq:passive-propagation}
\left|\frac{\mathrm d x_t(i)}{\mathrm d\lambda}\right|\leq\frac{F}{c_{\mathrm{hold}}}x_t(i)+G.
\end{equation}
At $t=T+1$, this follows from $x_{T+1}(i)=g(i)\geq0$ and $\left|\mathrm d g(i)/\mathrm d\lambda\right|\leq G$. If \eqref{eq:passive-propagation} holds at $t+1$, then \eqref{eq:passive-derivative} and \eqref{eq:passive-x-lower} give
\[
\left|\frac{\mathrm d x_t(i)}{\mathrm d\lambda}\right|\leq\sum_{k=1}^{\bar B}a_{t,i,k}\left(\frac{F}{c_{\mathrm{hold}}}x_{t+1}(k)+G\right)+F
\leq\frac{F}{c_{\mathrm{hold}}}\bigl(x_t(i)-c_{\mathrm{hold}}\bigr)+G+F
=\frac{F}{c_{\mathrm{hold}}}x_t(i)+G.
\]
Thus $\left|\mathrm d x_t(i)/\mathrm d\lambda\right|\leq FL_{\mu_0}/c_{\mathrm{hold}}+G$.

Finally, $M_t(b)=c_{\mathrm{exp}}+d_t(b)$ by \eqref{eq:service-bellman-marginal}, so \eqref{eq:passive-d-derivative} gives
\[
\left|\frac{\mathrm d M_t(b)}{\mathrm d\lambda}\right|\leq\frac{FL_{\mu_0}}{c_{\mathrm{hold}}}+G+L_{\mu_0}\|\rho'-\rho\|_1
=a_\omega(\mu_0)\|\omega'-\omega\|_1+a_C(\mu_0)\|\rho'-\rho\|_1.
\]
Integrating over $\lambda\in[0,1]$ proves \eqref{eq:passive-stability}.
\end{proof}

\clearpage
\section{Proofs for Section~\ref{sec:known-M}}
\label{app:known-M}

\subsection{Proof sketches}\label{app:lb-proof-sketch}

To ease navigation within this appendix, we first provide high-level proof sketches of our instance-dependent and minimax lower bounds. They are proven in Appendices~\ref{app:instance-lb-proof} and ~\ref{app:worst-case-lb-proof} respectively.

\textit{Proof sketch of Theorem~\ref{thm:instance_dependent_lower_bound}.} 
We first reduce to the long-horizon problem, in which the oracle threshold no longer depends on the round; passing from long-horizon to finite-horizon regret costs only an $O(1)$ additive term (Lemma~\ref{lem:lb-average-cost}). For a fixed level $j$ with $h_j^\star>\overline\tau^\star$, the long-horizon optimal policy accepts level $j$ at every queue state, whereas deferring incurs an extra cost of at least $c_e(h_j^\star-\overline\tau^\star)$. We compare $h^\star$ with an alternative instance in which only $h_j^\star$ is moved slightly below $\overline\tau^\star$. Although the deferral threshold changes with the instance, we show that deferring level $j$ is strictly beneficial in the long run under the alternative (\eqref{eq:lb-aeps}), so a uniformly efficient learner must behave differently on the two instances. Since only completed level-$j$ reviews distinguish them (Lemma~\ref{lem:lower-bound-information}),
a change-of-measure argument \citep{lai1985asymptotically} forces order $\log T/\kl(h_j^\star,\overline\tau^\star)$ level-$j$ deferrals under $h^\star$. Multiplying by their extra cost $c_e(h_j^\star-\overline\tau^\star)$ yields the corresponding term in Theorem~\ref{thm:instance_dependent_lower_bound}. Levels with $h_j^\star<\overline\tau^\star$ are instead always deferred by the oracle at the empty queue, so they can be reviewed without taking a suboptimal action. \qed

\textit{Proof sketch of Theorem~\ref{thm:known-M-lower}.} When $\mu=1$, we move each conditional accuracy $h_j$ a distance $\delta_j$ above or below the oracle's accuracy threshold, so that two neighboring instances differing only at level $j$ have different optimal actions for level-$j$ queries. With $\delta_j\asymp\min\{1,(\omega_jT)^{-1/2}\}$, Lemma~\ref{lem:lower-bound-information} implies that the corresponding histories remain statistically indistinguishable, so on one of the two instances the learner must take the suboptimal action on a constant fraction of these queries, each costing order $c_e\delta_j$. When $\mu<1$, perturbing the accuracies moves the threshold they are compared with. We therefore shift all coordinates of each perturbed instance by a common amount until the long-horizon empty-queue threshold coincides with the center of the perturbation (Lemma~\ref{lem:lb-self-centered-family}). An additional difficulty is that the two neighboring instances are only guaranteed to have different optimal actions when the queue is empty; by keeping the queue occupied, the learner could reduce the number of rounds on which indistinguishability translates into regret. Lemmas~\ref{lem:lb-local-queue} and~\ref{lem:lb-empty-occupancy} show that this is not free: if the queue is rarely empty, the learner must defer sufficiently often while it is congested, at a cost of order $c_{\rm hold}$. Capping $\delta_j$ at the holding-cost scale keeps this cost uniformly positive and produces the second term in the minimum, yielding Theorem~\ref{thm:known-M-lower}. \qed

\subsection{Common ingredients}\label{app:common-ingredients}
We first collect the average-cost and information identities used by both lower-bound proofs. Their proofs and the supporting technical lemmas are given in Subsection~\ref{app:lb-technical}.

Throughout this appendix, $\omega$, $\rho_C$, $\bar B$, $(c_e,c_{\rm exp},c_{\rm hold})$ and the review order are fixed; only the conditional accuracy vector $h$ varies. Write $\mathbb{P}^\pi_h$ and $\mathbb{E}^\pi_h$ for probability and expectation under $h$ and algorithm $\pi$.

For a function $u:\mathcal B\to\mathbb R$ and $a\in\{0,1\}$ with $b+a\le\bar B$,
write $(P_au)(b):=\sum_{b'}P_a(b,b')u(b')=\mathbb E[u((b+a-C_t)^+)]$, and define the expected
one-step cost
$\ell_h(b,j,a):=c_{\rm hold}b+c_{\rm exp}a+L^h(j)(1-a)$.
In this notation, the terminal cost satisfies $D(0)=0$ and
\begin{equation} \label{eq:lb-drain} D(b)=c_{\rm hold}b+P_0D(b),\qquad b\ge1. \end{equation}

We recall that a stationary policy $\varphi$ is a policy whose action distribution $\varphi(\cdot\mid b,j)$ depends only on the current state $(b,j)$, not on time or past history.

A pair $(g_h,u_h)\in\mathbb R\times\mathbb R^{\mathcal B}$ with $u_h(0)=0$ solves the average-cost optimality equation (ACOE) for $h$ if
\begin{equation} \label{eq:lb-acoe} g_h+u_h(b) = \sum_{j=0}^K\omega_j \min_{a\in\mathcal A(b,j)} \{\ell_h(b,j,a)+(P_a u_h)(b)\}, \qquad b\in\mathcal B. \end{equation}

Given such a pair, write $x_h(q):=u_h(q)-u_h(q-1)$ for $q\in[\bar B]$, $d_h(b):=(P_1-P_0)u_h(b)$ and $\overline M^h(b):=c_{\rm exp}+d_h(b)$ for $b<\bar B$; recall $r_j^h=L^h(j)-c_{\rm exp}$. For a feasible action, define its average-cost disadvantage by
\begin{equation} \label{eq:lb-disadvantage} \mathfrak d_h(b,j,a) := \ell_h(b,j,a)+(P_a u_h)(b) - \min_{a'\in\mathcal A(b,j)} \{\ell_h(b,j,a')+P_{a'}u_h(b)\}\ \ge0 . \end{equation}

For $s\ge0$, let $W_s^h(b)$ be the queue value function with $s$ decision rounds remaining and terminal cost $D$.
Thus $W_0^h=D$ and $W_{T+1-t}^h=U_t^h$.
Define
$\xi_s^h(q):=W_s^h(q)-W_s^h(q-1)$, so that
$x_t^h(q)=\xi_{T+1-t}^h(q)$.

\begin{lemma}[Average-cost comparison]
\label{lem:lb-average-cost}
For every $h\in[0,1]^K$, the limits
$u_h:=\lim_{s\to\infty}[W_s^h-W_s^h(0)]$ and
$g_h:=\lim_{s\to\infty}[W_{s+1}^h(0)-W_s^h(0)]$ exist.
The pair $(g_h,u_h)$ solves \eqref{eq:lb-acoe};
$u_h$ is nondecreasing and discrete convex, and $u_h\ge D$.
Moreover,
\begin{enumerate}
\item[(i)] the following increment identities hold:
\begin{align}
d_h(b)&=\sum_{c=0}^{b}\rho_C(c)\,x_h(b+1-c), \label{eq:lb-smoothed}\\
x_h(q)&=c_{\rm hold}+\sum_{j=0}^K\omega_j\operatorname{med}\{d_h(q-1),d_h(q),r^h_j\}, \qquad 1\le q<\bar B, \label{eq:lb-increment}\\
x_h(\bar B)&=c_{\rm hold}+\sum_{j=0}^K\omega_j\max\{d_h(\bar B-1),r^h_j\}; \label{eq:lb-increment-boundary}
\end{align}
\item[(ii)] $c_{\rm exp}\le\overline M^h(0)\le\cdots\le\overline M^h(\bar B-1)$, and
\begin{equation} \label{eq:lb-empty-marginal} \overline M^h(0)=c_{\rm exp}+(1-\mu)\,u_h(1); \end{equation}
\item[(iii)] for every algorithm $\pi$,
\begin{equation} \label{eq:lb-regret-average} \Big| R_T(\pi;h) - \mathbb E_h^\pi\Big[\sum_{t=1}^T\mathfrak d_h(B_t,J_t,A_t)\Big] \Big| \le \operatorname{sp}(D-u_h); \end{equation}
\item[(iv)] for every stationary policy $\varphi$ with a stationary distribution $\lambda$,
\begin{equation} \label{eq:lb-gain-optimal} g_h\le\sum_{s,a}\lambda(s)\varphi(a\mid s)\,\ell_h(s,a). \end{equation}
\end{enumerate}
\end{lemma}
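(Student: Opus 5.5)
The plan is to work entirely with the increment functions $\xi_s^h$ and the recursions already established in Lemma~\ref{lem:service-queue-increments}, written in the remaining-horizon index $s=T+1-t$. \textbf{Existence of the limits.} Define the operator $\mathcal T$ on nondecreasing increment vectors $\xi\in\mathbb R^{\bar B}$ by composing \eqref{eq:service-smoothed-increment} with \eqref{eq:service-increment-recursion}--\eqref{eq:service-increment-boundary}, so that $\xi_{s+1}^h=\mathcal T\xi_s^h$ and $\xi_0^h=g$ (the increments of $D$). Lemma~\ref{lem:uniform-bellman-gap} shows that every iterate lies in the box $[0,qc_{\rm hold}/\mu+(c_e-c_{\rm exp})_+]$. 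Both median and maximum are monotone, so $\mathcal T$ is monotone. To get monotone convergence of $(\xi_s)$, I would first show $\mathcal T g\ge g$. This is a one-step comparison: $g(q)=c_{\rm hold}+\sum_c\rho_C(c)g(q-c)$ restricted appropriately, and medians of $d\ge$ that smoothed quantity dominate it. Induction then gives $\xi_{s+1}\ge\xi_s$. A bounded monotone sequence converges, so $x_h:=\lim\xi_s$ exists, and $u_h(b)=\sum_{q\le b}x_h(q)$ with $u_h\ge D$ follows since $x_h\ge g$. If $\mathcal T g\ge g$ fails at the boundary, the fallback is to use that $\mathcal T$ is nonexpansive in sup-norm and contracts on the component reached by service ($\mu>0$): the weights $\sum_{k}a_{i,k}\le1$, with strict deficit after $\bar B$ steps of probability at least $\mu^{\bar B}$ of hitting the empty-queue term. \textbf{Gain.} $W_{s+1}^h(0)-W_s^h(0)$ equals $\sum_j\omega_j\min\{\cdots\}$ evaluated on the increments at $s$, which is a continuous function of $\xi_s$, so it converges to some $g_h$. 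Passing to the limit in the Bellman recursion for $W_s-W_s(0)$ yields the ACOE \eqref{eq:lb-acoe}. Monotonicity and convexity of $u_h$, together with (i), are then limits of Lemma~\ref{lem:service-queue-increments}'s conclusions and identities. \textbf{Item (ii).} The ordering is the limit of Proposition~\ref{prop:queue-threshold}. For \eqref{eq:lb-empty-marginal}, use \eqref{eq:lb-smoothed} at $b=0$: $d_h(0)=\rho_C(0)x_h(1)=(1-\mu)u_h(1)$, since $u_h(0)=0$. \textbf{Item (iii).} I would mimic the telescoping proof of Lemma~\ref{lem:regret-decomposition}, using $u_h$ as potential. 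Summing the ACOE along the learner's trajectory gives
\[
\mathbb E_h^\pi\Big[\sum_t c_t\Big]=Tg_h+\mathbb E_h^\pi\Big[\sum_t\mathfrak d_h\Big]+u_h(0)-\mathbb E_h^\pi[u_h(B_{T+1})].
\]
Adding the terminal cost $D(B_{T+1})$ gives an error term $(D-u_h)(B_{T+1})$. Doing the same for the oracle, whose sum $\sum\mathfrak d_h$ is nonnegative, gives a bound in one direction. For the other direction, compare with the stationary ACOE-greedy policy, which has $\mathfrak d_h=0$ and cost at least the oracle's. Subtracting, the $Tg_h$ terms cancel, and the two terminal errors differ by at most $\operatorname{sp}(D-u_h)$. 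Some care is needed so that the bound is a single span and not twice it: write both costs as $V$ and note that the oracle's cost lies between $Tg_h+\min(D-u_h)$ and $Tg_h+\max(D-u_h)$. \textbf{Item (iv).} Integrate the ACOE against $\lambda\otimes\varphi$: $g_h+u_h(b)\le\ell_h(s,a)+P_au_h(b)$ for every feasible $a$, and stationarity cancels the $u_h$ terms. \textbf{Main obstacle.} The hardest step is the existence of the limits, specifically the monotonicity $\mathcal T g\ge g$, or else a contraction argument that handles the nonsmooth median map.
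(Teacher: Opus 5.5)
Your proposal is correct and follows the paper's proof essentially step by step. The shared steps are: monotone iteration of the increment map from $\xi_0^h=g_D$ (via $\mathcal T g_D\ge g_D$, monotonicity of $\mathcal T$, and the uniform bound of Lemma~\ref{lem:uniform-bellman-gap}); continuity of the relative Bellman recursion to get $g_h$ and the ACOE; the cost identity plus the single-span sandwich of the oracle's cost for (iii); and stationarity for (iv).

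There are two small remarks. First, your boundary worry is unfounded: at $q=\bar B$ one also has $\max\{d_0(\bar B-1),r_j^h\}\ge d_0(\bar B-1)=g_D(\bar B)-c_{\rm hold}$, where $d_0$ denotes the smoothed increments of $D$. So the contraction fallback is unnecessary. Second, in (iv) the inequality $g_h+u_h(b)\le\ell_h(s,a)+(P_au_h)(b)$ need not hold for each level $j$ (it typically fails at $j=0$). The correct statement is the $\omega$-averaged one, $g_h+u_h(b)\le\sum_{j}\omega_j\sum_a\varphi(a\mid b,j)\{\ell_h(b,j,a)+(P_au_h)(b)\}$. This suffices because any stationary $\lambda$ has $J$-marginal $\omega$, independent of the queue state.
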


Recall $D_{j,t}:=\sum_{s=1}^t\mathbf 1\{J_s=j,A_s=1\}$, the number of self-confidence level $j$ deferrals submitted through round $t$, and that $N_{j,t}$ is the number of completed self-confidence level $j$ labels available before round $t$. Let $P_h^{\pi,t}$ denote the law of $\mathcal F_t$ under $h$ and algorithm $\pi$.

\begin{lemma}[Information from completed reviews]
\label{lem:lower-bound-information}
For every $t\ge0$, any algorithm $\pi$, and conditional accuracy vectors $h,\widetilde h$ such that $\widetilde h_j\in(0,1)$ whenever $h_j\ne\widetilde h_j$,
\begin{equation} \label{eq:lb-information} \mathrm{KL}(P_h^{\pi,t}\|P_{\widetilde h}^{\pi,t}) = \sum_{j=1}^K\mathbb E_h^\pi[N_{j,t+1}]\kl(h_j,\widetilde h_j) \le \sum_{j=1}^K\mathbb E_h^\pi[D_{j,t}]\kl(h_j,\widetilde h_j). \end{equation}
Consequently,
\begin{equation} \label{eq:lb-information-arrivals} \mathrm{KL}(P_h^{\pi,t}\|P_{\widetilde h}^{\pi,t}) \le t\sum_{j=1}^K\omega_j\kl(h_j,\widetilde h_j). \end{equation}
\end{lemma}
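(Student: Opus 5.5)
We prove Lemma~\ref{lem:lower-bound-information} with the standard chain-rule (divergence decomposition) argument for adaptively collected observations, as in \citet[Section~4.6 and Lemma~15.1]{lattimore2020bandit}, adapted to selective and delayed labels. The history $\mathcal F_t$ is built round by round from the following pieces: the level $J_s$, the internal randomization and action $A_s$, the service capacity $C_s$ together with the identities of the completed queries, and the ground-truth labels of the queries completed in round $s$. Write the density of $P_h^{\pi,t}$ as the product of the conditional densities of these pieces given everything revealed before them. The law of $J_s$ is $\omega$ under both instances. The algorithm's kernel for $A_s$ is the same under both instances. The capacity $C_s$ has law $\rho_C$ under both, and completion identities are chosen from the observed history and capacity only. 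Hence all these factors cancel in the likelihood ratio, and only the label factors remain.

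For the label factors, we use the argument already given in Lemma~\ref{lem:regional-concentration}. Conditional on the history just before a level-$j$ label is revealed, that label is $\operatorname{Bernoulli}(h_j)$ under $h$ and $\operatorname{Bernoulli}(\widetilde h_j)$ under $\widetilde h$. This holds because no decision before its revelation depends on it, and the labels of distinct queries are conditionally independent given the levels. Revealing the labels completed within a round sequentially in submission order, the chain rule for KL divergence gives
\[
\mathrm{KL}(P_h^{\pi,t}\|P_{\widetilde h}^{\pi,t})=\mathbb E_h^\pi\Bigl[\sum_{\text{labels revealed through round }t}\kl(h_{j(\cdot)},\widetilde h_{j(\cdot)})\Bigr]=\sum_{j=1}^K\mathbb E_h^\pi[N_{j,t+1}]\kl(h_j,\widetilde h_j).
\]
Here we use that the labels revealed by the end of round $t$ are exactly those available before round $t+1$. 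The hypothesis $\widetilde h_j\in(0,1)$ whenever $h_j\neq\widetilde h_j$ ensures absolute continuity: every such term is finite, and the terms with $h_j=\widetilde h_j$ vanish. Without it, a label with $h_j\in(0,1)$ and $\widetilde h_j\in\{0,1\}$ could have infinite divergence. Conditional expectations are also integrable, since at most $T$ labels are revealed.

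For the inequality, a level-$j$ label is revealed only for a query that was deferred at some round $s\le t$. Hence $N_{j,t+1}\le D_{j,t}$ pathwise. For \eqref{eq:lb-information-arrivals}, deferrals occur only on level-$j$ arrivals, so $D_{j,t}\le\sum_{s\le t}\mathbf 1\{J_s=j\}$, whose expectation is $t\omega_j$ because $J_s\sim\omega$ independently of the past.

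The main obstacle is making the cancellation rigorous for the within-round ordering of completions and the service-order choices. The plan is to formalize each round as a finite sequence of sub-steps and check, at each sub-step, that the conditional law is either $h$-independent or a single Bernoulli label draw. The key fact is the model assumption that review order depends on observations and service capacity but not on unrevealed labels. Once that assumption is invoked, the remaining steps are routine.
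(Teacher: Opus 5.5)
Your proposal is correct and follows essentially the same route as the paper. Both arguments apply the KL chain rule over the history, exposing the labels completed in each round one at a time in submission order, so that each label is $\operatorname{Bernoulli}(h_j)$ versus $\operatorname{Bernoulli}(\widetilde h_j)$ while every other piece of the history has the same conditional law under both instances. Both then finish with the pathwise bounds $N_{j,t+1}\le D_{j,t}\le\sum_{s\le t}\mathbf 1\{J_s=j\}$.
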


Define
\begin{equation} \label{eq:lb-Mq} M_\tau:=c_e(1-\tau)=c_{\rm exp}+\frac{1-\mu}{\mu}c_{\rm hold}, \qquad h^0:=\tau\mathbf1. \end{equation}

\begin{lemma}[Empty-queue threshold bounds]
\label{lem:lb-one-job-floor}
For every $h\in[0,1]^K$ and $1\le t\le T$,
\begin{equation} \label{eq:lb-finite-empty-marginal} M_t^h(0)=c_{\rm exp}+(1-\mu)x_{t+1}^h(1), \end{equation}
and
\begin{equation} \label{eq:lb-threshold-sandwich} M_\tau=M_T^h(0)\le M_t^h(0)\le\overline M^h(0), \qquad M_t^h(0)\longrightarrow\overline M^h(0) \quad\text{as }T-t\to\infty. \end{equation}
Moreover, $M_t^h(0)\le M_{t-1}^h(0)$ for $2\le t\le T$.
If $\tau\in[0,1]$, then at $h^0$ all marginal costs in
\eqref{eq:lb-threshold-sandwich} equal $M_\tau$,
always accepting is optimal, and $u_{h^0}=D$.
\end{lemma}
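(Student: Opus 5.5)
We prove the statement by working with the increment representation of the Bellman marginal cost from Appendix~\ref{app:problem-formulation}. There are three parts: the identity \eqref{eq:lb-finite-empty-marginal}, monotonicity in $t$, and the sandwich with its limit. The case $h=h^0$ is handled at the end.

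\emph{Identity \eqref{eq:lb-finite-empty-marginal}.} By \eqref{eq:service-bellman-marginal} and \eqref{eq:service-smoothed-increment} at $b=0$, only the term $c=0$ survives. This gives $M_t^h(0)=c_{\rm exp}+\rho_C(0)x_{t+1}^h(1)$, and $\rho_C(0)=1-\mu$.

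\emph{Last round.} At $t=T$ we have $x_{T+1}^h(1)=D(1)-D(0)=c_{\rm hold}\sum_{k\ge0}\mathbb P(S_k<1)=c_{\rm hold}\sum_k(1-\mu)^k=c_{\rm hold}/\mu$. Hence $M_T^h(0)=M_\tau$.

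\emph{Monotonicity in $t$.} Writing $\xi_s^h=x^h_{T+1-s}$, the claim $M_t^h(0)\le M_{t-1}^h(0)$ amounts to $\xi_s^h(1)$ being nondecreasing in $s$. I would prove the stronger statement that $\xi_{s+1}^h(q)\ge\xi_s^h(q)$ for all $q$, by induction on $s$. The induction step is easy: the recursions \eqref{eq:service-smoothed-increment}, \eqref{eq:service-increment-recursion} and \eqref{eq:service-increment-boundary} are monotone in the next-stage increments, since $d$ is a nonnegative combination of them and the median and maximum are monotone in each argument. So the ordering propagates.

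The base case $\xi_1^h\ge\xi_0^h=g$ is the step I expect to be the main obstacle. Here $d$ is computed from $g=D$-increments, and the stage-one increment takes the form $c_{\rm hold}+\sum_j\omega_j\operatorname{med}\{d(q-1),d(q),r_j\}$, which must be compared with $g(q)$. I would use the drain recursion \eqref{eq:lb-drain} to get $g(q)=c_{\rm hold}+(P_0 D)(q)-(P_0D)(q-1)=c_{\rm hold}+d^{D}(q-1)$. The median is at least $d(q-1)$ when $r_j\ge d(q-1)$. For the no-query level, however, $r_0=-c_{\rm exp}<d(q-1)$, and then the median equals $r_0$, which falls below $d(q-1)$. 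So the comparison may fail, and I would instead reinterpret the claim through the coupling argument: accepting at the top step is feasible, and acceptance everywhere costs $D$-like terms plus the nonnegative $L^h(j)$. Concretely, I would try to show directly that $W_{s+1}^h-W_s^h$ is nondecreasing in the queue state, via the same style of interchange coupling used in Lemma~\ref{lem:uniform-bellman-gap}. If that fails, I would at least aim for the weaker statement at $q=1$ that is actually needed.

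\emph{Limits and the upper bound.} Lemma~\ref{lem:lb-average-cost} gives $u_h=\lim(W_s^h-W_s^h(0))$. Therefore $\xi_s^h(1)\to u_h(1)$, and so $M_t^h(0)\to c_{\rm exp}+(1-\mu)u_h(1)=\overline M^h(0)$ by \eqref{eq:lb-empty-marginal}. Since $\xi_s^h(1)$ is monotone in $s$, each term is bounded by the limit, which gives $M_t^h(0)\le\overline M^h(0)$.

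\emph{The case $h=h^0$.} With $r_j=c_e(1-\tau)-c_{\rm exp}=M_\tau-c_{\rm exp}=(1-\mu)c_{\rm hold}/\mu$ for every $j\ge1$, I would show by induction that $\xi_s^{h^0}=g$ for all $s$. This requires $d^{g}(q-1)\le r_j$ whenever the oracle would defer. Since $d^g(0)=(1-\mu)g(1)=r_j$, the median collapses so that $\xi_1=g$. Concretely, one verifies that the recursion with every level accepted reproduces $g$, and that deferral is never strictly better because $M_t(b)\ge M_t(0)=M_\tau=L(j)$. Ties are resolved in favour of acceptance, so always accepting is optimal, all marginal costs equal $M_\tau$, and $u_{h^0}=D$.
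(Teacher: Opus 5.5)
There is a genuine gap at the step you flag yourself. The base case $\xi_1^h\ge\xi_0^h$ of your monotonicity induction is never proved, and the obstacle you describe comes from a miscomputed median. For $a\le b$ one has $\operatorname{med}\{a,b,r\}=\min\{\max\{a,r\},b\}\ge a$ for \emph{every} real $r$. When $r_0=-c_{\rm exp}<d(q-1)\le d(q)$, the median is the middle value $d(q-1)$, not $r_0$.

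The base case then follows from three facts. First, $d^D(b)=\sum_{c=0}^b\rho_C(c)g(b+1-c)$ is nondecreasing, by convexity of $D$ (Lemma~\ref{lem:service-queue-increments}). Second, you already derived $g(q)=c_{\rm hold}+d^D(q-1)$. Third, $\sum_{j=0}^K\omega_j=1$. Together these give $\xi_1^h(q)=c_{\rm hold}+\sum_j\omega_j\operatorname{med}\{d^D(q-1),d^D(q),r_j^h\}\ge g(q)$ for $q<\bar B$. At $q=\bar B$, the bound $\max\{d^D(\bar B-1),r_j^h\}\ge d^D(\bar B-1)$ gives the same conclusion. This is exactly the argument in the paper's proof of Lemma~\ref{lem:lb-average-cost}, and with it your induction step goes through.

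As written, however, you replace this with a coupling idea you do not carry out, plus a fallback to ``the weaker statement at $q=1$''. That fallback would not suffice. $\xi_{s+1}^h(1)$ depends on $d(0)$ and $d(1)$, hence on $\xi_s^h(1)$ and $\xi_s^h(2)$, so the induction needs the full coordinatewise ordering at every stage. Time monotonicity is also what gives you both $M_\tau=M_T^h(0)\le M_t^h(0)$ and $M_t^h(0)\le\overline M^h(0)$. So the whole sandwich, and the claim $M_t^h(0)\le M_{t-1}^h(0)$, remain unsupported until this base case is fixed.

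The other parts are sound. The identity for $M_t^h(0)$, the value $x_{T+1}^h(1)=c_{\rm hold}/\mu$, and the limit via Lemma~\ref{lem:lb-average-cost} match the paper.

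Your treatment of $h^0$ takes a different route from the paper. You induct directly on $\xi_s^{h^0}=g$, whereas the paper invokes the sandwich together with Proposition~\ref{prop:queue-threshold} and then evaluates the always-accept policy. Your route has the advantage of not depending on the general time monotonicity. However, the condition you state is reversed. What you need is $r_j^{h^0}\le d^g(q-1)$ for all $q$, not the opposite. This follows from $d^g(0)=(1-\mu)c_{\rm hold}/\mu=r_j^{h^0}$ and the monotonicity of $d^g$. Then every median (and the boundary maximum) equals $d^g(q-1)$, and $\xi_1^{h^0}=g$.
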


In particular, writing $\overline\tau(h):=1-\overline M^h(0)/c_e$, we have $\overline\tau(h)\le1-M^h_t(0)/c_e\le\tau$ for every $h$ and $t$, with equality at $h^0$ when $\tau \in [0,1]$.

\subsection{Instance-dependent regret lower bound}\label{app:instance-lb-proof}
We show that every uniformly efficient algorithm must collect order $\log T$ expert labels at each self-confidence level where the oracle accepts, by comparing the true instance with an alternative on which deferral is strictly beneficial.

\begin{proof}
For each horizon $T$, write $\mathbb{P}_h := \mathbb{P}^{\pi}_h$ and $\mathbb{E}_h := \mathbb{E}^{\pi}_h$ for probability and expectation under $h$ and $\pi$ run with horizon $T$, and use the subscript $\star$ when $h=h^\star$. Let $u^\star:=u_{h^\star}$ be the bias from Lemma~\ref{lem:lb-average-cost}, set $\mathfrak d_\star:=\mathfrak d_{h^\star}$, so that $\overline M^\star(0)=c_{\rm exp}+(P_1-P_0)u^\star(0)$ and $\overline\tau^\star=1-\overline M^\star(0)/c_e$ by Lemma~\ref{lem:lb-one-job-floor}. Finally, let $I:=\{j\in[K]:\omega_j>0,\ h_j^\star>\overline\tau^\star\}$.

For $j\in I$, Lemma~\ref{lem:lb-average-cost}(ii) implies
that acceptance is the unique average-cost minimizer at every
queue state. For $j\in I$ and $b<\bar B$,
\[ \mathfrak d_{h^\star}(b,j,1) = \overline M^\star(b)-L^{h^\star}(j) \ge c_e(h_j^\star-\overline\tau^\star). \]
Thus \eqref{eq:lb-regret-average} gives
\begin{equation} \label{eq:lb-instance-count} R_T(\pi;h^\star) \ge c_e\sum_{j\in I}(h_j^\star-\overline\tau^\star)\mathbb E_\star [D_{j,T}]-C_\star, \qquad C_\star:=\operatorname{sp}(D-u^\star). \end{equation}
By \eqref{eq:lb-instance-count}, uniform efficiency implies $\mathbb E_\star [D_{j,T}]\le[R_T(\pi;h^\star)+C_\star]/[c_e(h_j^\star-\overline\tau^\star)]=o(T^\alpha)$ for every $\alpha>0$ and $j\in I$.

\paragraph{A confusing alternative for self-confidence level $j$.} Fix $j\in I$ and $\varepsilon\in(0,\overline\tau^\star)$. We construct an alternative on which deferral at level $j$ is strictly beneficial while all other conditional accuracies are unchanged: set $\widetilde h_j:=\overline\tau^\star-\varepsilon$ and $\widetilde h_k:=h_k^\star$ for $k\ne j$.
We use the transformed costs $k_h$, gains $\bar g_h$, and state values $v_h$ of Lemma~\ref{lem:lb-transformed-disadvantage}, which isolate the effect of deferring at level $j$, and write $\bar g_\star:=\bar g_{h^\star}$ and $v_\star:=v_{h^\star}$.
Let $\varphi^\star$ be a deterministic stationary minimizing policy
for $h^\star$; since $j\in I$, it accepts at level $j$ at every
queue state. Define a stationary policy $\varphi'$ that agrees with $\varphi^\star$ everywhere except that $\varphi'(0,j)=1$. From any positive queue state, $\bar B$ consecutive rounds with $J_t=j$ and $C_t\ge1$ empty the queue because $\varphi'$ accepts at level $j$ whenever the queue state is positive. Thus every state reaches the recurrent class containing queue zero with positive probability at least $(\omega_j\mu)^{\bar B}$, so this class is unique. Let $\lambda$ be its stationary distribution; since $J_t$ is drawn independently with law $\omega$ and $\omega_j>0$, $\lambda(0,j)>0$.

Let $\bar g_{\widetilde h}(\varphi')$ denote the stationary average transformed cost of $\varphi'$ under $\widetilde h$. Stationarity of $\lambda$ and \eqref{eq:lb-state-disadvantage} give
\[\bar g_{\widetilde h}(\varphi')-\bar g_\star =\sum_{s,a}\lambda(s)\varphi'(a\mid s) \bigl[k_{\widetilde h}(s,a)-\bar g_\star +\mathbb E[v_\star(S_{t+1})\mid S_t=s,A_t=a] -v_\star(s) \bigr]. \]
For every action used by $\varphi^\star$, the bracket is zero. The only additional action used by $\varphi'$ is deferral at $(0,j)$, where the bracket equals $\overline M^\star(0)-L^{\widetilde h}(j)=-c_e\varepsilon$. Therefore $\bar g_{\widetilde h}(\varphi')=\bar g_\star-\lambda(0,j)c_e\varepsilon$. By \eqref{eq:lb-gain-optimal} for the transformed problem,
\begin{equation} \label{eq:lb-aeps} \bar g_{\widetilde h} \le \bar g_\star-\lambda(0,j)c_e\varepsilon, \qquad a_\varepsilon:=\bar g_\star-\bar g_{\widetilde h}\ge\lambda(0,j)c_e\varepsilon>0. \end{equation}

\paragraph{Regret of under-reviewing at level $j$ under $\widetilde h$.} Let $r_\varepsilon:=L^{\widetilde h}(j)-L^{h^\star}(j)=c_e(h_j^\star-\widetilde h_j)>0$, $w:=v_{\widetilde h}-v_\star$, and $s_w:=\operatorname{sp}(w)$. Subtracting \eqref{eq:lb-state-disadvantage} under the two conditional accuracy vectors gives
\begin{equation} \label{eq:lb-disadvantage-difference} \mathfrak d_{\widetilde h}(s,a)-\mathfrak d_\star(s,a) =-r_\varepsilon a\mathbf 1\{k=j\}+a_\varepsilon +\mathbb E[w(S_{t+1})\mid S_t=s,A_t=a]-w(s). \end{equation}
Generate $J_{T+1}\sim\omega$ independently after the last decision
and set $S_{T+1}=(B_{T+1},J_{T+1})$. By Lemma~\ref{lem:lb-transformed-disadvantage},
\[\mathcal M_t:=\sum_{s=1}^t\left(w(S_{s+1})-\mathbb E[w(S_{s+1})\mid S_s,A_s]\right)\]
is a martingale under $\widetilde h$, with
$|\mathcal M_t-\mathcal M_{t-1}|\le s_w$. Summing \eqref{eq:lb-disadvantage-difference}, telescoping, and using $\mathfrak d_\star\ge0$ gives
\begin{equation} \label{eq:lb-pathwise-instance} r_\varepsilon D_{j,T}+\sum_{t=1}^T\mathfrak d_{\widetilde h}(S_t,A_t) \ge Ta_\varepsilon-s_w-\mathcal M_T. \end{equation}

Define $E_T:=\{D_{j,T}<a_\varepsilon T/(4r_\varepsilon)\}$.
For $T$ such that $s_w\le a_\varepsilon T/4$, \eqref{eq:lb-pathwise-instance} implies
\[ E_T \subseteq \left\{\sum_{t=1}^T\mathfrak d_{\widetilde h}(S_t,A_t)\ge\frac{a_\varepsilon T}{4}\right\} \cup \left\{\mathcal M_T\ge\frac{a_\varepsilon T}{4}\right\}. \]

By \eqref{eq:lb-regret-average} and uniform efficiency at $\widetilde h$, $\mathbb E_{\widetilde h}[\sum_{t=1}^T\mathfrak d_{\widetilde h}(S_t,A_t)]\le R_T(\pi;\widetilde h)+\operatorname{sp}(D-u_{\widetilde h})=o(T^\alpha)$. Markov's inequality and the Hoeffding--Azuma inequality \citep[Exercise~20.6]{lattimore2020bandit} therefore give, for every $\alpha\in(0,1)$,
\begin{equation} \label{eq:lb-event-alt} \mathbb P_{\widetilde h}(E_T) \le \frac{4\mathbb E_{\widetilde h}[\sum_{t=1}^T\mathfrak d_{\widetilde h}(S_t,A_t)]}{a_\varepsilon T} + \exp\!\left\{-\frac{a_\varepsilon^2T}{32(1+s_w)^2}\right\} =o(T^{\alpha-1}). \end{equation}
Under the true instance,
\begin{equation} \label{eq:lb-event-true} \mathbb P_\star(E_T^c) \le \frac{4r_\varepsilon\mathbb E_\star [D_{j,T}]}{a_\varepsilon T} \longrightarrow0. \end{equation}

\paragraph{Information from completed reviews.} Lemma~\ref{lem:lower-bound-information} gives
\[ \mathrm{KL}(P_{h^\star}^{\pi,T}\|P_{\widetilde h}^{\pi,T}) \le \mathbb E_\star [D_{j,T}]\kl(h_j^\star,\widetilde h_j). \]
Since $E_T\in\mathcal F_T$, the data-processing inequality \citep[Exercise~14.10]{lattimore2020bandit} applies. Moreover, $\kl(p,q)\ge p\log(1/q)-\log2$ because the binary entropy is at most $\log2$. Hence
\[ \mathbb E_\star [D_{j,T}]\kl(h_j^\star,\widetilde h_j) \ge \mathbb P_\star(E_T)\log\frac1{\mathbb P_{\widetilde h}(E_T)}-\log2. \]
By \eqref{eq:lb-event-true}, $\mathbb P_\star(E_T)\to1$, while \eqref{eq:lb-event-alt} gives $\log(1/\mathbb P_{\widetilde h}(E_T))\ge(1-\alpha-o(1))\log T$. Therefore, for every $\alpha\in(0,1)$,
\[ \liminf_{T\to\infty}\frac{\mathbb E_\star [D_{j,T}]}{\log T} \ge\frac{1-\alpha}{\kl(h_j^\star,\overline\tau^\star-\varepsilon)}. \]
Letting $\alpha\downarrow0$ and then $\varepsilon\downarrow0$, using continuity of Bernoulli relative entropy in its second argument on $(0,1)$, yields
\[ \liminf_{T\to\infty}\frac{\mathbb E_\star [D_{j,T}]}{\log T} \ge\frac1{\kl(h_j^\star,\overline\tau^\star)}. \]
Substituting this into \eqref{eq:lb-instance-count} and summing over $j\in I$ proves \eqref{eq:main-instance}.
\end{proof}

\begin{remark}
When $\mu=1$, $\overline\tau^\star=1-c_{\rm exp}/c_e$, and Theorem~\ref{thm:instance_dependent_lower_bound} can alternatively be derived from the Graves--Lai lower bound for structured MDPs \citep[Theorem~1]{ok2018exploration}.
\end{remark}

\subsection{\texttt{KL-PBD} is asymptotically optimal under immediate-service}
\label{app:kl-noqueue}
In this section, we show that \texttt{KL-PBD} matches Theorem \ref{thm:instance_dependent_lower_bound} when service is immediate $(\mu=1)$.

Throughout this subsection, $C_t\ge1$ almost surely and
$0<c_{\rm exp}<c_e$. Thus $\mu=1$ and
$\tau=1-c_{\rm exp}/c_e\in(0,1)$.
We consider \texttt{KL-PBD} with a fixed budget $\phi(T)>0$.
All probabilities and expectations in this subsection are under
$h^\star$ and $\pi^{\texttt{KL-PBD}}$.

By \eqref{eq:lb-finite-empty-marginal} with $\mu=1$, $M_t^h(0)=c_{\mathrm{exp}}$ for every $h$ and $t$.
Thus, after $n$ completed level $j$ reviews, the algorithm
defers exactly when $g_{j,n}<\tau$, with ties resolved in
favor of acceptance.

Every deferral yields a label before the next round.
The sample counts before successive level $j$ deferrals are
therefore $0,1,\ldots,D_{j,T}-1$.
We use the infinite i.i.d.\ ground-truth label sequences for each self-confidence level from Lemma~\ref{lem:regional-concentration}.

\begin{lemma}[Erroneous acceptance under immediate service]
\label{lem:kl-noqueue-acceptance}
For a level $j$ with $0<h_j^\star<\tau$,
\begin{equation} \E\left[\sum_{t=1}^T\mathbf 1\{J_t=j,A_t=0\}\right] \le \frac{T e^{-\phi(T)}} {\exp\{\kl(\tau,h_j^\star)\}-1}. \label{eq:kl-noqueue-acceptance} \end{equation}
If $h_j^\star=0$, the left-hand side is zero.
\end{lemma}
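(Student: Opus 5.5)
Under immediate service the decision rule is purely index-based: a level-$j$ query is accepted exactly when $g_{j,n}\ge\tau$, where $n$ is the number of level-$j$ labels collected so far, and every deferral yields its label before the next round. The plan is to count erroneous acceptances through the label count $n$ at which they occur. The key observation is that $n$ does not change while level $j$ is being accepted, since acceptance produces no label. So the acceptance event depends only on the first $n$ labels of the infinite i.i.d.\ Bernoulli$(h_j^\star)$ sequence from Lemma~\ref{lem:regional-concentration}.

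If $h_j^\star=0$, every label is $0$, so $\widehat h_{j,n}=0$ and $g_{j,n}=0<\tau$ for all $n\ge1$. With $g_{j,0}=0$ as well, the algorithm always defers level $j$, and the left-hand side is zero.

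For $0<h_j^\star<\tau$, I write
\[
\sum_t \mathbf 1\{J_t=j,A_t=0\}\le\sum_{n\ge1} \mathbf 1\{g_{j,n}\ge\tau\}\cdot\#\{t\le T: J_t=j,\,N_{j,t}=n\}.
\]
The $n=0$ term vanishes because $g_{j,0}=0<\tau$. Bounding the count by $T$ would lose too much. Instead, I note that on the event $\{g_{j,n}\ge\tau\}$ the count $N_{j,t}$ stays at $n$ for the rest of the horizon, so only one value of $n$ can ever be "stuck". Let $\nu:=\min\{n\ge1:g_{j,n}\ge\tau\}$; before reaching $\nu$ labels, every level-$j$ query is deferred. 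Hence the number of erroneous acceptances is at most $T\,\mathbf 1\{\nu\text{ is reached}\}$, and it suffices to bound
\[
\mathbb P\Bigl(\bigcup_{n\ge1}\{g_{j,n}\ge\tau\}\Bigr)\le\sum_{n\ge1}\mathbb P(g_{j,n}\ge\tau).
\]

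It remains to bound each term. The event $g_{j,n}\ge\tau$ forces $\widehat h_{j,n}\ge\tau>h_j^\star$ and $n\kl(\widehat h_{j,n},\tau)\ge\phi(T)$; the second is vacuous for a tail bound, so I use only $\widehat h_{j,n}\ge\tau$ and then recover the $e^{-\phi(T)}$ factor. Directly, since $\kl(\cdot,\tau)$ is not controlled this way, the cleaner route is a change of measure. Use $g_{j,n}\ge\tau\iff n\kl(\widehat h_{j,n},\tau)>\phi(T)$ together with $\widehat h_{j,n}>\tau$, and apply the Chernoff bound with tilt toward $\tau$:
\[
\mathbb P(g_{j,n}\ge\tau)\le \exp\{-\phi(T)-n\kl(\tau,h_j^\star)\}.
\]
Here the tilted likelihood ratio at mean $\widehat h\ge\tau$ equals $\exp\{-n[\kl(\widehat h,h_j^\star)-\kl(\widehat h,\tau)]\}\le e^{-n\kl(\tau,h_j^\star)}$. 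This is the main obstacle: verifying that the convexity identity $\kl(z,h)-\kl(z,\tau)\ge\kl(\tau,h)$ holds for $z\ge\tau>h$, which follows from linearity of the difference in $z$. If it holds, summing the geometric series gives
\[
T e^{-\phi(T)}\sum_{n\ge1}e^{-n\kl(\tau,h_j^\star)}=\frac{Te^{-\phi(T)}}{\exp\{\kl(\tau,h_j^\star)\}-1},
\]
which is \eqref{eq:kl-noqueue-acceptance}.
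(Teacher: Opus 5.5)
Your argument is correct and follows the paper's proof. Like the paper, you reduce to $T\,\mathbb P(\exists n\ge1:g_{j,n}\ge\tau)$, take a union bound over sample counts, and bound each term by $e^{-\phi(T)-n\kl(\tau,h_j^\star)}$ using the linear-in-$z$ identity $\kl(z,h_j^\star)-\kl(z,\tau)\ge\kl(\tau,h_j^\star)$ for $z\ge\tau$, then sum the geometric series. Your tilt to $\operatorname{Bernoulli}(\tau)$, which needs the step $\mathbb P_\tau(\widehat h_{j,n}\ge z_n)\le e^{-\phi(T)}$ that you leave implicit, is equivalent to the paper's direct Chernoff bound at the level $z_n$ solving $\kl(z_n,\tau)=\phi(T)/n$.

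Two small slips. First, the correct characterization is $g_{j,n}\ge\tau$ if and only if $\widehat h_{j,n}>\tau$ and $n\kl(\widehat h_{j,n},\tau)\ge\phi(T)$, with a non-strict inequality, since the boundary case $g_{j,n}=\tau$ is an acceptance. Second, that condition is not ``vacuous'': it is exactly what supplies the factor $e^{-\phi(T)}$.
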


\begin{proof}
Fix $j$ with $0<h_j^\star<\tau$.
The algorithm accepts only when $g_{j,n}\ge\tau$ and hence $n\ge1$,
since $g_{j,0}=0$.
By the definition of the lower confidence bound and $\phi(T)>0$, $g_{j,n}\ge\tau$ implies $\widehat h_{j,n}>\tau$ and $n\kl(\widehat h_{j,n},\tau)\ge\phi(T)$.
For every $z\in[\tau,1]$,
\[
\begin{aligned}
\kl(z,h_j^\star)-\kl(z,\tau) &= \kl(\tau,h_j^\star) +(z-\tau) \log\left( \frac{\tau(1-h_j^\star)} {h_j^\star(1-\tau)} \right)\\
&\ge\kl(\tau,h_j^\star).
\end{aligned}
\]
If $\phi(T)/n\le\kl(1,\tau)$, let $z_n\in(\tau,1]$
solve $\kl(z_n,\tau)=\phi(T)/n$.
The Bernoulli Chernoff bound gives
\[ \mathbb P(g_{j,n}\ge\tau) \le \mathbb P(\widehat h_{j,n}\ge z_n) \le e^{-n\kl(z_n,h_j^\star)} \le e^{-\phi(T)-n\kl(\tau,h_j^\star)}. \]
If $\phi(T)/n>\kl(1,\tau)$, the event is empty and
the same bound holds.
Taking a union bound over sample counts,
\[ \mathbb P(\exists n\ge1:g_{j,n}\ge\tau) \le \sum_{n=1}^{\infty} e^{-\phi(T)-n\kl(\tau,h_j^\star)} = \frac{e^{-\phi(T)}} {\exp\{\kl(\tau,h_j^\star)\}-1}. \]
Since
\[ \sum_{t=1}^T\mathbf 1\{J_t=j,A_t=0\} \le T\mathbf 1\{\exists n\ge1:g_{j,n}\ge\tau\}, \]
taking expectations proves the bound.
If $h_j^\star=0$, all sample means and lower confidence bounds
are zero, so every level $j$ query is deferred.
\end{proof}

\begin{proposition}[Asymptotic optimality under immediate service]
\label{prop:kl-noqueue}
For fixed $K$, any budget sequence satisfying
$\phi(T)/\log T\to1$ makes \texttt{KL-PBD} uniformly efficient on the
immediate-service class. If, in addition,
$T e^{-\phi(T)}/\log T\to0$, then, for every fixed immediate-service
instance,
\[ \lim_{T\to\infty} \frac{R_T(\pi^{\texttt{KL-PBD}})}{\log T} = c_e\sum_{\substack{j\in[K]\\ h_j^\star>\tau,\ \omega_j>0}} \frac{h_j^\star-\tau}{\kl(h_j^\star,\tau)}. \]

In particular, the equality holds for
$\phi(T)=\log T+\log\log T$ and $\phi(T)=\log T$
for $T\ge3$, with arbitrary positive budgets for smaller horizons.
If $h_j^\star\ge\tau$ for every $j\in[K]$ with $\omega_j>0$, the
condition $T e^{-\phi(T)}/\log T\to0$ is unnecessary.
\end{proposition}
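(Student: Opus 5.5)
Under immediate service the problem decouples across levels and becomes a one-armed bandit problem for each level. By \eqref{eq:lb-finite-empty-marginal} with $\mu=1$ we have $M_t^h(0)=c_{\rm exp}$ for every $h$ and $t$. Each deferral is served before the next round, so $B_t=0$ always and the oracle threshold equals $\tau$ at every round. Lemma~\ref{lem:regret-decomposition} then gives
\[
R_T=c_e\sum_{j}|h_j^\star-\tau|\,\mathbb E\Bigl[\sum_t\mathbf 1\{J_t=j,A_t\neq A_t^\star\}\Bigr].
\]
Levels with $h_j^\star=\tau$ contribute nothing. For levels with $h_j^\star<\tau$, the mistakes are erroneous acceptances. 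Lemma~\ref{lem:kl-noqueue-acceptance} bounds their expected number by $Te^{-\phi(T)}/(e^{\kl(\tau,h_j^\star)}-1)$, which is $o(\log T)$ under $Te^{-\phi(T)}/\log T\to0$. If no such level exists, this condition is not needed.

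The main step is the levels with $h_j^\star>\tau$, where the mistakes are deferrals. Since the $n$-th deferral at level $j$ sees exactly $n-1$ labels, I would write
\[
D_{j,T}\le 1+\sum_{n\ge1}\mathbf 1\{g_{j,n}<\tau,\ n<D_{j,T}\}.
\]
Fix $\delta>0$ small and split according to whether $\widehat h_{j,n}\ge h_j^\star-\delta$. If it is, then $g_{j,n}<\tau$ forces $n\,\kl(h_j^\star-\delta,\tau)\le n\,\kl(\widehat h_{j,n},\tau)\le\phi(T)$, using monotonicity of $\kl(\cdot,\tau)$ above $\tau$. This caps $n$ at $\phi(T)/\kl(h_j^\star-\delta,\tau)$. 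The remaining terms satisfy $\sum_n\mathbb P(\widehat h_{j,n}<h_j^\star-\delta)\le\sum_n e^{-2n\delta^2}=O(1)$ by Chernoff. Hence
\[
\mathbb E[D_{j,T}]\le\frac{\phi(T)}{\kl(h_j^\star-\delta,\tau)}+O_\delta(1),
\]
and letting $T\to\infty$ and then $\delta\to0$ gives $\limsup\,\mathbb E[D_{j,T}]/\log T\le1/\kl(h_j^\star,\tau)$. The delicate point is using the infinite i.i.d.\ label sequence from Lemma~\ref{lem:regional-concentration}, so that the per-$n$ events are deterministic functions of that sequence. The edge case $h_j^\star=1$ needs a separate note: then $\widehat h_{j,n}=1$ and $g_{j,n}=e^{-\phi(T)/n}$, which is handled directly. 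Multiplying by $c_e(h_j^\star-\tau)$ and summing gives the limsup upper bound.

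Uniform efficiency comes from the same two bounds with only $\phi(T)/\log T\to1$ assumed. Deferral mistakes are $O(\log T)=o(T^\alpha)$. Erroneous acceptances cost at most $T\cdot Te^{-\phi(T)}$ through the union bound, and $e^{-\phi(T)}=T^{-1+o(1)}$, so this is $T^{o(1)}$ times a constant. That gives $o(T^\alpha)$ for every $\alpha>0$. This verification is the part I would check most carefully, because it only yields $T^{o(1)}$ rather than a constant and the instance must be fixed. If $\omega_j=0$ the level never arrives and contributes nothing.

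With uniform efficiency established, Theorem~\ref{thm:instance_dependent_lower_bound} applies, since $\overline\tau^\star=\tau\in(0,1)$ when $\mu=1$. It gives the matching liminf, so the limit exists and equals the stated constant. Finally, for $\phi(T)=\log T+\log\log T$ we have $Te^{-\phi}=1/\log T$, and for $\phi=\log T$ we have $Te^{-\phi}=1$. Both satisfy $Te^{-\phi(T)}/\log T\to0$ and $\phi(T)/\log T\to1$.
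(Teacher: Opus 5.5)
Your proposal is correct and follows essentially the same route as the paper: the per-level regret decomposition from Lemma~\ref{lem:regret-decomposition}, Lemma~\ref{lem:kl-noqueue-acceptance} for erroneous acceptances, the sample-count argument for excess deferrals (splitting on whether $\widehat h_{j,n}\ge h_j^\star-\delta$), and Theorem~\ref{thm:instance_dependent_lower_bound} for the matching lower bound, with Hoeffding in place of the paper's KL Chernoff bound, which makes no difference. One small slip in the uniform-efficiency step: the acceptance cost is $O(Te^{-\phi(T)})$, not $T\cdot Te^{-\phi(T)}$, since the factor $T$ is already included in Lemma~\ref{lem:kl-noqueue-acceptance}; your $T^{o(1)}$ conclusion is correct only with this corrected bound.
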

\begin{proof}
Fix a level $j$ with $h_j^\star>\tau$ and
$\varepsilon\in(0,h_j^\star-\tau)$, and write
$d:=\kl(h_j^\star-\varepsilon,\tau)>0$.
If $n\ge\phi(T)/d$ and
$\widehat h_{j,n}\ge h_j^\star-\varepsilon$, then
\[ n\kl(\widehat h_{j,n},\tau) \ge n\kl(h_j^\star-\varepsilon,\tau) \ge\phi(T). \]
Since $\widehat h_{j,n}>\tau$, this implies $g_{j,n}\ge\tau$, so the
algorithm accepts. Hence each deferral at a sample count
$n\ge\phi(T)/d$ requires
$\widehat h_{j,n}<h_j^\star-\varepsilon$. Since each sample count
occurs at most once before a deferral,
\[ D_{j,T} \le \left\lceil\frac{\phi(T)}{d}\right\rceil + \sum_{n=1}^{\infty} \mathbf 1\{\widehat h_{j,n}<h_j^\star-\varepsilon\}. \]
Taking expectations and applying the Bernoulli Chernoff bound gives
\begin{align}\label{eq:kl-noqueue-finite}
\mathbb E [D_{j,T}] &\le \left\lceil\frac{\phi(T)}{d}\right\rceil + \sum_{n=1}^{\infty} e^{-n\kl(h_j^\star-\varepsilon,h_j^\star)} \nonumber \\
&\le 1+\frac{\phi(T)}{\kl(h_j^\star-\varepsilon,\tau)} +\frac{1} {\exp\{\kl(h_j^\star-\varepsilon,h_j^\star)\}-1}
\end{align}

where the final term is interpreted as zero when the divergence is
infinite. In particular, this covers $h_j^\star=1$.

We next prove uniform efficiency. By Lemma~\ref{lem:regret-decomposition}, under immediate service,
\[
\begin{aligned}
R_T(\pi^{\texttt{KL-PBD}}) ={}& c_e\sum_{\substack{j\in[K]\\
h_j^\star>\tau,\ \omega_j>0}} (h_j^\star-\tau)\E [D_{j,T}]\\
&+ c_e\sum_{\substack{j\in[K]\\
h_j^\star<\tau,\ \omega_j>0}} (\tau-h_j^\star) \E\!\left[\sum_{t=1}^T\mathbf1\{J_t=j,A_t=0\}\right].
\end{aligned}
\]
Levels $j$ with $h_j^\star=\tau$ or $\omega_j=0$ contribute zero regret. If $h_j^\star=0$, \texttt{KL-PBD} never accepts at level $j$.
By Lemma~\ref{lem:kl-noqueue-acceptance}, the second term is at most
\begin{equation} c_eT e^{-\phi(T)} \sum_{\substack{j\in[K]\\0<h_j^\star<\tau,\ \omega_j>0}} \frac{\tau-h_j^\star} {\exp\{\kl(\tau,h_j^\star)\}-1}. \label{eq:kl-noqueue-acceptance-regret} \end{equation}

If $\phi(T)/\log T\to1$, then
$T e^{-\phi(T)}=\exp\{o(\log T)\}=o(T^\alpha)$ for every
$\alpha>0$, while \eqref{eq:kl-noqueue-finite} gives
$\E [D_{j,T}]=O(\log T)$ for every $j$ with $h_j^\star>\tau$.
Thus \texttt{KL-PBD} is uniformly efficient on the immediate-service
class.

It remains to identify the asymptotic coefficient. Dividing
\eqref{eq:kl-noqueue-finite} by $\log T$, using
$\phi(T)/\log T\to1$, and then letting $\varepsilon\downarrow0$
gives, for every $j$ with $h_j^\star>\tau$,
\[\limsup_{T\to\infty}\frac{\E [D_{j,T}]}{\log T}\le\frac{1}{\kl(h_j^\star,\tau)}.\]
If, in addition, $T e^{-\phi(T)}/\log T\to0$, then
\eqref{eq:kl-noqueue-acceptance-regret} is $o(\log T)$.
Since $K$ is finite,
\[\limsup_{T\to\infty}\frac{R_T(\pi^{\texttt{KL-PBD}})}{\log T}\le c_e\sum_{\substack{j\in[K]\\ h_j^\star>\tau,\ \omega_j>0}}\frac{h_j^\star-\tau}{\kl(h_j^\star,\tau)}. \]
Uniform efficiency and Theorem~\ref{thm:instance_dependent_lower_bound} give the matching lower bound, proving the claimed limit.

If $h_j^\star\ge\tau$ for every $j\in[K]$ with $\omega_j>0$, the
acceptance contribution is identically zero. Hence the same upper
coefficient follows from \eqref{eq:kl-noqueue-finite} under
$\phi(T)/\log T\to1$ alone, and the lower bound again gives equality.

Finally, both $\phi(T)=\log T+\log\log T$ and
$\phi(T)=\log T$ satisfy
$\phi(T)/\log T\to1$ and
$T e^{-\phi(T)}/\log T\to0$.
\end{proof}

\subsection{Worst-case regret lower bound}\label{app:worst-case-lb-proof}

The main idea of the proof is to construct statistically indistinguishable instances with empty-queue states at which the optimal actions differ, while controlling how often congestion prevents the process from visiting these informative states.

\begin{proof}
Fix an algorithm $\pi$ and write $m:=\min\{\tau,1-\tau\}$. If $\sum_{j=1}^{K}\omega_j=0$, both sides of \eqref{eq:main-minimax} are zero, so assume otherwise.

We first prove a sharper lower bound for $\mu=1$. In this setting, the queue stays empty and every deferred label is observed before the next decision. Define the hard family
\begin{equation} \label{eq:lb-immediate-family} \delta_j:=\frac m4\min\!\left\{1,\frac1{\sqrt{\omega_jT}}\right\}, \qquad h_j^\sigma:=\tau+\sigma_j\delta_j, \end{equation}
with $\delta_j=0$ when $\omega_j=0$. 
Since $B_t=0$ for every $t$, the next-state law is the same under both actions, so $M_t^h(0)=c_{\rm exp}=c_e(1-\tau)$ for every $h$ and $t\le T$, and $c_{\rm hold}B_t=D(B_{T+1})=0$ along every trajectory. Resolving ties in favor of acceptance, the oracle accepts at level $j$ when $\sigma_j=+1$ and defers when $\sigma_j=-1$, and Lemma~\ref{lem:regret-decomposition} gives
\[
\begin{aligned}
R_T(\pi;h^\sigma)&=\sum_{j=1}^KR_{T,j}(\pi;h^\sigma),\\
R_{T,j}(\pi;h^\sigma)&:=c_e\delta_j\sum_{t=1}^T\mathbb P^\pi_{h^\sigma}\big(J_t=j,\ A_t\ne\mathbf1\{\sigma_j=-1\}\big).
\end{aligned}
\]

Fix $j$ with $\omega_j>0$ and neighboring sign vectors
$\sigma,\sigma^{(j)}$ with $\sigma_j=+1$ and
$\sigma^{(j)}_j=-1$. Since the two conditional accuracy vectors differ only in coordinate $j$, Lemma~\ref{lem:lower-bound-information} gives, for every $t\le T$,
\[ \mathrm{KL}\!\left(P_{h^\sigma}^{\pi,t-1}\middle\|P_{h^{\sigma^{(j)}}}^{\pi,t-1}\right) \le \omega_j(t-1)\kl(\tau+\delta_j,\tau-\delta_j) \le \frac{32\delta_j^2}{9m^2}\,\omega_jT\le\frac29. \]
For the middle inequality, every $u\in[\tau-\delta_j,\tau+\delta_j]$ satisfies $\min\{u,1-u\}\ge3m/4$, so Taylor's theorem in the first argument of the Bernoulli relative entropy gives
\[ \kl(\tau+\delta_j,\tau-\delta_j) \le\frac{(2\delta_j)^2}{2}\sup_{u\in[\tau-\delta_j,\tau+\delta_j]}\frac1{u(1-u)} \le\frac{32\delta_j^2}{9m^2}, \]
and $\delta_j^2\omega_jT\le m^2/16$ by \eqref{eq:lb-immediate-family}. Pinsker's inequality \citep[Eq.~(14.12)]{lattimore2020bandit} therefore gives $\operatorname{TV}(P_{h^\sigma}^{\pi,t-1},P_{h^{\sigma^{(j)}}}^{\pi,t-1})\le\sqrt{(2/9)/2}=1/3$.

Let $\alpha_t:=\pi_t(A_t=1\mid\mathcal F_{t-1},J_t=j)$ be the algorithm's conditional probability of deferral after the arrival of a query at level $j$. Since $J_t$ is independent of $\mathcal F_{t-1}$, for $\sigma_j=+1$,
\begin{align*}
\mathbb P_{h^\sigma}^\pi(J_t=j,A_t=1)+\mathbb P_{h^{\sigma^{(j)}}}^\pi(J_t=j,A_t=0) &=\omega_j\big[1+\mathbb E_{h^\sigma}^\pi[\alpha_t]-\mathbb E_{h^{\sigma^{(j)}}}^\pi[\alpha_t]\big] \\
&\ge\omega_j\big(1-\operatorname{TV}(P_{h^\sigma}^{\pi,t-1},P_{h^{\sigma^{(j)}}}^{\pi,t-1})\big)\\
&\ge\frac{2\omega_j}3,
\end{align*}
using $0\le\alpha_t\le1$ and that $\alpha_t$ is $\mathcal F_{t-1}$-measurable. Summing over $t$ gives $R_{T,j}(\pi;h^\sigma)+R_{T,j}(\pi;h^{\sigma^{(j)}})\ge2c_e\delta_j\omega_jT/3$. Pairing the $2^{K-1}$ sign vectors with identical $\sigma_{-j}$ and summing over $j$ gives
\[ 2^{-K}\sum_{\sigma}R_T(\pi;h^\sigma) \ge \frac{c_e}{3}\sum_{j=1}^K\delta_j\omega_jT =\frac{c_em}{12}\sum_{j=1}^K\min\{\omega_jT,\sqrt{\omega_jT}\}. \]
The supremum over $h$ is at least this average, proving the $\mu=1$ sharpening stated after Theorem~\ref{thm:known-M-lower}.

Assume from now on that $\mu<1$. If $c_{\rm hold}=0$, the right-hand side of \eqref{eq:main-minimax} is zero, so there is nothing to prove. We therefore take $c_{\rm hold}>0$.

\paragraph{Self-centered heterogeneous family.}
We next recenter each sign pattern so that its empty-queue action gaps remain exactly controlled despite the queue-dependent threshold.
For $\omega_j>0$, set
\begin{equation} \label{eq:lb-delta-sharp} \delta_j := \frac1{24} \min\!\left\{ \mu^2m\min\!\left\{1,\frac1{\sqrt{\omega_jT}}\right\}, \frac{\mu c_{\rm hold}}{c_e} \right\}, \end{equation}
and set $\delta_j=0$ if $\omega_j=0$. Write $\delta_{\max}:=\max_j\delta_j$.

For each $\sigma\in\{-1,+1\}^K$, Lemma~\ref{lem:lb-self-centered-family} provides a unique center $a_\sigma$, with $S^-_\sigma:=\sum_{j:\sigma_j=-1}\omega_j\delta_j$. Set
\begin{equation} \label{eq:lb-hard-family-sharp} h_j^\sigma:=\tau-a_\sigma+\sigma_j\delta_j. \end{equation}

Write $u_\sigma := u_{h^\sigma}$, $\mathbb{P}^s_\sigma := \mathbb{P}^{\pi,s}_{h^\sigma}$, $\mathbb{P}_\sigma := \mathbb{P}^\pi_{h^\sigma}$ and $\mathbb{E}_\sigma := \mathbb{E}^\pi_{h^\sigma}$. By Lemma~\ref{lem:lb-self-centered-family},
\begin{equation} \label{eq:lb-self-center-sharp} \overline\tau(h^\sigma)=\tau-a_\sigma, \end{equation}
so $\overline M^{h^\sigma}(0)-L^{h^\sigma}(j)=\sigma_jc_e\delta_j$. Moreover,
\begin{equation} \label{eq:lb-center-size} 0\le a_\sigma \le \frac{1-\mu}{\mu}S^-_\sigma. \end{equation}

\paragraph{Testing neighboring instances.}
We now show that flipping one sign changes the observations too little to identify the optimal empty-queue action reliably.
By \eqref{eq:lb-center-size} and \eqref{eq:lb-delta-sharp}, every hard parameter lies within $m/12$ of $\tau$. The inequality $\log x\le x-1$ gives $\kl(p,q)\le(p-q)^2/[q(1-q)]$; since $q(1-q)\ge(3m/4)^2$ on the hard family, $\kl(p,q)\le16(p-q)^2/(9m^2)$. Using the neighboring-center bound \eqref{eq:lb-neighbor-sharp} from Lemma~\ref{lem:lb-self-centered-family}, for neighboring sign vectors with $\sigma_j=+1$ and $\sigma^{(j)}_j=-1$, write $\Delta a:=a_{\sigma^{(j)}}-a_\sigma$ and $\Omega:=\sum_{k=1}^K\omega_k$. Then
\begin{align*}
\sum_{k=1}^K\omega_k|h_k^\sigma-h_k^{\sigma^{(j)}}|^2 &= 4\omega_j\delta_j^2 +4\omega_j\delta_j\Delta a +\Omega(\Delta a)^2\\
&\le \frac{4}{\mu^2}\omega_j\delta_j^2.
\end{align*}
Lemma~\ref{lem:lower-bound-information} and \eqref{eq:lb-delta-sharp} therefore give $\mathrm{KL}(P_\sigma^{t-1}\|P_{\sigma^{(j)}}^{t-1})\le\mu^2/81$ uniformly over $t\le T$. Pinsker's inequality \citep[Eq.~(14.12)]{lattimore2020bandit} then yields
\begin{equation} \label{eq:lb-tv-sharp} \operatorname{TV}(P_\sigma^{t-1},P_{\sigma^{(j)}}^{t-1})\le\frac\mu{12}, \qquad t\le T. \end{equation}

\paragraph{Empty-queue occupancy and average-cost disadvantage.}
We combine \eqref{eq:lb-hard-barrier} and \eqref{eq:lb-tv-sharp}: either the queue is often empty, forcing decision errors, or keeping it occupied requires costly deferrals. Let $N_0(T):=\sum_{t=1}^T\mathbf1\{B_t=0\}$. For each hard instance and level $j$, define
\[ \mathcal D_{\sigma,j}:=\mathbb E_\sigma\!\left[\sum_{t=1}^T \mathfrak d_{h^\sigma}(B_t,J_t,A_t)\mathbf1\{J_t=j\}\right], \qquad \mathcal D_\sigma:=\sum_{j=1}^K\mathcal D_{\sigma,j}. \]

Let $r$ be the conditional service median in Lemma~\ref{lem:lb-local-queue}. By \eqref{eq:lb-general-barrier}, \eqref{eq:lb-self-center-sharp}, and \eqref{eq:lb-delta-sharp}, for $j\in[K]$ and $r\le b<\bar B$,
\begin{equation} \label{eq:lb-hard-barrier} \mathfrak d_{h^\sigma}(b,j,1) \ge \frac{c_{\rm hold}}2-c_e\delta_{\max} \ge \frac{11}{24}c_{\rm hold} =:g_{\rm bar}. \end{equation}

Let $H_T:=\sum_{t=1}^T\mathbf1\{B_t\ge r,A_t=1\}$. Lemma~\ref{lem:lb-empty-occupancy} and \eqref{eq:lb-hard-barrier} give
\begin{equation} \label{eq:lb-empty-occupancy-sharp} \mathbb E_\sigma [N_0(T)] \ge \frac{\mu T}{2}-2\mathbb E_\sigma [H_T] \ge \frac{\mu T}{2} - \frac{48}{11c_{\rm hold}}\mathcal D_\sigma. \end{equation}

Pair $\sigma$ and $\sigma^{(j)}$. For the positive sign, every level $j$ deferral has disadvantage at least $c_e\delta_j$; for the negative sign, accepting a level-$j$ query at an empty queue has disadvantage exactly $c_e\delta_j$. With $\alpha_t:=\pi_t(A_t=1\mid\mathcal F_{t-1},J_t=j)$ as in the case $\mu=1$, and since $\mathbf1\{B_t=0\}\alpha_t$ and $\mathbf1\{B_t=0\}(1-\alpha_t)$ are $\mathcal F_{t-1}$-measurable and $[0,1]$-valued, the variational characterization of total variation gives
\[
\begin{aligned}
&\mathbb E_\sigma[\mathbf1\{B_t=0\}\alpha_t] +\mathbb E_{\sigma^{(j)}}[\mathbf1\{B_t=0\}(1-\alpha_t)]\\
&\qquad\ge \frac{\mathbb P_\sigma(B_t=0)+\mathbb P_{\sigma^{(j)}}(B_t=0)}2 -\operatorname{TV}(P_\sigma^{t-1},P_{\sigma^{(j)}}^{t-1}).
\end{aligned}
\]
Summing this inequality and using \eqref{eq:lb-tv-sharp} yields
\begin{align*}
\frac12\bigl(\mathcal D_{\sigma,j}+\mathcal D_{\sigma^{(j)},j}\bigr) \ge \frac{c_e\delta_j\omega_j}{2} \left[ \frac{\mathbb E_\sigma [N_0(T)]+\mathbb E_{\sigma^{(j)}}[N_0(T)]}2 - \frac{\mu T}{12} \right].
\end{align*}
Averaging over all neighboring pairs and summing over $j$, with $S_\delta:=\sum_{j=1}^K\omega_j\delta_j$ and $\overline{\mathcal D}:=2^{-K}\sum_\sigma\mathcal D_\sigma$, gives
\[ \overline{\mathcal D} \ge \frac{5\mu T}{24}c_eS_\delta - \frac{24c_eS_\delta}{11c_{\rm hold}}\overline{\mathcal D}. \]
Since $c_eS_\delta\le\mu c_{\rm hold}/24$, we conclude that
\begin{equation} \label{eq:lb-Dbar-sharp} \overline{\mathcal D} \ge \frac{55}{288}\,\mu T c_eS_\delta. \end{equation}

\paragraph{From disadvantage to finite-horizon regret.}
It remains to control the terminal bias correction without introducing any dependence on the capacity $\bar{B}$. For a general conditional accuracy vector $h$, write $\mathcal D_h(\pi):=\mathbb E_h^\pi[\sum_{t=1}^T\mathfrak d_h(B_t,J_t,A_t)]$. We only need a one-sided terminal comparison. By Lemma~\ref{lem:lb-average-cost}, $u_h\ge D$.
Because $B_1=0$, \eqref{eq:lb-cost-identity} shows that a stationary average-cost minimizer has finite-horizon cost at most $Tg_h$, while an arbitrary algorithm has cost $Tg_h+\mathcal D_h(\pi)+\mathbb E_h^\pi[(D-u_h)(B_{T+1})]$. Consequently,
\begin{equation} \label{eq:lb-one-sided-regret} R_T(\pi;h) \ge \mathcal D_h(\pi) - \mathbb E_h^\pi[(u_h-D)(B_{T+1})]. \end{equation}

For the hard instance $h^\sigma$, Lemma~\ref{lem:lb-terminal-comparison} gives
\[ 0\le u_\sigma(b)-D(b) \le\frac{2c_eS^-_\sigma}{\mu}\left\lceil\frac br\right\rceil. \]
Pathwise,
$\lceil B_{T+1}/r\rceil\le1+H_T$, because a deferral below $r$ cannot raise the queue above $r$, while each deferral at or above $r$ increases this coarse queue state by at most one. Together with \eqref{eq:lb-hard-barrier}, \eqref{eq:lb-one-sided-regret} yields
\[ R_T(\pi;h^\sigma) \ge \frac9{11}\mathcal D_\sigma - \frac{2c_eS_\delta}{\mu}. \]
Averaging over the hard family and using \eqref{eq:lb-Dbar-sharp},
\begin{equation} \label{eq:lb-regret-sharp-large} \sup_hR_T(\pi;h) \ge c_eS_\delta \left(\frac{5\mu T}{32}-\frac2\mu\right). \end{equation}

Independently, the first decision gives a complementary bound valid for every $T$. Consider the family $h_j^\sigma=\tau+\sigma_jm/4$. By \eqref{eq:lb-finite-empty-marginal}, Lemma~\ref{lem:lb-one-job-floor} at $h^0$, and the adjacent-increment estimate \eqref{eq:two-sided-increment-stability} from the proof of Proposition~\ref{prop:bellman-stability}, $|M_1^{h^\sigma}(0)-M_{\tau}|\le(1-\mu)c_em/4$. Since $|L^{h^\sigma}(j)-M_{\tau}|=c_em/4$, opposite signs in coordinate $j$ induce opposite optimal actions at $B_1=0$, each with gap at least $\mu c_em/4$. Before the first action the algorithm has no accuracy-dependent observations, so averaging each sign pair forces an error probability at least $1/2$ conditional on the arrival of a query at level $j$. Summing over $j$ gives
\begin{equation} \label{eq:lb-first-round} \sup_hR_T(\pi;h) \ge \frac{\mu c_em}{8}\sum_{j=1}^K\omega_j. \end{equation}
Finally, define
\[ \mathcal L_T := \sum_{j=1}^K \min\!\left\{ \mu^3c_em\min\{\omega_jT,\sqrt{\omega_jT}\}, \mu^2c_{\rm hold}\omega_jT \right\}. \]
By \eqref{eq:lb-delta-sharp}, $\mathcal L_T=24\mu Tc_eS_\delta$. Therefore \eqref{eq:lb-regret-sharp-large} gives
\begin{equation} \label{eq:lb-assembly-large} \sup_hR_T(\pi;h)\ge\mathcal L_T\left(\frac5{768}-\frac1{12\mu^2T}\right), \end{equation}
while \eqref{eq:lb-first-round} and $\mathcal L_T\le\mu^3c_emT\sum_j\omega_j$ give
\begin{equation} \label{eq:lb-assembly-small} \sup_hR_T(\pi;h)\ge\frac{\mathcal L_T}{8\mu^2T}. \end{equation}
If $\mu^2T\ge32$, \eqref{eq:lb-assembly-large} is at least $\mathcal L_T/256$; if $\mu^2T\le32$, \eqref{eq:lb-assembly-small} is at least $\mathcal L_T/256$. Since $\pi$ was arbitrary, taking the infimum over algorithms proves \eqref{eq:main-minimax}.
\end{proof}

\begin{remark}[Worst case over $\omega$]
\label{rem:worst_case_distribution}
The function $f(x)=\min\{x,\sqrt x\}$ is nondecreasing and concave on $[0,\infty)$, so Jensen's inequality gives $\sum_{j=1}^Kf(\omega_jT)\le Kf(T/K)=\min\{T,\sqrt{KT}\}$, with equality when $\omega_0=0$ and $\omega_j=1/K$ for every $j\in[K]$.
\end{remark}

\subsection{Technical lemmas}
\label{app:lb-technical}

\subsubsection{Proofs of the common ingredients}

\begin{proof}[Proof of Lemma~\ref{lem:lb-average-cost}]
By Lemma~\ref{lem:service-queue-increments}, each $\xi_s^h$ is
nonnegative and nondecreasing in $q$. Its update is given by \eqref{eq:service-smoothed-increment}, \eqref{eq:service-increment-recursion}, and \eqref{eq:service-increment-boundary}.

Write $g_D(q):=D(q)-D(q-1)$ and $d_0(b):=\sum_{c=0}^b\rho_C(c)g_D(b+1-c)$.
Differencing \eqref{eq:lb-drain} gives $g_D(q)=c_{\rm hold}+d_0(q-1)$.
Since $D$ is discrete convex, $d_0$ is nondecreasing. The median or maximum in the increment update is therefore at least $d_0(q-1)$, giving $\xi_1^h(q)\ge g_D(q)=\xi_0^h(q)$. The update is coordinatewise nondecreasing, so $\xi_{s+1}^h\ge\xi_s^h$ for every $s\ge0$.

By \eqref{eq:service-adjacent-value-bound}, 
$\xi_s^h(q)\le qc_{\rm hold}/\mu+(c_e-c_{\rm exp})_+$, uniformly in $s$.
Hence $\xi_s^h(q)\uparrow x_h(q)$, and $W_s^h(b)-W_s^h(0)\to u_h(b):=\sum_{q=1}^b x_h(q)$. In particular, $u_h\ge D$ because $\xi_0^h=g_D$.

Let $\mathcal T_h$ be the queue Bellman operator. With $v_s^h:=W_s^h-W_s^h(0)$ and $g_s^h:=W_{s+1}^h(0)-W_s^h(0)$, translation invariance gives
$g_s^h+v_{s+1}^h=\mathcal T_hv_s^h$ and $g_s^h=(\mathcal T_hv_s^h)(0)$.
The operator is continuous because the state and action spaces
are finite. Thus $g_s^h\to g_h:=(\mathcal T_hu_h)(0)$, and
passing to the limit yields \eqref{eq:lb-acoe}.
Passing the increment identities to the limit proves (i) and
preserves monotonicity and discrete convexity.

For (ii), \eqref{eq:lb-smoothed} shows that $d_h(b)$ is
nonnegative and nondecreasing in $b$. At the empty queue,
$d_h(0)=\rho_C(0)x_h(1)=(1-\mu)u_h(1)$, proving
\eqref{eq:lb-empty-marginal}.

For (iii), using \eqref{eq:lb-acoe}, independence of $J_t$ from $\mathcal F_{t-1}$, and the tower property,
\begin{align}\label{eq:lb-cost-identity}
\mathbb E_h^\pi\!\left[\sum_{t=1}^T\ell_h(B_t,J_t,A_t)+D(B_{T+1})\right] &={}Tg_h+\mathbb E[u_h(B_1)] +\mathbb E_h^\pi[(D-u_h)(B_{T+1})] \nonumber \\
&+\mathbb E_h^\pi\!\left[\sum_{t=1}^T\mathfrak d_h(B_t,J_t,A_t)\right].
\end{align}

A stationary policy attaining the minima in \eqref{eq:lb-acoe} has zero disadvantage. Therefore every algorithm has cost at least $Tg_h+\mathbb E[u_h(B_1)]+\min_b(D-u_h)(b)$, while that stationary policy has cost at most the same expression with the minimum replaced by the maximum. The finite-horizon optimum lies between these two quantities, which proves \eqref{eq:lb-regret-average}.

For (iv), evaluating the minimum in \eqref{eq:lb-acoe} at the action distribution of $\varphi$ gives $g_h+u_h(b)\le\sum_{j=0}^K\omega_j\sum_{a}\varphi(a\mid b,j)\{\ell_h(b,j,a)+(P_au_h)(b)\}$ for every $b$. Averaging under $\lambda$, stationarity cancels the two $u_h$ terms and leaves \eqref{eq:lb-gain-optimal}.
\end{proof}

\begin{proof}[Proof of Lemma~\ref{lem:lower-bound-information}]
For $s\ge1$, set $\Delta N_{j,s}:=N_{j,s+1}-N_{j,s}$, order the labels completed in round $s$ by submission time, and expose them one at a time. Conditional on $\mathcal F_{s-1}$, $J_s$, $A_s$, $C_s$, and the labels already exposed in round $s$, each newly exposed level $j$ label has law $\operatorname{Bernoulli}(h_j)$ under $h$ and $\operatorname{Bernoulli}(\widetilde h_j)$ under $\widetilde h$; the remaining observations have the same conditional law under the two instances. The KL chain rule \citep[Exercise~14.12, Eq.~(14.17)]{lattimore2020bandit} therefore gives
\begin{align*}
\mathrm{KL}(P_h^{\pi,t}\|P_{\widetilde h}^{\pi,t}) &=\sum_{s=1}^t\mathbb E_h^\pi\!\left[\sum_{j=1}^K\Delta N_{j,s}\kl(h_j,\widetilde h_j)\right]\\
&=\sum_{j=1}^K\mathbb E_h^\pi[N_{j,t+1}]\kl(h_j,\widetilde h_j).
\end{align*}
Since $N_{j,t+1}\le D_{j,t}\le\sum_{s=1}^t\mathbf1\{J_s=j\}$ pathwise, $\mathbb E_h^\pi[N_{j,t+1}]\le\mathbb E_h^\pi[D_{j,t}]\le t\omega_j$, proving \eqref{eq:lb-information} and \eqref{eq:lb-information-arrivals}.
\end{proof}

\begin{proof}[Proof of Lemma~\ref{lem:lb-one-job-floor}]
\Eqref{eq:lb-finite-empty-marginal} follows from \eqref{eq:service-smoothed-increment} at $b=0$. Since $x_{t+1}^h(1)=\xi_{T-t}^h(1)$, the monotone convergence proved in Lemma~\ref{lem:lb-average-cost} gives both the time monotonicity and the limit, using \eqref{eq:lb-empty-marginal}. Also, \eqref{eq:lb-drain} gives $D(1)=c_{\rm hold}+(1-\mu)D(1)$, so $\xi_0^h(1)=c_{\rm hold}/\mu$ and $M_T^h(0)=M_\tau$.

Suppose now that $\tau\in[0,1]$. At $h^0$, $L^{h^0}(j)=M_\tau$ for every $j\in[K]$. By Proposition~\ref{prop:queue-threshold} and
\eqref{eq:lb-threshold-sandwich}, $M_t^{h^0}(b)\ge M_\tau$ whenever deferral is feasible. Hence always accepting is optimal for every horizon.
Under this policy, the difference in expected cost between starting from queue state $b$ and from zero is exactly $D(b)$. Thus $W_s^{h^0}(b)-W_s^{h^0}(0)=D(b)$ for every $s$. Taking the limit gives $u_{h^0}=D$, and the finite-horizon empty-queue marginals all equal $M_\tau$.
\end{proof}

\subsubsection{Lemmas for the worst-case lower bound}

For the following four lemmas, assume \(0<\mu<1\), \(c_{\rm hold}>0\), and \(\tau\in(0,1)\), and write \(m:=\min\{\tau,1-\tau\}\).
Choose an integer $r\ge1$ that is a median of $C_t$ conditional on $C_t>0$, which exists since $\mu>0$, so that
\begin{equation} \label{eq:lb-service-median} p_-:=\mathbb P(1\le C_t\le r)\ge\frac\mu2, \qquad p_+:=\mathbb P(C_t\ge r)\ge\frac\mu2. \end{equation}
Let
\[ \mathcal N := \left\{h:\|h-h^0\|_\infty< \min\!\left\{\frac m2,\frac{c_{\rm hold}}{4c_e}\right\}\right\}. \]
Starting from $B_1=b$ under a stationary policy $\varphi$,
write $\mathbb E_b^\varphi$ for expectation and let
\(\tau_0:=\inf\{s\ge0:B_{s+1}=0\}\) be the number of rounds until the queue first becomes empty.

\begin{lemma}[Local queue estimates]
\label{lem:lb-local-queue}
Fix \(h\in\mathcal N\). For \(r\le b<\bar B\),
\begin{equation} \label{eq:lb-general-barrier} \overline M^h(b)-\overline M^h(0) \ge\frac{c_{\rm hold}}2, \end{equation}
every stationary minimizing policy accepts at \(b\), and \(\mathfrak d_h(b,j,1)\ge c_{\rm hold}/4\) for \(j\in[K]\). Under every stationary minimizing policy, and under the policy that always accepts,
\begin{equation} \label{eq:lb-hit-general} \mathbb E_b[\tau_0]\le\frac{\lceil b/r\rceil}{p_+}\le\frac{2\lceil b/r\rceil}{\mu}, \qquad b\ge1. \end{equation}
For every stationary policy $\varphi$ with $\mathbb E_b^\varphi[\tau_0]<\infty$ for every $b$,
\begin{equation} \label{eq:lb-hitting} u_h(b) \le \mathbb E_b^\varphi\!\left[ \sum_{t=1}^{\tau_0} \bigl(\ell_h(B_t,J_t,A_t)-g_h\bigr) \right], \end{equation}
with equality when $\varphi$ selects an ACOE-minimizing action at every state.
\end{lemma}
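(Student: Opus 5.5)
Three pieces need proving: the barrier inequality \eqref{eq:lb-general-barrier}, the hitting-time bound \eqref{eq:lb-hit-general}, and the bias representation \eqref{eq:lb-hitting}. The second and third follow from the first.

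\textbf{Barrier.} By \eqref{eq:lb-smoothed}, $\overline M^h(b)-\overline M^h(0)=d_h(b)-d_h(0)$. Since $x_h$ is nondecreasing (Lemma~\ref{lem:lb-average-cost}),
\[
d_h(b)\ge\sum_{c=0}^{r}\rho_C(c)\,x_h(b+1-c)\ge\rho_C(0)x_h(b+1)+p_-\,x_h(b+1-r)
\]
for $b\ge r$, while $d_h(0)=\rho_C(0)x_h(1)$. It therefore suffices to show $x_h(q)\ge c_{\rm hold}/\mu$ for every $q\ge1$; then $p_-x_h(1)\ge c_{\rm hold}/2$ and monotonicity of $x_h$ gives
\[
d_h(b)-d_h(0)\ge p_-\,x_h(1)\ge\frac{\mu}{2}\cdot\frac{c_{\rm hold}}{\mu}=\frac{c_{\rm hold}}{2}.
\]
To get $x_h(q)\ge c_{\rm hold}/\mu$, note that $x_h\ge g_D$, since $u_h-D$ has nonnegative increments: these are the limits of $\xi_s^h\ge\xi_0^h=g_D$ from the proof of Lemma~\ref{lem:lb-average-cost}. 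Moreover $g_D(q)\ge g_D(1)=c_{\rm hold}/\mu$, using $D(1)=c_{\rm hold}+(1-\mu)D(1)$ and convexity of $D$.

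\textbf{Acceptance at congested states.} For $h\in\mathcal N$ we need $L^h(j)\le\overline M^h(0)+c_{\rm hold}/4$. Write $L^h(j)\le M_\tau+c_e\|h-h^0\|_\infty<M_\tau+c_{\rm hold}/4$. Next, $\overline M^h(0)\ge M_T^h(0)=M_\tau$ by Lemma~\ref{lem:lb-one-job-floor}. Then
\[
\mathfrak d_h(b,j,1)=\overline M^h(b)-L^h(j)\ge M_\tau+\frac{c_{\rm hold}}{2}-M_\tau-\frac{c_{\rm hold}}{4}=\frac{c_{\rm hold}}{4}>0,
\]
so every minimizer accepts at such $b$.

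\textbf{Hitting time.} Consider any policy that accepts whenever $B\ge r$; this covers both minimizers and always-accept. I will use the potential $\lceil B/r\rceil$. While $B_t\ge r$ there is no arrival to the queue. Each round with $C_t\ge r$ (probability $p_+$) lowers $\lceil B/r\rceil$ by at least one, and no round raises it. When $B_t<r$, a deferral gives $B_t+1\le r$, so the potential stays $\le1$, and a round with $C_t\ge r$ empties the queue. A geometric-trials argument summed over levels then gives $\mathbb E_b[\tau_0]\le\lceil b/r\rceil/p_+$. The main delicacy is the states below $r$: a deferral there could in principle re-raise the queue. I would handle this by bounding the time to empty from any state with potential $1$ by $1/p_+$, since every service of size $\ge r$ empties such a queue regardless of the deferral.

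\textbf{Bias representation.} This is the standard renewal argument. Iterate the ACOE \eqref{eq:lb-acoe} along $\varphi$ up to the stopping time $\tau_0$, using that $u_h(0)=0$ and that $\ell_h+P_au_h-g_h-u_h=\mathfrak d_h\ge0$. This gives
\[
u_h(b)=\mathbb E_b^\varphi\Big[\sum_{t\le\tau_0}\bigl(\ell_h(B_t,J_t,A_t)-g_h-\mathfrak d_h(B_t,J_t,A_t)\bigr)\Big].
\]
Optional stopping is justified because $\mathbb E_b^\varphi[\tau_0]<\infty$ and $u_h$ is bounded. Dropping $\mathfrak d_h\ge0$ yields the inequality, with equality when $\mathfrak d_h\equiv0$ along the path, i.e.\ when $\varphi$ selects ACOE-minimizing actions.
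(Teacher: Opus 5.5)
Your proposal is correct and follows essentially the paper's route: the barrier from the smoothed-increment formula plus $x_h(1)\ge c_{\rm hold}/\mu$ (you get this from $x_h\ge g_D$, i.e.\ $u_h(1)\ge D(1)$, whereas the paper combines $\overline M^h(0)\ge M_\tau$ with \eqref{eq:lb-empty-marginal}; both are valid), acceptance via $L^h(j)<M_\tau+c_{\rm hold}/4$, drainage through services of size at least $r$, and the stopped ACOE telescoping. The only imprecision is that $\ell_h+P_au_h-g_h-u_h=\mathfrak d_h$ holds after averaging over $J_t\sim\omega$ rather than pointwise in $j$; this is all your optional-stopping argument needs, since $J_t$ is independent of $\mathcal F_{t-1}$.
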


\begin{proof}
For the solution $u_h$ from Lemma~\ref{lem:lb-average-cost}, \eqref{eq:lb-smoothed}, monotonicity of $(x_h(q))_q$, and \eqref{eq:lb-threshold-sandwich} give, for every $b\ge r$ with $b<\bar B$,
\[ \overline M^h(b)-\overline M^h(0) \ge \sum_{c=1}^b\rho_C(c)x_h(1) \ge\frac{c_{\rm hold}}\mu\mathbb P(1\le C_t\le b) \ge\frac{c_{\rm hold}}2, \]
which is \eqref{eq:lb-general-barrier}. Because $\overline M^h(0)\ge M_{\tau}$ and $\|h-h^0\|_\infty<c_{\rm hold}/(4c_e)$, we have $L^h(j)\le M_{\tau}+c_{\rm hold}/4$. 
Thus every stationary minimizing policy for $h\in\mathcal N$
accepts every query at $b\ge r$. For $j\in[K]$ and
$r\le b<\bar B$, we also have $\overline M^h(b)-L^h(j)\ge c_{\rm hold}/4$.

From queue state one the queue then stays at most $r$ until it hits zero, so under any stationary minimizing policy
\begin{equation} \label{eq:lb-hit-one} \mathbb E_1[\tau_0]\le\frac1{p_+}\le\frac2\mu. \end{equation}
More generally, positive-service events of size at least $r$ reduce any queue state by at least $r$ until it enters $\{0,\ldots,r\}$. This proves \eqref{eq:lb-hit-general} under every stationary minimizing policy for $h\in\mathcal N$. The same drainage bounds hold for the policy that always accepts.

Let $\varphi$ be a stationary policy with
$\mathbb E_b^\varphi[\tau_0]<\infty$ for every $b$.
Evaluating the minimum in \eqref{eq:lb-acoe} at $\varphi$,
summing over rounds $1,\ldots,\tau_0\wedge s$, and letting
$s\to\infty$ by dominated convergence gives \eqref{eq:lb-hitting}, with equality when $\varphi$ selects an ACOE-minimizing
action at every state.
\end{proof}

\begin{lemma}[Self-centered family]
\label{lem:lb-self-centered-family}
Fix \(T\ge1\), let \(\delta_j\) be given by \eqref{eq:lb-delta-sharp}, with \(\delta_j=0\) when \(\omega_j=0\), and write \(\delta_{\max}:=\max_j\delta_j\). Let \(r\) and \(\mathcal N\) be as in Lemma~\ref{lem:lb-local-queue}.

For $\sigma\in\{-1,+1\}^K$, let $S^-_\sigma:=\sum_{j:\sigma_j=-1}\omega_j\delta_j$ and $A_\sigma:=(1-\mu)S^-_\sigma/\mu$. There is a unique \(a_\sigma\in[0,A_\sigma]\) for which the vector \(h^\sigma_j:=\tau-a_\sigma+\sigma_j\delta_j\) satisfies \(\overline\tau(h^\sigma)=\tau-a_\sigma\). It belongs to \(\mathcal N\). If \(\sigma_j=+1\) and \(\sigma^{(j)}\) is obtained by flipping the \(j\)th sign, then \(0\le a_{\sigma^{(j)}}-a_\sigma\le2(1-\mu)\omega_j\delta_j/\mu\).
\end{lemma}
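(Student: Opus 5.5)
We reduce the centering condition to a scalar fixed-point equation in $a$. For $a\in[0,A_\sigma]$, let $h(a):=\tau\mathbf 1-a\mathbf 1+\sigma\odot\delta$ and define $\Phi_\sigma(a):=\overline\tau(h(a))-(\tau-a)$. Equivalently, $\Phi_\sigma(a)=\bigl[M_\tau-\overline M^{h(a)}(0)\bigr]/c_e+a$, because $1-\overline M^h(0)/c_e=\overline\tau(h)$ and $M_\tau=c_e(1-\tau)$. By \eqref{eq:lb-empty-marginal}, $\overline M^{h}(0)=c_{\rm exp}+(1-\mu)u_h(1)$. Since $M_\tau=c_{\rm exp}+(1-\mu)c_{\rm hold}/\mu=c_{\rm exp}+(1-\mu)D(1)$, we get
\[
\Phi_\sigma(a)=a-\frac{1-\mu}{c_e}\bigl(u_{h(a)}(1)-D(1)\bigr).
\]
The plan is to show that $a\mapsto\Phi_\sigma(a)$ is continuous and strictly increasing, with $\Phi_\sigma(0)\le0\le\Phi_\sigma(A_\sigma)$. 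The intermediate value theorem then gives existence, and strict monotonicity gives uniqueness.

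\textbf{Step 1 (membership in $\mathcal N$).} By \eqref{eq:lb-delta-sharp}, $\delta_j\le\mu^2 m/24$ and $\delta_j\le \mu c_{\rm hold}/(24c_e)$. Also $S^-_\sigma\le\delta_{\max}$, so $A_\sigma\le\delta_{\max}/\mu$. Hence $\|h(a)-h^0\|_\infty\le a+\delta_{\max}\le 2\delta_{\max}/\mu$, which is below both $m/12$ and $c_{\rm hold}/(12c_e)$. So every candidate vector lies in $\mathcal N$, and Lemma~\ref{lem:lb-local-queue} is available along the whole segment.

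\textbf{Step 2 (monotonicity and continuity).} By the limit of part (i) of Proposition~\ref{prop:bellman-stability}, the increments $x_h(q)$ are $c_e\|h-\tilde h\|_\infty$-Lipschitz in $h$. For the monotone shift $h(a)$, part (ii) applies because $h(a')\le h(a)$ coordinatewise when $a'>a$. It gives $0\le u_{h(a')}(1)-u_{h(a)}(1)\le c_e(a'-a)$. Therefore
\[
\Phi_\sigma(a')-\Phi_\sigma(a)\ge (a'-a)-(1-\mu)(a'-a)=\mu(a'-a)>0,
\]
so $\Phi_\sigma$ is continuous and strictly increasing.

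\textbf{Step 3 (endpoints).} At $a=0$, the claim $u_{h(0)}(1)\ge D(1)$ follows from $u_h\ge D$ in Lemma~\ref{lem:lb-average-cost}, so $\Phi_\sigma(0)\le0$. The upper endpoint is the main obstacle: we need $u_{h(a)}(1)-D(1)\le c_eS^-_\sigma/\mu$ at $a=A_\sigma$, which is exactly what makes $\Phi_\sigma(A_\sigma)\ge A_\sigma-(1-\mu)S^-_\sigma/\mu=0$. In fact we will prove it for every $a\ge0$.

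To do this, we use \eqref{eq:lb-hitting} with the always-accept policy, whose hitting time from $1$ satisfies \eqref{eq:lb-hit-general}, $\mathbb E_1[\tau_0]\le 2/\mu$; the sharper $1/\mu$ holds because under always-accept any service $C_t\ge1$ empties queue one. Under always-accept, the excursion cost from $1$ is $D(1)$ in holding cost plus $\mathbb E_1[\tau_0]\sum_j\omega_jL^{h}(j)$. Subtracting $g_h$ requires a lower bound on $g_h$, and here we use \eqref{eq:lb-gain-optimal} together with a policy comparison. A minimizing policy gains at most $c_e\delta_j$ per level-$j$ arrival relative to accepting, since the threshold $\overline\tau(h)\le\tau$ and accepting costs $c_e(1-h_j)$. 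This gives $\sum_j\omega_jL^h(j)-g_h\le c_e\sum_{j:\sigma_j=-1}\omega_j\delta_j=c_eS^-_\sigma$ when evaluated at the centering point. If the direct estimate is off by constants, we fall back on the cruder $2/\mu$ hitting bound and adjust $A_\sigma$ by a factor.

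\textbf{Step 4 (neighbor bound).} Flipping $\sigma_j$ from $+1$ to $-1$ lowers $h_j$ by $2\delta_j$, and lowering $h$ can only raise $u(1)$ by part (ii). So $\Phi_{\sigma^{(j)}}\le\Phi_\sigma$ pointwise, which gives $a_{\sigma^{(j)}}\ge a_\sigma$. For the upper bound, rerun Step 3's excursion estimate for the difference: it gives $\Phi_\sigma-\Phi_{\sigma^{(j)}}\le (1-\mu)\,2\omega_j\delta_j/\mu$. Combined with the slope bound $\mu$ from Step 2, this yields $a_{\sigma^{(j)}}-a_\sigma\le 2(1-\mu)\omega_j\delta_j/\mu^2$. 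Recovering the stated $1/\mu$ requires exploiting that the slope of $\Phi$ is actually near $1$, namely $1-(1-\mu)\cdot(\text{small})$, because the dependence of $u(1)$ on a uniform shift is only through deferred levels.
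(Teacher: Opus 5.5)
Your Steps 1 and 2 are sound. The slope bound $\Phi_\sigma(a')-\Phi_\sigma(a)\ge\mu(a'-a)$ is the paper's shift-contraction estimate, and the intermediate value theorem replaces its Banach fixed-point argument. Your lower bound $a_{\sigma^{(j)}}\ge a_\sigma$ is also fine. Both steps you flag as delicate, however, contain genuine gaps.

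\textbf{Upper endpoint (Step 3).} The lower bound on $g_h$ cannot come from \eqref{eq:lb-gain-optimal}, which bounds $g_h$ from above. It comes from the ACOE at $b=0$, which gives exactly
$\sum_j\omega_jL^h(j)-g_h=c_e\sum_j\omega_j(\overline\tau(h)-h_j)_+=c_e\sum_j\omega_j(\Phi_\sigma(a)-\sigma_j\delta_j)_+$.
Using only $\overline\tau(h)\le\tau$ bounds the $j$th summand by $(a-\sigma_j\delta_j)_+$. At $a=A_\sigma>0$ this exceeds $\delta_j$ for negative signs. You recover $c_eS^-_\sigma$ only by evaluating ``at the centering point'', which is circular, since its existence is what you are proving. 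The claim for every $a\ge0$ is false in general: e.g.\ for Bernoulli service, $u_{h(a)}(1)-D(1)$ increases linearly in $a$ once every level is deferred at the empty queue. Nor are your fallbacks available: $A_\sigma$ is fixed by the statement, and the $2/\mu$ hitting bound only yields $\Phi_\sigma(A_\sigma)\ge-A_\sigma$.

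The repair is a contradiction argument. If $\Phi_\sigma(A_\sigma)<0$, then $\overline\tau(h)-h_j<-\sigma_j\delta_j$ for every $j$, so each summand is at most $\delta_j\mathbf 1\{\sigma_j=-1\}$. The always-accept excursion from queue one, with $\mathbb E_1[\tau_0]=1/\mu$ exactly and Wald's identity, then gives $u_h(1)-D(1)\le c_eS^-_\sigma/\mu$, i.e.\ $\Phi_\sigma(A_\sigma)\ge0$. This is what the paper extracts directly from the $q=1$ increment identity \eqref{eq:lb-q1-map}.

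\textbf{Neighbor bound (Step 4).} Comparing $\Phi_\sigma$ and $\Phi_{\sigma^{(j)}}$ at a common $a$ and dividing by the slope lower bound loses a factor $1/\mu$. Even your common-$a$ estimate is not justified as written: at a common $a$ the gains do not cancel cleanly, and a minimizing policy's hitting time from queue one is only bounded by $2/\mu$. Your proposed repair is also false. Differentiating \eqref{eq:lb-q1-map} with $z=a-\Phi_\sigma(a)$, where the cap $w_h$ is inactive (e.g.\ Bernoulli service), gives slope $\mu/(\mu+(1-\mu)\Omega')$, with $\Omega'$ the mass of levels deferred at the empty queue. This is close to $\mu$, not $1$, when $\omega_0=0$ and almost all signs are negative.

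The missing idea is to compare the two instances at their own fixed points. Write $\Delta a:=a_{\sigma^{(j)}}-a_\sigma$. In your notation, $\Phi_\sigma(a_\sigma)=0$ means $c_ea_\sigma=(1-\mu)(u_{h^\sigma}(1)-D(1))$, so $c_e\Delta a=(1-\mu)(u_{h^{\sigma^{(j)}}}(1)-u_{h^\sigma}(1))$ exactly, with no slope involved. At the fixed points, the ACOE at $b=0$ gives $g_{h^\sigma}=\Omega(M_\tau+c_ea_\sigma)-c_e\sum_{k:\sigma_k=+1}\omega_k\delta_k$, hence $g_{h^{\sigma^{(j)}}}-g_{h^\sigma}=c_e\Omega\Delta a+c_e\omega_j\delta_j$. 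Under a minimizing policy $\varphi_\sigma$ for $h^\sigma$, the expected one-step cost rises by at most $c_e\Omega\Delta a+2c_e\omega_j\delta_j$, using $\Delta a\ge0$. The common shift therefore cancels, leaving only $c_e\omega_j\delta_j$ per round. Apply \eqref{eq:lb-hitting} at $b=1$, with equality for $h^\sigma$ and inequality for $h^{\sigma^{(j)}}$, both under $\varphi_\sigma$, together with $\mathbb E_1^{\varphi_\sigma}[\tau_0]\le2/\mu$. This gives $u_{h^{\sigma^{(j)}}}(1)-u_{h^\sigma}(1)\le2c_e\omega_j\delta_j/\mu$, which is the stated bound $\Delta a\le2(1-\mu)\omega_j\delta_j/\mu$.
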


\begin{proof}
We need the ordered stability from Proposition~\ref{prop:bellman-stability}, applied to the values with $s$ rounds remaining: for $h'\le h$ coordinatewise and every $s\ge0$,
\begin{equation} \label{eq:lb-long-horizon-ordered} 0\le \bigl[W_s^{h'}(1)-W_s^{h'}(0)\bigr] - \bigl[W_s^h(1)-W_s^h(0)\bigr] \le c_e\|h-h'\|_\infty. \end{equation}
For $a\ge0$ such that $h-a\mathbf1\in[0,1]^K$, let
$s\to\infty$ with $h':=h-a\mathbf1$.
Using \eqref{eq:lb-empty-marginal} yields
\begin{equation} \label{eq:lb-shift-contraction-sharp} 0 \le \overline\tau(h)-\overline\tau(h-a\mathbf1) \le (1-\mu)a. \end{equation}

Define, for $a\in[0,A_\sigma]$,
\( F_\sigma(a) := \tau - \overline\tau(\tau\mathbf1-a\mathbf1+\sigma\odot\delta) \)
where $\odot$ denotes componentwise multiplication. Since $A_\sigma\le\delta_{\max}/\mu$, for $a\in[0,A_\sigma]$ we have
\[ a+\delta_{\max} \le \frac{\mu m}{24}+\frac{\mu^2m}{24} \le\frac m{12}, \qquad a+\delta_{\max} \le \frac{c_{\rm hold}}{24c_e}+\frac{\mu c_{\rm hold}}{24c_e} \le\frac{c_{\rm hold}}{12c_e}, \]
so every conditional accuracy vector in this interval belongs to $\mathcal N$. \Eqref{eq:lb-shift-contraction-sharp} gives $|F_\sigma(a)-F_\sigma(a')|\le(1-\mu)|a-a'|$, so $F_\sigma$ is a contraction.

We next check that $F_\sigma$ maps $[0,A_\sigma]$ into itself. Fix $a$ in this interval, write $h=\tau\mathbf1-a\mathbf1+\sigma\odot\delta$, and set $z:=F_\sigma(a)$. Set $w_h:=(d_h(1)-d_h(0))/c_e\ge0$ when $\bar B\ge2$ and $w_h:=+\infty$ when $\bar B=1$. By \eqref{eq:lb-empty-marginal} and the definition of $z$, $d_h(0)=(1-\mu)u_h(1)=(1-\mu)c_{\rm hold}/\mu+c_ez$, hence $r_j^h-d_h(0)=c_e(a-\sigma_j\delta_j-z)$. Since $\operatorname{med}\{d_h(0),d_h(1),r\}-d_h(0)=\min\{(r-d_h(0))_+,d_h(1)-d_h(0)\}$, \eqref{eq:lb-increment} at $q=1$ (\eqref{eq:lb-increment-boundary} when $\bar B=1$) gives
\begin{equation} \label{eq:lb-q1-map} \mu z = (1-\mu) \sum_{j=1}^K\omega_j \min\!\left\{(a-\sigma_j\delta_j-z)_+,w_h\right\}. \end{equation} In particular $z\ge0$. If $z>A_\sigma$, then $z>a$ and only negative coordinates can contribute to the right-hand side, each by at most $\delta_j$. Thus $\mu z\le(1-\mu)S^-_\sigma=\mu A_\sigma$, a contradiction. Since $F_\sigma$ maps the closed interval $[0,A_\sigma]$ into itself and is a contraction, it has a unique fixed point $a_\sigma\in[0,A_\sigma]$.

Set \(h_j^\sigma:=\tau-a_\sigma+\sigma_j\delta_j\), as in \eqref{eq:lb-hard-family-sharp}, and write \(g_\sigma:=g_{h^\sigma}\), \(u_\sigma:=u_{h^\sigma}\). The fixed-point identity gives \eqref{eq:lb-self-center-sharp}, so \(\overline M^{h^\sigma}(0)-L^{h^\sigma}(j)=\sigma_jc_e\delta_j\). Moreover, setting \(z=a_\sigma\) in \eqref{eq:lb-q1-map} yields \eqref{eq:lb-center-size}.

We need one sharper comparison between neighboring centers. Let $\sigma_j=+1$ and let $\sigma^{(j)}$ be obtained by flipping the $j$th sign. At a common value of $a$, the conditional accuracy vector for $\sigma^{(j)}$ is coordinatewise no larger than that for $\sigma$. 
Applying \eqref{eq:lb-long-horizon-ordered} and letting $s\to\infty$ gives $F_{\sigma^{(j)}}(a)\ge F_\sigma(a)$. Both maps are nondecreasing contractions. Starting the iteration of $F_{\sigma^{(j)}}$ at $a_\sigma$ therefore produces a nondecreasing sequence whose limit is its fixed point, so $a_{\sigma^{(j)}}\ge a_\sigma$. Write $\Delta a:=a_{\sigma^{(j)}}-a_\sigma$. Evaluating \eqref{eq:lb-acoe} at $b=0$, where $u_\sigma(0)=P_0u_\sigma(0)=0$ and $P_1u_\sigma(0)=\overline M^{h^\sigma}(0)-c_{\rm exp}$, and using \eqref{eq:lb-self-center-sharp} gives
\begin{equation} \label{eq:lb-gain-formula} g_\sigma = \Omega(M_{\tau}+c_ea_\sigma) -c_e\sum_{k:\sigma_k=+1}\omega_k\delta_k, \qquad \Omega:=\sum_{k=1}^K\omega_k. \end{equation}
Let $\varphi_\sigma$ be a stationary minimizing policy under $h^\sigma$. Starting from queue state one, the queue stays at most $r$ until its first hitting time $\tau_0$ of zero, so \eqref{eq:lb-hit-one} gives $\mathbb E_1^{\varphi_\sigma}[\tau_0]\le2/\mu$. From \eqref{eq:lb-gain-formula},
$g_{\sigma^{(j)}}-g_\sigma=c_e\Omega\Delta a+c_e\omega_j\delta_j$.
For the same queue state and action rule $\varphi_\sigma$, the conditional one-step cost difference satisfies
\[ \mathbb E\!\left[ \ell_{h^{\sigma^{(j)}}}(B_t,J_t,A_t) - \ell_{h^\sigma}(B_t,J_t,A_t) \,\middle|\,B_t \right] \le c_e\Omega\Delta a+2c_e\omega_j\delta_j, \]
because acceptance losses increase by $c_e\Delta a$ at every self-confidence level and by an additional $2c_e\delta_j$ at level $j$, while deferral costs do not change. Hence
\[ \mathbb E\!\left[ (\ell_{h^{\sigma^{(j)}}}-g_{\sigma^{(j)}}) - (\ell_{h^\sigma}-g_\sigma) \,\middle|\,B_t \right] \le c_e\omega_j\delta_j. \]
By \eqref{eq:lb-hitting} at $b=1$, with equality for $h^\sigma$ under $\varphi_\sigma$ and as an inequality for $h^{\sigma^{(j)}}$ under the same $\varphi_\sigma$, subtracting the two relations yields
\[ u_{\sigma^{(j)}}(1)-u_\sigma(1) \le c_e\omega_j\delta_j\,\mathbb E_1^{\varphi_\sigma}[\tau_0] \le\frac{2c_e}{\mu}\omega_j\delta_j. \]

Since \eqref{eq:lb-self-center-sharp} and \eqref{eq:lb-empty-marginal} give
$c_e\Delta a=(1-\mu)\{u_{\sigma^{(j)}}(1)-u_\sigma(1)\}$, we obtain
\begin{equation} \label{eq:lb-neighbor-sharp} 0\le \Delta a \le \frac{2(1-\mu)}\mu\,\omega_j\delta_j. \end{equation}

\end{proof}

\begin{lemma}[Empty-queue occupancy]
\label{lem:lb-empty-occupancy}
Fix a conditional accuracy vector \(h\) and an algorithm \(\pi\), and let \(r\) be as in Lemma~\ref{lem:lb-local-queue}. Write \(N_0(T):=\sum_{t=1}^T\mathbf1\{B_t=0\}\) and \(H_T:=\sum_{t=1}^T\mathbf1\{B_t\ge r,A_t=1\}\). Then \(\mathbb E_h^\pi[N_0(T)]\ge\mu T/2-2\mathbb E_h^\pi[H_T]\).
\end{lemma}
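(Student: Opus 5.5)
The plan is a drift argument on the coarse queue state $V_t:=\lceil B_t/r\rceil$, where $r$ is the conditional service median from \eqref{eq:lb-service-median}, so that $p_+=\mathbb P(C_t\ge r)\ge\mu/2$. The potential $V_t$ goes down by at least one with probability at least $p_+$ in every round with a nonempty queue. It goes up only through deferrals at an empty queue or deferrals at $B_t\ge r$. Telescoping $V_t$ then trades the number of nonempty rounds against $N_0(T)$ and $H_T$.

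\textbf{Step 1 (one-step bounds, pathwise).} Recall $B_{t+1}=(B_t+A_t-C_t)^+$. I will check three cases.
\begin{itemize}
\item If $B_t=0$, then $V_{t+1}\le \mathbf 1\{A_t=1,\,C_t=0\}$.
\item If $1\le B_t$ and we are not in the case $\{B_t\ge r,A_t=1\}$, then $B_t+A_t\le \max\{B_t,r\}$. This gives $\lceil (B_t+A_t)/r\rceil=V_t$, and a service of size at least $r$ lowers the ceiling by one (or empties the queue). Hence $V_{t+1}\le V_t-\mathbf 1\{C_t\ge r\}$.
\item If $B_t\ge r$ and $A_t=1$, then $\lceil (B_t+1)/r\rceil\le V_t+1$. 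A service of size at least $r$ again removes one unit, so $V_{t+1}\le V_t+1-\mathbf 1\{C_t\ge r\}$.
\end{itemize}
Combining the three cases,
\[
V_{t+1}-V_t\le \mathbf 1\{B_t=0,\,A_t=1,\,C_t=0\}+\mathbf 1\{B_t\ge r,\,A_t=1\}-\mathbf 1\{B_t\ge1,\,C_t\ge r\}.
\]

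\textbf{Step 2 (sum and take expectations).} Sum over $t\le T$ and use $V_1=0$ and $V_{T+1}\ge0$. Since $C_t$ is independent of $\mathcal F_{t-1}$, $J_t$, $A_t$, and $B_t$ is $\mathcal F_{t-1}$-measurable, taking expectations gives
\[
p_+\bigl(T-\mathbb E_h^\pi[N_0(T)]\bigr)\le (1-\mu)\,\mathbb E_h^\pi[N_0(T)]+\mathbb E_h^\pi[H_T].
\]
Rearranging yields $\mathbb E_h^\pi[N_0(T)]\ge \bigl(p_+T-\mathbb E_h^\pi[H_T]\bigr)/(1-\mu+p_+)$.

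\textbf{Step 3 (constants).} The event $\{C_t\ge r\}$ is contained in $\{C_t\ge1\}$, so $p_+\le\mu$ and hence $1-\mu+p_+\le1$. This gives $p_+T/(1-\mu+p_+)\ge p_+T\ge\mu T/2$. Also $1-\mu+p_+\ge 1-\mu+\mu/2\ge 1/2$, so the coefficient of $\mathbb E_h^\pi[H_T]$ is at most $2$. Together these give the claim.

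The main obstacle is the case analysis in Step 1, in particular the boundary cases $B_t<r$ with $A_t=1$ and $B_t=0$ with $A_t=1$. The $B_t=0$ case must be charged only when $C_t=0$, giving the factor $1-\mu$ rather than $1$. Otherwise the constant would degrade to $\mu/4$ instead of $\mu/2$.
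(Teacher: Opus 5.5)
Your proof is correct. It reaches the bound through different bookkeeping than the paper.

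The paper uses the shifted potential $W_t=\lceil (B_t-r)_+/r\rceil$, which measures only the part of the queue above level $r$. Deferrals below $r$, including those at the empty queue, never raise it, so telescoping directly gives $p_+\,\mathbb E[\sum_t \mathbf 1\{B_t>r,A_t=0\}]\le \mathbb E[H_T]$. The paper then argues separately that any round with $B_t+A_t\le r$ and $C_t\ge r$ empties the next queue. Since $B_1=0$, the number of such rounds is at most $N_0(T)$. Combining the two counts amounts to $\mathbb E[N_0(T)]\ge p_+T-(1+p_+)\mathbb E[H_T]$.

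Your unshifted potential $V_t=\lceil B_t/r\rceil$ instead turns every nonempty round with $C_t\ge r$ into a unit decrement. A single telescoping sum and a rearrangement then give $\mathbb E[N_0(T)]\ge (p_+T-\mathbb E[H_T])/(1-\mu+p_+)$. This avoids the separate emptying argument and the index shift. The price is that deferrals at the empty queue now raise the potential. That is exactly why you need to charge them only on $\{C_t=0\}$, which has probability $1-\mu$, to keep the leading constant at $\mu/2$.

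The two bounds differ only in constants. Your leading term is slightly larger, while the paper's coefficient on $\mathbb E[H_T]$ is $1+p_+$ rather than $1/(1-\mu+p_+)$. Both are at least $\mu T/2-2\mathbb E[H_T]$, which is all that is used downstream in the minimax lower bound.
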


\begin{proof}
Write \(p_+:=\mathbb P(C_t\ge r)\), as in \eqref{eq:lb-service-median}, and drop the sub- and superscripts of \(\mathbb E_h^\pi\).
With $W_t:=\lceil(B_t-r)_+/r\rceil$, pathwise $W_{t+1}-W_t\le\mathbf1\{B_t\ge r,A_t=1\}-\mathbf1\{B_t>r,A_t=0,C_t\ge r\}$. Thus
\[ p_+\,\mathbb E\!\left[\sum_{t=1}^T\mathbf1\{B_t>r,A_t=0\}\right] \le\mathbb E [H_T]. \]
A round with $B_t+A_t\le r$ and $C_t\ge r$ leaves the next queue empty; the number of rounds with $B_t+A_t>r$ is at most the count in the preceding display plus $H_T$. Since $B_1=0$, the number of empty next-round queues $B_{t+1}=0$ is at most $N_0(T)$. Using \eqref{eq:lb-service-median} therefore gives
\(\mathbb E[N_0(T)]\ge\mu T/2-2\mathbb E[H_T]\), as claimed.
\end{proof}

\begin{lemma}[Terminal comparison for the hard family]
\label{lem:lb-terminal-comparison}
For every \(h^\sigma\) from Lemma~\ref{lem:lb-self-centered-family} and every \(b\in\mathcal B\), write \(u_\sigma:=u_{h^\sigma}\) and let \(r\) be as in Lemma~\ref{lem:lb-local-queue}. Then \(0\le u_\sigma(b)-D(b)\le(2c_eS^-_\sigma/\mu)\lceil b/r\rceil\).
\end{lemma}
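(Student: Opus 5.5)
We must show $0\le u_\sigma(b)-D(b)\le (2c_eS^-_\sigma/\mu)\lceil b/r\rceil$. The lower bound $u_\sigma\ge D$ is already part of Lemma~\ref{lem:lb-average-cost}, so only the upper bound needs work. The plan is to express both $u_\sigma$ and $D$ as hitting-time sums under the \emph{always-accept} policy and compare them.

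\textbf{Step 1: a hitting-time upper bound for $u_\sigma$.} Apply \eqref{eq:lb-hitting} of Lemma~\ref{lem:lb-local-queue} with $\varphi$ the always-accept policy. By \eqref{eq:lb-hit-general} its hitting time $\tau_0$ of the empty queue has finite mean. This gives
\[
u_\sigma(b)\le \mathbb E_b\Big[\sum_{t=1}^{\tau_0}\bigl(c_{\rm hold}B_t+L^{h^\sigma}(J_t)-g_\sigma\bigr)\Big].
\]

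\textbf{Step 2: the holding part equals $D(b)$.} Under always-accept, the holding costs accumulated until the queue empties have expectation $\mathbb E_b[\sum_{t\le\tau_0}c_{\rm hold}B_t]=D(b)$. This follows from the drain recursion \eqref{eq:lb-drain}: with no deferrals the queue evolves exactly as in the clearing process defining $D$. Hence
\[
u_\sigma(b)-D(b)\le \mathbb E_b[\tau_0]\cdot\Bigl(\mathbb E[L^{h^\sigma}(J)]-g_\sigma\Bigr)_+,
\]
where we use that the $J_t$ are i.i.d.\ and independent of $\tau_0$ (Wald's identity, since always-accept ignores $J_t$).

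\textbf{Step 3: bound the per-round excess.} We have $\mathbb E[L^{h^\sigma}(J)]=\sum_k\omega_k c_e(1-h^\sigma_k)=\Omega(M_\tau+c_ea_\sigma)-c_e\sum_k\omega_k\sigma_k\delta_k$, using $c_e(1-\tau)=M_\tau$. Subtracting the gain formula \eqref{eq:lb-gain-formula} gives exactly
\[
\mathbb E[L^{h^\sigma}(J)]-g_\sigma=c_e\sum_{k:\sigma_k=-1}\omega_k\delta_k=c_eS^-_\sigma.
\]
This is the main point: the $+$ coordinates cancel and only the deferral-favoring coordinates contribute. Combined with $\mathbb E_b[\tau_0]\le 2\lceil b/r\rceil/\mu$ from \eqref{eq:lb-hit-general}, which holds for the always-accept policy since $h^\sigma\in\mathcal N$ by Lemma~\ref{lem:lb-self-centered-family}, this yields the claim.

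\textbf{Main obstacle.} The delicate point is Step 2: showing rigorously that the always-accept hitting sum of holding costs is exactly $D(b)$. This requires matching the indexing in \eqref{eq:lb-hitting}, where costs run over $t=1,\dots,\tau_0$ with $\tau_0$ rounds until $B_{s+1}=0$, to the convention $D(b)=c_{\rm hold}b+\mathbb E D((b-C)^+)$. I would verify this by induction or optional stopping on $D(B_t)+\sum_{s<t}c_{\rm hold}B_s$, which is a martingale under acceptance up to $\tau_0$.
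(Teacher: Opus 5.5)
Your proof is correct and follows the paper's argument. The paper likewise bounds $u_\sigma(b)$ by the always-accept stopped sum from \eqref{eq:lb-hitting}, uses \eqref{eq:lb-gain-formula} to show that the per-round excess averages to exactly $c_eS^-_\sigma$, and multiplies by $\mathbb E_b^{\varphi^0}[\tau_0]\le 2\lceil b/r\rceil/\mu$. The only cosmetic difference is that the paper gets the stopped-sum representation of $D(b)$ from the equality case of \eqref{eq:lb-hitting} at $h^0$ (via Lemma~\ref{lem:lb-one-job-floor}, where always accepting is optimal and $u_{h^0}=D$), whereas you verify it directly from the drain recursion, which is equally valid.
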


\begin{proof}

Let $\varphi^0$ be the stationary policy that always accepts and $\tau_0$ its first hitting time of queue state zero. By \eqref{eq:lb-hit-general}, $\mathbb E_b^{\varphi^0}[\tau_0]\le2\lceil b/r\rceil/\mu$. Moreover, \eqref{eq:lb-hitting} gives
\[
u_\sigma(b)\le\mathbb E_b^{\varphi^0}\!\left[\sum_{t=1}^{\tau_0}(\ell_{h^\sigma}(B_t,J_t,0)-g_{h^\sigma})\right],\qquad
D(b)=\mathbb E_b^{\varphi^0}\!\left[\sum_{t=1}^{\tau_0}(\ell_{h^0}(B_t,J_t,0)-g_{h^0})\right],
\]
where the second identity uses Lemma~\ref{lem:lb-one-job-floor}, since $\varphi^0$ is a stationary minimizing policy at $h^0$.
Moreover, \eqref{eq:lb-gain-formula} gives, at every queue state $b$,
\[ \sum_{j=0}^K\omega_j \Big[ \ell_{h^\sigma}(b,j,0)-g_\sigma - \bigl(\ell_{h^0}(b,j,0)-g_{h^0}\bigr) \Big] = c_eS^-_\sigma. \]
Subtracting the two stopped-cost representations and using the last display gives $u_\sigma(b)-D(b)\le c_eS^-_\sigma\,\mathbb E_b^{\varphi^0}[\tau_0]$; together with the bound on \(\mathbb E_b^{\varphi^0}[\tau_0]\) and $u_\sigma\ge D$ from Lemma~\ref{lem:lb-average-cost}, this proves the claim.
\end{proof}

\subsubsection{Lemmas for the instance-dependent lower bound}

\begin{lemma}[Transformed disadvantages]
\label{lem:lb-transformed-disadvantage}
For $h\in[0,1]^K$, set $k_h(b,j,a):=c_{\rm hold}b+a\{c_{\rm exp}-L^h(j)\}$ and $\bar g_h:=g_h-\sum_{k=0}^K\omega_kL^h(k)$. Then $(\bar g_h,u_h)$ solves \eqref{eq:lb-acoe} with $\ell_h$ replaced by $k_h$. The disadvantages values in \eqref{eq:lb-disadvantage} are unchanged, and \eqref{eq:lb-gain-optimal} holds with the transformed gains. Define $v_h(b,j):=\min_{a\in\mathcal A(b,j)}\{k_h(b,j,a)+(P_au_h)(b)\}-\bar g_h$. For a state $s=(b,k)$ and an action $a\in\mathcal A(b,k)$,
\begin{equation} \label{eq:lb-state-disadvantage} \mathfrak d_h(s,a)=k_h(s,a)-\bar g_h+\mathbb E[v_h(S_{t+1})\mid S_t=s,A_t=a]-v_h(s). \end{equation}

Moreover, for any instance, any algorithm, and any bounded $w:\mathcal B\times\{0,\ldots,K\}\to\mathbb R$, with $J_{T+1}\sim\omega$ independently after the last decision and $S_{T+1}:=(B_{T+1},J_{T+1})$, the process $\mathcal M_t:=\sum_{s=1}^t(w(S_{s+1})-\mathbb E[w(S_{s+1})\mid S_s,A_s])$, $0\le t\le T$, is a martingale with respect to $\mathcal H_t:=\mathcal F_t\vee\sigma(J_{t+1})$, with $|\mathcal M_t-\mathcal M_{t-1}|\le\operatorname{sp}(w)$ for $1\le t\le T$.
\end{lemma}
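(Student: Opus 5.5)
The argument has three parts: verify the transformed ACOE, derive \eqref{eq:lb-state-disadvantage} by expanding $v_h$, and check the martingale property directly.

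\emph{Transformed ACOE.} Observe that $\ell_h(b,j,a)=k_h(b,j,a)+L^h(j)$, since $\ell_h=c_{\rm hold}b+c_{\rm exp}a+L^h(j)(1-a)$. The term $L^h(j)$ does not depend on the action, so it can be pulled out of each minimum in \eqref{eq:lb-acoe}. Averaging over $j$ with weights $\omega_j$ then shifts the right-hand side by exactly $\sum_k\omega_kL^h(k)$. Hence $(\bar g_h,u_h)$ solves the transformed ACOE.

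\emph{Invariance of disadvantages and of \eqref{eq:lb-gain-optimal}.} For a fixed state $(b,j)$, the disadvantage \eqref{eq:lb-disadvantage} is a difference of two quantities evaluated at that same $(b,j)$. The additive constant $L^h(j)$ cancels in this difference, so the disadvantages are unchanged. For \eqref{eq:lb-gain-optimal}, under any stationary distribution $\lambda$ the $J$-marginal is $\omega$, because $J_t$ is drawn independently. Therefore $\sum\lambda\varphi\,\ell_h=\sum\lambda\varphi\,k_h+\sum_k\omega_kL^h(k)$, and subtracting this identity from \eqref{eq:lb-gain-optimal} gives the transformed version. Alternatively, one can rerun the proof of (iv) directly with $k_h$.

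\emph{Proof of \eqref{eq:lb-state-disadvantage}.} By definition, $\mathfrak d_h(s,a)=k_h(s,a)+(P_au_h)(b)-\min_{a'}\{\cdots\}=k_h(s,a)+(P_au_h)(b)-v_h(s)-\bar g_h$. It therefore suffices to show $\mathbb E[v_h(S_{t+1})\mid S_t=s,A_t=a]=(P_au_h)(b)$. The next state is $S_{t+1}=(B_{t+1},J_{t+1})$, where $J_{t+1}\sim\omega$ is independent of $B_{t+1}$, and $B_{t+1}$ has law $P_a(b,\cdot)$. Using the transformed ACOE,
\[
\sum_j\omega_jv_h(b',j)=\sum_j\omega_j\min_{a'}\{k_h(b',j,a')+(P_{a'}u_h)(b')\}-\bar g_h=u_h(b').
\]
Taking the expectation over $b'\sim P_a(b,\cdot)$ gives $(P_au_h)(b)$, as required.

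\emph{Martingale claim.} Each increment $w(S_{s+1})-\mathbb E[w(S_{s+1})\mid S_s,A_s]$ is $\mathcal H_s$-measurable, since $S_{s+1}$ is determined by $\mathcal F_s$ and $J_{s+1}$. The main obstacle is to justify that $\mathbb E[w(S_{s+1})\mid\mathcal H_{s-1}\vee\sigma(A_s)]=\mathbb E[w(S_{s+1})\mid S_s,A_s]$, i.e.\ a Markov property under an arbitrary history-dependent algorithm. Once this holds, the tower property gives $\mathbb E[\text{increment}\mid\mathcal H_{s-1}]=0$. To establish it, note that $\mathcal H_{s-1}=\mathcal F_{s-1}\vee\sigma(J_s)$ contains $S_s$. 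Given $\mathcal H_{s-1}$ and $A_s$ (whose internal randomization is independent of the system), the queue update depends only on $B_s$, $A_s$ and the capacity $C_s$, which is independent of the past. The review order does not affect $B_{s+1}$. Finally, $J_{s+1}$ is independent of everything else. For $s=T$ the same argument uses the auxiliary draw $J_{T+1}$.

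The bound on the increments follows because $w(S_{s+1})$ and its conditional mean both lie in $[\min w,\max w]$, so each increment is at most $\operatorname{sp}(w)$ in absolute value.
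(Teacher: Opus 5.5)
Your proposal is correct and follows the paper's proof. You pull the action-independent $L^h(j)$ out of each minimum to get the transformed ACOE and the unchanged disadvantages, and you get the martingale property from the independence of $C_s$ and $J_{s+1}$ from $(\mathcal H_{s-1},A_s)$. You also spell out two steps the paper only asserts, both correctly: the identity \eqref{eq:lb-state-disadvantage} via $\sum_j\omega_jv_h(b',j)=u_h(b')$, and the transformed gain inequality via the fact that the $J$-marginal of any stationary distribution is $\omega$.
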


\begin{proof}

Since $\ell_h(b,j,a)=k_h(b,j,a)+L^h(j)$ and $L^h(j)$ does not depend on the action, subtracting $\sum_{k=0}^K\omega_kL^h(k)$ from \eqref{eq:lb-acoe} gives the transformed ACOE for $(\bar g_h,u_h)$. The minimizing actions and disadvantages are unchanged, and \eqref{eq:lb-gain-optimal} holds with the transformed gains.

For the martingale property, $\mathcal M_t$ is $\mathcal H_t$-measurable. Since $B_{t+1}=(B_t+A_t-C_t)^+$ with $C_t$ independent of $(\mathcal H_{t-1},A_t)$, and $J_{t+1}\sim\omega$ is independent of $(\mathcal H_{t-1},A_t,C_t)$, the conditional law of $S_{t+1}$ given $\mathcal H_{t-1}$ and $A_t$ depends only on $(S_t,A_t)$; the tower property therefore gives $\mathbb E[\mathcal M_t-\mathcal M_{t-1}\mid\mathcal H_{t-1}]=0$. The increment bound follows from $|w(S_{t+1})-\mathbb E[w(S_{t+1})\mid S_t,A_t]|\le\operatorname{sp}(w)$.
\end{proof}
\clearpage
\section{Additional related work}\label{app:additional-related}

\subsection{Alternative self-confidence signals}
\label{app:uncertainty-scores}

Our experiments use the self-consistency score, i.e. the empirical mass of the largest answer class. This choice is natural when multiple responses are sampled, and answer consistency has previously been used as a signal to decide whether to escalate queries in LLM cascades \citep{yue2024large}. However, the deferral and learning components of our model do not depend on this particular definition of the score. They only require that queries be partitioned into levels \(J_t\in[K]\), each with an unknown conditional accuracy
$h_j^\star=\Pr(Y_t=1\mid J_t=j)$ that can be learned from selectively observed expert feedback. Thus, other scalar self-confidence measures can be substituted for self-consistency and discretized into levels before applying the same deferral framework. In particular, neither the deferral rule nor our analysis requires the self-confidence signal itself to be calibrated or the conditional accuracy to be monotone in that signal.

\paragraph{Sample-based self-confidence.}
Let \(\mathcal C_t\) denote the set of answer classes represented
among the \(G\) sampled responses, with empirical frequencies
\[
q_{t,c}:=\frac{1}{G}\sum_{g=1}^G
\mathbf{1}\{\widehat R_{t,g}\in c\},
\qquad c\in\mathcal C_t.
\]
Writing \(q_{t,(1)} \ge q_{t,(2)} \ge \cdots\)
for the ordered frequencies, with \(q_{t,(2)}=0\) if \(|\mathcal C_t|=1\), the self-consistency score is simply
\(q_{t,(1)}.\) It therefore retains only the empirical mass of the most frequent answer class.

A natural alternative is to use the entropy of the empirical answer distribution,
\[H_t:=-\sum_{c\in\mathcal C_t}q_{t,c}\log q_{t,c}.\]
which takes all sampled answer classes into account. For tasks with canonical answers, the classes can be defined directly by the extracted answer, yielding an entropy over answer labels. For open-ended tasks, responses can instead be grouped according to semantic equivalence, giving the discrete variant of semantic entropy \citep{kuhn2023semantic,Farquhar2024}. Semantic entropy has also been considered directly as an uncertainty signal for LLM routing \citep{zhang2025leveraging}.

Another simple alternative is the margin between the two most frequent answer classes,
\[
q_{t,(1)} - q_{t,(2)},
\]
which measures how strongly the dominant answer is separated from its closest competitor. Margin-based self-confidence scores have been used to control escalation in language-model cascades \citep{ramirez2024optimising}. The statistic above applies the same principle directly to the empirical distribution of sampled answer classes.

For open-ended generation, self-confidence can also be estimated without first assigning responses to hard equivalence classes. Kernel Language Entropy (KLE) \citep{nikitin2024kernel}, for example, constructs a kernel from pairwise semantic similarities between sampled responses and measures uncertainty through its von Neumann entropy. Semantic voting \citep{jiang2026semantic} also aggregates pairwise semantic similarities, using them to select responses for model self-improvement.

\paragraph{Model-internal self-confidence.}
When model logits are available, self-confidence can instead be estimated from the predictive distributions produced during generation. For a generated response \(\widehat R_t=(\widehat r_{t,1},\ldots,\widehat r_{t,L_t})\), let
\[
p_{t,\ell}
:=
\Pr(\widehat r_{t,\ell}\mid X_t,\widehat r_{t,<\ell})
\]
denote the probability assigned by the model to the generated token at position \(\ell\). For example, \citet{gupta2024language} consider the sequence log-probability, which they refer to as the \emph{Chow-Sum},
\[
S_t^{\mathrm{sum}}
:=
\sum_{\ell=1}^{L_t}\log p_{t,\ell},
\]
as a measure of model confidence. Since this score is affected by sequence length, they also consider the length-normalized \emph{Chow-Average} score
\[S_t^{\mathrm{avg}}
:=
\frac{1}{L_t}\sum_{\ell=1}^{L_t}\log p_{t,\ell}.\]
These and other model-internal scores can be incorporated into our model by discretizing the resulting scalar self-confidence signal into levels and learning the corresponding conditional accuracies online.

These alternatives modify the self-confidence signal supplied to the deferral layer, but not the central learning problem studied in this work. In all cases, the system must learn from delayed and selectively observed feedback which score levels are sufficiently reliable to accept, while simultaneously accounting for the state-dependent cost of expert intervention.

\subsection{Probabilistic calibration}\label{app:related-calibration}

Calibration is the property that characterizes the relationship between a model's confidence scores and the probability that its predictions are correct: a confidence score is calibrated if predictions assigned confidence \(p\) are correct with probability \(p\).
Classical post-hoc calibration methods include histogram binning \citep{645530.655658}, Platt scaling \citep{platt1999probabilistic}, isotonic regression \citep{robertson1988order,zadrozny2002transforming}.
In our problem, the conditional accuracies are learned from delayed, selectively observed human feedback, and learning is evaluated through deferral regret rather than calibration error.

\subsection{Additional learning-to-defer formulations}\label{app:related-deferral-extensions}

\citet{pmlr-v162-charusaie22a} study learning predictors that complement humans and develop active learning schemes to reduce the number of expert predictions required during training. In our model, expert feedback is instead acquired during deployment to learn the conditional accuracies of a fixed LLM.

\paragraph{Learning to defer with multiple experts.}
Building on the single-expert setting, learning to defer has been generalized to multiple heterogeneous experts whose competencies may vary across inputs. In this setting, the learner must determine not only whether to defer, but also which expert should handle each instance. Early multi-expert formulations jointly learned a predictive model and an allocation rule while accounting for heterogeneous expert competencies and, in some cases, expert biases and fairness considerations~\citep{keswani2021towards,ijcai2022p344}. Subsequent work developed more principled formulations for expert selection, including calibrated and consistent surrogate objectives~\citep{verma2023learning}. A complementary line of work has focused on the theoretical foundations of multi-expert deferral. \citet{mao2023two} study the two-stage setting, in which a predictor is fixed and a deferral function is learned to select among experts, while \citet{NEURIPS2024_86a11cc0} develop surrogate losses with realizable $H$-consistency, $H$-consistency, and Bayes-consistency guarantees. More recently, \citet{pmlr-v267-mao25c} provide a unified treatment of single-stage and two-stage multi-expert deferral and establish stronger consistency guarantees for both settings. Other extensions consider learning with limited expert annotations~\citep{hemmer2023learning}, adaptation to previously unseen experts~\citep{pmlr-v238-tailor24a}, regression~\citep{mao2024regression}, imbalanced expert data~\citep{Cortes2026}, and Top-$k$ deferral, where multiple experts may be selected for the same instance~\citep{ICLR2026_6dc129f0}.

\subsection{LLM routing}\label{app:related-routing}

FrugalGPT constructs LLM cascades to reduce monetary cost while maintaining predictive quality \citep{chen2024frugalgpt}. RouteLLM learns routers between stronger and weaker models using preference data \citep{ong2025routellm}.
\citet{ICLR2026_113ae3a9} introduce UniRoute, a model-routing framework that represents LLMs through their prediction-error profiles on representative prompts, thus enabling cost-aware routing to previously unseen models without retraining the router.
These methods allocate computation among automated models. We first sample from a fixed LLM, then decide whether to defer to a capacity-constrained human expert whose completed reviews supply ground-truth labels.

\subsection{Structured admission control}
\label{app:structured-admission-control}
Queue-aware learning has also been studied through structured admission-control policies.
In particular, UCRL-AC \citep{pmlr-v235-weber24a} exploits the structure of multi-class admission control to obtain sharper worst-case regret guarantees than generic MDP approaches, while SALMUT \citep{roy2021online} assumes that the optimal policy has an ordered multi-threshold structure and directly learns the corresponding vector of class-specific thresholds using stochastic approximation.
The learning problems differ substantially from ours. In \citet{pmlr-v235-weber24a},
arrival rates are unknown and are learned passively from exogenously observed
arrivals, while the class-dependent reward model is known. \cite{roy2021online} allows
the transition dynamics to be unknown, but learns from
state, reward, and transition observations generated by interaction with
the system. In our model, by contrast, the confidence level-dependent cost of accepting the LLM answer is itself unknown since it depends on the unknown conditional accuracy vector, and it is learned only through queries deferred to the expert. Thus, obtaining information about prediction quality requires consuming expert capacity and increasing queue congestion, with the resulting
feedback additionally delayed by expert service.

\subsection{Active learning and oracle queries}\label{app:related-active-learning}

Active learning studies how a learner can selectively acquire costly labels while maintaining predictive performance
\citep{10.1016/j.tcs.2010.12.054,10.1561/2200000037}.
Online variants require these query decisions to be made sequentially under partial feedback \citep{10.1145/3201604}. \citet{reveillard2025minimizing} study online classification in which
costly expert consultations provide labeled examples used to improve
subsequent predictions. In our model, the LLM remains fixed, and expert
feedback instead estimates conditional accuracies
to guide deferral. \citet{russo2024online} study a limited budget of best-action queries, which reveal the identity of the best action before the current decision. This information differs from our completed-review labels, which reveal whether the LLM response was correct without revealing the optimal deferral decision.

\subsection{Human--AI decision support}\label{app:related-human-ai}

Related work studies how the relationship between human and AI confidence affects the usefulness of AI advice \citep{NEURIPS2023_2f1d1196,CorveloBenz2025}.
\citet{benz2026learning} study learning decision policies through repeated interactions under human-alignment and full feedback.
In our model, an accepted LLM response provides no correctness feedback, and expert review is limited by queue capacity.

\clearpage
\section{Datasets}\label{app:datasets}
\subsection{Synthetic datasets}\label{app:synthetic-datasets}

In this section, we describe the two synthetic instances used in Appendix~\ref{app:synthetic-experiments}. Both have $K=10$ self-confidence levels and are shown in Figure~\ref{fig:synthetic-instances}.

\begin{figure*}[htb!]
    \centering
    \includegraphics[width=\textwidth]{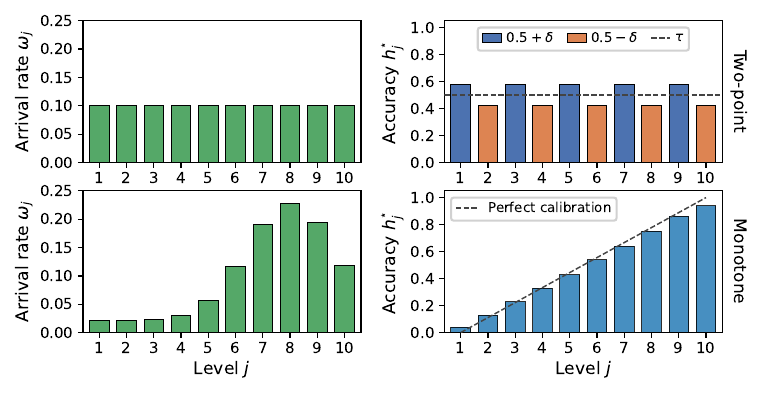}
    \caption{Top: two-point instance with $\delta=0.08$. Bottom: monotone instance (dashed line: perfect calibration, i.e., $h^\star_j=p_j$ with $p_j=(j-1)/9$).}
    \label{fig:synthetic-instances}
\end{figure*}

\paragraph{Two-point instance.}
The first instance uses the two-point family $h^\star_j\in\{0.5-\delta,0.5+\delta\}$, $j\in[K]$, with uniform arrival rates $\omega_j=1/K$ (top panel of Figure~\ref{fig:synthetic-instances}). This construction is motivated by the hard instances used in the proof of our minimax lower bound (Theorem~\ref{thm:known-M-lower}): under immediate service ($\mu=1$), uniform arrivals and $c_{\rm exp}/c_e=1/2$, the oracle threshold is $\tau=1/2$, and if $K \le T$, these instances take $h^\star_j\in\{\tau-\delta,\tau+\delta\}$ with $\delta=\frac18\sqrt{K/T}$. Appendix~\ref{app:exp-immediate-service} uses this instance in that setting only, with $\delta$ as a free parameter.

\paragraph{Monotone instance.}
The second instance combines increasing conditional accuracies $h^\star=(.04,.13,.23,.33,.43,.54,.64,.75,.86,.94)$ with arrival rates concentrated on the high levels (bottom panel of Figure~\ref{fig:synthetic-instances}). Since the scores are $p_j=(j-1)/9$, they are nearly calibrated: this instance models an LLM that is slightly underconfident at the three lowest levels and overconfident at the others, by at most $0.06$. Since the conditional accuracies span most of $[0,1]$, the oracle threshold falls between the accuracies of two levels for a wide range of service rates, costs and queue sizes. This ensures the optimal actions do not become trivial to learn when we vary these parameters, unless deferral is never worthwhile ($c_{\rm exp}\ge c_e$). We use this instance in Appendix~\ref{app:exp-service-cost}, where we vary the service rate $\mu$, the expert cost $c_{\rm exp}$ and the holding cost $c_{\rm hold}$, with the conditional accuracies held fixed.

\subsection{Real-world datasets}
\label{app:real-world-datasets}

In this section, we describe the datasets used in our real-world experiments. We start from three question-answering datasets with verifiable answers:
\begin{itemize}[leftmargin=*,itemsep=1pt,topsep=2pt,parsep=0pt]
    \item GSM8K \citep{cobbe2021gsm8k} contains grade-school math problems with numerical answers. We use all $8{,}792$ questions from its train and test splits.
    \item  NuminaMath-TIR \citep{numina2024} contains competition-style mathematics problems. We use its first $10{,}000$ training questions, from which we remove $157$ exact duplicates and $659$ questions that also appear in GSM8K, leaving $9{,}184$.
    \item  MedMCQA is a large-scale medical dataset derived primarily from medical entrance and professional examinations \citep{pal2022medmcqa}. It contains multiple-choice questions with four options. We use all $187{,}005$ questions of its train and validation splits.
\end{itemize} 
We use Qwen3-4B-Instruct-2507, Qwen3-0.6B, and Qwen2.5-0.5B-Instruct \citep{qwen3,qwen25}, and sample $20$ answers per question. Answers are sampled with vLLM at temperature $0.6$ and top-$p$ $0.95$, without top-$k$ truncation, with up to $2{,}048$ generated tokens per answer and a maximum model sequence length (prompt plus output) of $4{,}096$ tokens.
\paragraph{Answer extraction and grading.}
A final answer is extracted from each generation by taking the last complete \texttt{<answer>} block. If none is present, we use a terminal \texttt{\textbackslash boxed\{\}} expression or an explicit final-answer declaration. Otherwise, no answer is returned. Answers are grouped into equivalence classes, either by mathematical equivalence with Math-Verify \citep{mathverify} for the math datasets, which handles fractions, radicals, tuples, sets and symbolic expressions, or by option letter for MedMCQA. The same equivalence decides whether $R_t\equiv R^\star_t$. Ties between the largest classes are broken by first occurrence. Generations without an extractable answer count in the denominator of the score but cast no vote, so a question on which no generation yields an answer has score $0$ and $Y_t=0$. Questions whose ground-truth answer could not be parsed, or whose grading timed out, are removed: $58$ NuminaMath questions for the Qwen3 models, and $85$ NuminaMath and $12$ GSM8K questions for Qwen2.5-0.5B.

\paragraph{Instances.}
Figure~\ref{fig:real-data-distributions-new} shows $h^\star$ and $\omega$ for all (dataset, LLM) pairs, from which the instances are built as in Section~\ref{sec:experiments}. The regimes of Figure~\ref{fig:accuracy-2by3} extend to Qwen2.5-0.5B: its conditional accuracy is above the score on the math datasets, especially on GSM8K, and varies little with the score on MedMCQA. Across all three datasets, the self-consistency score is generally not calibrated, i.e. $h^\star \ne p$. Moreover, $h^\star$ generally is not monotone in the self-consistency level $j$. This motivates learning $h^\star$ rather than assuming the score is calibrated.

\begin{figure*}[htb!]
    \centering
    \includegraphics[width=\linewidth]{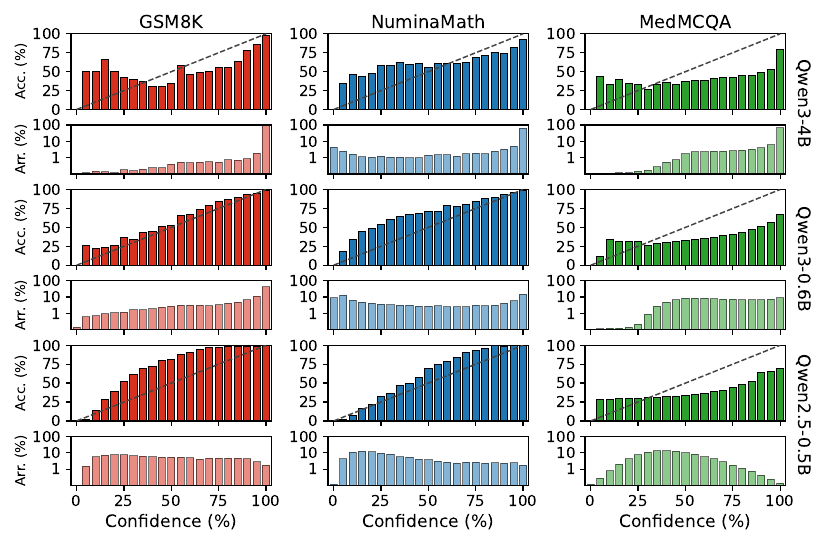}
    \caption{Conditional accuracy (top) and arrival rates (bottom) of the self-consistency score. The dashed line represents a perfectly calibrated score.}
    \label{fig:real-data-distributions-new}
\end{figure*}

\clearpage
\section{Baselines and implementation details}
\label{app:policy-implementations}

The costs $(c_e,c_{\rm exp},c_{\rm hold})$, the service distribution $\rho_C$ and the queue capacity $\bar B$ are known to all algorithms. In the synthetic experiments, $\omega$ is known. In the real-world experiments it is unknown, and every learning algorithm that needs it uses the empirical frequencies $\hat\omega_{j,t}=O_{j,t}/t$, where $O_{j,t}=\sum_{s\le t}\mathbf 1\{J_s=j\}$ counts the arrivals at self-confidence level $j$ up to round $t$. Let $N_{j,t}$ and $C_{j,t}$ be the number of level-$j$ labels received before the decision of round $t$ and the number of correct ones among them, and $\widehat h_{j,N_{j,t}}=(C_{j,t}/N_{j,t})\mathbf 1\{N_{j,t}>0\}$ their empirical mean.

\paragraph{Plug-in algorithms (\texttt{KL-PBD}, \texttt{KL-PBD+}, \texttt{GBD}, \texttt{NBD}).}
These four algorithms share the decision rule of \texttt{KL-PBD}, $A_t=\mathbf 1\{c_e(1-h_{J_t,t})>M^{h_t}_t(B_t)\}$, and differ only in the conditional accuracy vector $h_t$ plugged into the marginal cost of deferral $M^{h_t}_t$:
\begin{itemize}
\item \texttt{KL-PBD}: the KL lower confidence bound $\underline h_{j,t}$ with confidence budget $\phi(T)=\log T+\log\log T$;
\item \texttt{KL-PBD+}: the same bound with the level-dependent budget $\log(O_{j,t}/N_{j,t})$ in place of $\phi(T)$, in the spirit of the KL-UCB+ family of algorithms \citep{garivierkl2011,achab2018profitable};
\item \texttt{GBD} (Greedy Bellman Deferral): the empirical mean $\widehat h_{j,N_{j,t}}$;
\item \texttt{NBD} (Naive Bellman Deferral): the surrogate $h=p$, i.e.\ the score is taken as calibrated and nothing is learned about $h^\star$.
\end{itemize}
With known $\omega$, the finite-horizon recursion is re-solved whenever a new label arrives (never for \texttt{NBD}). With unknown $\omega$, it is re-solved at every round with the current $(h_t,\hat\omega_t)$, for \texttt{NBD} included. The marginal costs are computed exactly by backward induction over the remaining horizon, with the terminal draining cost $D(b)$ of Section~\ref{sec:problem-formulation}.

\paragraph{Fixed-threshold policies (\texttt{FCT}, \texttt{FQT}).}
Fixed Confidence Threshold defers iff $p_{J_t}<\tau^{\rm conf}$; Fixed Queue Threshold defers iff $B_t<\tau^{\rm queue}_{J_t}$ for a threshold vector $\tau^{\rm queue}\in\{0,\ldots,\bar B\}^K$. Both are tuned once before the run to minimize the expected cost under the surrogate $h=p$, computed exactly by dynamic programming. When $\omega$ is unknown the uniform distribution $\omega_j=1/K$ is used in place of $\omega$. Since only the ordering of $\{p_1,\ldots,p_K\}$ matters for \texttt{FCT}, its $K+1$ candidate policies are enumerated exhaustively. For \texttt{FQT}, exhaustive search over $\{0,\ldots,\bar B\}^K$ is infeasible, so we use coordinate descent: each threshold $\tau_j^{\rm queue}$ is optimized over $\{0,\ldots,\bar B\}$ with the others fixed, cycling over $j=1,\ldots,K$ until the vector no longer changes. We run this from several deterministic initializations and keep the best result, which need not be the global optimum.

\paragraph{\texttt{SALMUT-SF}.}
We adapt \texttt{SALMUT} \citep{roy2021online}, which defers a level-$j$ query iff $B_t<\tau_j$, with one queue threshold $\tau_j\in[0,\bar B]$ per level, initialized at $\tau_j=0$. In the setting of \citet{roy2021online}, $h$ would be known. In our case, we estimate it with $\widehat h_{j,N_{j,t}}$.
Let $b=B_t$, $j=J_t$, $a=A_t$ and $b'=B_{t+1}$. At the end of round $t$, after service and label processing, the relative-value critic, initialized at $V(b)=0$, is updated at the pre-decision queue state $b$, including when $b=\bar B$, with step size $\alpha_{n_b}=(\lfloor n_b/100\rfloor+2)^{-0.6}$ where $n_b$ counts the updates at state $b$:
\[
V(b)\leftarrow V(b)+\alpha_{n_b}\bigl[c_{\rm hold}b+c_{\rm exp}a+c_e(1-\widehat h_{j,N_{j,t+1}})(1-a)+V(b')-V(0)-V(b)\bigr].
\]
For the actor, we retain \texttt{SALMUT}'s sigmoid interpolation $f(b,\tau)=1/(1+e^{-(b-\tau-1/2)})$. Note that unlike \cite{roy2021online}, the transition probabilities are assumed known in our setting, so they do not need to be estimated. Therefore, after the critic update, for $b<\bar{B}$, we estimate the advantage function via the updated critic,
$\hat Q_1-\hat Q_0=c_{\rm exp}-c_e(1-\widehat h_{j,N_{j,t+1}})+(P_1V)(b)-(P_0V)(b)$, where $(P_aV)(b)=\sum_{b'}P_a(b,b')V(b')$, and for $b<\bar B$ we update the threshold of the arriving level as
\[
\tau_j\leftarrow\Pi_{[0,\bar B]}\Bigl[\tau_j-\tfrac{10}{t+100}\,f(b,\tau_j)\bigl(1-f(b,\tau_j)\bigr)(\hat Q_1-\hat Q_0)\Bigr].
\]
 We do not use SALMUT's ordering constraint on the thresholds, since conditional accuracies need not be ordered in the score. When the threshold rule accepts, we force a feasible deferral with probability $\varepsilon_t=\min\{.5,\,2/\sqrt t\}$; without it a level can stop producing the labels needed to revise an optimistic threshold due to the selective nature of the feedback in our problem (this failure mode is explored in Appendix \ref{app:gbd-pessimism}). The same hyperparameters are used in every instance.

\paragraph{Queue-unaware bandit algorithms (\texttt{KL-UCB-4P}, \texttt{KL-UCB+-4P}).}
Under immediate service ($\mu=1$), the queue stays empty and $M_t(0)=c_{\rm exp}$. The problem then reduces to a separate learning task for each self-confidence level: at each round the level $J_t=j$ is revealed and the learner faces a two-armed bandit with one known arm. Indeed, one can equivalently subtract the cost $c_e(1-Y_t)$ of accepting from both actions, so that accepting has reward $0$, while deferring has reward $c_e(1-Y_t)-c_{\rm exp}$, with unknown mean $c_e(1-h^\star_j)-c_{\rm exp}$, observed only when the query is deferred. This is close to the profitable-bandit problem of \citet{achab2018profitable}, played independently at each level, which motivates a comparison with their algorithms. We adapt their algorithms to our setting, so that they defer 
iff $c_e\bar q_{j,t}\ge c_{\rm exp}$, ignoring the queue except when it is full, where $\bar q_{j,t}$ is the KL upper confidence bound on the LLM error probability $1-h^\star_j$, computed from $1-\widehat h_{j,N_{j,t}}$ with budget $\log O_{j,t}+3\log\log O_{j,t}$ for \texttt{KL-UCB-4P} and $\log(O_{j,t}/N_{j,t})$ for \texttt{KL-UCB+-4P}, using the level clock $O_{j,t}$ in place of $t$, with unlabeled levels having $\bar q_{j,t}=1$ so that they are deferred. Equivalently, up to ties, they follow the decision rule of \texttt{KL-PBD} with $M^{h_t}_t(B_t)$ replaced by $c_{\rm exp}$, its value under immediate service: \texttt{KL-UCB+-4P} is then \texttt{KL-PBD+}, and \texttt{KL-UCB-4P} is \texttt{KL-PBD} with budget $\log O_{j,t}+3\log\log O_{j,t}$ in place of $\phi(T)$. When $\mu<1$, labels arrive only once a deferred query is served, and the algorithms use the labels available at each round.

\paragraph{Simulation protocol.}
For every run, all algorithms see the same arrivals $(J_t)$, latent labels $(Y_t)$ and service capacities $(C_t)$, and keep separate first-in, first-out (FIFO) queues. A label completed after the decision of round $t$ is available from round $t+1$. We compute the regret as $\hat R_T=\sum_{t\le T,\,B_t<\bar B}|L^\star(J_t)-M^\star_t(B_t)|\mathbf 1\{A_t\neq A^\star_t\}$ (Lemma~\ref{lem:regret-decomposition}), averaged over independent runs with pointwise $95\%$ normal confidence intervals. The code used for our experiments is provided in the supplementary files.

\clearpage
\section{Synthetic experiments}\label{app:synthetic-experiments}

\subsection{Effect of $K$ and of the oracle gap on \texttt{KL-PBD} regret}\label{app:exp-immediate-service}

In this experiment, we examine how the number of levels, the separation between the accuracies and the oracle threshold, and the arrival distribution $\omega$ affect \texttt{KL-PBD}'s regret. We use the two-point instance of Appendix~\ref{app:synthetic-datasets} (top row of Figure~\ref{fig:synthetic-instances}) with $\mu=1$, $c_e=1$ and $c_{\rm exp}=0.5$, so that the oracle threshold is $\tau=1/2$: the oracle defers the levels $j$ with $h^\star_j=0.5-\delta$ and accepts those with $h^\star_j=0.5+\delta$. All experiments use $T=2\cdot10^5$ and $100$ runs.

\begin{figure*}[htb!]
    \centering
    \includegraphics[width=\textwidth]{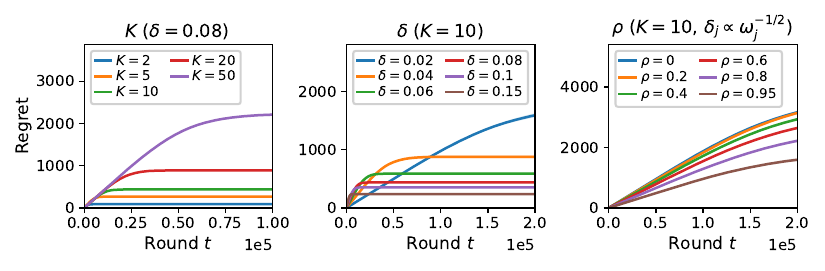}
    \caption{Regret of \texttt{KL-PBD} on the two-point instance for different $K$ with $\delta=0.08$ (left, shown up to $t=10^5$) and different $\delta$ with $K=10$ (center), and for different arrival distributions with $K=10$ and $h^\star_j=0.5+\sqrt{\phi(T)/(2\omega_jT)}$ (right). $95\%$ confidence intervals over 100 runs are displayed as bands but are too small to be visible.}
    \label{fig:KanddeltavsT}
\end{figure*}

Figure~\ref{fig:KanddeltavsT} shows that regret increases with $K$, since the learner must acquire more expert feedback to identify the correct action in a larger number of levels. The dependence on $\delta$ reflects the usual gap-dependent learning trade-off. Larger gaps make an incorrect deferral more costly initially, but they are statistically easier to distinguish from the oracle threshold, so the learner identifies the optimal action sooner and the regret curves flatten earlier. Conversely, when $\delta$ is small, the conditional accuracies lie closer to the threshold and require substantially more feedback to distinguish, leading to larger regret over the range of the figure.

Under immediate service, \texttt{KL-PBD} defers level $j$ until $N_{j,t}\,\mathrm{kl}(\widehat h_{j,N_{j,t}},1/2)\ge\phi(T)$. A level with $h^\star_j=0.5+\delta$ therefore needs about $\phi(T)/(2\delta^2)$ labels before it is accepted, each deferral costing $\delta$, while a level with $h^\star_j=0.5-\delta$ is deferred, as by the oracle. Each level above the threshold thus contributes about $\min\{\delta\,\omega_jT,\ \phi(T)/(2\delta)\}$ to the regret.

Table~\ref{tab:k_delta} complements Figure~\ref{fig:KanddeltavsT} by reporting the final regret at $T=2\cdot10^5$. The results show that regret increases significantly with $K$, confirming the additional learning burden associated with a larger number of levels. At $\delta=0.08$, every oracle decision is learned well before $T$, and each level above the threshold adds about $\phi(T)/(2\delta)\approx90$ to the regret. For every value of $K$ except $K=50$, regret decreases as $\delta$ increases; we explain the exception below.

\begin{table}[htb!]
\centering
\small
\begin{tabular}{|c|c|c|c|c|c|c|}
\hline
$K$ & $\delta=0.02$ & $\delta=0.04$ & $\delta=0.06$ & $\delta=0.08$ & $\delta=0.10$ & $\delta=0.15$ \\
\hline
2 & $334 \pm 23$ & $179 \pm 13$ & $121 \pm 9$ & $87 \pm 7$ & $71 \pm 5$ & $50 \pm 3$ \\
\hline
5 & $1085 \pm 51$ & $555 \pm 21$ & $357 \pm 13$ & $267 \pm 12$ & $209 \pm 9$ & $138 \pm 6$ \\
\hline
10 & $1592 \pm 36$ & $878 \pm 28$ & $589 \pm 20$ & $439 \pm 13$ & $353 \pm 12$ & $235 \pm 7$ \\
\hline
20 & $1948 \pm 11$ & $1757 \pm 44$ & $1181 \pm 29$ & $889 \pm 17$ & $706 \pm 15$ & $462 \pm 11$ \\
\hline
50 & $1997 \pm 2$ & $3564 \pm 30$ & $2966 \pm 42$ & $2225 \pm 34$ & $1789 \pm 27$ & $1168 \pm 14$ \\
\hline
\end{tabular}
\caption{Final regret $R_T$ of \texttt{KL-PBD} at $T=2\cdot10^5$ for different $K$ and $\delta$, mean $\pm$ 95\% confidence interval half-width over 100 runs.}
\label{tab:k_delta}
\end{table}

\begin{table}[htb!]
\centering
\small
\setlength{\tabcolsep}{4pt}
\begin{tabular}{|c|c|c|c|c|c|c|}
\hline
$\delta$ & 0.0025 & 0.005 & 0.01 & 0.015 & $0.0192^{\dagger}$ & 0.025 \\
\hline
$R_T$ & $250 \pm 0$ & $500 \pm 0$ & $998 \pm 3$ & $1442 \pm 16$ & $1596 \pm 32$ & $1396 \pm 43$ \\
\hline
\hline
$\delta$ & 0.03 & 0.04 & 0.06 & 0.08 & 0.10 & 0.15 \\
\hline
$R_T$ & $1189 \pm 41$ & $878 \pm 28$ & $589 \pm 20$ & $439 \pm 13$ & $353 \pm 12$ & $235 \pm 7$ \\
\hline
\end{tabular}
\caption{Final regret $R_T$ of \texttt{KL-PBD} at $T=2\cdot10^5$ for $K=10$ and different $\delta$, mean $\pm$ 95\% confidence interval half-width over 100 runs. $\dagger$: $\delta^\star=\sqrt{\phi(T)K/(2T)}\approx0.019$.}
\label{tab:delta_sweep2}
\end{table}

Table~\ref{tab:delta_sweep2} further illustrates the non-monotone dependence of finite-horizon regret on the oracle gap $\delta$. For very small $\delta$, distinguishing which side of the oracle threshold the accuracy is requires many observations, but the regret incurred by an incorrect decision is itself only $O(\delta)$, so the regret is at most of order $\delta T$. In this case, \texttt{KL-PBD} does not have enough time to learn the oracle decisions, and $R_T\approx\delta T/2$. For larger $\delta$, the levels are statistically easier to distinguish and the regret decreases roughly as $K\phi(T)/(4\delta)$. Consequently, the largest regret is observed near the crossover between these two regimes, $\delta^\star=\sqrt{\phi(T)K/(2T)}\approx0.019$ for $K=10$. Since $\delta^\star$ grows with $K$ ($\delta^\star\approx0.043$ for $K=50$), the regret for $K=50$ is lower at $\delta=0.02$ than at $\delta=0.04$ (Table~\ref{tab:k_delta}).

The right panel examines the effect of non-uniform arrivals with $K=10$. We parameterize
\[
\omega_1=\frac{1}{K}+\rho\left(1-\frac{1}{K}\right),
\qquad
\omega_j=\frac{1-\rho}{K},\quad j=2,\ldots,K,
\]
so that $\rho=0$ corresponds to uniform arrivals and increasing $\rho$ concentrates probability mass on a single level. We set $h^\star_j=0.5+\delta_j$ with $\delta_j=\sqrt{\phi(T)/(2\omega_jT)}$ for every level. Unlike in the two-point instance, all levels lie above the threshold: a level below it is deferred by both \texttt{KL-PBD} and the oracle and adds no regret, so this makes every level contribute. Level $j$ receives about $\omega_jT$ arrivals and needs about $\phi(T)/(2\delta_j^2)=\omega_jT$ labels before it is accepted, so with this gap \texttt{KL-PBD} starts accepting level $j$ at about the horizon. For uniform arrivals, $\delta_j=\delta^\star$.
Each level then contributes about $\delta_j\,\omega_jT=\sqrt{\phi(T)\,\omega_jT/2}$ to the regret. Regret is largest under the uniform distribution and decreases as the arrival mass becomes more concentrated. This ordering is consistent with the dependence on $\omega$ in Theorem~\ref{thm:pbd-regret}, whose leading term is $\widetilde O\bigl(\sqrt{T}\sum_{j=1}^K\sqrt{\omega_j}\bigr)$ and is maximized when $\omega_j=1/K$ for $j \in[K]$. Quantitatively, $R_T/(\sqrt T\sum_{j=1}^K\sqrt{\omega_j})$ stays between $2.21$ and $2.29$ for all $\rho$.

\subsection{Effect of service rate and cost parameters}\label{app:exp-service-cost}

In this experiment we use the monotone instance (bottom row of Figure~\ref{fig:synthetic-instances}) in order to evaluate how \texttt{KL-PBD} is affected by the expert service rate. We set the queue capacity to $\bar{B}=20$, the LLM error cost to $c_e=2$, the expert-review cost to $c_{\rm exp}=0.25$, and the holding cost to $c_{\rm hold}=0.1$. We vary the expert service rate over $\mu\in\{0.2,0.4,0.5,0.6,0.8,1.0\}$, where in each round the expert completes a pending query with probability $\mu$.

For each service rate, we run the system for $T=50{,}000$ rounds and average results over 20 independent runs. We report the average queue size, the cumulative regret relative to the oracle, the response accuracy, i.e., the fraction of queries answered correctly, counting deferred queries as correct, and the average threshold, i.e., the time average of $\widehat\tau_t(B_t)$.

\begin{table}[h]
\centering
\caption{Finite-horizon KL-PBD performance across service rates $\mu$ with $\bar B=20$. Results are mean $\pm$ 95\% confidence interval half-width over 20 runs.} \label{tab:KL-PBD-mu-sweep-finite}
\small
\begin{tabular}{|c|c|c|c|c|}
\hline
Service rate $\mu$ & Average queue size & Cumulative regret $R_T$ & Response accuracy & Average threshold \\
\hline
0.20 & $0.74 \pm 0.01$ & $194.06 \pm 14.96$ & $0.7710 \pm 0.0010$ & $0.2663 \pm 0.0024$ \\
\hline
0.40 & $0.70 \pm 0.01$ & $233.59 \pm 30.80$ & $0.8445 \pm 0.0008$ & $0.5130 \pm 0.0030$ \\
\hline
0.50 & $0.74 \pm 0.00$ & $115.49 \pm 7.42$ & $0.8856 \pm 0.0005$ & $0.5716 \pm 0.0013$ \\
\hline
0.60 & $0.55 \pm 0.01$ & $135.36 \pm 11.75$ & $0.9070 \pm 0.0006$ & $0.6579 \pm 0.0018$ \\
\hline
0.80 & $0.37 \pm 0.00$ & $301.17 \pm 19.26$ & $0.9603 \pm 0.0011$ & $0.7632 \pm 0.0010$ \\
\hline
1.00 & $0.00 \pm 0.00$ & $80.61 \pm 10.91$ & $0.9937 \pm 0.0001$ & $0.8750 \pm 0.0000$ \\
\hline
\end{tabular}
\end{table}

Table~\ref{tab:KL-PBD-mu-sweep-finite} shows that increasing the expert service rate generally makes deferral more attractive. As \(\mu\) increases, deferred queries are processed more quickly, reducing the congestion cost associated with sending additional jobs to the expert. Consistent with this behavior, the average threshold increases from \(0.2663\) at \(\mu=0.2\) to \(0.8750=1-c_{\rm exp}/c_e\) at \(\mu=1.0\), while response accuracy improves substantially from \(0.7710\) to \(0.9937\). The average queue size remains below one across all service rates, falls from \(0.74\) at \(\mu=0.5\) to \(0.37\) at \(\mu=0.8\), and reaches zero at \(\mu=1.0\), where deferred jobs are cleared immediately.

Cumulative regret remains non-monotone in \(\mu\), ranging from \(80.61\) to \(301.17\) across the tested service rates. In particular, regret is relatively high at \(\mu=0.4\) and reaches its maximum at \(\mu=0.8\), despite the corresponding improvements in response accuracy and queue length. This reflects the fact that each service rate induces a different control problem and oracle policy. At \(\mu=1.0\), the experiment attains both its highest response accuracy, \(0.9937\), and its lowest cumulative regret, \(80.61\).
\begin{figure}[htbp]
    \centering
    \includegraphics[width=\textwidth]{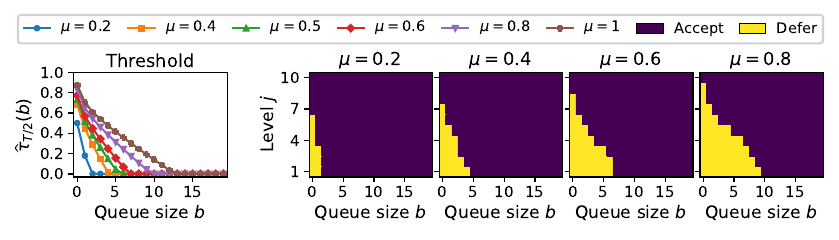}
    \caption{\texttt{KL-PBD} threshold $\widehat\tau_{T/2}(b)$ as a function of queue size $b$, averaged over the runs (left), and learned \texttt{KL-PBD} decisions, most frequent over the runs (right), at $t=T/2$ for different service rates $\mu$.}
    \label{fig:KL-PBD-policy-mu}\label{fig:KL-PBD-threshold-mu}
\end{figure}

Figure~\ref{fig:KL-PBD-policy-mu} (right) shows the policy learned by \texttt{KL-PBD}  at \(t=T/2\). It exhibits a clear threshold structure: within each level, the action changes from defer to accept once the queue size exceeds a level-dependent boundary, producing the staircase-shaped frontier visible across all panels. As the expert service rate \(\mu\) increases, the defer region expands toward larger queue sizes, since a faster expert clears deferred requests more quickly.
    
Figure~\ref{fig:KL-PBD-threshold-mu} (left) shows how the \texttt{KL-PBD} threshold $\widehat\tau_t(b)$ varies with the current queue size for different expert service rates. For every value of $\mu$, the threshold decreases as the queue grows, reflecting the increasing congestion cost of sending additional queries to the expert. Once the queue becomes sufficiently large, the threshold reaches zero, and no query is deferred. Increasing the service rate shifts the threshold curve upward and extends it over larger queue sizes. When the expert is slow, as for $\mu=0.2$, the threshold falls rapidly and reaches zero at a small queue size. In contrast, faster service allows the system to tolerate substantially more queued queries before the threshold becomes small.

We then fix $\mu=0.5$ and vary the holding cost $c_{\rm hold}\in\{0.005,0.01,0.05,0.1,0.5\}$ with $c_{\rm exp}=0.25$, and the expert-review cost $c_{\rm exp}\in\{0,0.25,1,3\}$ with $c_{\rm hold}=0.05$, over 10 independent runs of $T=50{,}000$ rounds. In these sweeps, the thresholds of \texttt{KL-PBD} are recomputed after every 50 changes of the lower bounds.

\begin{table}[htb!]
\centering
\caption{KL-PBD performance across holding and expert-review costs with $\mu=0.5$. Results are mean $\pm$ 95\% confidence interval half-width over 10 runs.}
\label{tab:klpbd-cost-ablation}
\small
\begin{tabular}{|c|c|c|c|c|c|c|}
\hline
$c_{\mathrm{hold}}$ & $c_{\mathrm{exp}}$ &
Mean queue & Defer (\%) & Accuracy (\%) &
Average threshold & Regret \\
\hline
0.005 & 0.25 & $2.840 \pm 0.057$ & $49.62 \pm 0.18$ & $91.27 \pm 0.11$ & $0.635 \pm 0.003$ & $138.20 \pm 8.81$ \\
\hline
0.010 & 0.25 & $2.106 \pm 0.059$ & $48.68 \pm 0.29$ & $90.87 \pm 0.13$ & $0.638 \pm 0.005$ & $183.55 \pm 14.20$ \\
\hline
0.050 & 0.25 & $1.042 \pm 0.027$ & $45.20 \pm 0.24$ & $89.48 \pm 0.12$ & $0.604 \pm 0.004$ & $145.86 \pm 18.98$ \\
\hline
0.100 & 0.25 & $0.751 \pm 0.007$ & $43.52 \pm 0.17$ & $88.56 \pm 0.10$ & $0.571 \pm 0.003$ & $118.58 \pm 13.78$ \\
\hline
0.500 & 0.25 & $0.229 \pm 0.005$ & $22.90 \pm 0.39$ & $81.06 \pm 0.11$ & $0.429 \pm 0.003$ & $222.55 \pm 33.59$ \\
\hline
0.050 & 0.00 & $1.221 \pm 0.010$ & $48.72 \pm 0.18$ & $89.97 \pm 0.10$ & $0.611 \pm 0.003$ & $71.57 \pm 10.97$ \\
\hline
0.050 & 1.00 & $0.264 \pm 0.011$ & $18.71 \pm 0.79$ & $80.20 \pm 0.36$ & $0.441 \pm 0.001$ & $432.32 \pm 57.30$ \\
\hline
0.050 & 3.00 & $0.000 \pm 0.000$ & $0.00 \pm 0.00$ & $67.83 \pm 0.12$ & $0.000 \pm 0.000$ & $0.00 \pm 0.00$ \\
\hline
\end{tabular}
\end{table}

Table~\ref{tab:klpbd-cost-ablation} shows that the holding cost and expert-review cost affect \texttt{KL-PBD} in distinct ways. With $c_{\mathrm{exp}}=0.25$, increasing $c_{\mathrm{hold}}$ from $0.005$ to $0.5$ reduces the mean queue from $2.84$ to $0.23$ and the deferral rate from $49.62\%$ to $22.90\%$. As fewer queries are routed to the expert, response accuracy decreases from $91.27\%$ to $81.06\%$. The average threshold decreases overall from $0.635$ to $0.429$. Regret is non-monotone because changing $c_{\mathrm{hold}}$ also changes the oracle policy.

The expert-review-cost sweep exhibits a sharper transition. With $c_{\mathrm{hold}}=0.05$, increasing $c_{\mathrm{exp}}$ from $0$ to $1$ reduces the mean queue from $1.22$ to $0.26$, the deferral rate from $48.72\%$ to $18.71\%$, and response accuracy from $89.97\%$ to $80.20\%$; the average threshold correspondingly decreases from $0.611$ to $0.441$. For $c_{\mathrm{exp}}\geq3$, \texttt{KL-PBD} does not defer, and the accuracy of $67.83\%$ is the LLM-only accuracy. This is due to the choice of cost parameters: the expected cost $c_e(1-h^\star_{J_t})\le c_e=2$ of accepting the LLM response is below the expert-review cost, so both \texttt{KL-PBD} and the oracle accept every query, yielding zero regret.


\begin{figure}[htb!]
          \includegraphics[width=\textwidth]{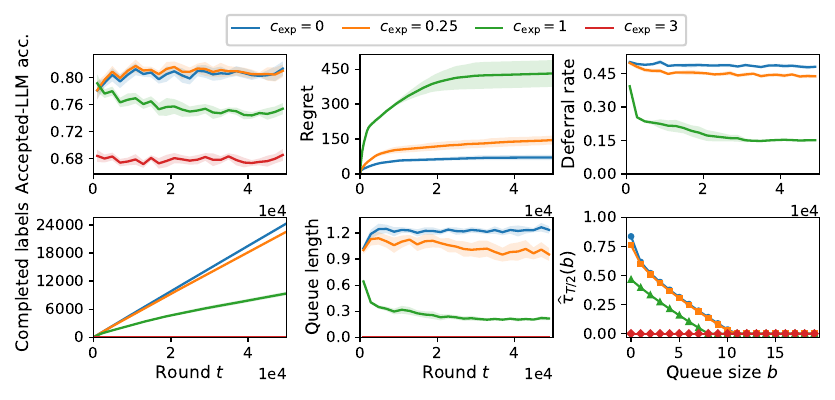}
          
    \caption{\textbf{Effect of the expert-review cost on finite-horizon \texttt{KL-PBD}.}
    We vary $c_{\mathrm{exp}}\in\{0,0.25,1,3\}$, with
    $T=50{,}000$, $\mu=0.5$, $\bar B=20$, $c_e=2$, and
    $c_{\mathrm{hold}}=0.05$.
    From the upper-left to the lower-right, the panels show accepted-LLM
    accuracy, cumulative regret, empirical deferral rate, cumulative completed
    expert labels, average queue size, and the deferral
    threshold $\widehat{\tau}_{T/2}(b)$ at $t=T/2$ as a function of queue size $b$.}
    \label{fig:expert_Queue_and_threshold_vs_T}
\end{figure}

Figure~\ref{fig:expert_Queue_and_threshold_vs_T} demonstrates how the expert-review cost
$c_{\mathrm{exp}}$ governs the operating regime of \texttt{KL-PBD}.
For $c_{\mathrm{exp}}\in\{0,0.25\}$, expert review remains sufficiently inexpensive that the policy
defers a large fraction of arriving queries, thereby maintaining a high accepted-LLM accuracy.
As $c_{\mathrm{exp}}$ increases to $1$, the policy becomes substantially more selective:
the deferral rate decreases, more queries are answered directly by the LLM, and the accuracy of the
accepted LLM responses decreases because LLM answers are also accepted at lower confidence levels. For $c_{\mathrm{exp}}\geq 3$, expert review is no
longer used, and the accepted-LLM accuracy equals the LLM accuracy.

The cumulative-regret trajectories show that the intermediate-cost regime is the most challenging. At $c_{\mathrm{exp}}=1$, expert review is still optimal in a substantial set of level--queue states, but less frequent deferrals slow the acquisition of informative expert feedback. Consequently, regret is
substantially larger than for $c_{\mathrm{exp}}\in\{0,0.25\}$.
The second row makes the underlying mechanism explicit. Increasing $c_{\mathrm{exp}}$ reduces the
number of completed expert labels, lowers the average queue size, and shifts the threshold $\widehat{\tau}_{T/2}(b)$ downward.

\begin{figure}[htb!]
          \includegraphics[width=\textwidth]{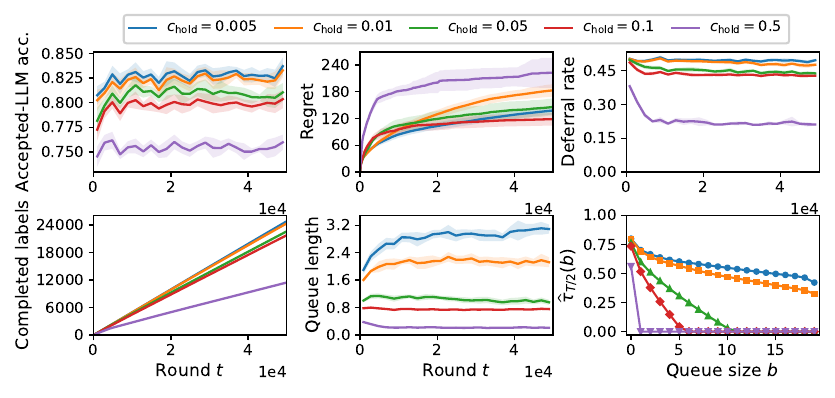}

    \caption{\textbf{Effect of the holding cost on
    \texttt{KL-PBD}.}
    We vary
    $c_{\mathrm{hold}}\in\{0.005,0.01,0.05,0.1,0.5\}$
    with $T=50{,}000$, $\mu=0.5$, $\bar B=20$, $c_e=2$, and
    $c_{\mathrm{exp}}=0.25$.
    From the upper-left to the lower-right, the panels show accepted-LLM
    accuracy, cumulative regret, empirical deferral rate, cumulative completed
    expert labels, average queue size, and the deferral
    threshold $\widehat{\tau}_{T/2}(b)$ at $t=T/2$ as a function of queue size $b$.}
    \label{fig:hold_Queue_and_threshold_vs_T}
\end{figure}

Figure~\ref{fig:hold_Queue_and_threshold_vs_T} illustrates the behavior shown in Table~\ref{tab:klpbd-cost-ablation} over time: as $c_{\mathrm{hold}}$ increases, maintaining backlog becomes more expensive, so the policy becomes increasingly selective and a larger fraction of queries is answered directly by the LLM. It also shows that the reduction in expert use limits the feedback available for learning: at low holding costs, roughly half of all queries, about $2.5\times10^4$, yield completed expert labels, compared with only about $1.1\times 10^4$ at $c_{\mathrm{hold}}=0.5$. The threshold curves provide the state-level view: for every $c_{\mathrm{hold}}$, $\widehat{\tau}_{T/2}(b)$ decreases with backlog, with the decline becoming markedly steeper as the holding cost increases. In particular, the threshold reaches zero at moderate backlog for $c_{\mathrm{hold}}=0.1$ and almost immediately for $c_{\mathrm{hold}}=0.5$, showing that higher holding costs make the deferral rule increasingly sensitive to congestion.
\begin{figure}[h!]
    \centering
      \includegraphics[width=\textwidth]{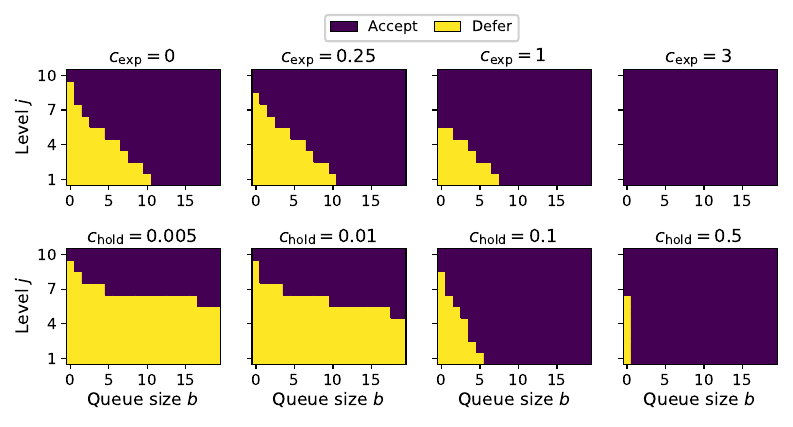}

    \caption{Learned \texttt{KL-PBD} decisions for varying $c_{\rm exp}$ and $c_{\rm hold}$, for $t=T/2$ (most frequent decision over the runs).}    \label{fig:policy-costs}
\end{figure}

Figure~\ref{fig:policy-costs} shows how the learned \texttt{KL-PBD} deferral policy changes as the expert-review and holding costs increase. In both sweeps, higher costs make deferral less attractive, so the deferral region progressively shrinks: the yellow defer region contracts while the purple accept region expands, indicating that the policy relies increasingly on the LLM and sends fewer queries to the expert.

\clearpage
\section{Additional real-world experiments and details}\label{app:real-world-details}

This section extends the experiments of Section~\ref{sec:experiments}. We first report the regret of all algorithms on the instances of Section~\ref{sec:experiments} and on additional Qwen2.5-0.5B instances, and in a separate experiment, we report how the final regret grows with the horizon. We then vary the service rate to show the benefit of queue awareness, and compare the policy learned by \texttt{KL-PBD} with the oracle policy. Finally, we show how selective feedback makes \texttt{GBD} fail, which illustrates the need for pessimism. Unless stated otherwise, the setting is that of Section~\ref{sec:experiments}. Implementation details are given in Appendix~\ref{app:policy-implementations}.

\subsection{Regret of all algorithms}
\label{app:rw-all-regret}
Table~\ref{tab:rw-final-regret} and Figure~\ref{fig:rw-all-policies} report the regret of all algorithms described in Appendix~\ref{app:policy-implementations} on all instances. \texttt{KL-PBD+} has the lowest regret in $8$ of the $9$ instances. The exception is Qwen2.5-0.5B on NuminaMath, where the score is nearly calibrated and \texttt{NBD} ($102$) is still below \texttt{KL-PBD+} ($115$) at $T=10^5$; a longer horizon is needed for \texttt{KL-PBD+} to overtake \texttt{NBD} in this instance, as we show in Table~\ref{tab:rw-horizon}. The queue-unaware algorithms \texttt{KL-UCB-4P} and \texttt{KL-UCB+-4P} incur $24$ to $450$ times the regret of \texttt{KL-PBD+} on eight instances. \texttt{KL-UCB+-4P} is only competitive on Qwen3-4B GSM8K, where the LLM accuracy averages $95\%$, which makes deferral rare and the queue almost always empty. \texttt{FCT} is the worst of the three $h=p$ surrogate algorithms on every instance.
\begin{table}[htb!]
\centering\scriptsize
\caption{Final regret at $T=10^5$, mean $\pm$ 95\% confidence half-width over 100 runs.}
\label{tab:rw-final-regret}
\resizebox{\textwidth}{!}{%
\begin{tabular}{llrrrrrrrrr}
\toprule
Model & Dataset & \texttt{KL-PBD} & \texttt{KL-PBD+} & \texttt{GBD} & \texttt{SALMUT-SF} & \texttt{KL-UCB-4P} & \texttt{KL-UCB+-4P} & \texttt{NBD} & \texttt{FCT} & \texttt{FQT} \\
\midrule
Qwen3-4B & GSM8K & 100.4 $\pm$ 3.3 & 35.4 $\pm$ 2.7 & 425.5 $\pm$ 20.5 & 531.0 $\pm$ 26.7 & 223.6 $\pm$ 5.9 & 71.3 $\pm$ 3.8 & 255.6 $\pm$ 1.3 & 756.3 $\pm$ 2.6 & 477.2 $\pm$ 1.9 \\
Qwen3-4B & NuminaMath & 424.4 $\pm$ 6.6 & 108.0 $\pm$ 4.2 & 1057.3 $\pm$ 82.0 & 1460.2 $\pm$ 41.0 & 7646.6 $\pm$ 101.8 & 4985.1 $\pm$ 156.7 & 231.9 $\pm$ 0.9 & 542.3 $\pm$ 1.2 & 195.4 $\pm$ 0.6 \\
Qwen3-4B & MedMCQA & 99.9 $\pm$ 2.9 & 40.2 $\pm$ 2.5 & 2236.7 $\pm$ 128.8 & 678.8 $\pm$ 13.8 & 7610.5 $\pm$ 191.7 & 5247.8 $\pm$ 87.7 & 2579.0 $\pm$ 4.0 & 5307.1 $\pm$ 5.4 & 4349.8 $\pm$ 5.3 \\
\midrule
Qwen3-0.6B & GSM8K & 287.9 $\pm$ 7.0 & 93.5 $\pm$ 3.5 & 1468.3 $\pm$ 110.7 & 1173.0 $\pm$ 61.8 & 4654.6 $\pm$ 60.3 & 2227.5 $\pm$ 63.9 & 146.1 $\pm$ 2.6 & 2228.8 $\pm$ 3.8 & 185.5 $\pm$ 1.0 \\
Qwen3-0.6B & NuminaMath & 451.9 $\pm$ 7.3 & 159.4 $\pm$ 5.0 & 1264.1 $\pm$ 287.7 & 3007.0 $\pm$ 68.9 & 23600.1 $\pm$ 40.1 & 21539.7 $\pm$ 57.8 & 528.7 $\pm$ 1.3 & 8414.0 $\pm$ 27.7 & 1040.3 $\pm$ 2.3 \\
Qwen3-0.6B & MedMCQA & 106.0 $\pm$ 4.5 & 75.0 $\pm$ 5.8 & 2443.2 $\pm$ 218.0 & 306.0 $\pm$ 15.4 & 21834.8 $\pm$ 18.9 & 21831.9 $\pm$ 18.8 & 1501.1 $\pm$ 3.2 & 10439.5 $\pm$ 4.1 & 4070.2 $\pm$ 4.7 \\
\midrule
Qwen2.5-0.5B & GSM8K & 314.1 $\pm$ 6.3 & 116.0 $\pm$ 3.8 & 1931.4 $\pm$ 317.9 & 1961.7 $\pm$ 43.2 & 20567.7 $\pm$ 109.6 & 17884.8 $\pm$ 107.5 & 1108.8 $\pm$ 2.4 & 3349.7 $\pm$ 20.2 & 1423.1 $\pm$ 2.4 \\
Qwen2.5-0.5B & NuminaMath & 298.5 $\pm$ 5.3 & 115.4 $\pm$ 3.6 & 1141.0 $\pm$ 296.9 & 1973.4 $\pm$ 34.0 & 23363.2 $\pm$ 34.0 & 22291.6 $\pm$ 54.5 & 102.3 $\pm$ 0.3 & 9785.0 $\pm$ 24.3 & 588.7 $\pm$ 1.7 \\
Qwen2.5-0.5B & MedMCQA & 100.6 $\pm$ 1.9 & 46.5 $\pm$ 2.2 & 2520.2 $\pm$ 168.7 & 161.6 $\pm$ 9.0 & 21041.6 $\pm$ 18.2 & 21039.9 $\pm$ 18.2 & 1365.7 $\pm$ 1.7 & 10027.7 $\pm$ 7.3 & 1003.3 $\pm$ 1.6 \\
\bottomrule
\end{tabular}}
\end{table}

\begin{figure}[htb!]
  \centering
  \includegraphics[width=\textwidth]{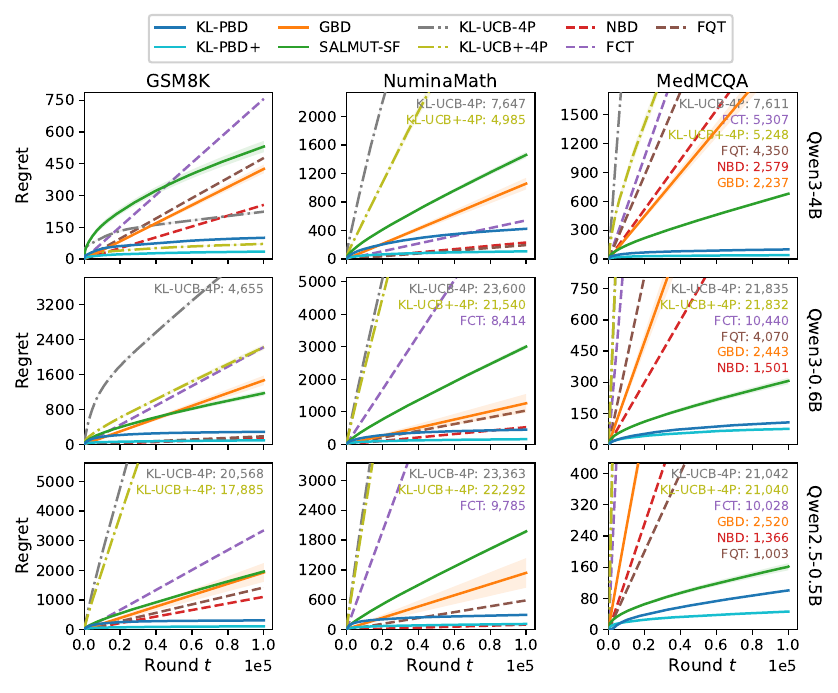}
  \caption{Regret of all algorithms, $T=10^5$, 100 runs. Truncated curves are listed with their final regret.}
  \label{fig:rw-all-policies}
\end{figure}

\subsection{Regret as a function of the horizon}
\label{app:horizon-sweep}
Figure~\ref{fig:horizon-sweep} and Table~\ref{tab:rw-horizon} report the final regret when the horizon varies in $T\in\{10^4,5\cdot10^4,10^5,2\cdot10^5\}$, averaged over 24 runs per horizon. The regret of \texttt{GBD} and of the algorithms using the $h=p$ surrogate grows linearly in $T$, by a factor of about $10$ from $T=10^4$ to $10^5$, whereas that of \texttt{KL-PBD} and \texttt{KL-PBD+} grows by a factor between $1.6$ and $4.4$. \texttt{SALMUT-SF} lies in between, with a factor between $3.4$ and $6.8$.

\begin{figure}[htb!]
  \centering
  \includegraphics[width=\textwidth]{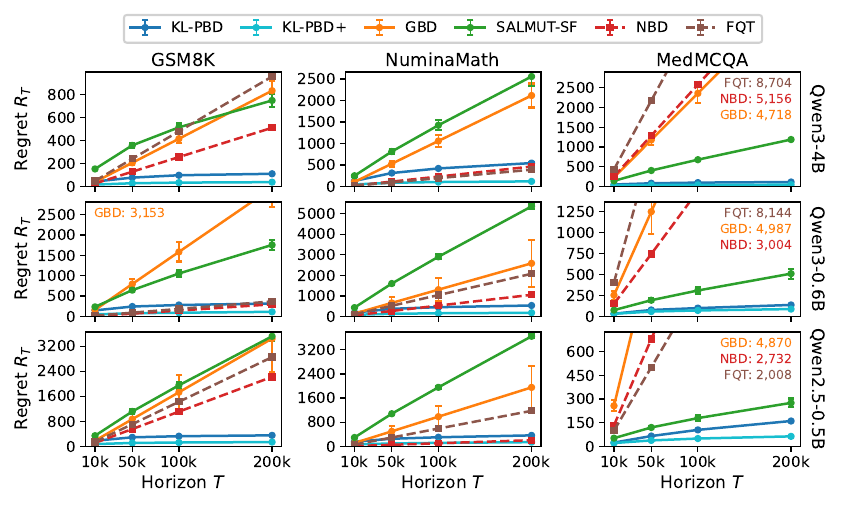}
  \caption{Final regret versus horizon $T$, 24 runs per point. Truncated curves are listed with their final regret.}
  \label{fig:horizon-sweep}
\end{figure}

\begin{table}[htb!]
\centering\scriptsize
\caption{Final regret of the horizon sweep, mean $\pm$ 95\% confidence half-width over 24 runs.}
\label{tab:rw-horizon}
\resizebox{\textwidth}{!}{%
\begin{tabular}{llrrrrrrrrrr}
\toprule
Model & Dataset & $T$ & \texttt{KL-PBD} & \texttt{KL-PBD+} & \texttt{GBD} & \texttt{SALMUT-SF} & \texttt{KL-UCB-4P} & \texttt{KL-UCB+-4P} & \texttt{NBD} & \texttt{FCT} & \texttt{FQT} \\
\midrule
Qwen3-4B & GSM8K & 10k & 43.4 $\pm$ 3.0 & 17.1 $\pm$ 1.8 & 39.9 $\pm$ 3.8 & 152.8 $\pm$ 5.5 & 97.6 $\pm$ 5.6 & 35.0 $\pm$ 3.9 & 26.4 $\pm$ 0.8 & 76.0 $\pm$ 1.6 & 48.9 $\pm$ 0.9 \\
 & & 50k & 77.5 $\pm$ 4.8 & 28.7 $\pm$ 3.4 & 206.3 $\pm$ 20.2 & 357.4 $\pm$ 20.8 & 168.8 $\pm$ 10.8 & 60.4 $\pm$ 5.0 & 128.9 $\pm$ 1.9 & 379.2 $\pm$ 3.7 & 240.8 $\pm$ 2.9 \\
 & & 100k & 98.2 $\pm$ 6.9 & 34.2 $\pm$ 4.3 & 414.6 $\pm$ 40.4 & 513.6 $\pm$ 33.7 & 221.6 $\pm$ 10.9 & 79.1 $\pm$ 6.7 & 255.7 $\pm$ 2.8 & 754.8 $\pm$ 4.1 & 478.5 $\pm$ 3.9 \\
 & & 200k & 110.8 $\pm$ 9.6 & 38.3 $\pm$ 5.2 & 831.4 $\pm$ 81.0 & 746.8 $\pm$ 52.9 & 312.0 $\pm$ 8.9 & 99.4 $\pm$ 10.0 & 510.0 $\pm$ 3.3 & 1510.5 $\pm$ 5.7 & 954.4 $\pm$ 5.2 \\
\addlinespace
Qwen3-4B & NuminaMath & 10k & 130.4 $\pm$ 4.3 & 48.8 $\pm$ 3.5 & 108.0 $\pm$ 13.6 & 250.6 $\pm$ 18.7 & 1201.9 $\pm$ 25.1 & 560.0 $\pm$ 38.4 & 22.2 $\pm$ 0.8 & 53.7 $\pm$ 0.8 & 19.3 $\pm$ 0.4 \\
 & & 50k & 314.2 $\pm$ 9.8 & 88.6 $\pm$ 7.5 & 530.3 $\pm$ 69.4 & 813.0 $\pm$ 63.7 & 4416.2 $\pm$ 167.6 & 2504.3 $\pm$ 183.6 & 115.2 $\pm$ 1.2 & 270.2 $\pm$ 1.9 & 97.0 $\pm$ 0.8 \\
 & & 100k & 420.9 $\pm$ 14.4 & 105.2 $\pm$ 9.5 & 1058.1 $\pm$ 138.5 & 1423.0 $\pm$ 112.6 & 7610.0 $\pm$ 187.2 & 4847.4 $\pm$ 311.0 & 231.9 $\pm$ 2.1 & 541.8 $\pm$ 2.6 & 195.3 $\pm$ 1.2 \\
 & & 200k & 545.1 $\pm$ 17.4 & 120.2 $\pm$ 12.2 & 2113.5 $\pm$ 278.3 & 2554.9 $\pm$ 210.0 & 13963.5 $\pm$ 193.3 & 9537.6 $\pm$ 470.1 & 465.0 $\pm$ 3.1 & 1085.5 $\pm$ 2.8 & 391.6 $\pm$ 1.8 \\
\addlinespace
Qwen3-4B & MedMCQA & 10k & 52.1 $\pm$ 3.6 & 27.8 $\pm$ 3.7 & 233.5 $\pm$ 23.9 & 144.4 $\pm$ 17.1 & 2512.0 $\pm$ 96.9 & 1000.8 $\pm$ 137.4 & 257.9 $\pm$ 2.9 & 531.2 $\pm$ 4.1 & 434.8 $\pm$ 3.5 \\
 & & 50k & 83.5 $\pm$ 5.4 & 38.8 $\pm$ 4.5 & 1180.6 $\pm$ 123.7 & 407.1 $\pm$ 20.3 & 5484.8 $\pm$ 371.2 & 3073.6 $\pm$ 185.5 & 1290.9 $\pm$ 5.0 & 2656.6 $\pm$ 8.3 & 2176.5 $\pm$ 9.0 \\
 & & 100k & 100.2 $\pm$ 5.6 & 43.5 $\pm$ 5.5 & 2360.2 $\pm$ 247.8 & 680.5 $\pm$ 20.9 & 7911.0 $\pm$ 395.9 & 5541.9 $\pm$ 205.0 & 2579.9 $\pm$ 8.1 & 5311.9 $\pm$ 13.6 & 4352.6 $\pm$ 13.7 \\
 & & 200k & 115.2 $\pm$ 7.5 & 46.8 $\pm$ 6.4 & 4717.8 $\pm$ 492.6 & 1190.8 $\pm$ 20.8 & 12638.8 $\pm$ 424.3 & 10127.5 $\pm$ 206.9 & 5155.7 $\pm$ 11.9 & 10622.1 $\pm$ 16.4 & 8703.8 $\pm$ 18.7 \\
\midrule
Qwen3-0.6B & GSM8K & 10k & 148.9 $\pm$ 4.4 & 51.6 $\pm$ 4.1 & 161.7 $\pm$ 22.6 & 235.9 $\pm$ 16.0 & 1272.3 $\pm$ 56.5 & 367.5 $\pm$ 32.6 & 10.8 $\pm$ 1.7 & 225.2 $\pm$ 2.8 & 18.6 $\pm$ 0.5 \\
 & & 50k & 243.6 $\pm$ 9.7 & 77.3 $\pm$ 5.2 & 796.3 $\pm$ 117.2 & 645.9 $\pm$ 50.8 & 2951.3 $\pm$ 93.1 & 1217.7 $\pm$ 86.3 & 68.5 $\pm$ 4.7 & 1115.9 $\pm$ 5.6 & 93.0 $\pm$ 1.6 \\
 & & 100k & 281.0 $\pm$ 13.3 & 92.2 $\pm$ 6.4 & 1583.5 $\pm$ 235.0 & 1053.5 $\pm$ 81.6 & 4689.9 $\pm$ 144.5 & 2174.1 $\pm$ 138.4 & 144.4 $\pm$ 6.2 & 2227.4 $\pm$ 5.5 & 184.7 $\pm$ 1.8 \\
 & & 200k & 317.6 $\pm$ 13.1 & 113.6 $\pm$ 8.6 & 3153.0 $\pm$ 471.6 & 1755.3 $\pm$ 130.5 & 7439.0 $\pm$ 290.3 & 4039.5 $\pm$ 220.3 & 297.8 $\pm$ 6.7 & 4453.4 $\pm$ 9.6 & 369.6 $\pm$ 3.1 \\
\addlinespace
Qwen3-0.6B & NuminaMath & 10k & 188.2 $\pm$ 5.3 & 80.1 $\pm$ 4.5 & 140.7 $\pm$ 56.9 & 429.8 $\pm$ 17.2 & 2638.2 $\pm$ 21.5 & 2297.8 $\pm$ 31.9 & 54.4 $\pm$ 1.4 & 840.0 $\pm$ 19.2 & 104.1 $\pm$ 1.5 \\
 & & 50k & 372.5 $\pm$ 10.8 & 130.1 $\pm$ 8.3 & 657.6 $\pm$ 284.0 & 1600.3 $\pm$ 75.5 & 12100.9 $\pm$ 62.0 & 10916.9 $\pm$ 95.9 & 265.6 $\pm$ 2.5 & 4212.3 $\pm$ 49.2 & 518.8 $\pm$ 3.4 \\
 & & 100k & 451.3 $\pm$ 14.7 & 154.9 $\pm$ 9.5 & 1302.7 $\pm$ 568.8 & 2901.5 $\pm$ 121.0 & 23607.2 $\pm$ 76.8 & 21467.3 $\pm$ 128.5 & 529.2 $\pm$ 2.5 & 8430.3 $\pm$ 72.0 & 1039.3 $\pm$ 4.3 \\
 & & 200k & 533.0 $\pm$ 16.7 & 179.5 $\pm$ 8.9 & 2583.8 $\pm$ 1132.9 & 5352.0 $\pm$ 140.2 & 45998.7 $\pm$ 119.2 & 42326.4 $\pm$ 161.1 & 1052.1 $\pm$ 3.6 & 16824.7 $\pm$ 83.6 & 2075.5 $\pm$ 4.8 \\
\addlinespace
Qwen3-0.6B & MedMCQA & 10k & 33.0 $\pm$ 2.2 & 31.8 $\pm$ 3.1 & 254.1 $\pm$ 52.3 & 75.7 $\pm$ 5.8 & 2177.1 $\pm$ 14.4 & 2176.2 $\pm$ 14.5 & 151.1 $\pm$ 2.2 & 1042.4 $\pm$ 3.3 & 407.0 $\pm$ 3.8 \\
 & & 50k & 77.3 $\pm$ 7.8 & 58.6 $\pm$ 5.7 & 1251.6 $\pm$ 263.9 & 194.0 $\pm$ 23.8 & 10903.5 $\pm$ 32.8 & 10902.6 $\pm$ 33.0 & 746.2 $\pm$ 5.1 & 5214.8 $\pm$ 7.0 & 2028.9 $\pm$ 8.6 \\
 & & 100k & 100.0 $\pm$ 7.3 & 71.9 $\pm$ 7.9 & 2500.1 $\pm$ 527.2 & 310.2 $\pm$ 37.4 & 21829.0 $\pm$ 43.5 & 21828.1 $\pm$ 43.8 & 1499.8 $\pm$ 6.2 & 10436.3 $\pm$ 10.7 & 4068.7 $\pm$ 11.0 \\
 & & 200k & 137.5 $\pm$ 8.0 & 88.3 $\pm$ 9.6 & 4986.9 $\pm$ 1060.1 & 511.4 $\pm$ 60.3 & 43671.7 $\pm$ 54.5 & 43670.7 $\pm$ 54.7 & 3004.1 $\pm$ 9.2 & 20877.6 $\pm$ 15.3 & 8144.1 $\pm$ 14.6 \\
\midrule
Qwen2.5-0.5B & GSM8K & 10k & 179.5 $\pm$ 4.3 & 72.3 $\pm$ 3.8 & 181.7 $\pm$ 55.0 & 339.6 $\pm$ 25.5 & 2576.5 $\pm$ 21.3 & 2008.0 $\pm$ 37.7 & 111.4 $\pm$ 1.6 & 334.3 $\pm$ 13.2 & 141.2 $\pm$ 2.3 \\
 & & 50k & 288.3 $\pm$ 13.4 & 108.2 $\pm$ 9.1 & 867.6 $\pm$ 270.1 & 1113.6 $\pm$ 70.8 & 11065.3 $\pm$ 71.7 & 9117.7 $\pm$ 159.8 & 552.6 $\pm$ 4.0 & 1660.7 $\pm$ 32.3 & 708.4 $\pm$ 4.6 \\
 & & 100k & 324.3 $\pm$ 15.4 & 122.4 $\pm$ 10.3 & 1727.4 $\pm$ 539.4 & 1949.2 $\pm$ 104.2 & 20651.6 $\pm$ 189.5 & 17814.5 $\pm$ 290.7 & 1108.4 $\pm$ 6.0 & 3339.7 $\pm$ 40.0 & 1422.7 $\pm$ 5.7 \\
 & & 200k & 351.4 $\pm$ 16.3 & 137.1 $\pm$ 11.0 & 3442.9 $\pm$ 1078.1 & 3515.2 $\pm$ 144.5 & 38398.9 $\pm$ 281.5 & 35172.1 $\pm$ 529.0 & 2218.8 $\pm$ 7.5 & 6670.6 $\pm$ 57.6 & 2850.4 $\pm$ 7.4 \\
\addlinespace
Qwen2.5-0.5B & NuminaMath & 10k & 156.4 $\pm$ 6.3 & 62.3 $\pm$ 3.5 & 111.8 $\pm$ 34.8 & 292.8 $\pm$ 11.0 & 2574.9 $\pm$ 18.1 & 2321.4 $\pm$ 20.7 & 10.2 $\pm$ 0.2 & 992.4 $\pm$ 9.1 & 58.6 $\pm$ 1.1 \\
 & & 50k & 251.7 $\pm$ 9.8 & 96.0 $\pm$ 4.8 & 497.9 $\pm$ 178.8 & 1081.2 $\pm$ 33.7 & 11909.0 $\pm$ 44.4 & 11218.1 $\pm$ 64.4 & 51.2 $\pm$ 0.4 & 4919.7 $\pm$ 37.2 & 293.1 $\pm$ 2.9 \\
 & & 100k & 301.2 $\pm$ 9.5 & 115.3 $\pm$ 6.9 & 983.2 $\pm$ 357.0 & 1956.1 $\pm$ 47.5 & 23287.7 $\pm$ 72.8 & 22275.3 $\pm$ 117.0 & 102.3 $\pm$ 0.5 & 9813.7 $\pm$ 47.3 & 589.1 $\pm$ 4.1 \\
 & & 200k & 363.9 $\pm$ 10.4 & 138.4 $\pm$ 9.6 & 1954.2 $\pm$ 715.0 & 3648.0 $\pm$ 66.5 & 45885.3 $\pm$ 140.9 & 44312.7 $\pm$ 181.2 & 204.3 $\pm$ 0.8 & 19570.7 $\pm$ 60.9 & 1177.4 $\pm$ 5.0 \\
\addlinespace
Qwen2.5-0.5B & MedMCQA & 10k & 23.9 $\pm$ 2.3 & 21.0 $\pm$ 3.2 & 256.9 $\pm$ 34.9 & 51.9 $\pm$ 4.3 & 2096.5 $\pm$ 13.5 & 2096.1 $\pm$ 13.5 & 133.7 $\pm$ 1.4 & 1002.7 $\pm$ 4.0 & 99.8 $\pm$ 0.9 \\
 & & 50k & 65.2 $\pm$ 3.2 & 37.5 $\pm$ 4.9 & 1230.6 $\pm$ 176.7 & 119.6 $\pm$ 11.9 & 10502.6 $\pm$ 32.6 & 10501.6 $\pm$ 32.8 & 679.3 $\pm$ 2.8 & 5012.3 $\pm$ 10.4 & 500.6 $\pm$ 2.1 \\
 & & 100k & 104.3 $\pm$ 3.6 & 49.5 $\pm$ 5.8 & 2444.1 $\pm$ 351.8 & 178.9 $\pm$ 18.1 & 21033.0 $\pm$ 41.9 & 21031.6 $\pm$ 42.3 & 1362.7 $\pm$ 3.8 & 10027.7 $\pm$ 15.4 & 1001.6 $\pm$ 3.4 \\
 & & 200k & 160.5 $\pm$ 6.2 & 63.2 $\pm$ 6.9 & 4870.1 $\pm$ 706.8 & 275.3 $\pm$ 27.7 & 42084.4 $\pm$ 54.6 & 42082.6 $\pm$ 55.1 & 2732.4 $\pm$ 6.3 & 20040.8 $\pm$ 19.8 & 2008.3 $\pm$ 6.4 \\
\bottomrule
\end{tabular}}
\end{table}

\subsection{Service rate and queue-unaware algorithms}
\label{app:service-rate}

We rerun the Qwen3-4B MedMCQA instance with service rates $\mu\in\{.2,.35,.5,.7,1\}$, $T=5\cdot10^4$ and $24$ runs, with \texttt{FCT} and \texttt{FQT} retuned for each $\mu$ (Figure~\ref{fig:rw-mu} and Table~\ref{tab:rw-mu}). At $\mu=1$ the queue stays empty and $M_t(0)=c_{\rm exp}$. The regret of queue-aware and queue-unaware algorithms then coincides pairwise, \texttt{KL-PBD+} with \texttt{KL-UCB+-4P} ($41.9$) and \texttt{NBD} with \texttt{FCT} and \texttt{FQT} ($2{,}511$), while \texttt{KL-PBD} and \texttt{KL-UCB-4P} differ in regret only due to the difference in their confidence budgets ($113$ against $158$). For $\mu<1$, \texttt{KL-UCB-4P} and \texttt{KL-UCB+-4P} keep deferring regardless of the queue: at $\mu=.2$ it is full in $41$ to $56\%$ of the rounds, and their regret exceeds $200$ times that of \texttt{KL-PBD+}. The regret of \texttt{KL-PBD} and \texttt{KL-PBD+} varies by less than a factor $2$ across $\mu$. The regret of \texttt{NBD} grows with $\mu$, since a faster expert makes deferral optimal at more levels, which the surrogate $h=p$ misses. \texttt{FCT} and \texttt{FQT} follow the same trend up to $\mu=.7$, with an additional cost from their restricted form, which vanishes at $\mu=1$: the optimal rule under $h=p$ is then a fixed confidence cutoff, which both can represent, so they recover the decisions of \texttt{NBD}.

\begin{figure}[htb!]
  \centering
  \includegraphics[width=\textwidth]{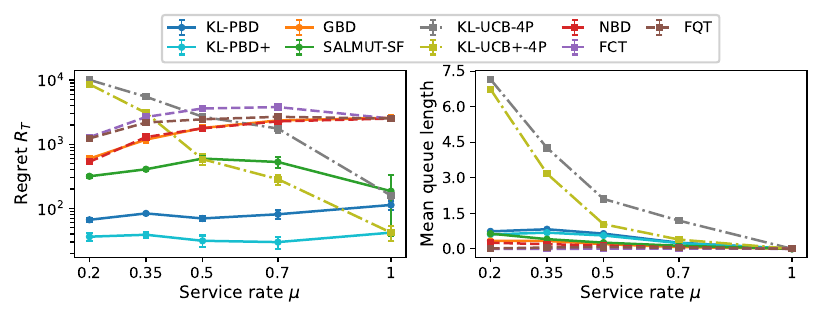}
  \caption{Service-rate sweep, $T=5\cdot10^4$, 24 runs: final regret (left) and mean queue length (right).}
  \label{fig:rw-mu}
\end{figure}

\begin{table}[htb!]
\centering\scriptsize
\caption{Final regret of the service-rate sweep, mean $\pm$ 95\% confidence half-width over 24 runs.}
\label{tab:rw-mu}
\begin{tabular}{lrrrrr}
\toprule
Algorithm & $\mu=.2$ & $\mu=.35$ & $\mu=.5$ & $\mu=.7$ & $\mu=1$ \\
\midrule
\texttt{KL-PBD} & 66.4 $\pm$ 3.9 & 83.5 $\pm$ 5.4 & 69.7 $\pm$ 7.0 & 80.9 $\pm$ 12.2 & 113.0 $\pm$ 17.2 \\
\texttt{KL-PBD+} & 36.2 $\pm$ 4.9 & 38.8 $\pm$ 4.5 & 31.2 $\pm$ 6.5 & 29.8 $\pm$ 6.4 & 41.9 $\pm$ 10.9 \\
\texttt{GBD} & 595.6 $\pm$ 57.9 & 1180.6 $\pm$ 123.7 & 1788.5 $\pm$ 207.5 & 2347.8 $\pm$ 249.4 & 2584.3 $\pm$ 259.7 \\
\texttt{SALMUT-SF} & 316.4 $\pm$ 13.8 & 407.1 $\pm$ 20.3 & 597.1 $\pm$ 58.6 & 528.1 $\pm$ 100.4 & 185.4 $\pm$ 148.1 \\
\texttt{KL-UCB-4P} & 10060.2 $\pm$ 228.5 & 5484.8 $\pm$ 371.2 & 2679.8 $\pm$ 302.3 & 1754.2 $\pm$ 252.2 & 158.2 $\pm$ 16.9 \\
\texttt{KL-UCB+-4P} & 8605.3 $\pm$ 110.2 & 3073.6 $\pm$ 185.5 & 584.5 $\pm$ 109.6 & 285.3 $\pm$ 50.8 & 41.9 $\pm$ 10.9 \\
\texttt{NBD} & 531.8 $\pm$ 3.7 & 1290.9 $\pm$ 5.0 & 1767.2 $\pm$ 5.5 & 2268.8 $\pm$ 6.6 & 2510.5 $\pm$ 7.2 \\
\texttt{FCT} & 1285.3 $\pm$ 4.7 & 2656.6 $\pm$ 8.3 & 3604.9 $\pm$ 11.1 & 3793.1 $\pm$ 13.9 & 2510.6 $\pm$ 7.2 \\
\texttt{FQT} & 1230.9 $\pm$ 4.7 & 2176.5 $\pm$ 9.0 & 2444.1 $\pm$ 8.7 & 2671.8 $\pm$ 9.1 & 2510.6 $\pm$ 7.2 \\
\bottomrule
\end{tabular}
\end{table}

\clearpage
\subsection{Learned threshold and decisions}
\label{app:policy-diagnostics}
The top row of Figure~\ref{fig:policy-maps} shows the threshold and decisions learned by \texttt{KL-PBD} for Qwen3-0.6B at $t=T/2$, from one run for each $\mu$, in the format of Figure~\ref{fig:policy-2by3}: the threshold $\widehat\tau_t(b)=1-M^{\underline h_t}_t(b)/c_e$ (top) and the decisions $\mathbf 1\{\underline h_{j,t}<\widehat\tau_t(b)\}$ for $\mu=0.35$ (bottom). The learned thresholds are within $0.09$ of the oracle ones for every $\mu$. The learned decisions differ from the oracle's due to extra deferrals caused by \texttt{KL-PBD}'s pessimism. For example, at $b=2$ on MedMCQA, \texttt{KL-PBD} defers queries with scores up to $30\%$ whereas the oracle accepts those with scores higher than $5\%$. The difference is larger for Qwen3-4B (middle and bottom rows of Figure~\ref{fig:policy-maps}). Since this LLM is more accurate, it is deferred less, so by $t=T/2$ \texttt{KL-PBD} has collected fewer labels ($2{,}649$ on GSM8K against $9{,}901$ for Qwen3-0.6B). Its lower confidence bounds are consequently looser, and it keeps deferring levels at intermediate queue sizes where the oracle no longer does ($b=3$ to $6$ on GSM8K, $b=2$ to $5$ on MedMCQA). On GSM8K, the levels still deferred at $b=7$ are those with at most $20$ labels, which produces the isolated cells in the bottom row.
\begin{figure}[htb!]
  \centering
  \includegraphics[width=\textwidth]{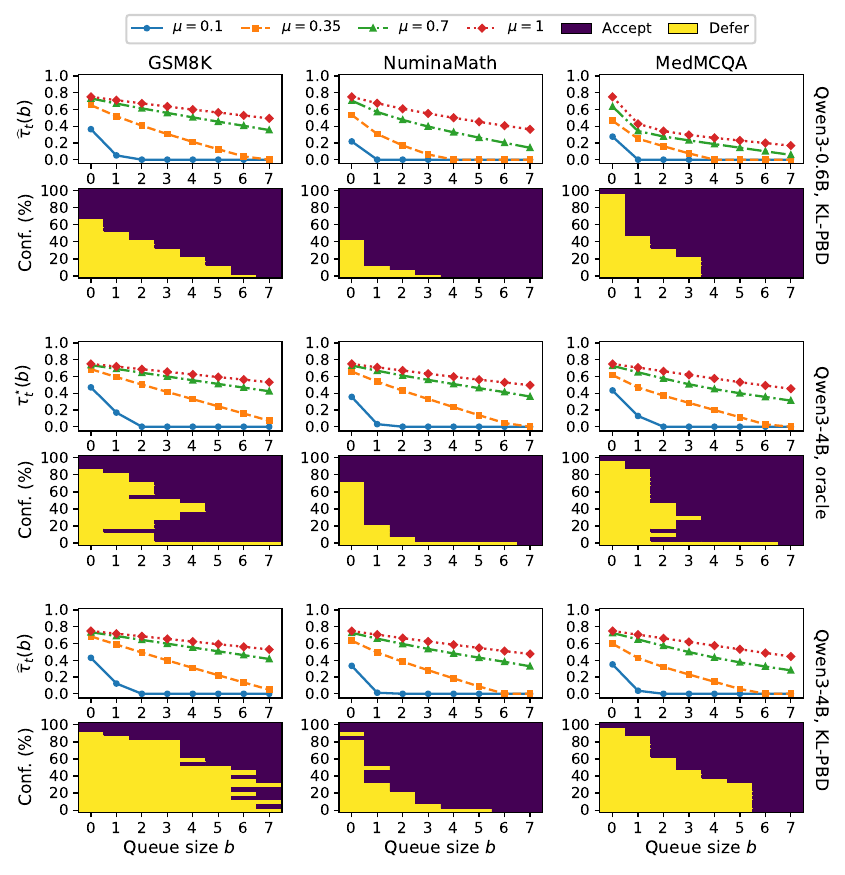}
  \caption{Thresholds for varying $\mu$ and decisions for $\mu=0.35$ at $t=T/2$: \texttt{KL-PBD} for Qwen3-0.6B (top), oracle (middle) and \texttt{KL-PBD} (bottom) for Qwen3-4B, one run per $\mu$.}
  \label{fig:policy-maps}
\end{figure}

\clearpage
\subsection{The importance of pessimism}
\label{app:gbd-pessimism}

\texttt{GBD} uses the same decision rule as \texttt{KL-PBD}, except that it uses the empirical mean $\widehat h_{j,N_{j,t}}$ instead of the lower confidence bound $\underline h_{j,t}$.
Figure~\ref{fig:rw-lockin} shows, for every instance, the final regret of each of the $100$ runs against the run's average deferral rate. For \texttt{GBD} the two are strongly negatively correlated, with correlation $\rho\in[-.95,-.85]$ across the nine instances: runs that defer less accumulate more regret. This is the selective-feedback failure of greedy learning: an early optimistic estimate of the conditional accuracy of a level stops its deferrals, hence no more feedback for this level is received, and the estimate is never corrected. On Qwen2.5-0.5B NuminaMath, for instance, the deferral rate of \texttt{GBD} ranges from $.15$ to $.31$ and its regret from $11$ to $10{,}658$. \texttt{KL-PBD+}, which keeps deferring a level while its lower confidence bound is below the current threshold, shows neither the spread nor the correlation.

\begin{figure}[htb!]
  \centering
  \includegraphics[width=\textwidth]{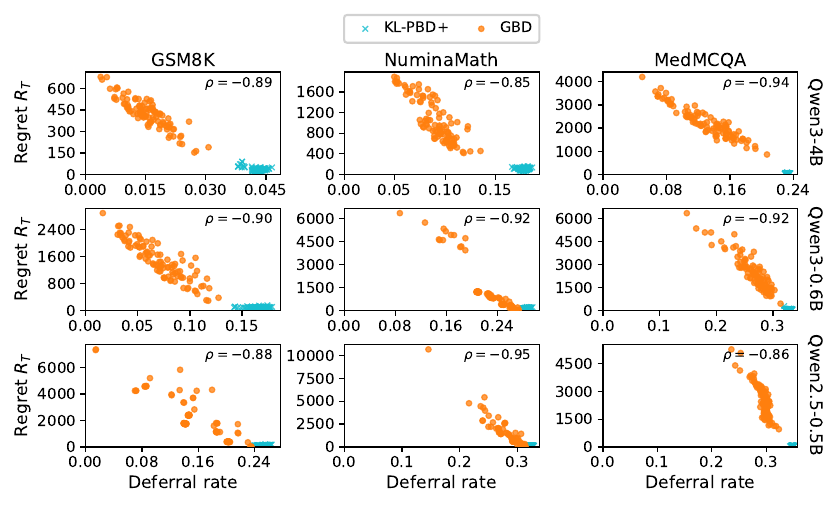}
  \caption{Final regret against average deferral rate, one point per run, $T=10^5$, and $\rho$ the correlation for \texttt{GBD}.}
  \label{fig:rw-lockin}
\end{figure}

\clearpage

\end{document}